\newcommand{\poly}{\mathrm{poly}}
\numberwithin{equation}{section}
\newcommand{\op}{\mathrm{op}}
 \theoremstyle{plain}
      \newtheorem{asm}{Assumption}
\theoremstyle{plain}
\newtheorem{thm}{Theorem}
\newtheorem{lem}{Lemma}[section]
\newtheorem{prop}[thm]{Proposition}
\theoremstyle{definition}
\newtheorem{defn}{Definition}[section]
\theoremstyle{plain}
\newtheorem{theorem}{Theorem}
\newtheorem{lemma}{Lemma}[section]
\newtheorem{corollary}{Corollary}
\theoremstyle{definition}
\newtheorem{example}{Example}[section]
\newtheorem{remark}{Remark}[section]
\renewcommand{\Pr}{\mathbb{P}}
\newcommand{\Exp}{\mathbb{E}}
\newcommand{\KL}{\mathrm{KL}}
\newcommand{\TV}{\mathrm{TV}}
\newcommand{\rmd}{\mathrm{d}}
\newcommand{\tr}{\mathrm{tr}}
\newcommand{\R}{\mathbb{R}}
\newcommand{\I}{\mathbb{I}}
\DeclareMathOperator*{\argmax}{arg\,max}
\DeclareMathOperator*{\argmin}{arg\,min}
\newcommand{\simplex}{\triangle}
\newcommand{\cOtil}{\widetilde{\cO}}
\def\ddefloop#1{\ifx\ddefloop#1\else\ddef{#1}\expandafter\ddefloop\fi}
\def\ddef#1{\expandafter\def\csname bb#1\endcsname{\ensuremath{\mathbb{#1}}}}
\def\ddefloop#1{\ifx\ddefloop#1\else\ddef{#1}\expandafter\ddefloop\fi}
\def\ddef#1{\expandafter\def\csname fr#1\endcsname{\ensuremath{\mathfrak{#1}}}}
\def\ddefloop#1{\ifx\ddefloop#1\else\ddef{#1}\expandafter\ddefloop\fi}
\def\ddef#1{\expandafter\def\csname scr#1\endcsname{\ensuremath{\mathscr{#1}}}}
\def\ddefloop#1{\ifx\ddefloop#1\else\ddef{#1}\expandafter\ddefloop\fi}
\def\ddef#1{\expandafter\def\csname b#1\endcsname{\ensuremath{\mathbf{#1}}}}
\def\ddef#1{\expandafter\def\csname c#1\endcsname{\ensuremath{\mathcal{#1}}}}
\def\ddef#1{\expandafter\def\csname h#1\endcsname{\ensuremath{\widehat{#1}}}}
\def\ddef#1{\expandafter\def\csname t#1\endcsname{\ensuremath{\widetilde{#1}}}}
\def\ddefloop#1{\ifx\ddefloop#1\else\ddef{#1}\expandafter\ddefloop\fi}
\def\ddef#1{\expandafter\def\csname mat#1\endcsname{\ensuremath{\mathbf{#1}}}}
\newcommand{\inner}[2]{\langle #1, #2 \rangle}
\newcommand{\optcov}{\textsc{OptCov}\xspace}
\newcommand{\pedel}{\textsc{Pedel}\xspace}
\newcommand{\force}{\textsc{Force}\xspace}
\newcommand{\fwregret}{\textsc{FWRegret}\xspace}
\newcommand{\condcov}{\textsc{ConditionedCov}\xspace}
\newcommand{\algname}{\textsc{FTPedel}\xspace}
\newcommand{\algnamese}{\textsc{FTPedel-SE}\xspace}
\newcommand{\paradigmname}{\textsf{FineTuneRL}\xspace}
\newcommand{\thetast}{\bm{\theta}^\star}
\newcommand{\Delmin}{\Delta_{\min}}
\newcommand{\pist}{\pi^\star}
\newcommand{\bLambda}{\bm{\Lambda}}
\newcommand{\piexp}{\pi_{\mathrm{exp}}}
\newcommand{\bphi}{\bm{\phi}}
\newcommand{\Vst}{V^\star}
\newcommand{\xst}{\bm{x}^\star}
\newcommand{\bx}{\bm{x}}
\newcommand{\Toff}{T_{\mathrm{off}}}
\newcommand{\bz}{\bm{z}}
\newcommand{\bthetast}{\thetast}
\newcommand{\Thetaaltst}{\Theta_{\mathrm{alt}}^{\star}}
\newcommand{\btheta}{\bm{\theta}}
\newcommand{\ton}{t^{\mathrm{on}}}
\newcommand{\bu}{\bm{u}}
\newcommand{\Piexp}{\Pi_{\mathrm{exp}}}
\newcommand{\Pidet}{\Pi_{\mathsf{det}}}
\newcommand{\pitil}{\widetilde{\pi}}
\newcommand{\ptil}{\widetilde{p}}
\newcommand{\pihat}{\widehat{\pi}}
\newcommand{\bphitil}{\widetilde{\bphi}}
\newcommand{\be}{\bm{e}}
\newcommand{\tst}{t^\star}
\newcommand{\bthetahat}{\widehat{\bm{\theta}}}
\newcommand{\cThat}{\widehat{\cT}}
\newcommand{\bv}{\bm{v}}
\newcommand{\lammin}{\lambda_{\min}}
\newcommand{\frakD}{\mathfrak{D}}
\newcommand{\bmu}{\bm{\mu}}
\newcommand{\cEest}{\cE_{\mathrm{est}}}
\newcommand{\bphihat}{\widehat{\bphi}}
\newcommand{\bOmega}{\bm{\Omega}}
\newcommand{\bLamhat}{\widehat{\bm{\Lambda}}}
\newcommand{\bSighat}{\widehat{\bm{\Sigma}}}
\newcommand{\ihat}{\widehat{i}}
\newcommand{\bLamoff}{\bLambda_{\mathrm{off}}}
\newcommand{\tsum}{{\textstyle \sum}}
\newcommand{\Vhat}{\widehat{V}}
\newcommand{\lamminst}{\lambda_{\min}^\star}
\newcommand{\lamun}{\underline{\lambda}}
\newcommand{\lambar}{\bar{\lambda}}
\newcommand{\cEexp}{\cE_{\mathrm{exp}}}
\newcommand{\Vpi}{V^\pi}
\newcommand{\pitilst}{\widetilde{\pi}^\star}
\newcommand{\iotaalg}{\iota_0}
\newcommand{\bGamma}{\bm{\Gamma}}
\newcommand{\Ton}{T_{\mathrm{on}}}
\newcommand{\Tonbar}{\bar{T}_{\mathrm{on}}}
\newcommand{\Tontil}{\widetilde{T}_{\mathrm{on}}}
\newcommand{\Nse}{T_{\mathrm{se}}}
\newcommand{\logs}{\mathsf{logs}}
\newcommand{\frakDoff}{\mathfrak{D}_{\mathrm{off}}}
\newcommand{\Cst}{C^\star}
\newcommand{\frakM}{\mathfrak{M}}
\newcommand{\ones}{\bm{1}}
\newcommand{\bvtil}{\widetilde{\bv}}
\newcommand{\pilog}{\pi^{\mathrm{log}}}
\newcommand{\Picov}{\Pi_{\mathsf{cov}}}
\newcommand{\Ncov}{\mathsf{N}_{\mathsf{cov}}}
\newcommand{\bthetatil}{\widetilde{\btheta}}
\newcommand{\zeros}{\bm{0}}
\newcommand{\unif}{\mathrm{unif}}
\newcommand{\ba}{\bm{a}}
\newcommand{\Cpi}{C^\pi}
\newcommand{\Coto}{C_{\mathsf{o2o}}}
\newcommand{\Pilsm}{\Pi_{\mathsf{lsm}}}
\newcommand{\bw}{\bm{w}}
\newcommand{\Clot}{C_{\mathsf{lot}}}
\newcommand{\Nver}{T_{\mathsf{ver}}}
\newcommand{\epsver}{\epsilon_{\mathsf{ver}}}
\newcommand{\Noto}{T_{\mathsf{o2o}}}
\newcommand{\Kon}{T_{\mathrm{on}}}
\newcommand{\tauver}{\tau_{\mathsf{ver}}}
\newcommand{\tauuver}{\tau_{\mathsf{uver}}}
\newcommand{\Tfw}{T}
\renewcommand{\hbar}{\bar{h}}
\newcommand{\bern}{\mathrm{Bernoulli}}
\newcommand{\Qpi}{Q^\pi}
\newcommand{\epsexp}{\epsilon_{\mathrm{exp}}}
\newlength\tindent
\newcommand{\icmledit}[1]{#1}
\newcommand{\loose}{\looseness=-1}
\title{Leveraging Offline Data in Online Reinforcement Learning}
\author{Andrew Wagenmaker\footnote{University of Washington, Seattle. Email: \texttt{ajwagen@cs.washington.edu}} \and Aldo Pacchiano\footnote{Microsoft Research, New York. Email: \texttt{apacchiano@microsoft.com}}}
\date{}
\begin{document}
\maketitle

\begin{abstract}

\iftoggle{arxiv}{
Two central paradigms have emerged in the reinforcement learning (RL) community: online RL and offline RL. In the online RL setting, the agent has no prior knowledge of the environment, and must interact with it in order to find an $\epsilon$-optimal policy. In the offline RL setting, the learner instead has access to a fixed dataset to learn from, but is unable to otherwise interact with the environment, and must obtain the best policy it can from this offline data. 
\iftoggle{arxiv}{
Practical scenarios often motivate an intermediate setting: if we have some set of offline data and, in addition, may also interact with the environment, how can we best use the offline data to minimize the number of online interactions necessary to learn an $\epsilon$-optimal policy?
}{
Practical scenarios often motivate an intermediate setting: if we have some set of offline data and may also interact with the environment, how can we best use the offline data to minimize the number of online interactions necessary to learn an $\epsilon$-optimal policy?
}

In this work, we consider this setting, which we call the \textsf{FineTuneRL} setting, for MDPs with linear structure. We characterize the necessary number of online samples needed in this setting given access to some offline dataset, and develop an algorithm, \textsc{FTPedel}, which is provably optimal, up to $H$ factors. We show through an explicit example that combining offline data with online interactions can lead to a provable improvement over either purely offline or purely online RL. Finally, our results illustrate the distinction between \emph{verifiable} learning, the typical setting considered in online RL, and \emph{unverifiable} learning, the setting often considered in offline RL, and show that there is a formal separation between these regimes. }
{
Two central paradigms have emerged in the reinforcement learning (RL) community: online RL and offline RL. In the online RL setting, the agent has no prior knowledge of the environment, and must interact with it in order to find an $\epsilon$-optimal policy. In the offline RL setting, the learner instead has access to a fixed dataset to learn from, but is unable to otherwise interact with the environment, and must obtain the best policy it can from this offline data. 
Practical scenarios often motivate an intermediate setting: if we have some set of offline data and may also interact with the environment, how can we best use the offline data to minimize the number of online interactions necessary to learn an $\epsilon$-optimal policy?
In this work, we consider this setting, which we call the \textsf{FineTuneRL} setting, for MDPs with linear structure. We characterize the necessary number of online samples needed in this setting given access to some offline dataset, and develop an algorithm, \textsc{FTPedel}, which is provably optimal, up to $H$ factors. We show through an explicit example that combining offline data with online interactions can lead to a provable improvement over either purely offline or purely online RL. Finally, our results illustrate the distinction between \emph{verifiable} learning, the typical setting considered in online RL, and \emph{unverifiable} learning, the setting often considered in offline RL, and show that there is a formal separation between these regimes.
}

\end{abstract}


\section{Introduction}

Many important learning problems in adaptive environments can be mapped into the reinforcement learning (RL) paradigm. Recent years have seen an impressive set of results deploying RL algorithms in a variety of domains such as healthcare~\citep{yu2021reinforcement}, robotics~\citep{kober2013reinforcement}, and games~\citep{silver2016mastering}. Typically, in such RL settings, the goal of the learner is to find a \emph{policy} that maximizes the expected reward that can be obtained in a given environment. Motivated by the practical successes of RL, developing efficient approaches to \emph{policy optimization} has been a question of much interest in the machine learning community in recent years.
Broadly speaking, policy optimization algorithms can be divided into two camps: online RL, where the learner has no prior knowledge of the environment and must simply interact with it to learn a good policy, and offline RL, where the agent has access to a set of offline data from interactions with the environment, but is otherwise unable to interact with it. 

Online methods \citep{brafman2002r,azar2017minimax,auer2008near} rely on the continuous deployment of policies to collect data. These policies are computed at every step by the algorithm and make use of the information that has been collected so far. Unfortunately, online methods can be very data inefficient. In the absence of a sufficiently exploratory baseline policy, they may require an extremely large number of samples to gather sufficient data to learn a near-optimal policy. Offline RL methods~\citep{levine2020offline} mitigate some of these shortcomings. 
For example, if the offline data was originally collected by running a hand-crafted expert policy in the environment, or by running a known safe exploration strategy in a production system, it could be sufficient to learn a near-optimal policy without requiring any additional interactions with the environment. 
Unfortunately, offline methods are very sensitive to the coverage of the logged data.
Namely, the quality of the candidate policy generated by an offline algorithm will strongly depend on how well the available data covers the true optimal policy's support~\citep{zhan2022offline}. Moreover, since no further interactions with the environment are allowed, offline RL algorithms may not have any way of knowing whether or not their candidate policy is near-optimal---they cannot \emph{verify} the optimality of the policy.

In this work, we aim to bridge the gap between online and offline reinforcement learning and consider an intermediate setting, which we call \paradigmname, where the algorithm has access to an offline dataset but can also augment this for further fine-tuning via online interactions with the environment. Here, the goal is to minimize the number of \emph{online} interactions---we assume the offline data is available ``for free''.
We believe that this is often a more realistic form of training than the rigid set of assumptions imposed by pure online or offline scenarios, \icmledit{and has been the subject of much recent attention in the applied RL literature \citep{ball2023efficient,nakamoto2023cal,zheng2023adaptive}.}
This reflects the fact that in practical problems, we may indeed have access to large amounts of cheaply available offline data, 
which we wish to use to minimize the number of---much more difficult to acquire---online interactions.
In \paradigmname, then, the offline data can be used to bootstrap the online exploration procedure, reducing the online complexity, and the online interaction rounds can be used to optimally refine the candidate policy that would have resulted from only using the offline data. 

The \paradigmname paradigm setting takes some inspiration from the emerging need to devise optimal ways to fine-tune large models. Just as in the case in large language models~\citep{brown2020language} or in image generation tasks~\citep{ramesh2021zero}, accessibility to large amounts of offline data may allow for the creation of pre-trained models that must be adapted online to solve specific tasks. We hope that by introducing this RL paradigm and by laying the groundwork for analyzing the complexity of \paradigmname, more work can be dedicated to this important yet relatively unexplored feedback model in RL.

\subsection{Summary of Contributions}

In addition to introducing the \paradigmname setting, we make the following contributions:
\begin{enumerate}[leftmargin=*]
\item We introduce a new notion of concentrability coefficient in the setting of linear MDPs, which we call the \emph{Offline-to-Online Concentrability Coefficient} (\Cref{def:offline_to_online_concentrability}), defined as, for each step $h$:
\begin{align*}
\Coto^h(\frakDoff,\epsilon,T) :=  \inf_{\piexp}  \max_{\pi \in \Pilsm} \frac{\|  \bphi_{\pi,h} \|_{( T\bLambda_{\piexp}^h + \bLamoff^h )^{-1}}^2}{(\Vst_0 - V_0^\pi)^2 \vee \epsilon^2}.
\end{align*}
Here $\bphi_{\pi,h}$ denotes the ``average feature vector'' of policy $\pi$ at step $h$, $\bLamoff^h$ are the offline covariates at step $h$ for offline dataset $\frakDoff$, $\bLambda_{\piexp}^h$ denotes the expected covariates induced by policy $\piexp$ at step $h$, and $T$ denotes the number of episodes of online exploration.
\iftoggle{arxiv}{In words, t}{T}his quantifies the total coverage of our combined offline and online data, if we run for $T$ online episodes, playing the exploration policy $\piexp$ that optimally explores the regions of feature-space left unexplored by the offline data.
\item We show there exists an algorithm, \algname, which, up to lower-order terms, only collects, for each step $h$,
\begin{align*}
\min_{\Kon} \Kon \quad \text{s.t.} \quad \Coto^h(\frakDoff,\epsilon,\Kon) \le \frac{1}{\beta}
\end{align*}
online episodes---\iftoggle{arxiv}{the minimal number of online episodes necessary to ensure the offline-to-online concentrability coefficient is sufficiently bounded}{the minimal number of online episodes which ensures the offline-to-online concentrability coefficient is sufficiently small}---and returns a policy that is $\epsilon$-optimal. Furthermore, we show that this complexity is necessary---no algorithm can collect fewer online samples and return a policy guaranteed to be $\epsilon$-optimal.
\item Finally, we study the question of \emph{verifiability} in RL. We note that many existing approaches in offline RL, especially those relying on pessimism, give guarantees that are \emph{unverifiable}---an algorithm may return a near-optimal policy but it has no way of verifying it is near-optimal. We show that coverage conditions necessary for unverifiable RL are insufficient for verifiable RL, and propose stronger coverage conditions to ensure verifiability.
\end{enumerate}

\icmledit{While our work focuses on understanding the statistical complexity of online RL with access to offline data, it motivates a simple, intuitive, and broadly applicable algorithmic principle: direct online exploration to cover (relevant) regions of the feature space not covered by the offline data. Our algorithm, \algname, instantiates this principle in the setting of linear MDPs, and we hope inspires further work in leveraging offline data in online RL in more general settings.\loose 
}

\iftoggle{arxiv}{
The remainder of this work is organized ad follows. In \Cref{sec:related} we outline related work in both online and offline RL. In \Cref{sec:prelim} we formally define our problem setting, \paradigmname. Next, in \Cref{sec:results} we outline our main results, and in \Cref{sec:coverage} we consider the question of verifiability in RL.
Finally, in \Cref{sec:alg_description} we describe our algorithmic approach, and in \Cref{sec:conclusion} offer some potential directions for future work.
}{}

\iftoggle{arxiv}{
\section{Related Work}\label{sec:related}

Since our work is theoretical in nature, we restrict ourselves to drawing connections between our work and previous works in this space.  Three lines of work are relevant when placing our contributions in the context of the existing literature: Online RL, Offline RL, and works that lie at the intersection of these regimes. 

\paragraph{Online RL.} Much work has been dedicated to designing sample efficient algorithms for online RL, where the agent has access to the environment while learning. A significant portion of this work has focused on designing algorithms for tabular MDPs with finitely many states and actions~\citep{brafman2002r,azar2017minimax,auer2008near,jin2018q,dann2017unifying,kearns2002near,agrawal2017optimistic,simchowitz2019non,pacchiano2021towards,wagenmaker2022beyond}. 
Moving beyond the tabular setting~\citep{yang2020reinforcement,jin2020provably} propose sample efficient no-regret algorithms for MDPs with linear features. More precisely, the authors of~\citep{jin2020provably} prove that an optimistic modification of Least-Squares Value Iteration (LSVI) achieves a sublinear regret in Linear MDPs. In contrast with the tabular setting, the regret in these linear models is upper bounded by a function that depends polynomially on the dimension parameter even when the state space may be extremely large or even infinite. This work has been subsequently built on by a large number of additional works on RL with function approximation \citep{zanette2020frequentist,zanette2020learning,ayoub2020model,weisz2021exponential,zhou2020nearly,zhou2021provably,du2021bilinear,jin2021bellman,foster2021statistical,wagenmaker2022reward}.

Regret is not the only objective that one may consider in RL. Finding an optimal or near-optimal policy without regard to the reward acquired during exploration is in many cases a more desirable objective, and is the one considered in this work. Methods to achieve this fall under the umbrella of PAC RL, for which there is a vast literature that spans over two decades~\citep{kearns2002near,kakade2003sample}. 

Not all MDP instances are equally difficult. The majority of existing work in RL has focused on obtaining algorithms that are \emph{worst-case} optimal, scaling with the complexity of the hardest instance in a given problem class. Such guarantees, however, fail to take into account that some instances may be significantly ``easier'' than others.
While several classical works consider \emph{instance-dependent} bounds \citep{auer2008near,tewari2007optimistic}---bounds scaling with the difficulty of learning on a given problem instance---the last several years have witnessed significant progress in obtaining such guarantees, both in the tabular setting \citep{ok2018exploration,zanette2019tighter,simchowitz2019non,yang2021q,dann2021beyond,xu2021fine,wagenmaker2022beyond} as well as the function approximation setting \citep{he2020logarithmic,wagenmaker2022first,wagenmaker2022instance,wagenmaker2023instance}. Our work builds on this line of instance-dependent guarantees, in particular the work of \cite{wagenmaker2022instance}, and we aim to obtain an instance-dependent complexity in the \paradigmname setting.

\paragraph{Offline RL.} In contrast to the online setting, where the agent has full access to the environment while learning, in the offline or batch RL setting, the agent only has access to a set of logged data of interaction with the environment. Early theoretical works in offline RL focus on the setting where the offline data is assumed to have global coverage. This is the case for algorithms such as FQI \citep{munos2008finite,chen2019information} or DAgger for Agnostic MBRL~\citep{ross2012agnostic}. While these approaches are shown to find near-optimal policies, with the aid of either a least squares or a model-fitting oracle, they require that the logged data covers all states and actions.

Towards relaxing such strong coverage conditions, more recent works have developed algorithms for offline RL where the offline data has only partial coverage. This is addressed by either imposing constraints at the policy level, preventing the policy from visit states and actions where the offline data coverage is poor~\citep{fujimoto2019off,liu2020provably,kumar2019stabilizing,wu2019behavior}, or by relying on the principle of ``pessimism'' and acting conservatively when learning from offline data~\citep{kumar2020conservative,yu2020mopo,kidambi2020morel,jin2021pessimism,yin2021near,rashidinejad2021bridging}. In algorithms relying on pessimism, the error of the resulting candidate policy scales with a quantity characterizing the coverage of the offline dataset under the visitation distribution of the optimal policy. We build on this principle in this work in the \paradigmname setting, augmenting the coverage of the offline data by performing online exploration to ensure near-optimal policies are covered. 
Other works study different aspects of the problem of offline RL in linear MDPs. For example~\citep{zhang2022corruption} study the setting of learning in the presence of corrupted data, while~\citep{chang2021mitigating} studies the problem of offline imitation learning.  Recent works have also extended the study of offline RL to more general function approximation settings~\citep{jiang2020minimax,uehara2021pessimistic,xie2021bellman,chen2022offline,zhan2022offline,yin2022near}.

\paragraph{Bridging Online and Offline RL.} 
While there exist empirical works considering the setting of online RL where the learner also has some form of access to logged data \citep{rajeswaran2017learning,nair2018overcoming,hester2018deep,ball2023efficient,nakamoto2023cal,zheng2023adaptive}, to our knowledge, 
only several existing works offer formal guarantees in this setting \citep{ross2012agnostic,xie2021policy,song2022hybrid,tennenholtz2021bandits}. 
\cite{tennenholtz2021bandits} consider a linear bandit setting where they have access to offline data, but where the features of the offline data are only partially observed, a somewhat different setting than what we consider.
Both \cite{ross2012agnostic} and \cite{xie2021policy} consider the setting where the learner has access to some logging policy $\mu$ rather than a fixed set of logged data, and at the start of every episode can choose whether to play $\mu$, or to play any other online policy of their choosing. 
\icmledit{In many respects, this setting is much more akin to online RL than offline RL. All data available to the learner is collected in an online fashion, either by rolling out $\mu$ or another policy, and the sample complexity bounds} are then obtained in terms of the total number of rollouts---both of $\mu$ or alternate online policies played---and are shown to scale with the coverage of $\mu$. 
\cite{xie2021policy} prove that in a minimax sense, in this setting 
there does not exist an approach which can have a strictly better sample complexity than either using purely online RL algorithms (ignoring $\mu$), or collecting data only by playing $\mu$. 

In contrast to this setting, in this work we assume the learner is simply given access to some offline dataset which could be generated arbitrarily, rather than being given access to a logging policy, and is then able to interact with the environment in an online fashion by playing any policy it desires, ultimately using the combination of the offline data and online interactions to learn a near-optimal policy. Our goal is to minimize the number of online interactions---the offline data is ``free'', and does not count towards the total number of samples collected. In contrast to \cite{xie2021policy}, we show that there is a provable gain in combining offline data with online interactions in this setting, over either purely offline or purely online RL (\Cref{prop:ex_offline_helps}).

\newcommand{\dbi}{d_{\mathrm{bi}}}
Concurrent to this work, \cite{song2022hybrid} propose a setting very similar to ours, which they call ``Hybrid RL''.
They propose the Hybrid Q-learning algorithm (Hy-Q), a simple adaptation of fitted Q-iteration for low bilinear rank MDPs~\citep{du2021bilinear}. 
Letting $\dbi$ denote the MDP's bilinear rank, up to logarithmic factors, Hy-Q can be used to find an $\epsilon$-optimal policy in a total number of samples (online + offline) of order $\mathcal{O}\left(\max\{ (\Cst)^2, 1 \} \cdot \mathrm{poly}( H ,\dbi ) /\epsilon^2\right)$, where $\Cst$ serves as a measure of how well the offline data covers the optimal policy.
Furthermore, by having access to an offline dataset and allowing for online deployments, Hy-Q avoids the use of potentially intractable exploration strategies (such as optimism) and therefore is the first computationally efficient algorithm (assuming access to a least squares regression oracle) for low bilinear rank MDPs.

In the setting where $\Cst$ is large---the offline data does not cover a near-optimal policy---the guarantee obtained by Hy-Q could be much worse than that obtained in the purely online setting \citep{du2021bilinear}. One might hope to instead obtain a guarantee that is never worse than the purely online guarantee and, even in the setting of poor offline data coverage, that some useful information may still be extracted from the offline data. We are able to obtain a guarantee of precisely this form and quantify, in a sharp, instance-dependent manner, how the coverage of offline data reduces the online exploration needed. \icmledit{See \Cref{rem:song_comparison} for further comparison with \cite{song2022hybrid}.}

Finally, we mention the recent work \cite{xie2022role}. While this work is purely online (it does not assume access to any offline data), it shows that online guarantees can be obtained in terms of the concentrability coefficient parameter introduced in the offline RL literature, providing a bridge between the analysis techniques of offline and online RL.

}{
\section{Related Work}\label{sec:related}

Three lines of existing work are particularly relevant to our work: Online RL, Offline RL, and works that lie at the intersection of these regimes. 

\paragraph{Online RL.} Much work has been dedicated to designing sample efficient algorithms for online RL. A significant portion of this work has focused on designing algorithms for tabular MDPs with finitely many states and actions~\citep{brafman2002r,azar2017minimax,auer2008near,jin2018q,dann2017unifying,kearns2002near,agrawal2017optimistic,simchowitz2019non,pacchiano2021towards,wagenmaker2022beyond}. 
Moving beyond the tabular setting~\citep{yang2020reinforcement,jin2020provably} propose sample efficient no-regret algorithms for MDPs with linear features, work that has been subsequently built on by a large number of additional works on RL with function approximation \citep{zanette2020frequentist,zanette2020learning,ayoub2020model,weisz2021exponential,zhou2020nearly,zhou2021provably,du2021bilinear,jin2021bellman,foster2021statistical,wagenmaker2022reward}.

Not all MDP instances are equally difficult. The majority of existing work in RL has focused on obtaining algorithms that are \emph{worst-case} optimal, scaling with the complexity of the hardest instance in a given problem class. Such guarantees, however, fail to take into account that some instances may be significantly ``easier'' than others.
While several classical works consider \emph{instance-dependent} bounds \citep{auer2008near,tewari2007optimistic}---bounds scaling with the difficulty of learning on a given problem instance---the last several years have witnessed significant progress in obtaining such guarantees, both in the tabular setting \citep{ok2018exploration,zanette2019tighter,simchowitz2019non,yang2021q,dann2021beyond,xu2021fine,wagenmaker2022beyond} as well as the function approximation setting \citep{he2020logarithmic,wagenmaker2022first,wagenmaker2022instance,wagenmaker2023instance}. Our work builds on this line of work, in particular \cite{wagenmaker2022instance}, and we aim to obtain an instance-dependent guarantee in the \paradigmname setting.

\paragraph{Offline RL.} 
Early theoretical works in offline RL focus on the setting where the offline data is assumed to have global coverage. This is the case for algorithms such as FQI \citep{munos2008finite,chen2019information} or DAgger for Agnostic MBRL~\citep{ross2012agnostic}. While these approaches are shown to find near-optimal policies, with the aid of either a least squares or a model-fitting oracle, they require that the logged data covers all states and actions.

Towards relaxing such strong coverage conditions, more recent works have developed algorithms for offline RL where the offline data has only partial coverage. This is addressed by either imposing constraints at the policy level, preventing the policy from visit states and actions where the offline data coverage is poor~\citep{fujimoto2019off,liu2020provably,kumar2019stabilizing,wu2019behavior}, or by relying on the principle of ``pessimism'' and acting conservatively when learning from offline data~\citep{kumar2020conservative,yu2020mopo,kidambi2020morel,jin2021pessimism,yin2021near,rashidinejad2021bridging}. In algorithms relying on pessimism, the error of the resulting candidate policy scales with a quantity characterizing the coverage of the offline dataset under the visitation distribution of the optimal policy. 
Recent works have also extended the study of offline RL to more general function approximation settings~\citep{jiang2020minimax,uehara2021pessimistic,xie2021bellman,chen2022offline,zhan2022offline,yin2022near}, as well as in the presence of corrupted data \citep{zhang2022corruption}, or offline imitation learning \citep{chang2021mitigating}.

\paragraph{Bridging Online and Offline RL.} 
While there exist empirical works considering the setting of online RL where the learner also has some form of access to logged data \citep{rajeswaran2017learning,nair2018overcoming,hester2018deep,ball2023efficient,nakamoto2023cal,zheng2023adaptive}, to our knowledge, 
only several works offer formal guarantees in this setting \citep{ross2012agnostic,xie2021policy,song2022hybrid,tennenholtz2021bandits}. 
\cite{tennenholtz2021bandits} consider a linear bandit setting where they have access to offline data, but where the offline data features are only partially observed, a different setting than what we consider.
Both \cite{ross2012agnostic} and \cite{xie2021policy} consider the setting where the learner has access to a logging policy $\mu$ rather than a fixed set of logged data, and at the start of every episode can choose whether to play $\mu$, or to play any other online policy. 
\icmledit{In many respects, this setting is much more akin to online RL than offline RL. All data available to the learner is collected in an online fashion, either by rolling out $\mu$ or another policy, and the sample complexity bounds} are then obtained in terms of the total number of rollouts---both of $\mu$ or alternate online policies played---and are shown to scale with the coverage of $\mu$. 
\icmledit{\cite{xie2021policy} prove that in a minimax sense, in this setting 
there does not exist an approach which can have a strictly better sample complexity than either using purely online RL algorithms (ignoring $\mu$), or collecting data only by playing $\mu$. }
In contrast to this setting, in this work we assume the learner is simply given access to some offline dataset which could be generated arbitrarily, rather than being given access to a logging policy, and is then able to interact with the environment in an online fashion by playing any policy it desires, ultimately using the combination of the offline data and online interactions to learn a near-optimal policy. \icmledit{Our goal is to minimize the number of online interactions---the offline data is ``free'', and does not count towards the total number of samples collected. In contrast to \cite{xie2021policy}, we show that there is a provable gain in combining offline data with online interactions in this setting, over either purely offline or purely online RL (\Cref{prop:ex_offline_helps}).}

\newcommand{\dbi}{d_{\mathrm{bi}}}
Concurrent to this work, \cite{song2022hybrid} propose a setting similar to ours, which they call ``Hybrid RL''.
They propose the Hybrid Q-learning algorithm (Hy-Q), a simple adaptation of fitted Q-iteration for low bilinear rank MDPs~\citep{du2021bilinear}. 
Letting $\dbi$ denote the MDP's bilinear rank, up to logarithmic factors, Hy-Q can be used to find an $\epsilon$-optimal policy in a total number of samples (online + offline) of order $\mathcal{O}\left(\max\{ (\Cst)^2, 1 \} \cdot \mathrm{poly}( H ,\dbi ) /\epsilon^2\right)$, where $\Cst$ serves as a measure of how well the offline data covers the optimal policy. In the setting where $\Cst$ is large---the offline data does not cover a near-optimal policy---the guarantee obtained by Hy-Q could be much worse than that obtained in the purely online setting \citep{du2021bilinear}. One might hope to obtain a guarantee never worse than the purely online guarantee and, even in the setting of poor offline data coverage, that some useful information may still be extracted from the offline data---precisely the guarantee we obtain. 

Finally, we mention the recent work \cite{xie2022role}. While this work is purely online and does not assume access to offline data, it shows that online guarantees can be obtained in terms of the concentrability coefficient parameter introduced in the offline RL literature, providing a bridge between the analysis techniques of offline and online RL. 
}


\section{Preliminaries}\label{sec:prelim}
 \paragraph{Notation.}
We let $\| \bv \|_{\bLambda}^2 = \bv^\top \bLambda \bv$. $a \vee b$ denotes $\max \{ a,b \}$. $\cS^{d-1}$ denotes the unit sphere in $d$ dimensions. $\simplex$ denotes the simplex. $\logs(\cdot)$ denotes some function which depends at most logarithmically on its arguments: $\logs(x_1,\ldots,x_n) = \sum_{i=1}^n c_i \log(e + x_i)$ for $x_i \ge 0$ and absolute constants $c_i \ge 0$.
We let $\Pr_\cM[\cdot]$ and $\Exp_\cM[\cdot]$ denote the measure and expectation induced by MDP $\cM$, and $\Pr_\pi[\cdot]$ and $\Exp_\pi[\cdot]$ the measure and expectation induced playing policy $\pi$ on our MDP. \iftoggle{arxiv}{Throughout,}{} $C$ and $c$ denote universal constants.

\paragraph{Markov Decision Processes.}
In this work we study episodic Markov Decision Processes (MDPs). In the episodic setting, an MDP is denoted by a tuple $\cM = (\cS,\cA,H,\{ P_h \}_{h=1}^H, \{ \nu_h \}_{h=1}^H)$ for $\cS$ the set of states, $\cA$ the set of actions, $H$ the horizon, $\{ P_h \}_{h=1}^H$ the probability transition kernels, and $\{ \nu_h \}_{h=1}^H$ the reward distributions, which we assume are supported on $[0,1]$. 
Each episode begins at some fixed state $s_1$. The agent then takes some action $a_1 \in \cA$, transitions to $s_2 \sim P_1(\cdot | s_1, a_1)$, and receives reward $r_1(s_1,a_1) \sim \nu_1(s_1,a_1)$. This repeats for $H$ steps at which point the episode terminates and the process restarts. We assume $\{ P_h \}_{h=1}^H$ and $\{ \nu_h \}_{h=1}^H$ are initially unknown.

\iftoggle{arxiv}{
We denote a policy by $\pi : \cS \times [H] \rightarrow \simplex_{\cA}$. We are typically interested in finding policies with large expected reward. We can quantify this via the \emph{value function}. In particular, the $Q$-value function for policy $\pi$ is defined as
\begin{align*}
\Qpi_h(s,a) = \Exp_{\pi} \left [ \sum_{h' = h}^H r_{h'}(s_{h'},a_{h'}) \mid s_h = s, a_h = a \right ].
\end{align*}
In words, $\Qpi_h(s,a)$ denotes the expected reward we will receive from playing action $a$ in state $s$ at step $h$ and then playing policy $\pi$. We can similarly define the value function in terms of the $Q$-value function as $\Vpi_h(s) = \Exp_{a \sim \pi_h(\cdot | s)}[\Qpi_h(s,a)]$---the expected reward we will receive playing policy $\pi$ from state $s$ at step $h$ on. We denote the \emph{value of a policy} by $\Vpi_0 := \Vpi_1(s_1)$, which is the total expected reward policy $\pi$ will achieve. We denote the value of the optimal policy by $\Vst_0 := \sup_\pi \Vpi_0$ and similarly denote an optimal policy---any policy $\pi$ with $\Vpi_0 = \Vst_0$---by $\pist$. For some set of policies $\Pi$, we denote the value of the best policy in $\Pi$ as $\Vst_0(\Pi) := \sup_{\pi \in \Pi} \Vpi_0$.
}{
We denote a policy by $\pi : \cS \times [H] \rightarrow \simplex_{\cA}$, and the $Q$-value function for policy $\pi$ as
\begin{align*}
\textstyle \Qpi_h(s,a) = \Exp_{\pi} \big [ \sum_{h' = h}^H r_{h'}(s_{h'},a_{h'}) \mid s_h = s, a_h = a \big ].
\end{align*}
We define the value function as $\Vpi_h(s) = \Exp_{a \sim \pi_h(\cdot | s)}[\Qpi_h(s,a)]$. We denote the \emph{value of a policy} by $\Vpi_0 := \Vpi_1(s_1)$, the total expected reward policy $\pi$ will acquire, and $\Vst_0 := \sup_\pi \Vpi_0$. We let $\pist$ denote an optimal policy---any policy with $\Vpi_0 = \Vst_0$.
}

\iftoggle{arxiv}{
\paragraph{PAC Reinforcement Learning.}
In this work we are interested primarily in the PAC (Probably Approximately Correct) RL setting. In PAC RL the goal is to find a policy $\pihat$ such that, with probability at least $1-\delta$,
\begin{align}\label{eq:pac_def}
V_0^{\pihat} \ge \Vst_0 - \epsilon. 
\end{align}
We say a policy $\pihat$ satisfying \eqref{eq:pac_def} is $\epsilon$-optimal. }{}

\subsection{Linear MDPs}
In order to allow for efficient learning in MDPs with large state spaces---where $|\cS|$ is extremely large or even infinite---further assumptions must be made on the structure of the MDP. One such formulation is the \emph{linear MDP} setting, which we consider in this work.

\begin{defn}[Linear MDPs \citep{jin2020provably}]\label{defn:linear_mdp}
We say that an MDP is a $d$-\emph{dimensional linear MDP}, if there exists some (known) feature map $\bphi(s,a) : \cS \times \cA \rightarrow \R^d$, $H$ (unknown) signed vector-valued measures $\bmu_h \in \R^d$ over $\cS$, and $H$ (unknown) reward vectors $\btheta_h \in \R^d$, such that:
\begin{align*}
P_h(\cdot | s,a) = \inner{\bphi(s,a)}{\bmu_h(\cdot)}, \quad \Exp[\nu_h(s,a)] = \inner{\bphi(s,a)}{\btheta_h}. 
\end{align*}
We will assume $\| \bphi(s,a) \|_2 \le 1$ for all $s,a$; and for all $h$, $\| |\bmu_h|(\cS) \|_2 = \| \int_{s \in \cS} | \rmd \bmu_h(s) | \|_2  \le \sqrt{d}$ and $\| \btheta_h \|_2 \le \sqrt{d}$.
\end{defn}

Note that our definition of linear MDPs allows the reward to be random\iftoggle{arxiv}{---we simply assume their means are linear}{}. While linear MDPs encompass settings such as tabular MDPs---where $\bphi(s,a)$ are then simply taken to be the standard basis vectors---they also encompass more complex settings where generalization across states is possible. Indeed, several recent works have demonstrated that complex real-world environments can be modeled as linear MDPs to allow for sample efficient learning \citep{ren2022free,zhang2022making}.

\iftoggle{arxiv}{
We introduce several additional pieces of a notation in the linear MDP setting. For some policy $\pi$, we let $\bphi_{\pi,h} := \Exp_\pi[\bphi(s_h,a_h)]$ denote the expected feature vector of policy $\pi$ at step $h$. This generalizes the state-action visitation frequencies often found in the tabular RL literature---in a tabular MDP, this definition would give $[\bphi_{\pi,h}]_{(s,a)} = \Pr_\pi[s_h = s,a_h = a]$. We also denote the average feature vector in a particular state as $\bphi_{\pi,h}(s) := \Exp_{a \sim \pi_h(\cdot |s)}[\bphi(s,a)]$, and the expected covariates at step $h$ generated by playing policy $\pi$ as $\bLambda_{\pi,h} := \Exp_\pi [ \bphi(s_h,a_h) \bphi(s_h,a_h)^\top]$. Let $\lambda_{\min,h}^\star = \sup_\pi \lammin(\bLambda_{\pi,h})$, the largest achievable minimum eigenvalue at step $h$, and $\lamminst = \min_h \lambda_{\min,h}^\star$. We will assume the following.
}{
We introduce several additional pieces of a notation in the linear MDP setting. For policy $\pi$, let $\bphi_{\pi,h} := \Exp_\pi[\bphi(s_h,a_h)]$ denote the expected feature vector at step $h$, which generalizes the state-action visitation frequencies often found in the tabular RL literature. Denote the expected covariates at step $h$ generated by playing policy $\pi$ as $\bLambda_{\pi,h} := \Exp_\pi [ \bphi(s_h,a_h) \bphi(s_h,a_h)^\top]$. Let $\lamminst = \min_h \sup_\pi \lammin(\bLambda_{\pi,h})$, the largest achievable minimum eigenvalue. We assume the following.
}

\begin{asm}[Full Rank Covariates]\label{asm:full_rank_cov}
\iftoggle{arxiv}{In our MDP,}{} $\lamminst > 0$.
\end{asm}
\noindent Note that \Cref{asm:full_rank_cov} is similar to other explorability assumptions found in the RL \iftoggle{arxiv}{with function approximation}{} literature \citep{zanette2020provably,hao2021online,agarwal2021online,wagenmaker2022instance,yin2022near}. For the remainder of this work, we assume \Cref{asm:full_rank_cov} holds for the MDP under consideration.

We will be interested in optimizing over covariance matrices \iftoggle{arxiv}{in order to reduce uncertainty in specific directions of interest. To}{in this work, and to} this end, define
\begin{align}\label{eq:cov_set_defn}
\bOmega_h := \{ \Exp_{\pi \sim \omega}[\bLambda_{\pi,h}] \ : \ \omega \in \bOmega_\pi \}
\end{align} 
for $\bOmega_\pi$ the set of all valid distributions over Markovian policies (both deterministic and stochastic). $\bOmega_h$ then denotes the set of all covariance matrices realizable by distributions over policies at step $h$.

\paragraph{Policy Cover.}
The learning approach we propose is \emph{policy-based}, in that it learns over a set of policies $\Pi$, with the goal of finding the best policy in the class. \iftoggle{arxiv}{In the PAC setting, where our goal will be to find some policy which is $\epsilon$-optimal, we must construct some $\Pi$ guaranteed to contain an $\epsilon$-optimal policy on any MDP. To this end, we consider the class of \emph{linear softmax policies}.

\begin{defn}[Linear Softmax Policy]
We say a policy $\pi$ is a \emph{linear softmax policy} with parameters $\eta$ and $\{ \bw_h\}_{h=1}^H$ if
\begin{align*}
\pi_h(a|s) = \frac{e^{\eta \inner{\bphi(s,a)}{\bw_h}}}{\sum_{a' \in \cA} e^{\eta \inner{\bphi(s,a')}{\bw_h}}}.
\end{align*}
\end{defn}

\noindent We have the following result.

\begin{prop}[Lemma A.14 of \cite{wagenmaker2022instance}]\label{prop:lsm_coverage}
Fix $\epsilon > 0$. Then there exists some choice of $\eta$ and set of parameter vectors $\cW$ such that the set of linear softmax policies defined with $\eta$ and over the set $\cW$, $\Pilsm$, is guaranteed to contain an $\epsilon$-optimal policy on any linear MDP. 
\end{prop}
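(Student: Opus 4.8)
\textbf{Overview of the approach.} The statement is essentially a covering/discretization argument combined with the known fact that in a linear MDP the optimal $Q$-functions are linear in $\bphi$. The plan is: (i) recall that $\Qst_h(s,a) = \inner{\bphi(s,a)}{\bw_h^\star}$ for some bounded vectors $\bw_h^\star \in \R^d$ (an immediate consequence of \Cref{defn:linear_mdp} by the Bellman optimality equations, since the reward is linear and $\int \Vst_{h+1}(s') \, \rmd\bmu_h(s')$ is linear in $\bphi$); (ii) note that the greedy policy with respect to a linear $Q$-function is the limiting linear softmax policy as $\eta \to \infty$; (iii) show that a large-but-finite $\eta$ approximates the greedy (argmax) policy to arbitrary accuracy in value, provided we only need a \emph{near}-optimal policy rather than an exactly optimal one; and (iv) replace the continuum of admissible parameter vectors $\bw_h^\star$ (which lie in a bounded Euclidean ball because $\| \bw_h^\star \|_2 \le \sqrt{d}\,H$ or similar) by a finite $\zeta$-net $\cW$, and verify that the softmax policy built from the nearest net point has value within $\epsilon$ of optimal.

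\textbf{Key steps in order.} First, I would establish the linearity and norm bound on $\bw_h^\star$: by backward induction, $\bw_h^\star = \btheta_h + \int_{s'} \Vst_{h+1}(s')\,\rmd\bmu_h(s')$, and $\| \bw_h^\star\|_2 \le \| \btheta_h \|_2 + H \| |\bmu_h|(\cS)\|_2 \le \sqrt{d} + H\sqrt{d}$, so all relevant parameter vectors live in a ball of radius $O(H\sqrt d)$. Second, I would perform a sensitivity analysis of the softmax policy: if $\pi^\eta_{\bw}$ denotes the linear softmax policy with temperature $\eta$ and weights $\bw$, then for any state $s$, $\inner{\bphi(s,a)}{\bw} $ ranges over $[-R,R]$ with $R = O(H\sqrt d)$, and the softmax puts mass at least $1 - |\cA| e^{-\eta \gamma}$ on the set of actions within $\gamma$ of the maximizer. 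Choosing $\eta = \Theta(\gamma^{-1}\log(|\cA| H / \epsilon))$ makes the softmax policy $\epsilon/(2H)$-greedy at every step with respect to $\bw$. A standard performance-difference / telescoping argument (each step loses at most $\epsilon/(2H)$ relative to acting greedily) then shows $V_0^{\pi^\eta_{\bw^\star}} \ge \Vst_0 - \epsilon/2$. Third, I would handle the net: since $s \mapsto \inner{\bphi(s,a)}{\bw}$ is $1$-Lipschitz in $\bw$ (as $\|\bphi\|_2 \le 1$), replacing $\bw_h^\star$ by the nearest point $\bw_h$ in a $\zeta$-net of the radius-$R$ ball perturbs every logit by at most $\zeta$, hence perturbs the softmax distribution in total variation by at most $O(\eta\zeta)$ per step, costing at most $O(H\eta\zeta)$ in value; choosing $\zeta = \Theta(\epsilon/(H\eta))$ makes this at most $\epsilon/2$. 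The net $\cW$ has size $(R/\zeta)^{O(dH)}$, which is finite, and $\eta$ is the single fixed temperature. Combining the two $\epsilon/2$ terms gives an $\epsilon$-optimal linear softmax policy in $\Pilsm$, as claimed. Since the construction of $\eta, \zeta, \cW$ depends only on $\epsilon, d, H, |\cA|$ and the absolute constants in \Cref{defn:linear_mdp}, and not on the unknown $\{P_h\}, \{\nu_h\}$, the same $\Pilsm$ works for \emph{any} linear MDP.

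\textbf{Main obstacle.} The delicate point is making the two approximation steps genuinely MDP-independent while controlling the blow-up: the temperature $\eta$ must be large enough to force near-greedy behavior, but a larger $\eta$ amplifies the net-discretization error (the softmax map is $O(\eta)$-Lipschitz in the logits), so $\zeta$ must shrink proportionally, inflating $|\cW|$. One must check these can be balanced with $\gamma$ (the "greediness gap" tolerance) chosen as a fixed function of $\epsilon, H$ only --- crucially \emph{not} as the instance-dependent minimum action gap, since no positive lower bound on the gap is assumed. The resolution is exactly that we only need $\epsilon$-optimality: being within $\epsilon/(2H)$ of the max at each step suffices for the telescoped bound regardless of whether there is a true gap, so we may take $\gamma = \epsilon/(2H)$ and then $\eta = \Theta(H\epsilon^{-1}\log(|\cA|H\epsilon^{-1}))$, $\zeta = \Theta(\epsilon^2 H^{-2}\eta^{-1})$, yielding a finite $\cW$ depending only on the stated parameters. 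I would also double-check the measurability/boundedness of $\Vst_{h+1}$ used in the linearity step, which follows since rewards are in $[0,1]$ so $\Vst_h \in [0,H]$.
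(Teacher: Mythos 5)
The paper does not prove this statement itself---it imports it verbatim as Lemma A.14 of \cite{wagenmaker2022instance}---and your argument is essentially the proof given in that reference: write $Q^\star_h(s,a)=\inner{\bphi(s,a)}{\bw_h^\star}$ with $\|\bw_h^\star\|_2 = O(H\sqrt{d})$, approximate the greedy policy by a softmax with temperature $\eta=\Theta(H\epsilon^{-1}\log(|\cA|H/\epsilon))$, and discretize the weight ball, balancing the $O(\eta)$-Lipschitzness of the softmax map against the net resolution $\zeta$. Your accounting of the two error sources and the key observation that the greediness tolerance $\gamma$ can be taken to be $\epsilon/(2H)$ rather than an instance-dependent action gap are both correct, so the construction is indeed uniform over all linear MDPs sharing the feature map and action set.
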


Henceforth, we let $\Pilsm$ refer to the set of linear softmax policies defined in \Cref{prop:lsm_coverage}. Note that $\Pilsm$ depends on $\epsilon$, but we suppress this dependence for simplicity. Intuitively, $\Pilsm$ can be thought of as a \emph{policy-cover}---in some sense, it covers the space of policies we hope to learn over. }{
In particular, we consider the class of \emph{linear softmax policies}.

\begin{defn}[Linear Softmax Policy]
A policy $\pi$ is a \emph{linear softmax policy} with parameters $\eta$ and $\{ \bw_h\}_{h=1}^H$ if
\begin{align*}
\pi_h(a|s) = \frac{e^{\eta \inner{\bphi(s,a)}{\bw_h}}}{\sum_{a' \in \cA} e^{\eta \inner{\bphi(s,a')}{\bw_h}}}, \quad \forall s,a,h.
\end{align*}
\end{defn}

It can be shown that there exists some choice of $\eta$ and set of parameter vectors $\cW$ such that the set of linear softmax policies corresponding to $\eta$ and $\cW$ contains an $\epsilon$-optimal policy on any linear MDP. Henceforth we refer to this set of policies as $\Pilsm$.
}

\subsection{Offline Reinforcement Learning and \paradigmname}
\iftoggle{arxiv}{
In this work we are interested in the setting where we have access to some set of offline data. We will denote such a dataset by $\frakDoff = \{ (s_{h(\tau)}^\tau, a_{h(\tau)}^\tau, r_{h(\tau)}^\tau, s_{h(\tau)+1}^\tau) \}_{\tau = 1}^{\Toff}$, where here $h(\tau)$ denotes the step of the $\tau$th sample. We make the following assumption on this data.

\begin{asm}[Offline Data]\label{asm:offline_data}
Let $\frakDoff$ be some offline dataset and $\cM$ our underlying MDP. Then for each $t \le \Toff$, we have
\begin{align*}
& \Pr_{\frakDoff}[(r_{h(t)}^t,s_{h(t)+1}^t) \in A \times B \mid \{ (s_{h(\tau)}^\tau, a_{h(\tau)}^\tau, r_{h(\tau)}^\tau, s_{h(\tau)+1}^\tau) \}_{\tau = 1}^{t-1}, s_{h(t)} = s_{h(t)}^t, a_{h(t)} = a_{h(t)}^t] \\
& \qquad = \Pr_{\cM}[(r_{h(t)}(s_h,a_h),s_{h(t)+1}) \in A \times B \mid s_{h(t)} = s_{h(t)}^t, a_{h(t)} = a_{h(t)}^t ]
\end{align*}
for all $A \subseteq [0,1]$ and $B \subseteq \cS$.
\end{asm}
}{
In this work we are interested in the setting where we have access to some set of offline data. Let $z^\tau := (s_{h(\tau)}^\tau, a_{h(\tau)}^\tau, r_{h(\tau)}^\tau, s_{h(\tau)+1}^\tau)$ and denote such a dataset by $\frakDoff = \{ z^\tau \}_{\tau = 1}^{\Toff}$, where here $h(\tau)$ denotes the step of the $\tau$th sample. We make the following assumption on this data.

\begin{asm}[Offline Data]\label{asm:offline_data}
Let $\frakDoff$ be an offline dataset and $\cM$ our underlying MDP. Then for each $t \le \Toff$:
\begin{align*}
& \Pr_{\frakDoff}[(r_{h(t)}^t,s_{h(t)+1}^t) \in A \times B \mid \{ z^\tau \}_{\tau = 1}^{t-1}, s_{h(t)}^t,  a_{h(t)}^t  ] \\
& \ \ \  = \Pr_{\cM}[(r_{h(t)}(s_h,a_h),s_{h(t)+1}) \in A \times B \mid  s_{h(t)}^t,  a_{h(t)}^t ]
\end{align*}
for all $A \subseteq [0,1]$ and $B \subseteq \cS$.
\end{asm}
}

\Cref{asm:offline_data} is similar to existing assumptions on offline data found in the offline RL literature, for instance the \emph{compliance} condition of \cite{jin2021pessimism}.
\Cref{asm:offline_data} implies that the distribution of the reward and next state in $\frakDoff$ matches the distribution induces by our MDP $\cM$. However, it allows for correlations between steps $\tau$ (e.g. the data could be collected by an adaptive policy) and, additionally, does not even require that the dataset contain full trajectories. For some dataset $\frakDoff$, we define $\frakDoff^h := \cup_{\tau = 1, h(\tau) = h}^{\Toff} \{ s_{h(\tau)}^\tau, a_{h(\tau)}^\tau, r_{h(\tau)}^\tau, s_{h(\tau)+1}^\tau) \}$ 
\iftoggle{arxiv}{
the subset of $\frakDoff$ with $h(\tau) = h$, and
\begin{align*}
\bLamoff^h(\frakDoff) = \sum_{\tau = 1}^{\Toff} \I \{ h(\tau) = h \} \cdot \bphi(s_{h(\tau)}^\tau, a_{h(\tau)}^\tau) \bphi(s_{h(\tau)}^\tau, a_{h(\tau)}^\tau)^\top
\end{align*}
}{
and, for $\bphi_\tau := \bphi(s_{h(\tau)}^\tau, a_{h(\tau)}^\tau)$, 
\begin{align*}
\textstyle \bLamoff^h = \sum_{\tau = 1}^{\Toff} \I \{ h(\tau) = h \} \cdot \bphi_\tau \bphi_\tau^\top
\end{align*}
}
\iftoggle{arxiv}{
the covariates collected at step $h$. In general, we will abbreviate $\bLamoff^h := \bLamoff^h(\frakDoff)$. Finally, we recall the definition of the concentrability coefficient, $C^{\pi}$, from the offline RL literature. While various notions of concentrability have been proposed, we are primarily interested in those specialized to the setting of linear MDPs, and consider in particular the following definition given in \cite{zanette2021provable}:
\begin{align}\label{eq:offline_conc}
\Cpi(\frakDoff) := \sum_{h=1}^H \| \bphi_{\pi,h} \|_{(\bLamoff^h)^{-1}}.
\end{align}
}{
the covariates collected at step $h$. Finally, we recall the definition of the concentrability coefficient, $C^{\pi}$, from the offline RL literature \cite{zanette2021provable}:
\begin{align}\label{eq:offline_conc}
\textstyle \Cpi(\frakDoff) := \sum_{h=1}^H \| \bphi_{\pi,h} \|_{(\bLamoff^h)^{-1}}.
\end{align}
}
\iftoggle{arxiv}{
We let $\Cst(\frakDoff) := C^{\pist}(\frakDoff)$. Existing works in offline RL have shown that if $\Cst(\frakDoff)$ is bounded, it is possible to obtain a near-optimal policy via pessimism using only offline data, and that furthermore this is a necessary coverage condition (see for example \cite{zanette2021provable} or \cite{jin2021pessimism}).
}{
We let $\Cst(\frakDoff) := C^{\pist}(\frakDoff)$. Existing work has shown that having $\Cst(\frakDoff)$ bounded is a necessary and sufficient condition to obtain a near-optimal using offline data \citep{zanette2021provable,jin2021pessimism}.
}

\paragraph{Bridging Offline and Online RL.}
Given the previous definitions, we are now ready to formally define our learning setting, \paradigmname.

\paragraph{Problem Definition (\paradigmname).}
For any linear MDP $\cM$ satisfying \Cref{asm:full_rank_cov}, given access to some dataset $\frakDoff$ which satisfies \Cref{asm:offline_data} on MDP $\cM$ as well as the ability to interact online with $\cM$, return some policy $\pihat$ such that $\Pr_{\cM}[V_0^{\pihat} \ge \Vst_0 - \epsilon] \ge 1-\delta$, using as few online interactions as possible.

\section{Main Results}\label{sec:results}
\iftoggle{arxiv}{The notion of concentrability has played a key role in the offline RL literature.}{} In the setting of \paradigmname, we are interested in generalizing the notion of concentrability to account not just for the offline data available, but how this data can be augmented by online exploration to improve coverage. To this end, we introduce the following notion of \emph{offline-to-online concentrability}:

\begin{defn}[Offline-to-Online Concentrability Coefficient]\label{def:offline_to_online_concentrability}
Given step $h$, offline dataset $\frakDoff$, desired tolerance of learning $\epsilon > 0$, and number of online samples $T$, we define the \emph{offline-to-online concentrability coefficient} as:
\begin{align*}
\Coto^h(\frakDoff,\epsilon,T) :=  \inf_{\bLambda \in \bOmega_h}  \max_{\pi \in \Pilsm} \frac{\|  \bphi_{\pi,h} \|_{( T\bLambda + \bLamoff^h )^{-1}}^2}{(\Vst_0 - V_0^\pi)^2 \vee \epsilon^2}.
\end{align*}
\end{defn}

Intuitively, we can think of $\Coto^h(\frakDoff,\epsilon,T)$ as generalizing offline concentrability to the setting where we can augment the offline data by collecting $T$ samples of online data as well, thereby improving the coverage of the data. In particular, we note that the coverage term, $\|  \bphi_{\pi,h} \|_{( T\bLambda + \bLamoff^h )^{-1}}^2$, bears a direct resemblance to the offline concentrability coefficient, \eqref{eq:offline_conc}, but instead of scaling only with the offline data $\bLamoff^h$, it also scales with $T$ samples of online data---denoted by $T\bLambda$, the covariates we can collect in $T$ online interactions with the environment. Note that the online-to-offline concentrability scales with the \emph{best-case} online covariates realizable on our MDP---the best possible online data we could collect to cover our policy space given our offline data and $T$ episodes of online exploration. 

We weight this coverage term by the optimality of the policy under consideration, scaling it by the minimum of inverse squared gap for policy $\pi$, $\Vst_0 - \Vpi_0$, and $\epsilon^{-2}$. This quantifies the fact that for very suboptimal policies, we should not need to collect a significant amount of data, as they can easily be shown to be suboptimal. Note that $\Coto^h$ corresponds to a somewhat stronger notion of coverage than what has recently been considered in the offline RL literature---rather than simply covering the optimal policy, the offline-to-online concentrability scales with the coverage of \emph{every} policy, weighted by each policy's optimality. As we discuss in more detail in \Cref{sec:coverage}, this stronger notion of coverage is necessary if we care about \emph{verifiable} learning.

Existing work in the offline RL literature shows that efficient learning is possible if the concentrability coefficient is bounded. We will take a similar approach in this work, and aim to collect enough online data so that the offline-to-online concentrability coefficient is sufficiently small.
To this end, we introduce the following notion of minimal online exploration for coverage, quantifying the minimal number of online samples, $T$, that must be collected in order to ensure the offline-to-online concentrability is less than some desired threshold.

\begin{defn}[Minimal Online Samples for Coverage]
For some desired tolerance $\beta$, we define the \emph{minimal online samples for coverage} as:
\begin{align*}
\Noto^h(\frakDoff, \epsilon; \beta) := \min_T T \quad \text{s.t.} \quad \Coto^h(\frakDoff,\epsilon,T) \le \frac{1}{\beta}.
\end{align*}
\end{defn}

Note that if our goal is to bound the offline-to-online concentrability coefficient at step $h$, $\Noto^h(\frakDoff, \epsilon; \beta)$ is essentially the minimum number of online interactions that would be required to do so. \icmledit{Intuitively, this corresponds to the minimum number of online interactions needed to cover relevant regions of the feature space not sufficiently covered by the offline data.}
The following lower bound shows that this quantity is fundamental. 

\begin{theorem}\label{prop:minimax_lb}
Fix $\Toff \ge 0$ and $\epsilon > 0$. Then there exists some class of MDPs $\frakM$ and some offline data $\frakDoff$ with $|\frakDoff| = \Toff$, $\lammin(\bLamoff^h) \ge \Omega(\Toff/d), \forall h \in [H]$, such that any algorithm must collect at least 
\iftoggle{arxiv}{
\begin{align*}
\sum_{h=1}^H \Noto^h(\frakDoff, \epsilon; c \cdot dH)
\end{align*} }{
\begin{align*}
\tsum_{h=1}^H \Noto^h(\frakDoff, \epsilon; c \cdot dH)
\end{align*} }
online episodes on some instance $\cM \in \frakM$ in order to identify an $\epsilon$-optimal policy with constant probability on $\cM$, for universal constant $c$. 
\end{theorem}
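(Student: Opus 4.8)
The plan is to establish the lower bound via a reduction to a hypothesis-testing problem in a carefully constructed family of linear bandit instances (i.e., $H$-horizon MDPs that essentially decouple across steps), so that the online sample complexity at each step $h$ is governed independently by $\Noto^h$. The high-level structure mirrors classical instance-dependent lower bounds (à la \cite{wagenmaker2022instance}): fix a ``base'' instance $\cM_0$ together with offline data $\frakDoff$; for each candidate alternative policy $\pi$, construct an alternative instance $\cM_\pi$ that agrees with $\cM_0$ on the offline data distribution and on all transitions, but perturbs the reward vectors $\btheta_h$ in the feature-direction $\bphi_{\pi,h}$ by an amount proportional to the gap $\Vst_0 - V_0^\pi$, so that $\pi$ becomes the unique $\epsilon$-optimal policy on $\cM_\pi$. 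Any correct algorithm must distinguish $\cM_0$ from each $\cM_\pi$, and a change-of-measure / Fano-type argument forces enough online samples in the direction $\bphi_{\pi,h}$ to detect the perturbation.

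\textbf{Construction and design of the offline data.} First I would specify the MDP class $\frakM$ so that Assumption~\ref{asm:full_rank_cov} holds with a controlled $\lamminst$ and so that at each step $h$ the reachable feature set spans $\R^d$; a natural choice is a layered construction where step $h$ behaves like a $d$-dimensional linear bandit with a rich action set on $\sphered$, and rewards carry the only unknown information. The offline dataset $\frakDoff$ is chosen to realize a prescribed covariance $\bLamoff^h$ at each step with $\lammin(\bLamoff^h) \ge \Omega(\Toff/d)$ (e.g.\ split the $\Toff$ samples evenly across steps and across a well-conditioned design), which is exactly the hypothesis in the theorem statement; crucially $\frakDoff$ is reward-augmented in the same way across $\cM_0$ and $\cM_\pi$ only up to the perturbation, so Assumption~\ref{asm:offline_data} is met on every instance.

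\textbf{From KL control to the $\Noto$ quantity.} The core computation is: the KL divergence between the trajectory distributions of $\cM_0$ and $\cM_\pi$ under any algorithm, after $T_h$ online episodes at step $h$, is bounded by (perturbation size)$^2$ times $\mathbb{E}[\|\text{online covariates at step }h\| ]$ plus the fixed offline contribution $\bLamoff^h$ — i.e.\ it scales like $(\Vst_0-V_0^\pi)^2 \cdot \|\bphi_{\pi,h}\|^2_{(T_h \bLambda_{\text{alg}}^h + \bLamoff^h)^{-1}}^{-1}$-ish after optimizing the perturbation magnitude subject to keeping the gap of order $\max\{\Vst_0-V_0^\pi,\epsilon\}$. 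Requiring this KL to be $\Omega(1)$ for the test to be hard, and then taking the infimum over the algorithm's induced online covariance $\bLambda \in \bOmega_h$ and the maximum over $\pi \in \Pilsm$ (the policy cover from Proposition~\ref{prop:lsm_coverage}, which must contain the planted near-optimal policy), reproduces precisely the definition of $\Coto^h(\frakDoff,\epsilon,T)$, and hence the threshold $\Coto^h \le 1/(c\,dH)$ yields the bound $T_h \ge \Noto^h(\frakDoff,\epsilon; c\cdot dH)$. Summing the per-step requirements — which I would make simultaneously binding by a product/independence construction across steps, or alternatively a union-bound over which step the algorithm under-samples — gives $\sum_h \Noto^h(\frakDoff,\epsilon; c\cdot dH)$. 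The $dH$ factor enters from the dimension of the feature perturbation ($d$ alternatives needed to pin down a direction) and the horizon ($H$ steps, each needing its own test), together with the standard $\log$-free constant-probability version of the argument.

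\textbf{Main obstacle.} The delicate part is the simultaneous, tight coupling of the lower bound across all $H$ steps while keeping each instance a valid linear MDP with bounded parameters and a single clean $\epsilon$-optimal policy: a naive union bound over steps loses an $H$ factor in the wrong place, so I expect to need a ``composite'' alternative that perturbs all steps at once (with the gaps adding up), and then argue via a chain rule for KL that the total information needed is the sum of the per-step informations — this is where the interplay between the gap weighting $(\Vst_0-V_0^\pi)^2\vee\epsilon^2$ in $\Coto^h$ and the additivity of value perturbations across the horizon must be handled carefully. A secondary technical point is ensuring the planted policies lie in $\Pilsm$ and that the adversarially-chosen $\frakDoff$ does not already certify optimality (which would trivialize the online requirement), i.e.\ that $\bLamoff^h$ leaves the relevant directions genuinely under-covered when $\Coto^h > 1/(c\,dH)$.
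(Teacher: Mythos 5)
Your proposal takes a genuinely different route from the paper, and as sketched it has a gap that prevents it from delivering the stated bound. The paper does \emph{not} run an instance-dependent change-of-measure argument against policy-indexed alternatives $\cM_\pi$. Instead it (i) proves a worst-case estimation lower bound in a ``multi-dimensional linear bandit'' over the hypercube $\{-\mu,\mu\}^{dH}$ (a Shamir-style argument, Lemma~\ref{lem:minimax_reg_lb}), showing any algorithm needs $\Omega(d^2H^2/\epsilon^2)$ \emph{total} samples (offline plus online) to output an $\epsilon$-optimal policy with constant probability on a single-state linear MDP; (ii) chooses a symmetric offline dataset and \emph{explicitly computes an upper bound} $\sum_h \Noto^h(\frakDoff,\epsilon;c\cdot dH) \le \max\{\Theta(d^2H^2/\epsilon^2)-\Toff,\,0\}$; and (iii) concludes arithmetically: online episodes $\ge$ (worst-case total) $-\Toff \ge \sum_h\Noto^h$. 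Since the theorem only asserts existence of one class and one dataset, this ``compute both sides and compare'' strategy suffices and sidesteps every difficulty you flag about coupling steps and relating the algorithm's covariates to the $\inf_{\bLambda}$.

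The concrete gap in your approach is the factor $\beta = c\cdot dH$. Because $\Noto^h(\cdot;\beta)$ is nondecreasing in $\beta$, the theorem with $\beta=c\cdot dH$ is strictly stronger than the same statement with $\beta=O(1)$. Your core mechanism --- one alternative $\cM_\pi$ per policy, requiring $\KL(\cM_0\,\|\,\cM_\pi)=\Omega(1)$ for the binary test to be hard --- yields exactly the condition $\Coto^h \le 1/c$ for a universal constant, i.e.\ $\Noto^h(\cdot;c)$; the Kaufmann-style transportation version yields $\beta=\log(1/2.4\delta)$ (this is precisely what the paper's \emph{instance-dependent} bound, Theorem~\ref{prop:lb_instance_no_diff}, proves via Proposition~\ref{thm:lb_lin_band_rand_arms}). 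To manufacture the extra $dH$ you would need an Assouad/Fano argument over $\Theta(dH)$ independent bits whose informations add, and then a separate step converting that \emph{additive} information requirement into the \emph{max-over-$\pi$} form appearing in $\Coto^h$ with the $dH$ multiplier --- a conversion that holds for the paper's symmetric construction but does not follow from the per-alternative tests you describe. You gesture at both ingredients (``$d$ alternatives needed to pin down a direction,'' ``composite alternative that perturbs all steps at once''), but neither is carried out, and the single-test KL budget you actually specify produces the wrong $\beta$. If you want to salvage your route, the cleanest fix is to replace the per-policy alternatives with the hypercube prior and prove the estimation lower bound first, which is essentially what the paper does.
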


\Cref{prop:minimax_lb} illustrates that there exists a setting where collecting at least $\Noto^h$ online samples is necessary if our goal is to learn an $\epsilon$-optimal policy---given tolerance $\epsilon > 0$ and offline dataset size $\Toff \ge 0$, we can construct a class of instances and offline dataset of size $\Toff$ such that any algorithm must collect at least $\sum_{h=1}^H \Noto^h(\frakDoff, \epsilon; c \cdot dH)$ online episodes on some instance in the class in order to learn an $\epsilon$-optimal policy. The construction used and proof of \Cref{prop:minimax_lb} are given in \Cref{sec:minimax_lb_pf}.

\subsection{Efficient Learning in \paradigmname}

\iftoggle{arxiv}{
While \Cref{prop:minimax_lb} shows that $\Noto^h$ is a \emph{necessary} measure of the number of samples that must be collected in order to ensure learning, the natural next question is whether $\Noto^h$ is also sufficient. As the following result illustrates, this is indeed the case.
}{
The following result shows that $\Noto^h$ is also a sufficient measure of the number of samples needed to ensure learning. 
}

\begin{theorem}\label{cor:main_complexity}
Fix $\epsilon  > 0$, and assume we have access to some offline dataset $\frakDoff$ satisfying \Cref{asm:offline_data}. Then there exists an algorithm, \algname, which with probability at least $1-\delta$ returns an $\epsilon$-optimal policy and collects at most
\begin{align*}
 \iotaalg \cdot \sum_{h=1}^H \Noto^h(\frakDoff, \epsilon; \beta) + \frac{\Clot}{\epsilon^{8/5}}
\end{align*}
online episodes, for lower-order constant $\Clot := \poly \big ( d, H, \log \frac{1}{\delta}, \frac{1}{\lamminst}, \log \frac{1}{\epsilon}, \log \Toff \big )$, $\iotaalg = \cO(\log \frac{1}{\epsilon})$, and 
\begin{align*}
\beta := d H^5 \cdot \logs \left (d, H, \Toff, \frac{1}{\lamminst}, \frac{1}{\epsilon}, \log \frac{1}{\delta} \right ) + c H^4 \cdot \log \frac{1}{\delta}.
\end{align*}
\end{theorem}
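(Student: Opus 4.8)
The plan is to analyze \algname, which combines an offline pessimism-style elimination phase with an online exploration phase driven by the experimental-design objective implicit in $\Coto^h$. First I would set up the high-probability event $\cE$ on which (i) the offline least-squares estimates $\bthetahat_h$ concentrate around $\btheta_h$ at the rate governed by $(\bLamoff^h)^{-1}$, and (ii) all online estimates, computed from data collected via the exploration policies, concentrate at the rate governed by $(\bLamoff^h + \sum_t \bphi_t\bphi_t^\top)^{-1}$; this is a standard self-normalized martingale argument (elliptical potential / Freedman-type bounds), and yields the $\logs(\cdot)$ and $\log\frac1\delta$ factors that appear in $\beta$. The key structural point is that for any linear softmax policy $\pi \in \Pilsm$, the value gap $\Vst_0 - \Vpi_0$ can be estimated to accuracy controlled by $\sum_h \|\bphi_{\pi,h}\|_{(\bLamoff^h + \bLambda)^{-1}}$ where $\bLambda$ is the accumulated online covariance — this is the linear-MDP value-difference decomposition (telescoping the Bellman error across $h$), which translates coverage in feature space directly into statistical accuracy on policy values.

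Next I would describe the algorithm's outer loop: it proceeds in $\iotaalg = \cO(\log\frac1\epsilon)$ epochs, maintaining an active set of policies $\Pi^\ell \subseteq \Pilsm$, all of which are guaranteed (on $\cE$) to be within a factor-$2$ of $\epsilon$-optimal at the current confidence scale. In epoch $\ell$, for each step $h$ the algorithm solves (approximately, via \fwregret/Frank–Wolfe on the experimental-design objective over $\bOmega_h$) for an exploration distribution $\piexp$ minimizing $\max_{\pi \in \Pi^\ell} \|\bphi_{\pi,h}\|_{(T\bLambda_{\piexp}^h + \bLamoff^h)^{-1}}^2 / ((\Vst_0 - \Vpi_0)^2 \vee \epsilon^2)$, i.e.\ exactly the quantity defining $\Coto^h$. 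It then plays $\piexp$ for the number of episodes needed to drive this quantity below $1/\beta$ — which by definition of $\Noto^h$ is at most (a constant times) $\Noto^h(\frakDoff,\epsilon;\beta)$ per epoch, after accounting for the fact that the Frank–Wolfe solver is only approximately optimal and that covariance concentration costs a burn-in. Summing over the $\iotaalg$ epochs gives the leading term $\iotaalg \cdot \sum_h \Noto^h(\frakDoff,\epsilon;\beta)$. The elimination step at the end of the epoch removes any policy whose estimated gap exceeds the current threshold; a halving argument on the confidence radius shows that after $\iotaalg$ epochs the surviving policies are all $\epsilon$-optimal, and correctness follows since $\Pilsm$ contains an $\epsilon$-optimal policy by \Cref{prop:lsm_coverage} and that policy is never eliminated on $\cE$.

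The lower-order term $\Clot/\epsilon^{8/5}$ comes from two sources I would isolate: (a) the initial forced-exploration phase needed to guarantee the online covariance matrices are well-conditioned enough for the Frank–Wolfe optimization and the covariance-concentration bounds to kick in — this scales polynomially in $d, H, 1/\lamminst, \log(1/\delta)$ and with a fixed $\poly(1/\epsilon)$ budget chosen to make all approximation errors negligible relative to $\epsilon$ — and (b) the sub-optimality of solving the design problem with a finite-precision Frank–Wolfe scheme rather than exactly, which contributes a similar lower-order overhead; the $8/5$ exponent is an artifact of balancing the Frank–Wolfe convergence rate against the covariance estimation error, and I would carry this bookkeeping through carefully but not belabor it. The main obstacle — and the step requiring the most care — is controlling the interaction between \emph{adaptivity across epochs} and the experimental-design guarantee: because $\Pi^\ell$ is random (it depends on data collected in earlier epochs) and the online exploration data within an epoch is used both to refine value estimates and to build covariance, one must argue that on $\cE$ the per-epoch sample count is still bounded by $\Noto^h$ evaluated at the \emph{true} instance quantities, not inflated by estimation error in the gaps $\Vst_0 - \Vpi_0$; this requires showing the estimated gaps used to weight the design objective are within a constant factor of the true gaps for all surviving policies, which closes the loop with the concentration event and the halving schedule. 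I would handle this with a union bound over epochs and a careful accounting that the gap estimates at scale $\ell$ suffice to evaluate the objective at scale $\ell$, deferring the full argument to the appendix.
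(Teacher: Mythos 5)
Your overall architecture matches the paper's: \algname is a policy-elimination scheme over $\Pilsm$ running for $\iotaalg = \cO(\log\frac{1}{\epsilon})$ epochs with active sets $\Pi_\ell$ of $\cO(\epsilon_\ell)$-optimal policies, per-step online experiment design (the \optcov / Frank--Wolfe-via-regret routine) warm-started by $\bLamoff^h$, self-normalized martingale concentration for the reward vectors and (recursively estimated) feature visitations, and a lower-order $\Clot/\epsilon^{8/5}$ cost coming from the learning-to-explore / Frank--Wolfe trade-off. The halving, elimination, and correctness arguments are all as you describe.

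The one step that would fail as written is your gap-weighted design objective. You propose that the algorithm directly minimize $\max_{\pi\in\Pi_\ell}\|\bphi_{\pi,h}\|^2_{(T\bLambda+\bLamoff^h)^{-1}}/\bigl((\Vst_0-\Vpi_0)^2\vee\epsilon^2\bigr)$ using plug-in gap estimates, and to justify the sample count by showing the estimated gaps are within a constant factor of the true gaps for all surviving policies. That cannot hold: at epoch $\ell$ the value estimates are accurate only to $\cO(\epsilon_\ell)$, so for a surviving policy whose true gap is, say, $\epsilon_\ell/100$ (such policies generically survive), no constant-factor gap estimate is available, and the weighted objective evaluated at estimated gaps can differ from the true one by an unbounded factor. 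The paper sidesteps this entirely by making the per-epoch design objective \emph{unweighted}: it requires only $\max_{\pi\in\Pi_\ell}\|\bphihat_{\pi,h}^\ell\|^2_{(\bLambda_{h,\ell}+\bLamoff^h)^{-1}}\le\epsilon_\ell^2/\beta_\ell$. The gap weighting in $\Noto^h$ then appears only in the \emph{analysis}: by \Cref{lem:correctness} every surviving $\pi\in\Pi_\ell$ has $\Vst_0-\Vpi_0\le 4\epsilon_\ell$, so $\epsilon_\ell^2=\tfrac{1}{16}\bigl((\Vst_0-V_0^\pi)^2\vee(4\epsilon_\ell)^2\bigr)$ for all such $\pi$, and the uniform threshold is equivalent up to constants to the gap-weighted one restricted to $\Pi(4\epsilon_\ell)$; hence the per-epoch design cost is bounded by $\Noto^h(\frakDoff,\epsilon;\beta)$ without the algorithm ever touching gap estimates. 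Two further technical points you elide but that the paper needs: (i) $\beta_\ell$ depends on the unknown total sample count through a $\log\det$ term, which is handled by an outer doubling loop over a budget $\Tonbar=2^i$ (with early termination and restart); and (ii) the design is run on the \emph{estimated} visitations $\bphihat_{\pi,h}^\ell$, so the guarantee must be transferred to the true $\bphi_{\pi,h}$ using $\|\bphihat_{\pi,h}^\ell-\bphi_{\pi,h}\|_2\le d\epsilon_\ell/2H$ together with the enforced minimum-eigenvalue floor on the collected covariates.
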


\Cref{cor:main_complexity} shows that, up to $H$ factors and lower-order terms, $\Noto^h$ is a sufficient measure for the number of online samples that must be collected, and that our algorithm, \algname, achieves this complexity. Furthermore,
\Cref{prop:minimax_lb} shows that the leading-order term in \Cref{cor:main_complexity} is unimprovable in a minimax sense (up to $H$ factors). In addition, in \Cref{sec:instance_lb} we  present an \emph{instance-dependent} lower bound for the $\epsilon = 0$ case---rather than scaling with the worst-case complexity over a class of instances, it scales with the complexity necessary on a particular instance---which holds for \emph{any} offline dataset and shows that the $\log 1/\delta$ dependence of \Cref{cor:main_complexity} is also necessary.
\icmledit{We emphasize that, while the sample complexity of \algname corresponds to realizing a stronger coverage condition than simply ensuring $\Cst(\frakDoff)$ is bounded, our lower bounds show that this stronger condition is necessary if our goal is verifiable learning.}
The proof of \Cref{cor:main_complexity} is given in \Cref{sec:upper_proof}, and a description of our algorithm \algname is given in \Cref{sec:alg_description}.

We next provide the following guarantee which shows that, even in the case when $\frakDoff = \emptyset$ or when $\frakDoff$ has poor coverage, \algname does essentially no worse than the \pedel algorithm of \cite{wagenmaker2022instance} which, up to $H$ factors and lower-order terms, is the tightest known complexity bound for online PAC RL in linear MDPs.

\begin{corollary}\label{cor:worst_case}
Regardless of $\frakDoff$, with probability at least $1-\delta$, \algname collects at most
\begin{align}\label{eq:worst_case1}
\iotaalg \beta \cdot \sum_{h=1}^H \inf_{\bLambda \in \bOmega_h}  \max_{\pi \in \Pilsm} \frac{\|  \bphi_{\pi,h} \|_{\bLambda^{-1}}^2}{(\Vst_0 - V_0^\pi)^2 \vee \epsilon^2}  + \frac{\Clot}{\epsilon^{8/5}}
\end{align}
online episodes. Furthermore, \eqref{eq:worst_case1} is always bounded by
\begin{align}\label{eq:worst_case2}
\cOtil \left (\frac{d H^5 (dH + \log 1/\delta)}{\epsilon^2} + \frac{\Clot}{\epsilon^{8/5}} \right ).
\end{align}
\end{corollary}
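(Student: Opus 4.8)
The plan is to obtain both displays directly from \Cref{cor:main_complexity} by bounding $\Noto^h(\frakDoff,\epsilon;\beta)$ by a quantity that does not depend on $\frakDoff$ at all. Write $\Psi_h := \inf_{\bLambda \in \bOmega_h}\max_{\pi\in\Pilsm}\frac{\|\bphi_{\pi,h}\|_{\bLambda^{-1}}^2}{(\Vst_0 - V_0^\pi)^2 \vee \epsilon^2}$, so the leading term of \eqref{eq:worst_case1} is exactly $\iotaalg\beta\sum_{h=1}^H\Psi_h$. The key step is a Loewner-monotonicity bound: since $\bLamoff^h$ is a sum of rank-one positive semidefinite matrices, $\bLamoff^h \succeq 0$, so for every full-rank $\bLambda \in \bOmega_h$ and every $T \ge 1$ we have $T\bLambda + \bLamoff^h \succeq T\bLambda$, hence $(T\bLambda + \bLamoff^h)^{-1} \preceq \tfrac{1}{T}\bLambda^{-1}$ and $\|\bphi_{\pi,h}\|_{(T\bLambda + \bLamoff^h)^{-1}}^2 \le \tfrac{1}{T}\|\bphi_{\pi,h}\|_{\bLambda^{-1}}^2$ for every $\pi$. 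Maximizing over $\pi\in\Pilsm$ and then taking the infimum over $\bLambda$ (which, without loss of generality, ranges over full-rank elements of $\bOmega_h$, a nonempty set by \Cref{asm:full_rank_cov}) gives $\Coto^h(\frakDoff,\epsilon,T) \le \Psi_h/T$ for all $T \ge 1$; consequently any $T \ge \beta\Psi_h$ forces $\Coto^h(\frakDoff,\epsilon,T) \le 1/\beta$, so $\Noto^h(\frakDoff,\epsilon;\beta) \le \lceil\beta\Psi_h\rceil \le \beta\Psi_h + 1$. Plugging this into \Cref{cor:main_complexity}, \algname collects at most $\iotaalg\beta\sum_{h=1}^H\Psi_h + \iotaalg H + \Clot/\epsilon^{8/5}$ online episodes, and since $\iotaalg = \cO(\log\tfrac{1}{\epsilon})$ the additive $\iotaalg H$ is lower order and absorbed into $\Clot/\epsilon^{8/5}$ (enlarging the polynomial $\Clot$), which gives the first claim \eqref{eq:worst_case1}.

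For the second claim I would bound $\Psi_h$ uniformly in $h$. Using $(\Vst_0 - V_0^\pi)^2 \vee \epsilon^2 \ge \epsilon^2$ gives $\Psi_h \le \tfrac{1}{\epsilon^2}\inf_{\bLambda\in\bOmega_h}\max_{\pi\in\Pilsm}\|\bphi_{\pi,h}\|_{\bLambda^{-1}}^2$, and the remaining quantity is controlled by a standard $G$-optimal (Kiefer--Wolfowitz) design argument: since $\bLambda_{\pi,h} = \Exp_\pi[\bphi(s_h,a_h)\bphi(s_h,a_h)^\top] \succeq \Exp_\pi[\bphi(s_h,a_h)]\Exp_\pi[\bphi(s_h,a_h)]^\top = \bphi_{\pi,h}\bphi_{\pi,h}^\top$, any $\bLambda = \Exp_{\pi\sim\omega}[\bLambda_{\pi,h}]\in\bOmega_h$ dominates $\Exp_{\pi\sim\omega}[\bphi_{\pi,h}\bphi_{\pi,h}^\top]$ in Loewner order, so an optimal design over the feature vectors $\{\bphi_{\pi,h} : \pi\in\Pilsm\}\subseteq\R^d$ — perturbed infinitesimally toward the full-rank exploratory design guaranteed by \Cref{asm:full_rank_cov} so as to remain inside $\bOmega_h$ and invertible — achieves $\inf_{\bLambda\in\bOmega_h}\max_{\pi\in\Pilsm}\|\bphi_{\pi,h}\|_{\bLambda^{-1}}^2 \le d$. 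Hence $\Psi_h \le d/\epsilon^2$ and $\sum_{h=1}^H\Psi_h \le dH/\epsilon^2$. Substituting $\beta = dH^5\,\logs(\cdots) + cH^4\log\tfrac{1}{\delta} = \cOtil\big(H^4(dH + \log\tfrac{1}{\delta})\big)$ and $\iotaalg = \cOtil(1)$ into \eqref{eq:worst_case1} yields $\iotaalg\beta\sum_h\Psi_h \le \cOtil\big(\tfrac{dH^5(dH + \log 1/\delta)}{\epsilon^2}\big)$, which together with the $\Clot/\epsilon^{8/5}$ term is exactly \eqref{eq:worst_case2}.

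Essentially all of this is routine Loewner-order manipulation and substitution into \Cref{cor:main_complexity}; the one place needing genuine care is the $G$-optimal design bound, where one must pass from the policy covariances $\bLambda_{\pi,h}$ to the outer products $\bphi_{\pi,h}\bphi_{\pi,h}^\top$ and handle a possibly rank-deficient span of $\{\bphi_{\pi,h}\}$ via a vanishing perturbation toward the full-rank design of \Cref{asm:full_rank_cov}. I expect this to be the main (though still minor) obstacle; it is in fact the same bound underlying the purely-online complexity of \pedel \citep{wagenmaker2022instance}, so one could alternatively invoke it directly and observe that \eqref{eq:worst_case1} recovers, in the worst case, the \pedel guarantee.
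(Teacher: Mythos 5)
Your proposal is correct and takes essentially the same route the paper relies on: the paper states this corollary without a separate proof, obtaining \eqref{eq:worst_case1} from \Cref{cor:main_complexity} via exactly your Loewner bound $T\bLambda + \bLamoff^h \succeq T\bLambda$ (so $\Noto^h(\frakDoff,\epsilon;\beta) \le \beta \Psi_h + 1$ regardless of $\frakDoff$), and \eqref{eq:worst_case2} via the Kiefer--Wolfowitz design bound $\inf_{\bLambda \in \bOmega_h}\max_{\pi}\|\bphi_{\pi,h}\|_{\bLambda^{-1}}^2 \le d$, which is Lemma B.10 of \cite{wagenmaker2022instance} and is already invoked in the proof of \Cref{lem:complexity}. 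The one point you flag as delicate---passing from $\bLambda_{\pi,h}$ to $\bphi_{\pi,h}\bphi_{\pi,h}^\top$ and handling a rank-deficient span by perturbing toward the exploratory design of \Cref{asm:full_rank_cov}---is handled identically in that cited lemma, so nothing is missing.
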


The main complexity measure given in \Cref{cor:worst_case} matches almost exactly the complexity measure of \pedel given in \cite{wagenmaker2022instance}, up to log factors and lower-order terms, implying that \algname loses virtually nothing from incorporating offline data, as compared to a purely online approach. Furthermore, \eqref{eq:worst_case2} shows that \algname hits the worst-case optimal online rate, up to $H$ factors and lower-order terms \citep{wagenmaker2022reward}.

\icmledit{
\begin{remark}[Comparison to \citep{song2022hybrid}]\label{rem:song_comparison}
Instantiating the bound given in \citep{song2022hybrid} in our setting, we see that, in order to find an $\epsilon$-optimal policy, they require collecting at least $\cOtil ( \frac{\max \{ (\Cst)^2, 1 \} \cdot d^3 H^5 \log 1/\delta}{\epsilon^2})$ online episodes. 
In comparison, our worst-case bound, \Cref{cor:worst_case}, improves on this complexity by a factor of $d$ (though is a factor of $H$ worse) implying that, \emph{even when we have access to no offline data}, \algname obtains a better online sample complexity than the algorithm of \citep{song2022hybrid}, up to a factor of $H$, even if the algorithm of \citep{song2022hybrid} has access to an arbitrarily large amount of offline data. 
We remark as well that the dependence on $\Cst$ can only hurt the sample complexity given in \citep{song2022hybrid}---since their complexity scales as $\max \{ (\Cst)^2, 1 \}$, their approach is unable to benefit from small $\Cst$, while if $\Cst$ is large (the offline data coverage is poor), their complexity could be significantly worse than, for example, our worst-case complexity of \Cref{cor:worst_case}. In contrast, our complexity will only improve as the coverage of the offline data improves. 
\end{remark}}

\begin{remark}[Scaling of $\Noto^h$ and $\epsilon$ Dependence]
Note that $\Noto^h(\frakDoff,\epsilon;\beta)$ will typically scale linearly in $\beta$ and, except in cases when the offline data coverage is extremely rich,
as $\cO(\frac{1}{\epsilon^2})$. In general, then, the $\frac{\Clot}{\epsilon^{8/5}}$ term will be lower-order, scaling with a smaller power of $\epsilon$. Intuitively, this term corresponds to the cost of \emph{learning to explore}---learning the set of actions that must be taken to obtain the optimal online covariates which reduce uncertainty. We leave further reducing the $\epsilon$ dependence in this term for future work. 
\end{remark}

\subsection{Leveraging Offline Data Yields a Provable Improvement}

We next show that there exist settings where complementing the offline data with online exploration yields a provable improvement over either (a) relying purely on the offline data without online exploration or (b) ignoring the offline data and using only data collected online. 

\begin{prop}\label{prop:ex_offline_helps}
Fix $\epsilon \le 1/20$. Then there exist two MDPs $\cM^1$ and $\cM^2$, and some dataset $\frakDoff$ that satisfies \Cref{asm:offline_data} on both $\cM^1$ and $\cM^2$, such that:
\begin{itemize}
\item Any algorithm which returns some policy $\pihat$ without further online exploration must have:
\begin{align*}
\max_{i \in \{ 1,2 \}} \Exp_{\frakDoff \sim \cM^i}[\Vst_0(\cM^i) - V_0^{\pihat}(\cM^i)] \ge \Omega(\sqrt{\epsilon}) .
\end{align*}
\item To identify an $\epsilon$-optimal policy on either $\cM^1$ or $\cM^2$ with constant probability, any algorithm which does not use $\frakDoff$ must collect at least $\Omega(\frac{1}{\epsilon^2})$ online samples.
\item \algname will return an $\epsilon$-optimal policy with constant probability after collecting at most $\cO(\frac{1}{\epsilon^{8/5}})$ online episodes.
\end{itemize}
\end{prop}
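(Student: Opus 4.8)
The plan is to realize the separation with a pair of horizon-one linear MDPs (linear bandits) in $\R^{3}$ that agree on everything except the reward component along one ``hidden'' direction $\be_{3}$. I would use four arms: arms $a_{1},a_{2}$ with features proportional to $\be_{1},\be_{2}$ whose mean rewards differ by $\Theta(\epsilon)$ (the ``hard'' sub-problem); a \emph{switch arm} $a_{B}$ whose feature has a \emph{small} $\be_{3}$-component $\kappa=\Theta(\epsilon^{1/5})$; and an \emph{explore arm} $a_{B}'$ whose feature has an $\Omega(1)$ component along $\be_{3}$ but which is $\Theta(\epsilon^{4/5})$-suboptimal on both instances. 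Taking $\cM^{1}$ with $a_{1}$ optimal and $a_{2}$ a gap-$\Theta(\epsilon)$ arm, let $\cM^{2}$ differ only in $\langle\be_{3},\btheta_{1}\rangle$ by $\delta=\Theta(\epsilon^{3/10})$, with the sign of $a_{B}$'s $\be_{3}$-component chosen so that on $\cM^{2}$ the arm $a_{B}$ becomes optimal by a margin $\Theta(\sqrt\epsilon)$ over $a_{1}$, and the sign of $a_{B}'$'s $\be_{3}$-component chosen so $a_{B}'$ stays strictly suboptimal on \emph{both} instances; one checks all feature norms can be kept $\le 1$, rewards in $[0,1]$, and \Cref{asm:full_rank_cov} holds (uniform play over $a_{1},a_{2},a_{B}'$ has full-rank covariance). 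Let $\frakDoff$ consist of $\Theta(\beta/\epsilon^{2})$ pulls each of $a_{1},a_{2}$ and nothing else; since region~A is identical in $\cM^{1},\cM^{2}$, the law of $\frakDoff$ is \emph{exactly} the same under both MDPs, and $\frakDoff$ satisfies \Cref{asm:offline_data} on each.

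\emph{First two bullets.} Any $\pihat$ produced without online exploration is a randomized function of $\frakDoff$, hence has law identical under $\cM^{1}$ and $\cM^{2}$; writing $p=\Exp[\Pr_{\pihat}(a_{B})]$, one has $\Exp_{\cM^{1}}[\Vst_{0}-V_{0}^{\pihat}]\gtrsim\sqrt\epsilon\cdot p$ and $\Exp_{\cM^{2}}[\Vst_{0}-V_{0}^{\pihat}]\gtrsim\sqrt\epsilon\cdot(1-p)$, because on $\cM^{1}$ every arm other than $a_{1}$ that contributes to $p$ costs $\Omega(\sqrt\epsilon)$ and on $\cM^{2}$ every arm other than $a_{B}$ costs $\Omega(\sqrt\epsilon)$; taking $\max$ over $i$ gives $\Omega(\sqrt\epsilon)$. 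For the second bullet I would run a standard two-point argument on region~A: let $\widetilde\cM$ be obtained from $\cM^{1}$ by interchanging the first two coordinates of $\btheta_{1}$ (the relabeling $a_{1}\leftrightarrow a_{2}$). The $\epsilon$-optimal policy sets on $\cM^{1}$ and $\widetilde\cM$ are disjoint, and $\cM^{1},\widetilde\cM$ differ only in region~A, where the per-pull $\KL$ is $O(\epsilon^{2})$; by change of measure, any online-only algorithm collecting $o(1/\epsilon^{2})$ samples fails to be $\epsilon$-optimal with constant probability on one of $\cM^{1},\widetilde\cM$, and since $\widetilde\cM$ is only a relabeling of $\cM^{1}$ this forces $\Omega(1/\epsilon^{2})$ online samples to be $\epsilon$-optimal on $\cM^{1}$ (hence on one of $\cM^{1},\cM^{2}$) without $\frakDoff$.

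\emph{Third bullet.} It suffices to bound $\Noto^{h}(\frakDoff,\epsilon;\beta)$ and invoke \Cref{cor:main_complexity}. Since $\bLamoff^{1}$ is of order $\beta/\epsilon^{2}$ along $\be_{1},\be_{2}$, every constraint in $\Coto^{1}$ coming from those coordinates is met at $T=0$, and the only binding constraint is along $\be_{3}$. Taking $\bLambda$ to be the rank-one covariance of playing $a_{B}'$ (which attains the maximal $\Omega(1)$ per-episode coverage of $\be_{3}$), the $\be_{3}$-part of $\|\bphi_{\pi,1}\|_{(T\bLambda+\bLamoff^{1})^{-1}}^{2}$ is $\Theta(\rho_{\pi}^{2}/T)$ for $\rho_{\pi}$ the $\be_{3}$-component of $\bphi_{\pi,1}$, and the gap structure gives $\rho_{\pi}^{2}\lesssim\bigl((\Vst_{0}-V_{0}^{\pi})^{2}\vee\epsilon^{2}\bigr)\cdot\epsilon^{-8/5}$ for every $\pi\in\Pilsm$ --- the extreme cases being a near-optimal policy placing $\Theta(\epsilon^{1/5})$ mass on $a_{B}$ or $a_{B}'$, and the $\Theta(\epsilon^{4/5})$-suboptimal policy $a_{B}'$ itself. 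Hence each term of $\Coto^{1}$ is $O(\epsilon^{-8/5}/T)$, so $\Coto^{1}(\frakDoff,\epsilon,T)\le 1/\beta$ once $T=\widetilde\Theta(\beta\epsilon^{-8/5})$, i.e.\ $\Noto^{1}=\widetilde{\cO}(\epsilon^{-8/5})$; with $H$ constant and the lower-order term of \Cref{cor:main_complexity} being $\Clot\epsilon^{-8/5}$, \algname returns an $\epsilon$-optimal policy with constant probability after $\cO(\epsilon^{-8/5})$ online episodes. (Note the $\epsilon^{-8/5}$ cost here is intrinsic to covering region~B even online; the entire gain of \algname over the purely online rate $\epsilon^{-2}$ comes from $\frakDoff$ eliminating the region-A cost.)

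The main obstacle is the joint calibration of the three scales $\kappa=\Theta(\epsilon^{1/5})$, $\delta=\Theta(\epsilon^{3/10})$, $\Delta'=\Theta(\epsilon^{4/5})$: one needs $\kappa\delta=\Theta(\sqrt\epsilon)$ (so the two optimal policies are $\sqrt\epsilon$-separated, for the first bullet), region-A gap $=\Theta(\epsilon)$ (for the $\epsilon^{-2}$ bound of the second bullet), and both $\kappa^{2}/\epsilon^{2}$ and $1/(\Delta')^{2}$ pinned at the same order $\epsilon^{-8/5}$ so that $\Noto^{1}$ lands there and not at $\epsilon^{-2}$ (third bullet) --- all while keeping \emph{every} arm with an $\Omega(1)$ $\be_{3}$-component strictly suboptimal on both MDPs, since otherwise its near-optimal policy would force $\Coto^{1}$ to demand $\Omega(\epsilon^{-2})$ online samples and collapse the gain. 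The remaining work --- the feature-norm/reward checks, confirming no linear-softmax policy in $\Pilsm$ violates the $\Coto^{1}$ bound at $T=\widetilde\Theta(\beta\epsilon^{-8/5})$, and the bookkeeping for policy mixtures in the $\rho_{\pi}$-versus-gap tradeoff --- is routine.
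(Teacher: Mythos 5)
Your route is genuinely different from the paper's. The paper uses a two-step tabular MDP with a start state $s_0$, a ``hard'' state $s_1$ (gap $\Theta(\epsilon)$, fully covered offline, so the $\Omega(1/\epsilon^2)$ cost is paid by the dataset) and a rarely-reached state $s_2$ (visited with probability $\sqrt{\epsilon}$, constant gap, one action uncovered offline); there $\Noto^h = \cO(1/\epsilon)$ and the $\epsilon^{-8/5}$ in the third bullet is entirely the lower-order learning-to-explore term of \Cref{cor:main_complexity}. Your single-state linear-bandit construction instead pushes the leading term $\Noto^h$ itself to $\widetilde{\Theta}(\epsilon^{-8/5})$ by calibrating how strongly near-optimal policies load on the uncovered direction $\be_3$. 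Both satisfy the proposition; yours is arguably a sharper illustration of the offline-to-online concentrability coefficient at work, at the price of a more delicate calibration. The first two bullets (total-variation argument on the $\frakDoff$-measurable policy, and a two-point KL argument on the $\Theta(\epsilon)$-gap subproblem) match the paper's \Cref{lem:ex_offline_insuff} and \Cref{lem:ex_online_insuff} in substance.

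There is, however, one concrete inconsistency in your construction: the explore arm $a_B'$ cannot be $\Theta(\epsilon^{4/5})$-suboptimal on \emph{both} instances. Its feature has an $\Omega(1)$ component $c$ along $\be_3$, and $\btheta_1$ shifts by $\delta = \Theta(\epsilon^{3/10})$ along $\be_3$ between $\cM^1$ and $\cM^2$, so $V_0^{a_B'}$ moves by $|c|\delta = \Omega(\epsilon^{3/10})$ across the two instances while $\Vst_0$ moves by only $\Theta(\kappa\delta) = \Theta(\sqrt{\epsilon})$. Hence the gap of $a_B'$ changes by $\Omega(\epsilon^{3/10}) \gg \epsilon^{4/5}$ between the instances: if it is $\Theta(\epsilon^{4/5})$-suboptimal on one, it is either $\Theta(\epsilon^{3/10})$-suboptimal or strictly \emph{optimal} on the other. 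This is repairable rather than fatal: choose the sign of $c$ so that $a_B'$ degrades on $\cM^2$ (keeping it $\Theta(\epsilon^{4/5})$-suboptimal on $\cM^1$ and $\Theta(\epsilon^{3/10})$-suboptimal on $\cM^2$). One then checks that the binding $\epsilon^{-8/5}/T$ constraint in $\Coto^1$ comes from $a_B'$-heavy mixtures on $\cM^1$ but from $a_B$-heavy (near-optimal) mixtures on $\cM^2$, where $\kappa^2/\epsilon^2 = \epsilon^{-8/5}$, so $\Noto^1 = \widetilde{\Theta}(\beta\,\epsilon^{-8/5})$ on both instances and the first bullet survives since every non-$a_B$ arm remains $\Omega(\sqrt{\epsilon})$-suboptimal on $\cM^2$. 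You should make this asymmetry explicit, since as written the ``pinned at the same order on both instances'' calibration you flag as the main obstacle is not merely delicate but impossible.
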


As \Cref{prop:ex_offline_helps} shows, we can construct a dataset which does not contain enough information itself to allow us to identify an $\epsilon$-optimal policy, but coupled with a small amount of online exploration, reduces the cost of pure online RL needed to identify an $\epsilon$-optimal policy \icmledit{by a factor of $1/\epsilon^{2/5}$. This illustrates that, for example, using offline data we can beat the standard $\Omega(1/\epsilon^2)$ online lower bounds found throughout the RL literature}. 
Furthermore, it shows that \algname is able to properly leverage this offline data to reduce the number of online samples it must collect.

\icmledit{
\begin{proof}[Proof Sketch of \Cref{prop:ex_offline_helps}]
We briefly sketch the proof of \Cref{prop:ex_offline_helps}---see \Cref{sec:improvement} for a full proof. \Cref{prop:ex_offline_helps} is proved by constructing a family of MDPs with three states and three actions: $s_0$ is a fixed starting state, from which the learner transitions to either $s_1$ or $s_2$, and the episode terminates. To identify the optimal action in $s_1$, at least $\Omega(1/\epsilon^2)$ samples are needed from each action; however, the offline dataset contains enough information from $s_1$ that the optimal action in this state can be identified from only the offline data. In contrast, in state $s_2$, the optimal action can be identified by playing each action $\Omega(1)$ times, yet the offline data only contains observations from one of the three actions in $s_2$. 
Thus, using only the offline data, the learner is unable to find the optimal action in $s_2$, and will therefore be unable to find a policy that $\epsilon$-optimal. 
However, by using the offline data, it does not need to collect any additional samples from $s_1$ (avoiding the $\Omega(1/\epsilon^2)$ samples it would otherwise need to collect from $s_1$), and must only collect a constant number of samples from $s_2$ to identify an $\epsilon$-optimal policy, reducing the sample complexity of \Cref{cor:main_complexity} to only the cost of learning-to-explore. 
\end{proof}}

\iftoggle{arxiv}{}{
\subsection{Algorithm Description}\label{sec:alg_description}
Our algorithm, \algname, is based on the \pedel algorithm of \cite{wagenmaker2022instance}. 
We provide a brief description of \algname in what follows and a full definition in \Cref{sec:upper_proof}. We refer the reader to \cite{wagenmaker2022instance} for an in-depth discussion of \pedel.  

\iftoggle{arxiv}{}{\algrenewcommand\algorithmicindent{0.8em}}
\begin{algorithm}[h]
\begin{algorithmic}[1]
\State \textbf{input:} tolerance $\epsilon$, confidence $\delta$, offline data $\frakDoff$
\State $\Pi_0 \leftarrow \Pilsm$, $\epsilon_\ell \leftarrow 2^{-\ell}$
\For{$\ell = 1,2,\ldots, \cO(\log \frac{1}{\epsilon}) $}
\For{$h=1,2,\ldots,H$}
\State Solve online experiment design (\Cref{alg:regret_data_fw}) to collect online covariates $\bLambda_{h,\ell}$ satisfying \eqref{eq:alg_description_data_cond}
\For{$\pi \in \Pi_\ell$} 
	\State $\bphihat_{\pi,h+1}^\ell \leftarrow $ estimate of feature visitation for $\pi$
\EndFor
\State $\bthetahat_h^\ell \leftarrow$ estimate of reward vector
\State $\Vhat_0^\pi \leftarrow \tsum_{h=1}^H \inner{\bphihat_{\pi,h}^\ell}{\bthetahat_h^\ell}$
	\EndFor
\State $\Pi_{\ell+1} \leftarrow  \Pi_\ell \backslash  \{ \pi \in \Pi_\ell \ : \ \Vhat_0^\pi < \sup_{\pi' \in \Pi_\ell} \Vhat_0^{\pi'} - 2\epsilon_\ell  \} $
	\If{$|\Pi_{\ell+1}| = 1$} \textbf{return} $\pi \in \Pi_{\ell+1}$\EndIf
\EndFor
\State \textbf{return} any $\pi \in \Pi_{\ell+1}$
\end{algorithmic}
\caption{\textbf{F}ine-\textbf{T}uning \textbf{P}olicy Learning via \textbf{E}xperiment \textbf{De}sign in \textbf{L}inear MDPs (\algname, informal)}
\label{alg:offline_pedel_pseudocode}
\end{algorithm}

\algname is a \emph{policy elimination}-based algorithm. It begins with some initial set of policies $\Pi_0$, and then gradually refines this set---at epoch $\ell$, maintaining a set of policies $\Pi_\ell$ which are guaranteed to be at most $\cO(\epsilon_\ell)$-suboptimal. The key property of \algname is its ability to carefully explore, directing online exploration only to regions that are both relevant to learn about the set of active policies, and that have not yet been sufficiently covered by the offline data. In particular, at step $h$ of epoch $\ell$, it aims to collect online covariates $\bLambda_{h,\ell}$ that satisfy
\begin{align}\label{eq:alg_description_data_cond}
\max_{\pi \in \Pi_\ell} \| \bphihat_{\pi,h}^\ell \|_{(\bLambda_{h,\ell} + \bLamoff^h)^{-1}}^2 \le H^2 \epsilon_\ell^2/\beta.
\end{align} 
We show that if our collected covariates satisfy \eqref{eq:alg_description_data_cond}, our estimate of the value of each $\pi \in \Pi_\ell$, $\Vhat_0^\pi$, will be within a factor of $\cO(\epsilon_\ell)$ of its true value, allowing us to safely eliminate policies more than $\cO(\epsilon_\ell)$-suboptimal. 
Note that to satisfy \eqref{eq:alg_description_data_cond} we only need to collect data in directions for which the offline data is not sufficiently rich, and that are relevant to the active policies at epoch $\ell$, $\Pi_\ell$. To efficiently achieve this we rely on an online \emph{experiment design} procedure (\Cref{alg:regret_data_fw}) originally developed in \cite{wagenmaker2022instance}, which is able to collect covariates satisfying \eqref{eq:alg_description_data_cond} at a near-optimal rate.

\icmledit{We remark that \algname is in many respects similar to the \pedel algorithm of \cite{wagenmaker2022instance}---both rely on policy elimination strategies and on the same experiment design routine to collect data. The key difference is that \algname initializes its data buffer with the available offline data, which allows it to then focus exploration on regions not covered by the offline data. We emphasize the simplicity of this modification---efficiently incorporating offline data does not require entirely new algorithmic approaches; offline data can be naturally used to warm-start online RL algorithms and speed up learning. }
}

\section{The Cost of Verifiability}\label{sec:coverage}

\iftoggle{arxiv}{
As noted, the coverage condition implied by $\Coto^h$ is stronger than that required by many recent offline RL results. 
Rather than simply controlling the coverage of the data over the optimal policy, $\Coto^h(\frakDoff,\epsilon,T)$ scales with the coverage over \emph{every} policy in $\Pilsm$, albeit with the coverage scaled by the policy's suboptimality. This stronger coverage arises due to the difference between \emph{verifiable} learning and \emph{unverifiable} learning. Informally, an algorithm is \emph{verifiable} if, upon termination, it can \emph{guarantee} that with probability at least $1-\delta$ the returned policy is $\epsilon$-optimal. 
We contrast this with algorithms that are \emph{unverifiable}---though they may return a near-optimal policy, they cannot guarantee it is near optimal. 

In online RL, existing work typically considers the verifiable setting, with a goal of deriving a PAC algorithm. In contrast, though it is rarely explicitly stated, the offline RL literature has more recently focused on unverifiable complexity results. For example, most existing guarantees on pessimistic algorithms are unverifiable. While a pessimistic algorithm may output a policy that is $\epsilon$-optimal, it cannot verify this is the case---if the offline data happens to cover a near-optimal policy, they will return a near-optimal policy, but the data may not have enough coverage to verify that the policy is actually near optimal. 

Towards making this distinction formal, we introduce the following definitions of verifiable and unverifiable reinforcement learning, \icmledit{inspired by work in verifiable vs unverifiable learning in the bandit setting \citep{katz2020true}.}
}{
As noted, the coverage condition implied by $\Coto^h$ is stronger than that required by many recent offline RL results. This stronger coverage arises due to the difference between \emph{verifiable} learning and \emph{unverifiable} learning. Informally, an algorithm is \emph{verifiable} if, upon termination, it can \emph{guarantee} that with probability at least $1-\delta$ the returned policy is $\epsilon$-optimal. 
We contrast this with algorithms that are \emph{unverifiable}---though they may return a near-optimal policy, they cannot guarantee it is near optimal. 

In online RL, existing work typically considers the verifiable setting. In contrast, though it is rarely explicitly stated, the offline RL literature has more recently focused on unverifiable complexity results---an algorithm may output an $\epsilon$-optimal policy, but often cannot verify this is the case, unless the data coverage is sufficiently rich. Towards making this distinction formal, we introduce the following definitions of verifiable and unverifiable reinforcement learning, \icmledit{inspired by work in verifiable vs. unverifiable learning in the bandit setting \citep{katz2020true}.}
}

\begin{defn}[Verifiable RL]
We say an algorithm is $(\epsilon,\delta)$-PAC verifiable over some class of MDPs $\frakM$ if, for any MDP $\cM \in \frakM$, it terminates after $\tauver$ episodes and returns some policy $\pihat$ such that $\Pr_{\cM}[V_0^{\pihat} \ge \Vst_0 - \epsilon] \ge 1-\delta$. We let $\Exp_\cM[\tauver]$ denote the expected $(\epsilon,\delta)$-verifiable sample complexity on $\cM$. 
\end{defn}

Note that the above definition of verifiable learning coincides with the standard definition of the PAC RL problem given in online RL. We contrast this with the following definition of unverifiable learning. 

\begin{defn}[Unverifiable RL]
Consider an algorithm which at each step $k$ outputs some $\pihat_k$. Then we say that this algorithm has expected unverifiable sample complexity $\Exp_{\cM}[\tauuver]$ on instance $\cM$ if $\tauuver$ is the minimum stopping time such that $\Pr_{\cM}[\forall k \ge \tauuver \ : \ V_0^{\pihat_k} \ge \Vst_0 - \epsilon ] \ge 1 - \delta$. 
\end{defn}

\iftoggle{arxiv}{
\icmledit{Note that implicit in both definitions is that the algorithm \emph{learns}. It is possible that, according to these definitions, an algorithm is neither verifiable or unverifiable with finite sample complexity if, for example, it always outputs policies that are more that $\epsilon$-suboptimal no matter how many samples it collects.}

As a simple example illustrating the distinction between these settings, consider a multi-armed bandit instance where every arm has value at most a factor of $\epsilon$ from the value of the optimal arm. In this instance, \emph{any} policy returned is $\epsilon$-optimal, so an unverifiable algorithm does not need to take any samples from the environment in order to return a policy which is $\epsilon$-optimal---in the above definition $\pihat_k$ will automatically be $\epsilon$-optimal for each $k$, so $\Exp_{\cM}[\tauuver] = 1$. However, while the returned policy may be $\epsilon$-optimal, clearly the learner cannot guarantee that this is the case without further knowledge of the environment.
In contrast, a verifiable learner must sample each arm sufficiently many times in order to verify it is in fact $\epsilon$-optimal. The following example makes this distinction formal.
}{
\icmledit{Note that implicit in both definitions is that the algorithm \emph{learns}. It is possible that, according to these definitions, an algorithm is neither verifiable or unverifiable with finite sample complexity if, for example, it always outputs policies that are more that $\epsilon$-suboptimal no matter how many samples it collects.}
As a simple example illustrating verifiability, we consider a multi-armed bandit instance.
}

\begin{example}[Data Coverage in Multi-Armed Bandits]\label{ex:mab_coverage}
Fix $\epsilon > 0$ and consider an $A$-armed multi-armed bandit where arm 1 is optimal, and has a mean of $\mu_1 = 1$, and every other arm is suboptimal and has a mean $\mu_2 = 1 -  3 \epsilon$. Let $\frakDoff$ be a dataset containing $\poly(1/\epsilon,\log 1/\delta)$ samples from arm 1, but 0 samples from arms 2 to $A$. 

\iftoggle{arxiv}{
Note that in this example we cannot simply return an arbitrary arm, as it may be $3\epsilon$-suboptimal. However, we have $\Cst(\frakDoff) \le \frac{1}{\poly(\frac{1}{\epsilon}, \log \frac{1}{\delta})}$, so by standard guarantees on pessimistic algorithms, applying pessimism will yield a policy $\pihat$ that is $\epsilon$-optimal. However, as the following result shows, while $\frakDoff$ is sufficient to identify the best arm in an \emph{unverifiable} fashion, it is not able to identify the best arm in a \emph{verifiable} fashion:
}{
In this example we have $\Cst(\frakDoff) \le \frac{1}{\poly(\frac{1}{\epsilon}, \log \frac{1}{\delta})}$, so by standard offline RL guarantees, $\frakDoff$ is sufficient to obtain a policy $\pihat$ that is $\epsilon$-optimal. However, as the following result shows, while $\frakDoff$ is sufficient in an \emph{unverifiable} fashion, it is not able to identify the best arm in a \emph{verifiable} fashion:
}

\begin{prop}\label{prop:verification_lb}
In the instance of \Cref{ex:mab_coverage}, any $(\epsilon,\delta)$-PAC verifiable algorithm must collect at least $\Omega(\frac{1}{\epsilon^2} \cdot \log \frac{1}{\delta} )$ additional samples from every arm 2 to $A$.
\end{prop}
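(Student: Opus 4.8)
The plan is to prove \Cref{prop:verification_lb} by a change-of-measure argument tailored to the $\delta$-dependent (rather than constant-probability) regime. First I would fix the instance $\cM_0$ of \Cref{ex:mab_coverage} and, for each arm $a \in \{2,\ldots,A\}$, introduce an alternative instance $\cM_a$ that agrees with $\cM_0$ on every arm except arm $a$, whose mean I would raise to $\mu'_a := \mu_1 + 2\epsilon$ (to keep rewards in $[0,1]$ one may first rescale all means into $[\tfrac14,\tfrac34]$, or equivalently use unit-variance Gaussian rewards; this is immaterial to the argument). The key structural observation is that $\frakDoff$ contains \emph{only} samples from arm $1$, whose reward law is identical under $\cM_0$ and $\cM_a$; hence $\frakDoff$ simultaneously satisfies \Cref{asm:offline_data} on $\cM_0$ and on every $\cM_a$. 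Consequently, any $(\epsilon,\delta)$-PAC verifiable algorithm over a class $\frakM \ni \cM_0,\cM_2,\ldots,\cM_A$, when handed $\frakDoff$, must return an $\epsilon$-optimal policy with probability at least $1-\delta$ on each of these instances.

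Next I would show that the $\epsilon$-optimal policies of $\cM_0$ and $\cM_a$ are incompatible. Writing a policy as a distribution $\pi \in \simplex_{[A]}$ over arms, its value on $\cM_0$ equals $1 - 3\epsilon\,(1-\pi(1))$, so $\pi$ can be $\epsilon$-optimal on $\cM_0$ only if $\pi(1) \ge \tfrac23$. For any such $\pi$, its value on $\cM_a$ is at most $\tfrac23 \cdot 1 + \tfrac13(1+2\epsilon) = 1 + \tfrac{2\epsilon}{3}$, which is more than $\epsilon$ below $\Vst_0(\cM_a) = 1+2\epsilon$; thus $\pi$ is not $\epsilon$-optimal on $\cM_a$. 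Letting $E := \{\,\pihat \text{ is } \epsilon\text{-optimal on } \cM_0\,\}$, verifiability gives $\Pr_{\cM_0}[E] \ge 1-\delta$, while the disjointness just established forces $\Pr_{\cM_a}[E] \le \delta$.

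The final step is the transportation inequality. Let $N_a$ denote the number of \emph{online} pulls of arm $a$. Since $\cM_0$ and $\cM_a$ differ only in arm $a$, and the (random) offline data contributes nothing to the likelihood ratio — arm $1$'s reward law being common to the two instances — the standard change-of-measure lemma for sequential $\delta$-correct algorithms, combined with the data-processing inequality applied to $E$, yields
\begin{align*}
\Exp_{\cM_0}[N_a]\cdot \kl(\mu_a,\mu'_a) \ \ge\ \kl\!\big(\Pr_{\cM_0}[E],\, \Pr_{\cM_a}[E]\big) \ \ge\ \kl(1-\delta,\,\delta) \ \ge\ \tfrac12\log\tfrac1\delta
\end{align*}
for $\delta$ sufficiently small, where $\kl(\cdot,\cdot)$ is binary relative entropy and $\kl(\mu_a,\mu'_a)$ abbreviates the relative entropy between arm $a$'s reward distributions under $\cM_0$ and $\cM_a$. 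As $|\mu_a - \mu'_a| = \Theta(\epsilon)$ with both means in a fixed compact subinterval of $(0,1)$, we have $\kl(\mu_a,\mu'_a) \le \cO(\epsilon^2)$, so $\Exp_{\cM_0}[N_a] \ge \Omega\!\big(\tfrac{1}{\epsilon^2}\log\tfrac1\delta\big)$; since this holds for every $a \in \{2,\ldots,A\}$, the claim follows (and if $\Exp_{\cM_0}[\tauver] = \infty$ the bound is trivial, so the stopping time may be assumed integrable).

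I expect the main obstacle to be obtaining the $\log(1/\delta)$ factor rather than just a constant-probability lower bound: a plain two-point (Le Cam) comparison is insufficient here, and one must instead invoke the $\delta$-PAC transportation/change-of-measure lemma, which crucially exploits that a verifiable algorithm succeeds with probability $1-\delta$ on \emph{both} $\cM_0$ and $\cM_a$. The remaining, routine points are verifying that $\frakDoff$ is valid under every $\cM_a$ (immediate, since it only contains arm-$1$ data) and respecting the $[0,1]$ reward support (handled by rescaling).
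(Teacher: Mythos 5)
Your proof is correct and follows essentially the same route as the paper: the paper likewise observes that $\epsilon$-optimality forces the returned policy to concentrate on arm~1 (so the algorithm solves best-arm identification) and then invokes the standard change-of-measure lower bound of Kaufmann et al., which is exactly the transportation argument you carry out explicitly with the raised-mean alternatives $\cM_a$. The only difference is that you unpack the cited lemma (including the observation that the arm-1-only offline data contributes nothing to the likelihood ratio), which the paper leaves implicit.
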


\Cref{prop:verification_lb} shows that in order to provide a \emph{guarantee} that $\pihat$ is $\epsilon$-optimal, we need to collect a potentially large number of additional samples. This is very intuitive---if we have no samples from arms $2$ to $A$ in $\frakDoff$, we have no way of knowing whether or not they are better or worse than arm 1. To guarantee that they are in fact worse than arm 1, we need to sample them sufficiently many times to show that they are suboptimal. Note then that in this example, the number of samples needed for verifiable learning is at least a factor of $A$ larger than that needed for unverifiable learning.
\end{example}

\Cref{ex:mab_coverage} illustrates that simply covering the optimal policy---the condition typically given for pessimistic learning---is not sufficient for verifiable learning,
and motivates the coverage condition of $\Coto^h$. Next, we provide a \emph{sufficient} condition for offline data coverage to guarantee that verifiable learning is possible.

\begin{theorem}\label{prop:offline_verifiability}
Assume that our offline data $\frakDoff$ satisfies, for each $h \in [H]$:
\begin{align}\label{eq:offline_verifiability}
\max_{\pi \in \Pilsm} \frac{\| \bphi_{\pi,h} \|_{(\bLamoff^h)^{-1}}^2}{(\Vst_0 - \Vpi_0)^2 \vee \epsilon^2} \le \frac{1}{\beta}, \quad \forall h \in [H]
\end{align}
and $\min_{h \in [H]} \lammin(\bLamoff^h) \ge   \frac{d^2}{H^2} \cdot \beta$, for 
\iftoggle{arxiv}{
\begin{align*}
\beta := d H^5 \cdot \logs \left (d, H, \Toff, \frac{1}{\epsilon}, \log \frac{1}{\delta} \right ) + c \log \frac{1}{\delta}.
\end{align*}}{
\begin{align*}
\beta := d H^5 \cdot \logs \left (d, H, \Toff, \tfrac{1}{\epsilon}, \log \tfrac{1}{\delta} \right ) + c \log \tfrac{1}{\delta}.
\end{align*}
}
Then there exists an $(\epsilon,\delta)$-PAC verifiable algorithm that returns an $\epsilon$-optimal policy with probability at least $1-\delta$ using only $\frakDoff$.
\end{theorem}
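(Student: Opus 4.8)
\emph{Strategy.} The plan is to run a purely offline version of the policy-elimination scheme underlying \algname (and the \pedel algorithm of \cite{wagenmaker2022instance}): since the hypothesis asserts that the offline covariates $\bLamoff^h$ already meet the data-richness condition that the \emph{online} phase of \algname works to achieve, no environment interaction is needed. Concretely, initialize $\Pi_0 = \Pilsm$ and $\epsilon_\ell = 2^{-\ell}$; for epochs $\ell = 1,\dots,\numepochseps = \cO(\log \tfrac{1}{\epsilon})$, use $\bLamoff^h$ (in place of \algname's combined online-plus-offline covariance) to form least-squares estimates $\bphihat_{\pi,h}^\ell$ of the feature visitations and $\bthetahat_h^\ell$ of the reward vectors from $\frakDoff$, set $\Vhat_0^\pi = \sum_{h} \inner{\bphihat_{\pi,h}^\ell}{\bthetahat_h^\ell}$ for each $\pi \in \Pi_\ell$, and eliminate every $\pi$ with $\Vhat_0^\pi < \max_{\pi' \in \Pi_\ell} \Vhat_0^{\pi'} - c\epsilon_\ell$; return any survivor at the final epoch. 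This algorithm never interacts with the environment, so it terminates with $\tauver = 0$ and is trivially verifiable in the sense of the definition; the entire content of the theorem is that the returned policy is $\epsilon$-optimal with probability $1-\delta$.

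\emph{The coverage condition feeds the elimination invariant.} I would maintain the usual inductive invariant: on a good event (built by a union bound over epochs, each holding with probability $1-\delta/\numepochseps$), every $\pi \in \Pi_\ell$ satisfies $\Vst_0 - \Vpi_0 \le C\epsilon_\ell$ and the $\epsilon$-optimal linear-softmax policy guaranteed by \Cref{prop:lsm_coverage} is never eliminated. Granting the invariant at epoch $\ell$, for any $\pi \in \Pi_\ell$ we have $(\Vst_0 - \Vpi_0)^2 \vee \epsilon^2 \le C^2 \epsilon_\ell^2$ whenever $\epsilon_\ell \ge \epsilon$, so hypothesis \eqref{eq:offline_verifiability} gives $\| \bphi_{\pi,h} \|_{(\bLamoff^h)^{-1}}^2 \le C^2 \epsilon_\ell^2 / \beta$ for every $h \in [H]$. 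This is exactly (up to the absolute constant) the per-epoch richness requirement \algname's experiment-design step is built to satisfy — here it holds for free from the offline data. Note the algorithm itself need not check this; it is the analysis that invokes \eqref{eq:offline_verifiability}, and invertibility of $\bLamoff^h$ is guaranteed by $\min_h \lammin(\bLamoff^h) \ge \tfrac{d^2}{H^2}\beta > 0$.

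\emph{Value-estimation concentration.} Feeding covariates $\bLambda_h$ with $\max_{\pi \in \Pi_\ell}\| \bphi_{\pi,h} \|_{\bLambda_h^{-1}}^2 \le \gamma_\ell$ into the value-estimation analysis of \cite{wagenmaker2022instance} — self-normalized/Freedman-type martingale bounds for the reward regression together with the backward recursion that estimates each $\bphi_{\pi,h}$ by repeated regression against $\bLambda_h$ — should give $|\Vhat_0^\pi - \Vpi_0| \le \cOtil( H\sqrt{d\gamma_\ell}) + (\text{lower order})$ uniformly over $\pi \in \Pi_\ell$. Plugging $\gamma_\ell = C^2\epsilon_\ell^2/\beta$ with $\beta = dH^5 \cdot \logs(\dots) + c\log\tfrac1\delta$ makes this at most $\epsilon_\ell/4$, which closes the invariant. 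Two points must be checked. First, these concentration arguments, originally stated for online data, must be shown to remain valid for $\frakDoff$: this is exactly where \Cref{asm:offline_data} (compliance) enters, since it ensures that conditioned on the observed features the rewards and next states follow the true kernels of $\cM$, preserving the martingale-difference structure the bounds rely on; that $\frakDoff$ need not contain full trajectories is harmless because the value estimate decomposes stepwise. Second, the regularization-bias terms in the backward recursion must be controlled by $\min_h \lammin(\bLamoff^h) \ge \tfrac{d^2}{H^2}\beta$: a large minimum eigenvalue makes each per-step ridge-regression bias of order $\cO(\epsilon_\ell/(H\sqrt{\beta}))$, so its accumulation over the $H$ recursion steps stays negligible against $\epsilon_\ell/4$. \Cref{asm:full_rank_cov} and \Cref{prop:lsm_coverage} are otherwise used only through the fact that $\Pilsm$ contains an $\epsilon$-optimal policy, which keeps the surviving set nonempty and near-optimal.

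\emph{Conclusion and main obstacle.} Induction on $\ell$ would then give that, with probability $\ge 1-\delta$, every policy surviving to epoch $\numepochseps$ satisfies $\Vst_0 - \Vpi_0 \le C\epsilon_{\numepochseps} \le C\epsilon$; rescaling the constant in $\beta$ or running $\cO(1)$ extra epochs yields an $\epsilon$-optimal output using only $\frakDoff$ and zero online episodes, as claimed. I expect the main obstacle to be the content of the preceding paragraph: faithfully re-deriving the value-estimation concentration of \cite{wagenmaker2022instance} in the offline regime — verifying the compliance condition of \Cref{asm:offline_data} is enough to reuse their martingale arguments, and tracking precisely how a large $\lammin(\bLamoff^h)$ suppresses the feature-visitation-recursion bias. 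Everything else should be bookkeeping around the already-developed elimination analysis, with the online experiment-design module simply replaced by the (hypothesised-to-be-rich) offline covariance.
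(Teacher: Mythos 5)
Your algorithm and elimination analysis are essentially the paper's: \Cref{alg:full_offline_pedel} is exactly the offline-only \pedel variant you describe, and your concentration/induction skeleton matches \Cref{lem:est_good_event} and \Cref{lem:correctness}. The gap is your claim that ``the algorithm itself need not check this; it is the analysis that invokes \eqref{eq:offline_verifiability}.'' An algorithm whose guarantee holds only under an assumption phrased in terms of the unobservable quantities $\bphi_{\pi,h}$ and $\Vst_0 - \Vpi_0$ cannot certify its output upon termination; that is precisely what the paper calls \emph{unverifiable} learning (it has the same status as a pessimism guarantee under a concentrability assumption), so your version does not deliver the ``verifiable'' half of \Cref{prop:offline_verifiability}. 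The paper's algorithm instead performs an \emph{empirical} check at every epoch (\Cref{line:full_offline_check_data}): it tests $\max_{\pi\in\Pi_\ell}\|\bphihat_{\pi,h}^\ell\|_{(\bLamoff^h)^{-1}}^2\le\epsilon_\ell^2/\beta_\ell$ and $\lammin(\bLamoff^h)\ge\log\frac{4H^2|\Pi|\ell^2}{\delta}$, both computable from $\frakDoff$, and returns $\emptyset$ if either fails. Correctness of any returned policy then follows unconditionally from the \algname analysis, and the entire substance of the paper's proof is showing that \eqref{eq:offline_verifiability} implies this check never triggers.

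That last step is also where you misplace the role of $\min_h\lammin(\bLamoff^h)\ge\frac{d^2}{H^2}\beta$. It is not primarily about ridge-regularization bias: it is what converts the assumed bound on the \emph{true} $\|\bphi_{\pi,h}\|_{(\bLamoff^h)^{-1}}^2$ into a bound on the \emph{estimated} $\|\bphihat_{\pi,h}^\ell\|_{(\bLamoff^h)^{-1}}^2$ that the algorithm actually tests, via $\|\bphihat_{\pi,h}^\ell\|_{(\bLamoff^h)^{-1}}^2\le 2\|\bphi_{\pi,h}\|_{(\bLamoff^h)^{-1}}^2 + 2\|\bphihat_{\pi,h}^\ell-\bphi_{\pi,h}\|_2^2/\lammin(\bLamoff^h)$, combined with the inductive estimate $\|\bphihat_{\pi,h}^\ell-\bphi_{\pi,h}\|_2\le d\epsilon_\ell/2H$ on the good event and $\Pi_\ell\subseteq\Pi(4\epsilon_\ell)$. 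If you add the empirical check to your algorithm and carry out this induction, your argument becomes the paper's proof.
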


We prove \Cref{prop:offline_verifiability} and state the $(\epsilon,\delta)$-PAC verifiable algorithm that realizes it in \Cref{sec:verifiability_pfs}. \Cref{prop:offline_verifiability} applies in the purely offline setting, giving a condition when verifiable learning is possible. The condition given in \Cref{prop:offline_verifiability}, \eqref{eq:offline_verifiability}, essentially requires that every policy is covered by the offline data. However, it does not require that every policy is covered uniformly well---policies that are very suboptimal need significantly less coverage than near-optimal policies. While this is stronger than only requiring $\Cst(\frakDoff)$ is bounded, it is a much weaker condition than the uniform concentrability condition---that $\max_\pi \Cpi(\frakDoff)$ is bounded---required by many works \citep{antos2008learning,chen2019information,xie2021batch}. 
\iftoggle{arxiv}{Note that \Cref{prop:offline_verifiability} states that there \emph{exists} some procedure which is $(\epsilon,\delta)$-PAC verifiable given that \eqref{eq:offline_verifiability} holds---it does not state that \eqref{eq:offline_verifiability} is a sufficient condition for verifiability for an \emph{arbitrary} algorithm. The procedure that realizes \Cref{prop:offline_verifiability} is a variant of the \algname algorithm, but relies only on the offline data without online exploration.
}{
Our lower bound, \Cref{prop:minimax_lb}, shows that, in a minimax sense,  \eqref{eq:offline_verifiability}, is not in general improvable for verifiable learning from purely offline data. 

 }

\iftoggle{arxiv}{
Our lower bound, \Cref{prop:minimax_lb} shows that, in a minimax sense, the former condition in \eqref{eq:offline_verifiability} is necessary for verifiable learning. In order to have $\Noto^h(\frakDoff,\epsilon; c \cdot dH)=0$---in order for the offline dataset to be expressive enough to identify an $\epsilon$-optimal policy without further online exploration---we must have a sufficiently rich offline dataset so that 
\begin{align*}
\Coto^h(\frakDoff,\epsilon,0) =  \inf_{\bLambda \in \bOmega_h}  \max_{\pi \in \Pilsm} \frac{\|  \bphi_{\pi,h} \|_{(  \bLamoff^h )^{-1}}^2}{(\Vst_0 - V_0^\pi)^2 \vee \epsilon^2} \le \frac{1}{c \cdot d H},
\end{align*}
the condition required by \eqref{eq:offline_verifiability}, up to $H$ and log factors. We offer an additional instance-dependent lower bound in \Cref{sec:instance_lb} which shows that, in the $\epsilon = 0$ case, the $\log 1/\delta$ in \eqref{eq:offline_verifiability} is also necessary. \iftoggle{arxiv}{We believe the latter condition in \eqref{eq:offline_verifiability}, $\lammin(\bLamoff^h) \ge   \frac{d^2}{H^2} \cdot \beta$, is not fundamental and is an artifact of our analysis, but leave refining this for future work.
}{}
}{

}

\icmledit{We emphasize here that, in situations where the conditions of \Cref{prop:offline_verifiability} are not met, unverifiable learning may be more appropriate. \Cref{prop:offline_verifiability} simply gives a sufficient coverage condition in settings where the goal is verifiable learning. }

\subsection{Verifying the Performance of a Policy}

We turn now to a related question on the cost of verification---given some policy $\pihat$, and some data $\frakDoff$ which may or may not have been used to obtain $\pihat$, what is the cost of verifying whether or not $\pihat$ is $\epsilon$-optimal? This setting can model, for example, scenarios in which some prior information about the system may be available, allowing us to obtain a guess at a near-optimal policy, but where we wish to verify that this is indeed the case before deploying the policy in the wild. To facilitate learning in this setting, we make the following assumption on the policy we wish to verify, $\pihat$. 

\begin{asm}\label{asm:policy_set_cover}
Assume that $\pihat \in \Pi$ for some $\Pi$ which can be efficiently covered in the sense that, for any $\gamma > 0$, there exists some set $\Picov^\gamma \subseteq \Pi$ with cardinality bounded by $\Ncov(\Pi,\gamma)$ such that for any $\pi \in \Pi$, there exists $\pitil \in \Picov$ satisfying:
\iftoggle{arxiv}{
\begin{align*}
\| \bphi_{\pi,h}(s) - \bphi_{\pitil,h}(s) \|_2 \le \gamma, \quad \forall s, h.
\end{align*}
}{
$\| \bphi_{\pi,h}(s) - \bphi_{\pitil,h}(s) \|_2 \le \gamma$, for all $s,h$.
}
\end{asm}

\iftoggle{arxiv}{
As we show in \Cref{sec:policy_verification_pfs}, this assumption is met by standard policy classes. For example, the class of linear-softmax policies will satisfy this with $\log \Ncov(\Pilsm,\gamma) \le \cOtil(dH^2 \cdot \log \frac{1}{\gamma})$. Under this assumption, we obtain the following result on the cost of policy verification. 
}{
As we show in \Cref{sec:policy_verification_pfs}, this assumption is met by standard policy classes, for example the class of linear-softmax policies. We obtain the following result.
}

\iftoggle{arxiv}{
\begin{corollary}\label{cor:verification_complexity}
Fix $\epsilon  > 0$, and assume we have access to some offline dataset $\frakDoff$ satisfying \Cref{asm:offline_data}, and some policy $\pihat$ satisfying \Cref{asm:policy_set_cover}. 
Then there exists some algorithm which with probability at least $1-\delta$ will verify whether or not $\pihat$ is $\epsilon$-optimal and collect at most
\begin{align*}
 \iotaalg \cdot \sum_{h=1}^H \Nver^h(\frakDoff, \epsver; \beta, \pihat) + \frac{\Clot}{\epsver^{8/5}}
\end{align*}
online episodes, for $\epsver = \epsilon \vee (\Vst_0 - V_0^{\pihat})$, and 
\begin{align*}
\Nver^h(\frakDoff, \epsver; \beta, \pihat) := \min_T T \quad \text{s.t.} \quad \inf_{\bLambda \in \bOmega_h}  \max_{\pi \in \Pilsm \cup \{ \pihat \}} \frac{\|  \bphi_{\pi,h} \|_{( N\bLambda + \bLamoff^h )^{-1}}^2}{(\Vst_0 - V_0^\pi)^2 \vee \epsver^2} \le \frac{1}{\beta}
\end{align*}
and $\beta := d H^5 \cdot \logs \left (d, H, \Toff, \frac{1}{\lamminst}, \frac{1}{\epsilon}, \log \frac{1}{\delta} \right ) + c \log \frac{\Ncov(\Pi,\gamma)}{\delta}$ with
$\gamma = \poly(d, H, \Toff, \frac{1}{\lamminst}, \frac{1}{\epsilon}, \log \frac{1}{\delta} )$.
\end{corollary}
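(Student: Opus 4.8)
The plan is to derive \Cref{cor:verification_complexity} as a direct consequence of \Cref{cor:main_complexity}, by reducing the policy-verification task to the problem of identifying an $\epsilon'$-optimal policy over an augmented policy class. First I would observe that to verify whether $\pihat$ is $\epsilon$-optimal, it suffices to estimate $\Vst_0 - V_0^{\pihat}$ to accuracy on the order of $\epsver := \epsilon \vee (\Vst_0 - V_0^{\pihat})$: if $\pihat$ is $\epsilon$-optimal we certify this once our uncertainty shrinks below $\epsilon$, and if it is not, then $\Vst_0 - V_0^{\pihat} > \epsilon$ so $\epsver = \Vst_0 - V_0^{\pihat}$, and estimating the suboptimality gap of $\pihat$ to within a constant multiplicative factor of itself is exactly the resolution an elimination scheme naturally achieves. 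The key idea is to run \algname (or rather its underlying policy-elimination routine) over the class $\Pilsm \cup \{\pihat\}$ rather than $\Pilsm$, with a target tolerance adapted to $\epsver$. Since $\Pilsm$ is guaranteed to contain an $\epsilon$-optimal policy, the elimination process will, at each epoch $\ell$, maintain a set of policies all within $\cO(\epsilon_\ell)$ of optimal; the epoch at which either $\pihat$ is eliminated (certifying non-optimality) or $\epsilon_\ell$ drops below $\epsilon$ (certifying $\epsilon$-optimality) occurs when $\epsilon_\ell \asymp \epsver$.

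The next step is to bound the sample complexity of this modified procedure. Tracking the proof of \Cref{cor:main_complexity}, the per-step online sample cost at step $h$ is governed by the experiment-design objective $\inf_{\bLambda \in \bOmega_h} \max_{\pi \in \Pi_\ell} \|\bphihat_{\pi,h}\|_{(T\bLambda + \bLamoff^h)^{-1}}^2 / (\ldots)$, now with $\Pi_\ell \subseteq \Pilsm \cup \{\pihat\}$; since the elimination halts once $\epsilon_\ell \asymp \epsver$, the geometric sum over epochs is dominated by its final term, yielding the leading-order cost $\iotaalg \cdot \sum_{h=1}^H \Nver^h(\frakDoff, \epsver; \beta, \pihat)$ with $\Nver^h$ defined exactly as the minimal-online-samples-for-coverage quantity but over the augmented class and at tolerance $\epsver$. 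The lower-order term $\Clot / \epsver^{8/5}$ comes, verbatim as in \Cref{cor:main_complexity}, from the learning-to-explore cost, which again scales with the final-epoch tolerance.

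The one genuinely new ingredient is the covering-number accounting, and this is where \Cref{asm:policy_set_cover} enters. The policy-elimination analysis requires uniform concentration of the value estimates $\Vhat_0^\pi$ over all active policies; for the fixed class $\Pilsm$ this is handled by a union bound over a finite cover built into its construction, but $\pihat$ is an arbitrary member of some class $\Pi$, so we instead discretize $\Pi$ at scale $\gamma$ using $\Picov^\gamma$ (of size $\Ncov(\Pi,\gamma)$), pay a $\log \Ncov(\Pi,\gamma)/\delta$ factor in the confidence width (hence in $\beta$), and use the feature-closeness guarantee $\|\bphi_{\pi,h}(s) - \bphi_{\pitil,h}(s)\|_2 \le \gamma$ to show the discretization error in $\Vhat_0^{\pihat}$ is at most $\poly(H) \cdot \gamma$, which is negligible provided $\gamma$ is chosen polynomially small in all problem parameters — precisely the stated choice $\gamma = \poly(d, H, \Toff, 1/\lamminst, 1/\epsilon, \log(1/\delta))$. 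For the linear-softmax class one checks $\log \Ncov(\Pilsm, \gamma) \le \cOtil(dH^2 \log(1/\gamma))$ by a standard Lipschitz-in-parameters argument on the softmax, so the $\gamma$-dependence is only logarithmic and does not inflate $\beta$ beyond the stated form. I expect the main obstacle to be the bookkeeping around the adaptive choice of $\epsver$: since $\Vst_0 - V_0^{\pihat}$ is unknown a priori, the algorithm cannot set its target tolerance in advance, so one must argue that the elimination scheme \emph{automatically} stops at the right epoch — either $\pihat$ falls out of $\Pi_\ell$, or $\Pi_\ell$ collapses to policies provably $\epsilon$-optimal — and that at this stopping epoch the accumulated cost matches $\Nver^h$ evaluated at the (realized) value of $\epsver$; making this self-certifying halting rigorous, while ensuring correctness holds on the good event uniformly over the unknown gap, is the crux of the argument.
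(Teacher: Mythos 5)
Your proposal matches the paper's proof in all essentials: the paper also proves this corollary by rerunning the argument for \Cref{cor:main_complexity} over the augmented class containing $\pihat$, replacing the fixed-policy concentration lemma with a covering-based version (\Cref{lem:linear_mdp_est_transition_verification}) that union-bounds over $\Picov^\gamma$ and pays an additive $\cO(K\gamma/\sqrt{\lambda})$ discretization error controlled by taking $\gamma$ polynomially small, and obtains the $\epsver$ scaling by terminating early once $\pihat$ is eliminated. The one point worth noting is that the cover is needed not merely because $\pihat$ lies outside the finite class $\Pilsm$ but because $\pihat$ may be \emph{correlated with the offline data}, so fixed-policy self-normalized concentration does not apply to it directly---your cover-and-union-bound construction handles this correctly, but that is the precise reason it is unavoidable.
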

}{
\begin{corollary}\label{cor:verification_complexity}
Fix $\epsilon  > 0$, and assume we have access to some offline dataset $\frakDoff$ satisfying \Cref{asm:offline_data}, and some policy $\pihat$ satisfying \Cref{asm:policy_set_cover}. 
Then there exists some algorithm which with probability at least $1-\delta$ will verify whether or not $\pihat$ is $\epsilon$-optimal and collect at most
\begin{align*}
 \iotaalg \cdot \tsum_{h=1}^H \Nver^h(\frakDoff, \epsver; \beta, \pihat) + \tfrac{\Clot}{\epsver^{8/5}}
\end{align*}
online episodes, for $\epsver = \epsilon \vee (\Vst_0 - V_0^{\pihat})$, $\Nver^h(\frakDoff, \epsver; \beta, \pihat)$ the value of the optimization:
\begin{align*}
& \min_T T \quad \text{s.t.} \quad \inf_{\bLambda \in \bOmega_h}  \max_{\pi \in \Pilsm \cup \{ \pihat \}} \frac{\|  \bphi_{\pi,h} \|_{( T\bLambda + \bLamoff^h )^{-1}}^2}{(\Vst_0 - V_0^\pi)^2 \vee \epsver^2} \le \frac{1}{\beta}, \\
& \beta := d H^5 \logs  (d, H, \Toff, \tfrac{1}{\lamminst}, \tfrac{1}{\epsilon}, \log \tfrac{1}{\delta}  ) + c H^4 \log \tfrac{\Ncov(\Pi,\gamma)}{\delta},
\end{align*}
and
$\gamma = \poly(d, H, \Toff, \frac{1}{\lamminst}, \frac{1}{\epsilon}, \log \frac{1}{\delta} )$.\loose
\end{corollary}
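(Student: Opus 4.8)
The plan is to obtain \Cref{cor:verification_complexity} by running a mild modification of \algname on the \emph{augmented} policy set $\Pilsm \cup \{\pihat\}$, changing only the termination and output rule, and then re-running the sample-complexity accounting behind \Cref{cor:main_complexity} with $\Pilsm$ replaced by $\Pilsm \cup \{\pihat\}$. Concretely, I would initialize $\Pi_0 = \Pilsm \cup \{\pihat\}$ and run the policy-elimination loop of \algname (cf. \Cref{sec:upper_proof}): at epoch $\ell$ with $\epsilon_\ell := 2^{-\ell}$, for each step $h$ invoke the online experiment-design subroutine of \cite{wagenmaker2022instance} to collect covariates $\bLambda_{h,\ell}$ satisfying
\begin{align*}
\max_{\pi \in \Pi_\ell} \| \bphihat_{\pi,h} \|_{(\bLambda_{h,\ell} + \bLamoff^h)^{-1}}^2 \le H^2 \epsilon_\ell^2/\beta ,
\end{align*}
form the value estimates $\Vhat_0^\pi$, and eliminate every $\pi \in \Pi_\ell$ with $\Vhat_0^\pi < \sup_{\pi' \in \Pi_\ell}\Vhat_0^{\pi'} - 2\epsilon_\ell$. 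I would halt and output ``$\pihat$ is not $\epsilon$-optimal'' the first time $\pihat$ is eliminated, and output ``$\pihat$ is $\epsilon$-optimal'' the first time $\epsilon_\ell \le \epsilon/c_0$ with $\pihat$ still active, for a suitable absolute constant $c_0$.

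For correctness I would invoke the guarantees behind \Cref{cor:main_complexity}: on the $1-\delta$ success event, whenever the data condition above holds the estimates are valid, $|\Vhat_0^\pi - V_0^\pi| \le c_1 \epsilon_\ell$ for all $\pi \in \Pi_\ell$, so any policy with $\Vst_0 - V_0^\pi > c_2 \epsilon_\ell$ is eliminated by the end of epoch $\ell$ while any surviving policy is $c_2 \epsilon_\ell$-optimal. Writing $\epsver = \epsilon \vee (\Vst_0 - V_0^{\pihat})$, this splits into two cases. If $\pihat$ is $\epsilon$-optimal then $\epsver = \epsilon$; choosing $c_0$ large relative to $c_1$, $\pihat$ is never eliminated once $\epsilon_\ell \le \epsilon/c_0$, so the algorithm halts correctly at $\epsilon_\ell \asymp \epsilon = \epsver$. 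If $\pihat$ is not $\epsilon$-optimal then $\epsver = \Vst_0 - V_0^{\pihat} > \epsilon$, and $\pihat$ is guaranteed to be eliminated by the first epoch with $c_2 \epsilon_\ell < \Vst_0 - V_0^{\pihat}$, i.e.\ by $\epsilon_\ell \asymp \epsver$, at which point the algorithm correctly declares $\pihat$ not $\epsilon$-optimal. In both cases the number of epochs run is $\numepochseps = \cO(\log 1/\epsver)$.

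For the sample complexity I would reuse the guarantee of the experiment-design subroutine from \cite{wagenmaker2022instance}: at epoch $\ell$, step $h$ it collects covariates meeting the data condition in a number of online episodes bounded, up to constants, by $\min_T T$ subject to $\inf_{\bLambda \in \bOmega_h}\max_{\pi \in \Pi_\ell}\|\bphi_{\pi,h}\|_{(T\bLambda + \bLamoff^h)^{-1}}^2 \le H^2\epsilon_\ell^2/\beta$, plus a lower-order ``learning-to-explore'' overhead of order $\poly(d,H,\tfrac1{\lamminst},\log\tfrac1\delta,\log\tfrac1\epsilon,\log\Toff)/\epsilon_\ell^{8/5}$. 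Since every surviving $\pi \in \Pi_\ell$ has $\Vst_0 - V_0^\pi \le c_2\epsilon_\ell$, the constraint above is equivalent, up to rescaling $\beta$ by constants, to $\frac{\|\bphi_{\pi,h}\|_{(T\bLambda+\bLamoff^h)^{-1}}^2}{(\Vst_0 - V_0^\pi)^2 \vee \epsilon_\ell^2} \le 1/\beta$, and enlarging $\max_{\pi\in\Pi_\ell}$ to $\max_{\pi\in\Pilsm\cup\{\pihat\}}$ only increases the minimizing $T$; thus the epoch-$\ell$, step-$h$ cost is $\cO(1)$ times $\Nver^h(\frakDoff, \epsilon_\ell; \beta, \pihat)$. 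This quantity is non-increasing in its tolerance argument (shrinking the tolerance only shrinks a denominator), so it is at most $\Nver^h(\frakDoff,\epsver;\beta,\pihat)$ for every $\ell \le \numepochseps$; summing over $\ell$ and $h$ gives $\iotaalg \cdot \sum_{h=1}^H \Nver^h(\frakDoff,\epsver;\beta,\pihat)$ with $\iotaalg = \cO(\log 1/\epsver)$, and the $\epsilon_\ell^{-8/5}$ overheads sum geometrically to $\cO(\Clot/\epsver^{8/5})$, which is the claimed bound.

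The remaining issue is that $\pihat \in \Pi$ is arbitrary, whereas the concentration arguments behind \Cref{cor:main_complexity} control value-estimation error only over a \emph{finite} policy set, contributing a $\log|\cdot|$ term to $\beta$. I would handle this by replacing $\{\pihat\}$ throughout by a $\gamma$-net $\Picov^\gamma$ of $\Pi$ as in \Cref{asm:policy_set_cover}: the union bound over $\Pilsm \cup \Picov^\gamma$ contributes $\log\Ncov(\Pi,\gamma)$ to $\beta$ in place of $\log\tfrac1\delta$; the per-net-element guarantees apply to the $\pitil \in \Picov^\gamma$ with $\|\bphi_{\pihat,h}(s) - \bphi_{\pitil,h}(s)\|_2 \le \gamma$ for all $s,h$; and these transfer to $\pihat$ at the cost of an additive error of order $\gamma\cdot\poly(d,H,\Toff,\tfrac1{\lamminst},\tfrac1\epsilon,\log\tfrac1\delta)$ in every feature-visitation and value estimate involving $\pihat$, which is negligible relative to $\epsilon_\ell$ once $\gamma$ is a sufficiently small inverse polynomial of these parameters. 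The main obstacle, I expect, is the correctness of the adaptive stopping rule: one must thread the constants $c_0,c_1,c_2$ so that the $\epsilon$-optimal and non-$\epsilon$-optimal cases are cleanly separated at $\epsilon_\ell \asymp \epsver$ --- in particular so that $\pihat$ is neither eliminated prematurely in the former case nor retained past $\epsilon_\ell \asymp \epsver$ in the latter --- after which the per-epoch counts telescope routinely; the covering step is, by contrast, a standard discretization.
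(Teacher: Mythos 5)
Your proposal is correct and follows essentially the same route as the paper: the paper also runs \algname with $\pihat$ adjoined to the policy set, handles the possible dependence of $\pihat$ on the data via exactly the covering argument you describe (\Cref{lem:linear_mdp_est_transition_verification}, which union-bounds over $\Picov^\gamma$ and transfers to $\pihat$ with an additive $\cO(K\gamma/\sqrt{\lambda})$ error, with $\gamma$ set to an inverse polynomial so this error is at most $\epsilon$), and obtains the $\epsver$ scaling by terminating early once $\pihat$ is eliminated. The only difference is one of exposition—the paper's proof is a short pointer to the proof of \Cref{cor:main_complexity} plus the new covering lemma, whereas you spell out the stopping rule and the monotonicity/constant-threading details explicitly.
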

}

\Cref{cor:verification_complexity} shows that it is possible to verify the quality of a policy with complexity similar to that given in \Cref{cor:main_complexity}, yet scaling with $\epsver$ rather than $\epsilon$. This difference in $\epsilon$ dependence arises because, if $\pihat$ is very suboptimal, we only need to learn the performance of $\pihat$ up to a tolerance that is in the order of its policy gap. Note that $\epsver$ does not need to be known by the algorithm in advance---our procedure is able to adapt to the value of $\epsver$.

\iftoggle{arxiv}{

\section{Algorithm Description}\label{sec:alg_description}
Our algorithm, \algname, is based off of the recently proposed \pedel algorithm of \cite{wagenmaker2022instance}. We provide a brief description of this algorithm in \Cref{alg:offline_pedel_pseudocode} and a full definition in \Cref{sec:upper_proof}. We refer the interested reader to \cite{wagenmaker2022instance} for an in-depth discussion of \pedel.  

\begin{algorithm}[h]
\begin{algorithmic}[1]
\State \textbf{input:} tolerance $\epsilon$, confidence $\delta$, offline data $\frakDoff$
\State $\Pi_1 \leftarrow \Pilsm$, $\bphihat^{1}_{\pi,1} \leftarrow \Exp_{a \sim \pi_1(\cdot | s_1)} [ \bphi(s_1,a)], \forall \pi \in \Pi$
\For{$\ell = 1,2,\ldots, \lceil \log \frac{4}{\epsilon} \rceil$}
\State $\epsilon_\ell \leftarrow 2^{-\ell}$, $\beta_\ell \leftarrow \cOtil \left (d H^4 \cdot \logs  (d, H, \Toff, \frac{1}{\lamminst}, \frac{1}{\epsilon}, \log \frac{1}{\delta}  ) + \log \frac{1}{\delta} \right )$
\For{$h=1,2,\ldots,H$}
\State Run \optcov (\Cref{alg:regret_data_fw}) for $K_{h,\ell}$ episodes to collect data $\frakD_{h,\ell}$ such that:
\begin{align*}
			\max_{\pi \in \Pi_{\ell}}  &\| \bphihat_{\pi,h}^\ell \|_{(\bLambda_{h,\ell} + \bLamoff^h)^{-1}}^2  \le \epsilon_\ell^2/\beta_\ell \quad \text{for} \quad \bLambda_{h,\ell} \leftarrow \tsum_{\tau = 1}^{K_{h,\ell}} \bphi_{h,\tau} \bphi_{h,\tau}^\top + d^{-1} I, \bphi_{h,\tau} = \bphi(s_{h,\tau},a_{h,\tau})
		\end{align*}
\For{$\pi \in \Pi_\ell$} \hfill {\color{blue} \texttt{// Estimate feature-visitations for active policies}}
	\State $\bphihat_{\pi,h+1}^\ell \leftarrow \Big ( \sum_{(s_h,a_h,r_h,s_{h+1}) \in \frakDoff^h \cup \frakD_{h,\ell}} \bphi_{\pi,h+1}(s_{h+1}) \bphi(s_h,a_h)^\top (\bLambda_{h,\ell} + \bLamoff^h)^{-1} \Big ) \bphihat_{\pi,h}^\ell$
\EndFor
\Statex \hspace{1.5em} {\color{blue} \texttt{// Estimate reward vectors}}
\State $\bthetahat_h^\ell \leftarrow (\bLambda_{h,\ell} + \bLamoff^h)^{-1} \sum_{(s_h,a_h,r_h,s_{h+1}) \in \frakDoff^h \cup \frakD_{h,\ell}} \bphi(s_h,a_h) r_h$
	\EndFor
	\Statex {\color{blue} \texttt{// Remove provably suboptimal policies from active policy set}}
	\State Update $\Pi_\ell$:
	\begin{align*}
	\Pi_{\ell+1} \leftarrow  \Pi_\ell \backslash \Big \{ \pi \in \Pi_\ell \ : \ \Vhat_0^\pi < \sup_{\pi' \in \Pi_\ell} \Vhat_0^{\pi'} - 2\epsilon_\ell \Big \} \quad \text{for} \quad \Vhat_0^\pi := \tsum_{h=1}^H \inner{\bphihat_{\pi,h}^\ell}{\bthetahat_h^\ell}
	\end{align*}
	\If{$|\Pi_{\ell+1}| = 1$} \textbf{return} $\pi \in \Pi_{\ell+1}$\EndIf
\EndFor
\State \textbf{return} any $\pi \in \Pi_{\ell+1}$
\end{algorithmic}
\caption{\textbf{F}ine-\textbf{T}uning \textbf{P}olicy Learning via \textbf{E}xperiment \textbf{De}sign in \textbf{L}inear MDPs (\algname)}
\label{alg:offline_pedel_pseudocode}
\end{algorithm}

\algname is a \emph{policy elimination}-based algorithm. It begins by initializing some set of policies, $\Pi_0$, to the set of linear softmax policies, $\Pilsm$. It then proceeds in epochs, gradually refining this set at each epoch---at epoch $\ell$, maintaining some set of policies $\Pi_\ell$ which are guaranteed to be at most $\cO(\epsilon_\ell)$-suboptimal. The key property of \algname is its ability to carefully explore, directing online exploration only to regions that are both relevant to learn about the set of active policies and that have not yet been sufficiently covered by the offline data. In particular, at step $h$ of epoch $\ell$, it aims to collect online covariates $\bLambda_{h,\ell}$ that satisfy
\begin{align}\label{eq:alg_description_data_cond}
\max_{\pi \in \Pi_\ell} \| \bphihat_{\pi,h}^\ell \|_{(\bLambda_{h,\ell} + \bLamoff^h)^{-1}}^2 \le \frac{\epsilon_\ell^2}{\beta_\ell}.
\end{align} 
We show that, if we have obtained covariates that satisfy \eqref{eq:alg_description_data_cond}, this is sufficient to ensure that we can accurately estimate the feature-visitations for policies in $\Pi_{\ell}$ at stage $h+1$, $\bphihat_{\pi,h+1}^\ell$. Note that to satisfy \eqref{eq:alg_description_data_cond} we only need to collect data in directions for which the offline data is not sufficiently rich, and that are relevant to the active policies at epoch $\ell$, $\Pi_\ell$. To efficiently achieve this we rely on the \optcov procedure (\Cref{alg:regret_data_fw}), originally developed in \cite{wagenmaker2022instance}. \optcov is able to collect covariates $\bLambda_{h,\ell}$ satisfying \eqref{eq:alg_description_data_cond} in only roughly
\begin{align*}
\min_T T \quad \text{s.t.} \quad \inf_{\bLambda \in \bOmega_h} \max_{\pi \in \Pi_\ell}  \| \bphihat_{\pi,h}^\ell \|_{(T \bLambda + \bLamoff^h)^{-1}}^2 \le  \frac{\epsilon_\ell^2}{\beta_\ell}
\end{align*}
episodes of exploration. Note that this rate is essentially unimprovable if we wish to collect data satisfying \eqref{eq:alg_description_data_cond}, as it quantifies the minimum number of online samples necessary to satisfy \eqref{eq:alg_description_data_cond} if we play the optimal covariates realizable on our MDP.
Achieving this requires a careful experiment-design based exploration strategy.

After collecting this data, \algname then produces an estimate of the feature-visitation for each active policy at step $h+1$, $\bphihat_{\pi,h+1}^\ell$, as well as the reward vector at step $h$, $\bthetahat_{h}^\ell$. Note that the true value of a policy $\pi$ can be written as
\begin{align*}
\Vpi_0 = \sum_{h=1}^H \inner{\bphi_{\pi,h}}{\btheta_h}.
\end{align*}
Inspired by this, \algname forms an estimate of the value of each active policy using its estimates of $\bphi_{\pi,h}$ and $\btheta_h$, $\Vhat_0^\pi := \tsum_{h=1}^H \inner{\bphihat_{\pi,h}^\ell}{\bthetahat_h^\ell}$, and eliminates policies provably suboptimal. By satisfying \eqref{eq:alg_description_data_cond} at each step, we can show that the $|\Vhat_0^\pi - \Vpi_0 | \le \cO(\epsilon_\ell)$, allowing us to eliminate policies that are more than $\cO(\epsilon_\ell)$-suboptimal. Repeating this procedure until $\ell = \cO(\log \frac{1}{\epsilon})$, we guarantee that only $\epsilon$-optimal policies remain. 

\icmledit{We remark that \algname is in many respects similar to the \pedel algorithm of \cite{wagenmaker2022instance}---both rely on policy elimination strategies and on the same experiment design routine to collect data. The key difference is that \algname initializes its data buffer with the available offline data, which allows it to then focus exploration on regions not covered by the offline data. We emphasize the simplicity of this modification---efficiently incorporating offline data does not require entirely new algorithmic approaches; offline data can be naturally used to warm-start online RL algorithms and speed up learning. }
}{}


\section{Conclusion}\label{sec:conclusion}

\iftoggle{arxiv}{
This work takes a first step towards understanding the statistical complexity of online RL when the learner is given access to a set of logged data. We introduce the \paradigmname setting and develop matching upper and lower bounds on the number of online episodes needed to obtain an $\epsilon$-optimal policy given access to an offline dataset, and an algorithm, \algname, able to achieve it. We believe our work opens several interesting directions for future work.

\paragraph{Improving \algname.}
\algname inherits several weakness from the original \pedel algorithm on which it is based, namely computational inefficiency and the dependence on $\lamminst$. \pedel requires enumerating a policy class, which in general will be exponentially large in $d$ and $H$, rendering it computationally infeasible for all but the smallest problems. In addition, it requires that our MDP has full rank covariates, \Cref{asm:full_rank_cov}, and our guarantee scales with largest achievable minimum eigenvalue, $\lamminst$, albeit only logarithmically in the leading-order term. Developing a computationally efficient algorithm that achieves the same complexity as given in \Cref{cor:main_complexity} but free of $\lamminst$ is an exciting direction towards making near-optimal offline-to-online RL practical.

In addition, \Cref{cor:main_complexity} exhibits a lower-order $\cO(\frac{1}{\epsilon^{8/5}})$ term, arising from the complexity required to learn to explore. While a more careful analysis can refine this term to depend explicitly on the coverage of the offline data---in some cases removing the $\cO(\frac{1}{\epsilon^{8/5}})$ scaling entirely---in general it is not clear how to remove this term completely. We believe removing this completely may require new algorithmic techniques, which we leave for future work.

\paragraph{Verifiability in RL.}
Our results point to an important distinction between verifiable and unverifiable RL, and show that verifiable RL can be significantly more difficult than unverifiable RL. We are not the first to observe this separation: \cite{katz2020true} show that in the multi-armed bandit setting the cost of unverifiable learning is smaller than verifiable learning, and very recently \cite{tirinzoni2022optimistic} show a similar result in online RL. 
We believe a more thorough exploration of verifiability in RL is an interesting direction for future work with many open questions. For example, \cite{tirinzoni2022optimistic} construct a particular example where the verifiable complexity is much larger than the unverifiable complexity---can this effect be shown to hold more broadly?

\paragraph{Towards a General Theory of Offline-to-Online RL.}
While our work answers the question of how to leverage offline data in online RL, it applies only in PAC setting and requires the underlying MDP structure to be linear.
Recently, both the offline and online RL literature has devoted significant attention to RL with more general function approximation. Developing algorithms that address the offline-to-online setting with general function approximation while preserving our optimality guarantees is of much interest. In addition, understanding how to best leverage offline data in different RL settings---for example, regret minimization---remains an open question and is an exciting direction for future work. 

\subsection*{Acknowledgements}
AW is supported in part by NSF TRIPODS II-DMS 2023166. AP would like to thank the support of the Broad Institute of MIT and Harvard.
}{
This work takes a first step towards understanding the statistical complexity of online RL when the learner is given access to a set of logged data. We introduce the \paradigmname setting and develop matching upper and lower bounds on the number of online episodes needed to obtain an $\epsilon$-optimal policy given access to an offline dataset, and an algorithm, \algname, able to achieve it. We believe our work opens several interesting directions for future work.

\paragraph{Improving \algname.}
\algname inherits several weakness from the original \pedel algorithm on which it is based: it requires enumerating a policy class, which in general will be exponentially large in $d$ and $H$, rendering it computationally infeasible for all but the smallest problems, and requires that our MDP has full rank covariates, \Cref{asm:full_rank_cov}. Developing a computationally efficient algorithm that achieves the same complexity as given in \Cref{cor:main_complexity} but free of $\lamminst$ is an exciting future direction. In addition, \Cref{cor:main_complexity} exhibits a lower-order $\cO(\frac{1}{\epsilon^{8/5}})$ term, arising from the complexity required to learn to explore. While a more careful analysis can refine this term somewhat, in general it is not clear how to remove this term completely. We believe removing this completely may require new algorithmic techniques, which we leave for future work.

\paragraph{Verifiability in RL.}
Our results point to an important distinction between verifiable and unverifiable RL, and show that verifiable RL can be significantly more difficult than unverifiable RL. We are not the first to observe this separation: \cite{katz2020true} show that in the multi-armed bandit setting the cost of unverifiable learning is smaller than verifiable learning, and very recently \cite{tirinzoni2022optimistic} show a similar result in online RL. We believe a more thorough exploration of verifiability in RL is an interesting direction for future work with many open questions. \loose 

\paragraph{Towards a General Theory of Offline-to-Online RL.}
While our work answers the question of how to leverage offline data in online RL, it applies only in PAC setting and requires the underlying MDP structure to be linear.
Recently, both the offline and online RL literature has devoted significant attention to RL with more general function approximation. Developing algorithms that address the offline-to-online setting with general function approximation while preserving our optimality guarantees is of much interest. In addition, understanding how to best leverage offline data in different RL settings---for example, regret minimization---is an exciting direction for future work. 

\subsection*{Acknowledgements}
AW is supported in part by NSF TRIPODS II-DMS 2023166. AP would like to thank the support of the Broad Institute of MIT and Harvard and Boston University.
}

\newpage
\bibliographystyle{icml2021}
\bibliography{bibliography.bib}

\newpage
\appendix

\section{Technical Results}
\begin{prop}[Theorem 1 of \cite{abbasi2011improved}]\label{prop:self_norm}
Let $(\cF_t)_{t=0}^\infty$ be a filtration and $(\eta_t)_{t=1}^\infty$ be a real-valued stochastic process such that $\eta_t$ is $\cF_{t}$ measurable and $\eta_t | \cF_{t-1}$ is mean 0 and $\sigma^2$-subgaussian. Let $\{ \bphi_t \}_{t=1}^{\infty}$ be an $\cR^d$-valued stochastic process such that $\bphi_t$ is $\cF_{t-1}$-measurable. Let $\bLambda_0 \succ 0$ and define
\begin{align*}
\bLambda_t = \bLambda_0 + \sum_{s=1}^t \bphi_s \bphi_s^\top.
\end{align*}
Then with probability at least $1-\delta$, for all $t \ge 0$ simultaneously,
\begin{align*}
\bigg \| \sum_{s=1}^t \bphi_s \eta_s  \bigg \|_{\bLambda_t^{-1}} \le \sigma \sqrt{ \log \frac{\det \bLambda_t}{\det \bLambda_0} + 2 \log \frac{1}{\delta}}.
\end{align*}
\end{prop}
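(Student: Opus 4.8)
The plan is to prove this via the classical ``method of mixtures'' (pseudo-maximization) argument of \cite{abbasi2011improved}. Throughout, abbreviate $S_t := \sum_{s=1}^t \bphi_s \eta_s$ and $V_t := \sum_{s=1}^t \bphi_s \bphi_s^\top$, so that $\bLambda_t = \bLambda_0 + V_t$. The first step is to build, for each fixed $\lambda \in \R^d$, the exponential process
\begin{align*}
M_t^\lambda := \exp\!\Big( \tfrac{1}{\sigma}\inner{\lambda}{S_t} - \tfrac{1}{2}\|\lambda\|_{V_t}^2 \Big), \qquad M_0^\lambda := 1,
\end{align*}
and to verify it is a nonnegative supermartingale adapted to $(\cF_t)$ with $\Exp[M_t^\lambda] \le 1$. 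This uses only that $\bphi_t$ is $\cF_{t-1}$-measurable and that $\eta_t \mid \cF_{t-1}$ is mean zero and $\sigma^2$-subgaussian, which gives $\Exp[\exp(\tfrac{1}{\sigma}\inner{\lambda}{\bphi_t}\eta_t) \mid \cF_{t-1}] \le \exp(\tfrac{1}{2}\inner{\lambda}{\bphi_t}^2)$, exactly cancelling the increment of the $\|\lambda\|_{V_t}^2$ term.

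Next I would average the family $\{M_t^\lambda\}$ against a Gaussian prior: let $h$ denote the $N(0,\bLambda_0^{-1})$ density (well-defined since $\bLambda_0 \succ 0$) and set $\bar M_t := \int_{\R^d} M_t^\lambda\, h(\lambda)\,\rmd\lambda$. By Tonelli, $\bar M_t$ is again a nonnegative supermartingale with $\Exp[\bar M_t] \le 1$ and $\bar M_0 = 1$. The reason for choosing a Gaussian prior is that the integral is then exactly computable: completing the square in $\lambda$ (using $\|\lambda\|_{V_t}^2 + \|\lambda\|_{\bLambda_0}^2 = \|\lambda\|_{\bLambda_t}^2$) yields the closed form
\begin{align*}
\bar M_t = \Big( \tfrac{\det \bLambda_0}{\det \bLambda_t} \Big)^{1/2} \exp\!\Big( \tfrac{1}{2\sigma^2}\|S_t\|_{\bLambda_t^{-1}}^2 \Big).
\end{align*}
It is worth noting where the mixing buys us something: naively maximizing $M_t^\lambda$ over $\lambda$ gives $\exp(\tfrac{1}{2\sigma^2}\|S_t\|_{V_t^{-1}}^2)$ but no uniform-in-$t$ control, whereas mixing replaces the peak by a Laplace-type integral and produces precisely the $\log(\det\bLambda_t/\det\bLambda_0)$ correction.

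The last step is to apply Ville's maximal inequality for nonnegative supermartingales: $\Pr[\exists\, t \ge 0 : \bar M_t \ge 1/\delta] \le \Exp[\bar M_0]\,\delta = \delta$. On the complementary event, $\bar M_t < 1/\delta$ for every $t$; substituting the closed form, taking logarithms, and rearranging gives $\|S_t\|_{\bLambda_t^{-1}}^2 < \sigma^2\big(\log\frac{\det\bLambda_t}{\det\bLambda_0} + 2\log\frac{1}{\delta}\big)$ for all $t\ge 0$ simultaneously, and taking square roots finishes the proof. I expect the only genuinely delicate points to be: (i) justifying the Tonelli interchange that makes $\bar M_t$ a supermartingale, which needs $M_t^\lambda$ to be $h$-integrable --- this holds because $V_t$ is a.s.\ a finite sum of rank-one terms and $\bLambda_0 \succ 0$ makes the resulting Gaussian integral converge; and (ii) making the ``for all $t \ge 0$'' clause precise (in particular at $t = \infty$), which is handled by applying the maximal inequality to the stopped process $\bar M_{t\wedge \tau}$ with $\tau := \inf\{t : \bar M_t \ge 1/\delta\}$ and letting $t\to\infty$ via Fatou. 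Everything else is the explicit Gaussian computation above.
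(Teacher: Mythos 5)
Your proof is correct and is precisely the method-of-mixtures argument of Abbasi-Yadkori et al.\ (2011): the paper states this result as an imported citation (Theorem 1 of that reference) without reproving it, and your supermartingale construction, Gaussian mixing with prior $N(0,\bLambda_0^{-1})$, closed-form evaluation of $\bar M_t$, and application of Ville's maximal inequality reproduce the canonical proof exactly, with the constants matching the stated bound. The two technical caveats you flag (Tonelli and the stopped-process handling of the uniform-in-$t$ claim) are the same ones addressed in the original reference, so nothing is missing.
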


\begin{lemma}\label{lem:log_inequality}
Assume that $B,D \ge 1$. Then if
\begin{align*}
x \ge 3 C \log(3 B \max \{ C,D \}) + D,
\end{align*}
we have that $x \ge C \log (Bx) + D$. 
\end{lemma}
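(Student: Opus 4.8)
The plan is to reduce the statement to checking a single point via a monotonicity argument. If $C=0$ there is nothing to prove, since then the hypothesis reads $x\ge D$, which is exactly the conclusion; so assume $C>0$ henceforth. Write $M := \max\{C,D\}$, which is at least $1$ because $D\ge 1$, and set $x_0 := 3C\log(3BM)+D$, so that the hypothesis is precisely $x\ge x_0$ and we must show $x\ge C\log(Bx)+D$ for all such $x$.

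First I would argue that $g(x) := x - C\log(Bx) - D$ is nondecreasing on $[x_0,\infty)$, so that it is enough to verify $g(x_0)\ge 0$. Indeed $g'(x) = 1 - C/x$, and since $BM\ge 1$ gives $3BM\ge 3>e$, we have $\log(3BM)>1$ and hence $x_0 \ge 3C\log(3BM) > 3C > C$; thus $g'(x)>0$ throughout $[x_0,\infty)$.

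Next I would verify $g(x_0)\ge 0$. Substituting the definition of $x_0$ and cancelling the $D$'s, $g(x_0)\ge 0$ is equivalent to $3C\log(3BM)\ge C\log(Bx_0)$, i.e.\ (dividing by $C>0$ and exponentiating) to $Bx_0 \le (3BM)^3 = 27B^3M^3$. Using $C\le M$ and $D\le M$ I would bound $x_0 \le 3M\log(3BM) + M = M\big(3\log(3BM)+1\big)$, so it suffices to show $3\log(3BM)+1 \le 27B^2M^2$. Setting $t := BM \ge 1$, this is the elementary scalar inequality $3\log(3t)+1 \le 27t^2$ for $t\ge 1$: it holds at $t=1$ since $3\log 3 + 1 < 5 < 27$, and it persists for $t>1$ because the right-hand side has derivative $54t\ge 54$ while the left-hand side has derivative $3/t\le 3$. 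Chaining these estimates gives $Bx_0 \le B\cdot 27B^2M^3 = (3BM)^3$, hence $g(x_0)\ge 0$, and therefore $g(x)\ge g(x_0)\ge 0$ for all $x\ge x_0$, which is the claim.

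The only point requiring any care is the constant-chasing in the last step: one has to confirm that the factor $3$ multiplying $C\log(3B\max\{C,D\})$ in the hypothesis is exactly what is needed to absorb both the extra factor $B$ appearing inside $\log(Bx_0)$ and the additive term $D$ once $x_0$ is expanded. The monotonicity reduction and the inequality $3\log(3t)+1\le 27t^2$ are routine.
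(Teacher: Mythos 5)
Your proof is correct, and it follows the same basic strategy as the paper's: verify the inequality at the boundary point $x_0 = 3C\log(3B\max\{C,D\})+D$. The executions differ in two respects. First, for the base case you exponentiate and reduce to the clean scalar inequality $3\log(3t)+1\le 27t^2$ for $t=BM\ge 1$, whereas the paper instead applies $\log y\le y$ to bound $\log(3BM)\le 3BM$ and then chains through a sequence of $\max$-manipulations; both routes are valid and land on the same constant $3$. Second, and more substantively, you explicitly supply the monotonicity step ($g'(x)=1-C/x>0$ for $x\ge x_0>C$, using $\log(3BM)\ge\log 3>1$) needed to pass from the single point $x_0$ to all $x\ge x_0$; the paper's proof only checks $x=x_0$ and leaves this extension implicit, so your version is actually the more complete of the two. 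The only caveat, shared with the paper, is that both arguments tacitly assume $C\ge 0$ (you dispatch $C=0$ and take $C>0$; the paper's remark about $C<1$ likewise presumes nonnegativity), which is the intended regime for how the lemma is used.
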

\begin{proof}
Take $x = 3 C \log(3 B \max \{ C,D \}) + D$. Then,
\begin{align*}
C \log(Bx) + D & = C \log(3 B C \log(3 B \max \{ C,D \}) + BD) + D \\
& \le C \log(9 B^2 C \max \{ C,D \} + BD) + D \\
& \le C \log (18 B \max \{ B C^2, B C D, D \}) + D \\
& \le C \log (18 B^2 \max \{ C^2, D^2 \}) + D \\
& \overset{(a)}{\le} C \log (3^3 B^3 \max \{ C^3, D^3 \}) + D \\
& \le 3 C \log(3B \max \{ C,D \}) + D \\
& = x .
\end{align*}
Note that $(a)$ holds even if $C < 1$, since we have assumed $D \ge 1$, so in this case $\max \{ C,D \} = D$. 
\end{proof}

\section{Proof of Upper Bound}\label{sec:upper_proof}

\begin{algorithm}[H]
\begin{algorithmic}[1]
\State  \textbf{input:} tolerance $\epsilon$, confidence $\delta$, policy set $\Pi$ (default: $\Pilsm$)
\For{$i = 1,2,3,\ldots$}
\State $\pi \leftarrow$ \algnamese$(\epsilon,\delta/2i^2,\Pi,2^i)$
\If{$\pi \neq \emptyset$}
\State \textbf{return} $\pi$
\EndIf
\EndFor
\State
\State \textbf{function:} \algnamese 
\State \textbf{input:} tolerance $\epsilon$, confidence $\delta$, policy set $\Pi$, upper bound on number of online samples $\Tonbar$ 
\State  $\Pi_{1} \leftarrow \Pi$, $\bphihat^{1}_{\pi,1} \leftarrow \Exp_{a \sim \pi_1(\cdot | s_1)} [ \bphi(s_1,a)], \forall \pi \in \Pi$, $\Ton \leftarrow 0$, $\lambda \leftarrow 1/d$
\For{$\ell = 1,2,\ldots, \lceil \log \frac{4}{\epsilon} \rceil$}
	\State $\epsilon_\ell \leftarrow 2^{-\ell}$, $\beta_\ell \leftarrow  H^4 \left ( 2 \sqrt{d \log \frac{\lambda + (\Toff + \Tonbar)/d}{\lambda}+ 2\log \frac{2 H^2 | \Pi| \ell^2}{\delta}}  + \sqrt{d \lambda}  \right )^2$
	\For{$h = 1,2,\ldots,H$}
		\State Run procedure outlined in \Cref{thm:fw_full_guarantee} with parameters
		\begin{align*}
		\epsexp \leftarrow \frac{\epsilon_\ell^2}{\beta_\ell}, \quad \delta \leftarrow \frac{\delta}{2H\ell^2}, \quad \lamun \leftarrow \log \frac{4H^2 |\Pi| \ell^2}{\delta}, \quad \Phi \leftarrow \Phi_{\ell,h} := \{ \bphihat_{\pi,h}^\ell \ : \ \pi \in \Pi_\ell \},
		\end{align*}
		\hspace{2.75em} to collect data $\frakD_{h,\ell} \leftarrow \{ (s_{h,\tau}, a_{h,\tau}, r_{h,\tau}, s_{h+1,\tau}) \}_{\tau=1}^{K_{h,\ell}}$ such that: \label{line:pedel_fw_explore}
		\begin{align*}
			\max_{\pi \in \Pi_{\ell}}  &\| \bphihat_{\pi,h}^\ell \|_{(\bLambda_{h,\ell} + \bLamoff^h)^{-1}}^2  \le \epsilon_\ell^2/\beta_\ell \quad  \text{and} \quad \lammin(\bLambda_{h,\ell} + \bLamoff^h) \ge \log \frac{4H^2 |\Pi| \ell^2}{\delta} \\
			&\text{for} \quad \bLambda_{h,\ell} \leftarrow \tsum_{\tau = 1}^{K_{h,\ell}} \bphi_{h,\tau} \bphi_{h,\tau}^\top + 1/d \cdot I \quad \text{and} \quad \bphi_{h,\tau} = \bphi(s_{h,\tau},a_{h,\tau})
		\end{align*}
		\hspace{2.75em} terminate procedure early if $\Ton +$ total number online episodes collected $> \Tonbar$
		\State $\Ton \leftarrow \Ton +$ total number online episodes collected on Line \ref{line:pedel_fw_explore}
		\If{$\Ton \ge \Tonbar$}
		\State \textbf{return} $\emptyset$
		\EndIf
		\For{$\pi \in \Pi_\ell$} \hfill {\color{blue} \texttt{// Estimate feature-visitations for active policies}}
			\State $\bphihat_{\pi,h+1}^\ell \leftarrow \Big ( \sum_{(s_h,a_h,r_h,s_{h+1}) \in \frakDoff^h \cup \frakD_{h,\ell}} \bphi_{\pi,h+1}(s_{h+1}) \bphi(s_h,a_h)^\top (\bLambda_{h,\ell} + \bLamoff^h)^{-1} \Big ) \bphihat_{\pi,h}^\ell$
		\EndFor
		\State $\bthetahat_h^\ell \leftarrow (\bLambda_{h,\ell} + \bLamoff^h)^{-1} \sum_{(s_h,a_h,r_h,s_{h+1}) \in \frakDoff^h \cup \frakD_{h,\ell}} \bphi(s_h,a_h) r_h$
	\EndFor
	\Statex {\color{blue} \texttt{// Remove provably suboptimal policies from active policy set}}
	\State Update $\Pi_\ell$:
	\begin{align*}
	\Pi_{\ell+1} \leftarrow  \Pi_\ell \backslash \Big \{ \pi \in \Pi_\ell \ : \ \Vhat_0^\pi < \sup_{\pi' \in \Pi_\ell} \Vhat_0^{\pi'} - 2\epsilon_\ell \Big \} \quad \text{for} \quad \Vhat_0^\pi := \tsum_{h=1}^H \inner{\bphihat_{\pi,h}^\ell}{\bthetahat_h^\ell}
	\end{align*}
	\If{$|\Pi_{\ell+1}| = 1$}\label{line:early_term} \textbf{return} $\pi \in \Pi_{\ell+1}$\EndIf
\EndFor
\State \textbf{return} any $\pi \in \Pi_{\ell+1}$
\end{algorithmic}
\caption{\textbf{F}ine-\textbf{T}uning \textbf{P}olicy Learning via \textbf{E}xperiment \textbf{De}sign in \textbf{L}inear MDPs (\algname)}
\label{alg:offline_pedel}
\end{algorithm}

The proof of \Cref{cor:main_complexity} follows closely the proof of Theorem 1 in \cite{wagenmaker2022instance}. As such, we omit details where appropriate and refer the interested reader to the cited results in \cite{wagenmaker2022instance}.

\subsection{Estimation of Feature Visitations}

\begin{lemma}\label{lem:linear_mdp_est_transition}
Assume that we have some dataset $\frakD = \{ (s_{h-1,\tau}, a_{h-1,\tau}, r_{h-1,\tau}, s_{h,\tau}) \}_{\tau = 1}^K$ satisfying \Cref{asm:offline_data}.
Denote $\bphi_{h-1,\tau} = \bphi(s_{h-1,\tau},a_{h-1,\tau})$ and $\bLambda_{h-1} = \sum_{\tau=1}^K \bphi_{h-1,\tau} \bphi_{h-1,\tau}^\top + \lambda I$. Fix $\pi$ and let
\begin{align*}
\cThat_{\pi,h} = \left (  \sum_{\tau=1}^K \bphi_{\pi,h}(s_{h,\tau}) \bphi_{h-1,\tau}^\top \right ) \bLambda_{h-1}^{-1}.
\end{align*}
Fix $\bu \in \R^d$ and $\bv \in \R^d$ satisfying $|\bv^\top \bphi_{\pi,h}(s)| \le 1$ for all $s$.
Then with probability at least $1-\delta$, we can bound
\begin{align*}
| \bv^\top (\cT_{\pi,h} - \cThat_{\pi,h}) \bu | \le \left ( 2 \sqrt{ \log \frac{\det \bLambda_{h-1}}{\lambda^d} + 2\log \frac{1}{\delta}} + \sqrt{\lambda} \| \cT_{\pi,h}^\top \bv \|_2 \right ) \cdot \| \bu \|_{\bLambda_{h-1}^{-1}}.
\end{align*}
\end{lemma}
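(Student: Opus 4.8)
The plan is to recognize $\cThat_{\pi,h}$ as the ridge-regression estimator of a $d\times d$ ``one-step feature-transition operator'' and then control the error by the standard bias-plus-self-normalized-noise decomposition.

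\textbf{Step 1: identify the estimand.} Using the linear-MDP structure of \Cref{defn:linear_mdp}, for any $(s,a)$ we have $\Exp[\bphi_{\pi,h}(s_h) \mid s_{h-1}=s, a_{h-1}=a] = \int \bphi_{\pi,h}(s') \inner{\bphi(s,a)}{\rmd \bmu_{h-1}(s')} = \cT_{\pi,h}\, \bphi(s,a)$, where $\cT_{\pi,h} := \int \bphi_{\pi,h}(s')\, \rmd\bmu_{h-1}(s')^\top \in \R^{d\times d}$; this matrix is well-defined since $\| \bphi_{\pi,h}(s') \|_2 \le 1$ and $\| |\bmu_{h-1}|(\cS) \|_2 \le \sqrt{d}$. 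This is the quantity $\cThat_{\pi,h}$ estimates.

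\textbf{Step 2: decompose.} Write $\bphi_{\pi,h}(s_{h,\tau}) = \cT_{\pi,h}\bphi_{h-1,\tau} + \bm{\xi}_\tau$ with $\bm{\xi}_\tau := \bphi_{\pi,h}(s_{h,\tau}) - \cT_{\pi,h}\bphi_{h-1,\tau}$. Take the filtration $(\cF_\tau)$ generated by the first $\tau$ tuples of $\frakD$ together with $(s_{h-1,\tau+1},a_{h-1,\tau+1})$; then $\bphi_{h-1,\tau}$ is $\cF_{\tau-1}$-measurable, and by \Cref{asm:offline_data} the conditional law of $s_{h,\tau}$ given $\cF_{\tau-1}$ is $P_{h-1}(\cdot \mid s_{h-1,\tau}, a_{h-1,\tau})$, so $\Exp[\bm{\xi}_\tau \mid \cF_{\tau-1}] = 0$. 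Substituting into the definition of $\cThat_{\pi,h}$ and using $\sum_\tau \bphi_{h-1,\tau}\bphi_{h-1,\tau}^\top = \bLambda_{h-1} - \lambda I$ gives
\begin{align*}
\cT_{\pi,h} - \cThat_{\pi,h} = \lambda\, \cT_{\pi,h}\bLambda_{h-1}^{-1} - \Big( \tsum_\tau \bm{\xi}_\tau \bphi_{h-1,\tau}^\top \Big) \bLambda_{h-1}^{-1}.
\end{align*}

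\textbf{Step 3: bound the two terms.} For the bias term, Cauchy--Schwarz in the $\bLambda_{h-1}^{-1}$ inner product and $\bLambda_{h-1}^{-1} \preceq \lambda^{-1} I$ give $|\bv^\top \lambda \cT_{\pi,h}\bLambda_{h-1}^{-1}\bu| \le \lambda \| \cT_{\pi,h}^\top \bv \|_{\bLambda_{h-1}^{-1}} \| \bu \|_{\bLambda_{h-1}^{-1}} \le \sqrt{\lambda}\, \| \cT_{\pi,h}^\top \bv \|_2 \, \| \bu \|_{\bLambda_{h-1}^{-1}}$. For the noise term, Cauchy--Schwarz again yields $|\bv^\top ( \sum_\tau \bm{\xi}_\tau \bphi_{h-1,\tau}^\top ) \bLambda_{h-1}^{-1}\bu| \le \| \sum_\tau (\bv^\top\bm{\xi}_\tau) \bphi_{h-1,\tau} \|_{\bLambda_{h-1}^{-1}} \cdot \| \bu \|_{\bLambda_{h-1}^{-1}}$; the scalar noise $\eta_\tau := \bv^\top \bm{\xi}_\tau$ is $\cF_\tau$-measurable, mean zero given $\cF_{\tau-1}$, and bounded by $2$ since $|\bv^\top \bphi_{\pi,h}(s)| \le 1$ for all $s$ forces also $|\bv^\top \cT_{\pi,h}\bphi_{h-1,\tau}| = |\Exp[\bv^\top\bphi_{\pi,h}(s_h)\mid\cdot]| \le 1$; hence $\eta_\tau$ is $2$-subgaussian conditionally. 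Applying \Cref{prop:self_norm} with $\bLambda_0 = \lambda I$ (so $\det \bLambda_0 = \lambda^d$) bounds $\| \sum_\tau (\bv^\top\bm{\xi}_\tau) \bphi_{h-1,\tau} \|_{\bLambda_{h-1}^{-1}} \le 2\sqrt{ \log(\det \bLambda_{h-1}/\lambda^d) + 2\log (1/\delta)}$ with probability at least $1-\delta$, simultaneously for all $K$. Summing the two bounds gives the claim; no union bound over $\bu,\bv$ is needed since they are fixed.

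\textbf{Main obstacle.} The only genuinely delicate point is the martingale set-up: \Cref{asm:offline_data} permits the offline data to be collected adaptively (with correlations across samples and no requirement of full trajectories), so one must choose the filtration so that $(s_{h-1,\tau},a_{h-1,\tau})$ is predictable while the transition target $s_{h,\tau}$ is not, and verify the conditional mean-zero and conditional subgaussianity hypotheses of \Cref{prop:self_norm} relative to that filtration. Everything else is routine linear algebra; indeed this argument mirrors the linear-MDP estimation lemmas of \cite{wagenmaker2022instance}.
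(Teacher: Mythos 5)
Your proposal is correct and follows essentially the same route as the paper's proof: the same bias-plus-martingale-noise decomposition $\cT_{\pi,h} - \cThat_{\pi,h} = \lambda\,\cT_{\pi,h}\bLambda_{h-1}^{-1} - (\sum_\tau \bm{\xi}_\tau \bphi_{h-1,\tau}^\top)\bLambda_{h-1}^{-1}$, the same filtration making $(s_{h-1,\tau},a_{h-1,\tau})$ predictable so that \Cref{asm:offline_data} yields the conditional mean-zero and $2$-subgaussian properties of $\bv^\top\bm{\xi}_\tau$, and the same application of Cauchy--Schwarz plus \Cref{prop:self_norm} for the noise term and $\bLambda_{h-1}^{-1} \preceq \lambda^{-1}I$ for the bias term. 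No gaps.
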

\begin{proof}
Define the $\sigma$-algebra
\begin{align*}
\cF_{h,\tau} = \sigma ( \{ (s_{h,j}, a_{h,j}) \}_{j = 1}^{\tau + 1} \cup \{ (r_{h,j}, s_{h+1,j}) \}_{j=1}^{\tau} ). 
\end{align*}
Then $(s_{h,\tau},a_{h,\tau})$ is $\cF_{h,\tau-1}$-measurable, and $(r_{h,\tau},a_{h,\tau})$ is $\cF_{h,\tau}$-measurable. Since $\frakD$ satisfies \Cref{asm:offline_data}, we have that
\begin{align*}
\Exp[\bphi_{\pi,h}(s_{h}) \mid s_{h-1} = s_{h-1,\tau}, a_{h-1}= a_{h-1,\tau}] & = \Exp[\bphi_{\pi,h}(s_h) \mid \{ s_{h-1,j}, a_{h-1,j} \}_{j=1}^{\tau} \cup \{ r_{h-1,j}, s_{h,j} \}_{j=1}^{\tau - 1}] \\
& = \Exp[\bphi_{\pi,h}(s_h) \mid \cF_{h-1,\tau-1}]
\end{align*}
It follows that, by \Cref{defn:linear_mdp},
\begin{align*}
\cT_{\pi,h} & = \int \bphi_{\pi,h}(s) \rmd \bmu_{h-1}(s)^\top \\
& = \int \bphi_{\pi,h}(s) \rmd \bmu_{h-1}(s)^\top \left (  \sum_{\tau=1}^K \bphi_{h-1,\tau} \bphi_{h-1,\tau}^\top \right ) \bLambda_{h-1}^{-1} + \lambda \int \bphi_{\pi,h}(s) \rmd \bmu_{h-1}(s)^\top \bLambda_{h-1}^{-1} \\
& =  \sum_{\tau=1}^K \left ( \int \bphi_{\pi,h}(s) \rmd \bmu_{h-1}(s)^\top   \bphi_{h-1,\tau} \right ) \bphi_{h-1,\tau}^\top  \bLambda_{h-1}^{-1} + \lambda \int \bphi_{\pi,h}(s) \rmd \bmu_{h-1}(s)^\top \bLambda_{h-1}^{-1} \\
& = \sum_{\tau=1}^K \Exp[\bphi_{\pi,h}(s_{h,\tau}) | s_{h-1} = s_{h-1,\tau}, a_{h-1} = a_{h-1,\tau}] \bphi_{h-1,\tau}^\top  \bLambda_{h-1}^{-1} + \lambda \int \bphi_{\pi,h}(s) \rmd \bmu_{h-1}(s)^\top \bLambda_{h-1}^{-1} \\
& = \sum_{\tau=1}^K \Exp[\bphi_{\pi,h}(s_{h,\tau}) | \cF_{h-1,\tau-1}] \bphi_{h-1,\tau}^\top  \bLambda_{h-1}^{-1} + \lambda \int \bphi_{\pi,h}(s) \rmd \bmu_{h-1}(s)^\top \bLambda_{h-1}^{-1} \\
& = \sum_{\tau=1}^K \Exp[\bphi_{\pi,h}(s_{h,\tau}) | \cF_{h-1,\tau-1}] \bphi_{h-1,\tau}^\top  \bLambda_{h-1}^{-1} + \lambda \cT_{\pi,h} \bLambda_{h-1}^{-1}
\end{align*}
so
\begin{align*}
| \bv^\top (\cT_{\pi,h} -  \cThat_{\pi,h}) \bu | & \le \underbrace{\Big | \sum_{\tau=1}^K \bv^\top \left (  \Exp[\bphi_{\pi,h}(s_{h,\tau}) | \cF_{h-1,\tau-1}]  - \bphi_{\pi,h}(s_{h,\tau}) \right ) \bphi_{h-1,\tau}^\top  \bLambda_{h-1}^{-1} \bu \Big |}_{(a)} + \underbrace{\Big | \lambda \bv^\top \cT_{\pi,h} \bLambda_{h-1}^{-1} \bu \Big |}_{(b)} .
\end{align*}
We can bound
\begin{align*}
(a) \le \| \bu \|_{\bLambda_{h-1}^{-1}} \left \| \sum_{\tau=1}^K \bv^\top \left (  \Exp[\bphi_{\pi,h}(s_{h,\tau}) | \cF_{h-1,\tau-1}]  - \bphi_{\pi,h}(s_{h,\tau}) \right ) \bphi_{h-1,\tau}^\top  \bLambda_{h-1}^{-1/2} \right \|_2.
\end{align*}
By assumption $|\bv^\top  \bphi_{\pi,h}(s_{h,\tau}) | \le 1$, so $\bv^\top \left (  \Exp[\bphi_{\pi,h}(s_{h,\tau}) | \cF_{h-1,\tau-1}]  - \bphi_{\pi,h}(s_{h,\tau}) \right )$ is, conditioned on $\cF_{h-1,\tau-1}$, mean 0 and 2-subgaussian. \Cref{prop:self_norm} then gives that, with probability at least $1-\delta$,
\begin{align*}
\left \| \sum_{\tau=1}^K \bv^\top \left (  \Exp[\bphi_{\pi,h}(s_{h,\tau}) | \cF_{h-1,\tau-1}]  - \bphi_{\pi,h}(s_{h,\tau}) \right ) \bphi_{h-1,\tau}^\top  \bLambda_{h-1}^{-1/2} \right \|_2 \le 2 \sqrt{ \log \frac{\det \bLambda_{h-1}}{\lambda^d} + 2\log \frac{1}{\delta}}.
\end{align*}

We can also bound
\begin{align*}
(b) \le \sqrt{\lambda} \| \bu \|_{\bLambda_{h-1}^{-1}} \| \cT_{\pi,h}^\top \bv \|_2.
\end{align*}
Combining these gives the result.
\end{proof}

\begin{lemma}\label{lem:reward_est_fixed}
Assume that we have some dataset $\frakD = \{ (s_{h,\tau}, a_{h,\tau}, r_{h,\tau}, s_{h+1,\tau}) \}_{\tau = 1}^K$ satisfying \Cref{asm:offline_data}. Denote $\bphi_{h,\tau} = \bphi(s_{h,\tau},a_{h,\tau})$ and $\bLambda_{h} = \sum_{\tau=1}^K \bphi_{h,\tau} \bphi_{h,\tau}^\top + \lambda I$.
Let
\begin{align*}
\bthetahat_h = \argmin_{\btheta} \sum_{\tau = 1}^K (r_{h,\tau} - \inner{\bphi_{h,\tau}}{\btheta})^2 + \lambda \| \btheta \|_2^2
\end{align*}
and fix $\bu \in \R^d$.
Then with probability at least $1-\delta$:
\begin{align*}
| \inner{\bu}{\bthetahat_h - \btheta_h}| \le \left  ( 2 \sqrt{\log \frac{\det \bLambda_h}{\lambda^d} + 2 \log \frac{1}{\delta}} + \sqrt{d\lambda} \right ) \cdot \| \bu \|_{\bLambda_h^{-1}}.
\end{align*}
\end{lemma}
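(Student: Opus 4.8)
The plan is to run the standard ridge-regression analysis, closing it off with the self-normalized tail bound of \Cref{prop:self_norm}. First I would write the (strongly convex) least-squares solution in closed form, $\bthetahat_h = \bLambda_h^{-1} \sum_{\tau=1}^K \bphi_{h,\tau} r_{h,\tau}$, and decompose $r_{h,\tau} = \inner{\bphi_{h,\tau}}{\btheta_h} + \eta_\tau$ with $\eta_\tau := r_{h,\tau} - \inner{\bphi_{h,\tau}}{\btheta_h}$. Working with the filtration $\cF_{h,\tau} = \sigma(\{(s_{h,j},a_{h,j})\}_{j=1}^{\tau+1} \cup \{(r_{h,j},s_{h+1,j})\}_{j=1}^{\tau})$ used in the proof of \Cref{lem:linear_mdp_est_transition} — under which $\bphi_{h,\tau}$ is $\cF_{h,\tau-1}$-measurable and $r_{h,\tau}$ is $\cF_{h,\tau}$-measurable — \Cref{asm:offline_data} together with the linear-reward condition of \Cref{defn:linear_mdp} ($\Exp[\nu_h(s,a)] = \inner{\bphi(s,a)}{\btheta_h}$) gives $\Exp[\eta_\tau \mid \cF_{h,\tau-1}] = 0$; since $r_{h,\tau}\in[0,1]$, the noise $\eta_\tau$ is bounded and hence conditionally subgaussian.

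Substituting $\bLambda_h = \sum_\tau \bphi_{h,\tau}\bphi_{h,\tau}^\top + \lambda I$ into the closed form yields $\bthetahat_h - \btheta_h = \bLambda_h^{-1}\sum_\tau \bphi_{h,\tau}\eta_\tau - \lambda\bLambda_h^{-1}\btheta_h$, so $|\inner{\bu}{\bthetahat_h - \btheta_h}| \le |\bu^\top\bLambda_h^{-1}\sum_\tau \bphi_{h,\tau}\eta_\tau| + \lambda|\bu^\top\bLambda_h^{-1}\btheta_h|$. For the deterministic regularization bias, Cauchy–Schwarz in the $\bLambda_h^{-1}$ inner product gives $\lambda|\bu^\top\bLambda_h^{-1}\btheta_h| \le \|\bu\|_{\bLambda_h^{-1}}\cdot\lambda\|\btheta_h\|_{\bLambda_h^{-1}}$, and $\lambda\|\btheta_h\|_{\bLambda_h^{-1}} \le \lambda\cdot\lambda^{-1/2}\|\btheta_h\|_2 \le \sqrt{d\lambda}$ using $\bLambda_h^{-1}\preceq\lambda^{-1}I$ and $\|\btheta_h\|_2\le\sqrt d$ from \Cref{defn:linear_mdp}. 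For the stochastic term, Cauchy–Schwarz gives $|\bu^\top\bLambda_h^{-1}\sum_\tau \bphi_{h,\tau}\eta_\tau| \le \|\bu\|_{\bLambda_h^{-1}}\cdot\|\sum_\tau \bphi_{h,\tau}\eta_\tau\|_{\bLambda_h^{-1}}$, and \Cref{prop:self_norm}, applied with $\bLambda_0 = \lambda I$, bounds $\|\sum_\tau \bphi_{h,\tau}\eta_\tau\|_{\bLambda_h^{-1}}$ by $2\sqrt{\log(\det\bLambda_h/\lambda^d) + 2\log(1/\delta)}$ with probability at least $1-\delta$. Summing the two contributions gives the stated inequality.

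There is no real obstacle here; the only step that needs genuine care is verifying that $\{\eta_\tau\}$ is a conditionally-mean-zero, subgaussian martingale-difference sequence with respect to a filtration making $\bphi_{h,\tau}$ predictable — this is precisely where \Cref{asm:offline_data} is invoked, since it permits adaptively (indeed arbitrarily) collected offline data rather than i.i.d.\ samples, and the argument mirrors the one already carried out for the transition estimator in \Cref{lem:linear_mdp_est_transition}. The constant $2$ preceding the square root simply tracks the subgaussian proxy used for the bounded reward noise and can be any fixed absolute constant for the downstream results.
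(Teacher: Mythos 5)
Your proposal is correct and follows essentially the same route as the paper's proof: the same closed-form ridge decomposition into a self-normalized martingale term (handled via \Cref{prop:self_norm} with the filtration from \Cref{lem:linear_mdp_est_transition}) and a regularization bias term bounded by $\sqrt{d\lambda}\,\|\bu\|_{\bLambda_h^{-1}}$ using $\|\btheta_h\|_2 \le \sqrt{d}$. The only cosmetic difference is that you write the noise as $\eta_\tau = r_{h,\tau} - \inner{\bphi_{h,\tau}}{\btheta_h}$ while the paper writes $r_{h,\tau} - \Exp[r_{h,\tau}\mid\cF_{h,\tau-1}]$; these coincide under \Cref{asm:offline_data} and \Cref{defn:linear_mdp}.
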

\begin{proof}
We construct a filtration as in \Cref{lem:linear_mdp_est_transition}. By construction we have
\begin{align*}
\bthetahat_h = \bLambda_h^{-1} \sum_{\tau = 1}^K \bphi_{h,\tau} r_{h,\tau}.
\end{align*}
Furthermore, by \Cref{defn:linear_mdp}:
\begin{align*}
\btheta_h & = \bLambda_h^{-1} \bLambda_h \btheta_h  = \bLambda_h^{-1} \sum_{\tau = 1}^K \bphi_{h,\tau} \Exp[r_{h,\tau}| \cF_{h,\tau-1}] + \lambda \bLambda_h^{-1} \btheta_h.
\end{align*}
Thus,
\begin{align*}
| \inner{\bu}{\bthetahat_h - \btheta_h}| & \le \underbrace{\left |  \sum_{\tau = 1}^K \bu^\top  \bLambda_h^{-1} \bphi_{h,\tau} (r_{h,\tau} - \Exp[r_{h,\tau}| \cF_{h,\tau-1}]) \right |}_{(a)} + \underbrace{\left | \lambda \bu^\top \bLambda_h^{-1} \btheta_h \right  |}_{(b)} .
\end{align*}
We can bound
\begin{align*}
(a) \le \| \bu \|_{\bLambda_h^{-1}} \left \| \sum_{\tau = 1}^K  \bLambda_h^{-1/2} \bphi_{h,\tau} (r_{h,\tau} - \Exp[r_{h,\tau}| \cF_{h,\tau-1}]) \right \|_2.
\end{align*}
Since rewards are bounded in [0,1] almost surely, we have that $r_{h,\tau} - \Exp[r_{h,\tau} | \cF_{h,\tau-1}]$ is conditionally mean 0 and 2-subgaussian. Therefore, applying Theorem 1 of \cite{abbasi2011improved}, with probability at least $1-\delta$,
\begin{align*}
\left \| \sum_{\tau = 1}^K  \bLambda_h^{-1/2} \bphi_{h,\tau} (r_{h,\tau} - \Exp[r_{h,\tau}| \cF_{h-1,\tau}]) \right \|_2  \le 2 \sqrt{\log \frac{\det \bLambda_h}{\lambda^d} + 2 \log \frac{1}{\delta}}
\end{align*}

By \Cref{defn:linear_mdp}, we can also bound
\begin{align*}
(b) \le \sqrt{\lambda} \| \bu \|_{\bLambda_h^{-1}} \| \btheta_h \|_2 \le \sqrt{d \lambda} \| \bu \|_{\bLambda_h^{-1}}.
\end{align*}
Combining these proves the result. 
\end{proof}

\begin{lemma}\label{lem:est_good_event}
Let $\cEest^{\ell,h}$ denote the event on which, for all $\pi \in \Pi_\ell$:
\begin{align*}
& |\inner{\btheta_{h+1}}{\bphihat_{\pi,h+1}^\ell - \bphi_{\pi,h+1}}| \le \sum_{i=1}^{h} \left ( 2 \sqrt{\log \frac{\det \bLambda_{i,\ell}}{\lambda^d} + 2\log \frac{2 H^2 | \Pi| \ell^2}{\delta}}  + \sqrt{d \lambda}  \right ) \cdot \| \bphihat_{\pi,i}^\ell \|_{\bLambda_{i,\ell}^{-1}}, \\
& \| \bphihat_{\pi,h+1}^\ell - \bphi_{\pi,h+1} \|_2 \le d \sum_{i=1}^{h} \left ( 2 \sqrt{\log \frac{\det \bLambda_{i,\ell}}{\lambda^d} + 2\log \frac{2 H^2 d | \Pi| \ell^2}{\delta}}  + \sqrt{d \lambda}  \right ) \cdot \| \bphihat_{\pi,i}^\ell \|_{\bLambda_{i,\ell}^{-1}} , \\
& | \inner{\bphihat_{\pi,h}^\ell}{\bthetahat_h - \btheta_h}| \le \left  ( 2 \sqrt{\log \frac{\det \bLambda_{h,\ell}}{\lambda^d} + 2 \log \frac{2 H^2 |\Pi| \ell^2}{\delta}} + \sqrt{d\lambda}  \right ) \cdot \| \bphihat_{\pi,h}^\ell \|_{\bLambda_{h,\ell}^{-1}}.
\end{align*}
Then $\Pr[(\cEest^{\ell,h})^c] \le  \frac{\delta}{2H\ell^2}$.
\end{lemma}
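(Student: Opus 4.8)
The plan is to express $\bphihat_{\pi,h+1}^\ell$ in the recursive least-squares form in which the algorithm actually computes it, peel off one transition at a time, control each peeled term with the self-normalized bounds of \Cref{lem:linear_mdp_est_transition} and \Cref{lem:reward_est_fixed}, and finish with a union bound over the active policies, the $H$ stages, and the $d$ coordinates. Throughout, note that the online data $\frakD_{i,\ell}$ is generated by the MDP itself and so, together with $\frakDoff$, satisfies \Cref{asm:offline_data}, and that $\Pi_\ell$ is fixed at the start of epoch $\ell$, so $\Pi_\ell \subseteq \Pi$ and the quantities $\bphihat_{\pi,i}^\ell, \bLambda_{i,\ell}$ are all epoch-$\ell$ objects.

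First I would record the recursion. With $\cThat_{\pi,i+1}^\ell := \big(\sum_{z \in \frakDoff^i \cup \frakD_{i,\ell}} \bphi_{\pi,i+1}(s_{i+1})\bphi(s_i,a_i)^\top\big)(\bLambda_{i,\ell}+\bLamoff^i)^{-1}$ and $\cT_{\pi,i+1} := \int \bphi_{\pi,i+1}(s)\,\rmd\bmu_i(s)^\top$, the algorithm computes $\bphihat_{\pi,i+1}^\ell = \cThat_{\pi,i+1}^\ell \bphihat_{\pi,i}^\ell$, while in a linear MDP $\cT_{\pi,i+1}\bphi(s,a) = \Exp[\bphi_{\pi,i+1}(s_{i+1}) \mid s_i = s, a_i = a]$, so $\bphi_{\pi,i+1} = \cT_{\pi,i+1}\bphi_{\pi,i}$, with $\bphihat_{\pi,1}^\ell = \bphi_{\pi,1}$ exactly. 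Hence for $e_i := \bphihat_{\pi,i}^\ell - \bphi_{\pi,i}$ we get $e_1 = 0$ and $e_{i+1} = (\cThat_{\pi,i+1}^\ell - \cT_{\pi,i+1})\bphihat_{\pi,i}^\ell + \cT_{\pi,i+1} e_i$.

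For the first bound I would telescope against backward ``reward-to-go'' vectors: set $\bv_{h+1} := \btheta_{h+1}$ and $\bv_i := \cT_{\pi,i+1}^\top \bv_{i+1}$ for $i \le h$. Since $\bv_i^\top \bphi_{\pi,i}(s) = \Exp[\bv_{i+1}^\top \bphi_{\pi,i+1}(s_{i+1}) \mid s_i = s,\, a \sim \pi]$ and $|\btheta_{h+1}^\top \bphi_{\pi,h+1}(s)| = |\Exp_{a\sim\pi_{h+1}(\cdot|s)}[\inner{\bphi(s,a)}{\btheta_{h+1}}]| \le 1$, backward induction gives $|\bv_i^\top \bphi_{\pi,i}(s)| \le 1$ for all $s,i$; combined with $\bv_i = \int (\bv_{i+1}^\top \bphi_{\pi,i+1}(s))\,\rmd\bmu_i(s)$ and \Cref{defn:linear_mdp} this also yields $\|\cT_{\pi,i+1}^\top \bv_{i+1}\|_2 = \|\bv_i\|_2 \le \sqrt d$. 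Unrolling the recursion for $e$ and using $e_1 = 0$ gives $\inner{\btheta_{h+1}}{e_{h+1}} = \sum_{i=1}^h \bv_{i+1}^\top (\cThat_{\pi,i+1}^\ell - \cT_{\pi,i+1}) \bphihat_{\pi,i}^\ell$, and applying \Cref{lem:linear_mdp_est_transition} to each summand with $\bv = \bv_{i+1}$, $\bu = \bphihat_{\pi,i}^\ell$, $\lambda = 1/d$, and $\|\bphihat_{\pi,i}^\ell\|_{(\bLambda_{i,\ell}+\bLamoff^i)^{-1}} \le \|\bphihat_{\pi,i}^\ell\|_{\bLambda_{i,\ell}^{-1}}$ (as $\bLamoff^i \succeq 0$) produces the claimed inequality. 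The second bound follows by running the same argument with $\bv_{h+1} = \be_j$ for each coordinate $j$ — the inductive norm bounds hold verbatim — and then $\|e_{h+1}\|_2 \le \sqrt d \max_j |\be_j^\top e_{h+1}|$; this costs an extra union over the $d$ coordinates (hence the $d$ inside the logarithm). The third bound is an immediate application of \Cref{lem:reward_est_fixed} to the stage-$h$ data $\frakDoff^h \cup \frakD_{h,\ell}$ with $\bu = \bphihat_{\pi,h}^\ell$, which is measurable with respect to the data collected before stage $h$ of epoch $\ell$ and so may be treated as a fixed vector there.

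Finally I would assemble the failure probability. For each fixed $\pi \in \Pi$ and stage $i$ the backward vectors are MDP-dependent, hence deterministic, so \Cref{lem:linear_mdp_est_transition} and \Cref{lem:reward_est_fixed} apply conditionally on the data collected before stage $i$; a union bound over $\pi \in \Pi$, $i \in [H]$, the $d$ coordinates, and the three estimate types, with per-event confidence $\Theta(\delta / (H^2 d |\Pi| \ell^2))$, gives $\Pr[(\cEest^{\ell,h})^c] \le \delta/(2H\ell^2)$. The main obstacle is the error-propagation step: a naive bound that pushes $\|e_i\|_2$ through $\cT_{\pi,i+1}$ loses a factor $\sqrt d$ per stage and thus blows up exponentially in $H$; the fix — telescoping at the level of inner products against carefully chosen backward vectors whose relevant norms stay $O(1)$ and $O(\sqrt d)$ — is the crux of the argument. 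Once it is in place, the concentration and union-bound bookkeeping is routine.
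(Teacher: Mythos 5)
Your proof is correct and follows essentially the same route as the paper, which simply defers to the analogous Lemma B.5 of \cite{wagenmaker2022instance} with \Cref{lem:linear_mdp_est_transition} and \Cref{lem:reward_est_fixed} substituted for its concentration lemmas; your telescoping against backward reward-to-go vectors, the coordinate-wise argument for the $\ell_2$ bound, and the union bound over $\pi \in \Pi$, stages, and coordinates is exactly the argument being invoked there, and your observation that the combined dataset $\frakDoff^i \cup \frakD_{i,\ell}$ satisfies \Cref{asm:offline_data} is the one point the paper makes explicitly. The only cosmetic mismatch is that applying \Cref{lem:linear_mdp_est_transition} to the combined data yields $\log\det(\bLambda_{i,\ell}+\bLamoff^i)$ rather than $\log\det\bLambda_{i,\ell}$ in the radius, but this discrepancy is inherited from the paper's own event definition and is absorbed downstream by the choice of $\beta_\ell$, which uses the total sample count $\Toff+\Tonbar$.
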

\begin{proof}
This follows analogously to Lemma B.5 of \cite{wagenmaker2022instance} and using \Cref{lem:linear_mdp_est_transition} and \Cref{lem:reward_est_fixed} in place of Lemma B.1 and B.4 of \cite{wagenmaker2022instance}.

We apply \Cref{lem:linear_mdp_est_transition} and \Cref{lem:reward_est_fixed} at step $h$ and round $\ell$ with the dataset
\begin{align*}
\frakD \leftarrow \frakDoff^{h-1} \cup \frakD_{h-1,\ell}.
\end{align*}
By assumption, the offline data in $\frakD$ satisfies \Cref{asm:offline_data}. In addition, it is easy to see that given our collection procedure, the online data also satisfies \Cref{asm:offline_data}. Thus, $\frakD$ satisfies \Cref{asm:offline_data}.
\end{proof}

\subsection{Policy Elimination via \algnamese}

\begin{lemma}\label{lem:exp_good_event}
Let $\cEexp^{\ell,h}$ denote the event on which:
\begin{itemize}
\item The exploration procedure on \Cref{line:pedel_fw_explore} terminates after running for at most
\begin{align*}
\max \bigg \{   \min_{\Tfw}  & \ C\cdot \Tfw \quad \text{s.t.} \quad \inf_{\bLambda \in \bOmega} \max_{\bphi \in \Phi_{\ell,h}} \bphi^\top ( \Tfw ( \bLambda + \lambar I) + \bLamoff )^{-1} \bphi  \le \frac{\epsilon_\ell^2}{6\beta_\ell}, \\
& \frac{\poly(d,H,\frac{1}{\lamminst},\log 1/\delta, \log |\Pi| )}{\epsilon_\ell^{8/5}} \bigg \}
\end{align*}
episodes
\item The covariates returned by \Cref{line:pedel_fw_explore} for any $(h,\ell)$, $\bLambda_{h,\ell}$, satisfy
\begin{align*}
& \max_{\bphi \in \Phi_{\ell,h}} \| \bphi \|_{(\bLambda_{h,\ell} + \bLamoff)^{-1}}^2 \le \frac{\epsilon_\ell^2}{\beta_\ell}, \qquad \lammin(\bLambda_{h,\ell}) \ge \log \frac{4H^2|\Pi_\ell| \ell^2}{\delta}.
\end{align*}
\end{itemize}
Then $\Pr[(\cEexp^{\ell,h})^c \cap \cEest^{\ell,h-1} \cap (\cap_{i=1}^{h-1} \cEexp^{\ell,i} )] \le  \frac{\delta}{2H\ell^2}$. 
\end{lemma}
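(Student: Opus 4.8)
The plan is to reduce this statement to the correctness-and-sample-complexity guarantee of the experiment-design subroutine invoked on \Cref{line:pedel_fw_explore} (\Cref{thm:fw_full_guarantee}). On \Cref{line:pedel_fw_explore} this subroutine is run with target set $\Phi_{\ell,h} = \{ \bphihat_{\pi,h}^\ell : \pi \in \Pi_\ell \}$, tolerance $\epsexp = \epsilon_\ell^2/\beta_\ell$, confidence $\delta/(2H\ell^2)$, minimum-eigenvalue target $\lamun = \log\frac{4H^2|\Pi|\ell^2}{\delta}$, and with the offline covariance $\bLamoff^h$ supplied as a PSD warm start to the running covariance. \Cref{thm:fw_full_guarantee} then states that, except on an event of probability at most $\delta/(2H\ell^2)$, the subroutine returns covariates $\bLambda_{h,\ell}$ with $\max_{\bphi \in \Phi_{\ell,h}} \| \bphi \|_{(\bLambda_{h,\ell} + \bLamoff^h)^{-1}}^2 \le \epsexp$ and $\lammin(\bLambda_{h,\ell}) \ge \lamun$, having collected a number of online episodes bounded by the $\max$ in the first bullet of $\cEexp^{\ell,h}$ (the factor $6$ in $\epsilon_\ell^2/(6\beta_\ell)$ being the constant-factor slack of the rounding procedure). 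Since these are exactly the two conditions defining $\cEexp^{\ell,h}$, it suffices to verify the preconditions of \Cref{thm:fw_full_guarantee} on the conditioning event and then apply a union bound; this mirrors the corresponding step in \cite{wagenmaker2022instance}, now with $\bLamoff^h$ folded into the initial covariance.

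\Cref{thm:fw_full_guarantee} requires that the target vectors have bounded Euclidean norm and that the eigenvalue target $\lamun$ is attainable. Attainability of $\lamun$ is immediate from \Cref{asm:full_rank_cov}: since $\lamminst > 0$, there is a policy distribution whose covariates have minimum eigenvalue $\ge \lamminst$, so after enough episodes the running covariance exceeds $\lamun I$ --- this is one source of the $1/\lamminst$ dependence in the lower-order $\poly(\cdots)/\epsilon_\ell^{8/5}$ term. For norm-boundedness I would condition on $\cEest^{\ell,h-1} \cap (\cap_{i=1}^{h-1} \cEexp^{\ell,i})$: on $\cap_{i=1}^{h-1} \cEexp^{\ell,i}$ the earlier covariates satisfy $\| \bphihat_{\pi,i}^\ell \|_{(\bLambda_{i,\ell}+\bLamoff^i)^{-1}}^2 \le \epsilon_\ell^2/\beta_\ell$ with $\lammin(\bLambda_{i,\ell}) \ge \lamun$, and $\det \bLambda_{i,\ell}$ is polynomially bounded (it is a sum of at most $\Toff + \Tonbar$ rank-one terms of norm $\le 1$, plus regularization $\tfrac1d I$); feeding these into \Cref{lem:est_good_event} (valid on $\cEest^{\ell,h-1}$) shows $\| \bphihat_{\pi,h}^\ell - \bphi_{\pi,h} \|_2 \le \poly(d, H, \tfrac1{\lamminst}, \log\tfrac1\delta, \log|\Pi|) \cdot \epsilon_\ell$. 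Combined with $\| \bphi_{\pi,h} \|_2 \le 1$ (an expectation of features of norm at most $1$), this gives $\| \bphihat_{\pi,h}^\ell \|_2 \le \poly(\cdots)$, as required.

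With the preconditions verified on the conditioning event, applying \Cref{thm:fw_full_guarantee} with confidence $\delta/(2H\ell^2)$ shows that, conditionally on $\cEest^{\ell,h-1} \cap (\cap_{i=1}^{h-1} \cEexp^{\ell,i})$, both bullets of $\cEexp^{\ell,h}$ hold with probability at least $1-\delta/(2H\ell^2)$, hence
\begin{align*}
\Pr\bigl[(\cEexp^{\ell,h})^c \cap \cEest^{\ell,h-1} \cap (\cap_{i=1}^{h-1}\cEexp^{\ell,i})\bigr] \le \Pr\bigl[(\cEexp^{\ell,h})^c \bigm| \cEest^{\ell,h-1} \cap (\cap_{i=1}^{h-1}\cEexp^{\ell,i})\bigr] \le \frac{\delta}{2H\ell^2}.
\end{align*}
The main obstacle is controlling the norm-boundedness precondition through the recursion in $h$: the estimation-error bound of \Cref{lem:est_good_event} must be shown to carry the extra factor of $\epsilon_\ell$ so that its accumulation over the $H$ steps (and over the $\bphihat_{\pi,i}^\ell$ feeding one another) stays bounded, which is precisely why conditioning on the whole chain $\cEest^{\ell,h-1}\cap(\cap_{i=1}^{h-1}\cEexp^{\ell,i})$ is necessary; a secondary point is checking that \Cref{thm:fw_full_guarantee} tolerates an arbitrary (possibly rank-deficient) PSD initialization $\bLamoff^h$, which is harmless since the subroutine supplies its own regularization and raises the minimum eigenvalue via exploration.
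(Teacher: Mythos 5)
Your proposal is correct and follows essentially the same route as the paper, whose entire proof of this lemma is "This follows from \Cref{thm:fw_full_guarantee} and a union bound"; your additional verification of the preconditions (norm-boundedness of $\Phi_{\ell,h}$ via the conditioning event, attainability of $\lamun$ via \Cref{asm:full_rank_cov}) is exactly the content the paper leaves implicit. No gaps.
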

\begin{proof}
This follows from \Cref{thm:fw_full_guarantee} and a union bound.
\end{proof}

\begin{lemma}\label{lem:good_event}
Define $\cEexp = \cap_{\ell} \cap_h \cEexp^{\ell,h}$ and $\cEest = \cap_{\ell} \cap_h \cEest^{\ell,h}$. Then $\Pr[\cEest \cap \cEexp] \ge 1-2\delta$ and on $\cEest \cap \cEexp$, for all $h,\ell$, and $\pi \in \Pi_\ell$, 
\begin{align*}
& |\inner{\btheta_{h+1}}{\bphihat_{\pi,h+1}^\ell - \bphi_{\pi,h+1}}| \le \epsilon_\ell/2H, \\
& \| \bphihat_{\pi,h+1}^\ell - \bphi_{\pi,h+1} \|_2 \le d \epsilon_\ell/2H, \\
& | \inner{\bphihat_{\pi,h}^\ell}{\bthetahat_h - \btheta_h}| \le \epsilon_\ell/2H.
\end{align*}
\end{lemma}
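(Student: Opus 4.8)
The plan is to deduce the lemma by stitching together the two preceding good events. \textbf{For the probability bound}, I would control $\Pr[(\cEest\cap\cEexp)^c]$ by a chained union bound over the $2H\lceil\log(4/\epsilon)\rceil$ events $\{\cEest^{\ell,h}\}_{\ell,h}$ and $\{\cEexp^{\ell,h}\}_{\ell,h}$. The one subtlety is that \Cref{lem:exp_good_event} does not bound $\Pr[(\cEexp^{\ell,h})^c]$ outright but rather $\Pr[(\cEexp^{\ell,h})^c\cap\cEest^{\ell,h-1}\cap(\cap_{i<h}\cEexp^{\ell,i})]$; to accommodate this I would fix an ordering of all the events in which, within epoch $\ell$, they appear as $\cEexp^{\ell,1},\cEest^{\ell,1},\dots,\cEexp^{\ell,H},\cEest^{\ell,H}$, with epochs in increasing order, so that the conditioning set of each $\cEexp^{\ell,h}$ is contained in the intersection of the strictly preceding events. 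Then the identity $\Pr[(\bigcap_j A_j)^c]=\sum_j\Pr[A_j^c\cap A_1\cap\cdots\cap A_{j-1}]$ together with \Cref{lem:est_good_event} and \Cref{lem:exp_good_event} (each summand $\le \delta/(2H\ell^2)$) gives
\begin{align*}
\Pr[(\cEest\cap\cEexp)^c]\ \le\ \sum_{\ell\ge 1}\sum_{h=1}^H \frac{2\delta}{2H\ell^2}\ =\ \sum_{\ell\ge 1}\frac{\delta}{\ell^2}\ \le\ 2\delta,
\end{align*}
so $\Pr[\cEest\cap\cEexp]\ge 1-2\delta$.

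\textbf{For the deterministic inequalities}, I would fix $\ell$, $h$, and $\pi\in\Pi_\ell$, work on $\cEest\cap\cEexp$, and feed the exploration guarantee into the estimation bounds. From the second bullet of $\cEexp^{\ell,i}$ we have, for every $i\le h$, that $\|\bphihat_{\pi,i}^\ell\|_{(\bLambda_{i,\ell}+\bLamoff^i)^{-1}}^2\le\epsilon_\ell^2/\beta_\ell$; these are precisely the norms that appear (under the shorthand $\bLambda_{i,\ell}$) in \Cref{lem:est_good_event}, since the estimators of $\bphi_{\pi,\cdot}$ and $\btheta_\cdot$ at step $i$ are built from the combined dataset $\frakDoff^{i-1}\cup\frakD_{i-1,\ell}$ whose regularized design matrix is $\bLambda_{i,\ell}+\bLamoff^i$. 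It then remains to bound the log-determinant prefactor of \Cref{lem:est_good_event}: using $\|\bphi\|_2\le1$ and that the combined dataset contains at most $\Toff+\Tonbar$ samples, AM--GM on the eigenvalues gives $\det(\bLambda_{i,\ell}+\bLamoff^i)/\lambda^d\le(1+(\Toff+\Tonbar)/(d\lambda))^d$, so by the definition of $\beta_\ell$,
\begin{align*}
\Big(2\sqrt{\log\tfrac{\det(\bLambda_{i,\ell}+\bLamoff^i)}{\lambda^d}+2\log\tfrac{2H^2|\Pi|\ell^2}{\delta}}+\sqrt{d\lambda}\Big)^2\ \le\ \frac{\beta_\ell}{H^4}.
\end{align*}
Substituting both facts, each of the at most $H$ summands in the first two bounds of \Cref{lem:est_good_event} contributes $\le\sqrt{\beta_\ell/H^4}\cdot\sqrt{\epsilon_\ell^2/\beta_\ell}=\epsilon_\ell/H^2$, so summing yields $|\inner{\btheta_{h+1}}{\bphihat_{\pi,h+1}^\ell-\bphi_{\pi,h+1}}|\le\epsilon_\ell/H$ and $\|\bphihat_{\pi,h+1}^\ell-\bphi_{\pi,h+1}\|_2\le d\epsilon_\ell/H$ (the extra factor $d$ and the extra $d$ inside the logarithm being already absorbed into $\beta_\ell$), while $|\inner{\bphihat_{\pi,h}^\ell}{\bthetahat_h-\btheta_h}|$ is a single such term; tracking the exact numerical constants in $\beta_\ell$ (equivalently, noting that the exploration routine actually drives the norms below $\epsilon_\ell^2/(c\beta_\ell)$ for a suitable constant) sharpens these to the claimed $\epsilon_\ell/(2H)$, $d\epsilon_\ell/(2H)$, $\epsilon_\ell/(2H)$.

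\textbf{The main obstacle} is really just the union-bound bookkeeping: since \Cref{lem:exp_good_event} is stated conditionally on the earlier exploration and estimation events, one must commit to an ordering of the events in which those conditioning sets genuinely lie in the past, after which the telescoping union bound applies verbatim; everything else is substitution. A secondary point to check is that the log-determinant control is uniform in $\ell$ — this holds because the algorithm never collects more than $\Toff+\Tonbar$ online samples in total (it early-terminates otherwise), so $\det(\bLambda_{i,\ell}+\bLamoff^i)\le(\lambda+(\Toff+\Tonbar)/d)^d$ for every epoch, which is exactly why $\beta_\ell$ is defined in terms of the fixed budget $\Toff+\Tonbar$ rather than the realized per-epoch sample count.
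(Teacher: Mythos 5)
Your proposal is correct and follows essentially the same route as the paper, which simply defers to Lemma B.8 of \cite{wagenmaker2022instance}: a telescoping union bound over the events ordered so that the conditional form of \Cref{lem:exp_good_event} applies, followed by substitution of the exploration guarantee $\|\bphihat_{\pi,i}^\ell\|^2_{(\bLambda_{i,\ell}+\bLamoff^i)^{-1}}\le\epsilon_\ell^2/\beta_\ell$ and the determinant bound $\log\bigl(\det(\bLambda_{i,\ell}+\bLamoff^i)/\lambda^d\bigr)\le d\log\tfrac{\lambda+(\Toff+\Tonbar)/d}{\lambda}$ into \Cref{lem:est_good_event}. The residual factor of $2$ you flag (your computation yields $\epsilon_\ell/H$ rather than $\epsilon_\ell/2H$ with the paper's literal $\beta_\ell$, and the extra $\log d$ in the second bound of \Cref{lem:est_good_event} is likewise not reflected in $\beta_\ell$) is constant-bookkeeping slack in the paper's definition of $\beta_\ell$ rather than a flaw in your argument.
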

\begin{proof}
This follows analogously to the proof of Lemma B.8 of \cite{wagenmaker2022instance}.
\end{proof}

\begin{lemma}\label{lem:correctness}
On the event $\cEest \cap \cEexp$, for all $\ell > \ell_0$, every policy $\pi \in \Pi_\ell$ satisfies $\Vst_0(\Pi) - \Vpi_0 \le 4\epsilon_\ell $ and $\pitilst \in \Pi_\ell$, for $\pitilst = \argmax_{\pi \in \Pi} \Vpi_0$. 
\end{lemma}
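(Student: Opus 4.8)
The plan is to run the standard policy-elimination induction on the epoch index $\ell$, feeding in \Cref{lem:good_event} to control the accuracy of the empirical value estimates. The first step is to convert the per-step guarantees of \Cref{lem:good_event} into a single bound on the value-estimate error: on $\cEest \cap \cEexp$, for every $\ell$ and every $\pi \in \Pi_\ell$,
\begin{align*}
|\Vhat_0^\pi - \Vpi_0| = \Big| \tsum_{h=1}^H \big( \inner{\bphihat_{\pi,h}^\ell}{\bthetahat_h^\ell} - \inner{\bphi_{\pi,h}}{\btheta_h} \big) \Big| \le \epsilon_\ell .
\end{align*}
This follows by writing each summand as $\inner{\bphihat_{\pi,h}^\ell}{\bthetahat_h^\ell - \btheta_h} + \inner{\bphihat_{\pi,h}^\ell - \bphi_{\pi,h}}{\btheta_h}$, bounding the first piece by $\epsilon_\ell/2H$ via the third bound of \Cref{lem:good_event}, the second piece by $\epsilon_\ell/2H$ for $h \ge 2$ via the first bound of \Cref{lem:good_event} (with the index shifted) and by $0$ for $h=1$ since $\bphihat_{\pi,1}^\ell = \bphi_{\pi,1}$ exactly by the algorithm's initialization, then summing over $h$.

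Given this, the induction proceeds as follows. The inductive claim at epoch $\ell$ is that $\pitilst \in \Pi_\ell$; this holds at $\ell = 1$ because $\Pi_1 = \Pi$ and $\pitilst = \argmax_{\pi \in \Pi}\Vpi_0$. Assuming $\pitilst \in \Pi_\ell$: for any $\pi' \in \Pi_\ell \subseteq \Pi$ we have $\Vhat_0^{\pi'} \le V_0^{\pi'} + \epsilon_\ell \le \Vst_0(\Pi) + \epsilon_\ell$, so $\sup_{\pi' \in \Pi_\ell}\Vhat_0^{\pi'} \le \Vst_0(\Pi) + \epsilon_\ell$, while $\Vhat_0^{\pitilst} \ge V_0^{\pitilst} - \epsilon_\ell = \Vst_0(\Pi) - \epsilon_\ell$; hence $\Vhat_0^{\pitilst} \ge \sup_{\pi' \in \Pi_\ell}\Vhat_0^{\pi'} - 2\epsilon_\ell$, so $\pitilst$ survives the elimination step and $\pitilst \in \Pi_{\ell+1}$. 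For the suboptimality bound, any $\pi \in \Pi_{\ell+1}$ satisfies $\Vhat_0^\pi \ge \sup_{\pi' \in \Pi_\ell}\Vhat_0^{\pi'} - 2\epsilon_\ell \ge \Vhat_0^{\pitilst} - 2\epsilon_\ell \ge \Vst_0(\Pi) - 3\epsilon_\ell$, hence $\Vpi_0 \ge \Vhat_0^\pi - \epsilon_\ell \ge \Vst_0(\Pi) - 4\epsilon_\ell$; matching this to the index convention of the statement gives $\Vst_0(\Pi) - \Vpi_0 \le 4\epsilon_\ell$ for $\pi \in \Pi_\ell$, $\ell > \ell_0$, with $\ell_0$ absorbing the first epoch and any early-termination edge cases.

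There is no deep obstacle here — all the analytic content is already in \Cref{lem:good_event} (which itself rests on \Cref{lem:est_good_event}, \Cref{lem:exp_good_event} and the experiment-design guarantee) — so the points requiring the most care are purely bookkeeping: (i) verifying that on $\cEest \cap \cEexp$ the online budget is not exhausted at epoch $\ell$, so the estimation guarantees of \Cref{lem:good_event} genuinely apply at that epoch; (ii) using the exactness of $\bphihat_{\pi,1}^\ell$ to kill the $h=1$ term in the value decomposition, since otherwise the $\|\bphihat_{\pi,1}^\ell - \bphi_{\pi,1}\|$-type error would be uncontrolled; and (iii) carrying out the induction in the correct order, establishing $\pitilst \in \Pi_{\ell+1}$ before invoking it as the comparator in the suboptimality bound. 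This mirrors the proof of the analogous statement in \cite{wagenmaker2022instance}, from which the explicit $O(1)$ constants can be imported.
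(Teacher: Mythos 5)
Your proof is correct and is precisely the standard policy-elimination induction that the paper's proof defers to (the paper simply states that the result "follows analogously to the proof of Lemma B.9 of \cite{wagenmaker2022instance}"), resting on the $|\Vhat_0^\pi - \Vpi_0|\le\epsilon_\ell$ bound assembled from \Cref{lem:good_event} with the $h=1$ term killed by the exact initialization of $\bphihat_{\pi,1}^\ell$. The one loose end is cosmetic: your induction literally yields that every $\pi\in\Pi_{\ell+1}$ is $4\epsilon_\ell = 8\epsilon_{\ell+1}$-suboptimal, so matching the stated ``$4\epsilon_\ell$ for $\pi\in\Pi_\ell$'' requires exactly the re-indexing/constant slack you already flag via $\ell_0$, which is how the cited lemma handles it as well.
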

\begin{proof}
This follows analogously to the proof of Lemma B.9 of \cite{wagenmaker2022instance}.
\end{proof}

\begin{lemma}\label{lem:pedel_se_bound}
With probability at least $1-2\delta$, \algnamese run with parameters $(\epsilon,\delta,\Tonbar)$ will terminate after collecting at most 
\begin{align*}
\min \left \{ C \cdot \sum_{h=1}^{H} \sum_{\ell = 1}^{\iotaalg} \Tfw_{h,\ell}(\Tonbar) +   \poly \left ( d, H, \log \frac{1}{\delta}, \frac{1}{\lamminst}, \log | \Pi|, \log \frac{1}{\epsilon} \right ) \cdot \frac{\beta_{\iotaalg}(\Tonbar)}{\epsilon^{8/5} \vee \Delmin(\Pi)^{8/5}}, \Tonbar \right \}
\end{align*}
episodes for $\iotaalg := \min \{ \lceil \log \frac{4}{\epsilon} \rceil, \log \frac{4}{\Delmin(\Pi)} \}$ and
\begin{align*}
\Tfw_{h,\ell}(\Tonbar) = \min_{N}  \  N \quad \text{s.t.} \quad   \inf_{\bLambda \in \bOmega}  \max_{\pi \in \Pi(4\epsilon_\ell)} \|  \bphi_{\pi,h} \|_{( N\bLambda + \bLamoff^h )^{-1}}^2  \le \frac{\epsilon_\ell^2}{48\beta_\ell(\Tonbar)}
\end{align*}
where here $\Pi(4\epsilon_\ell) =  \{ \pi \in \Pi \ : \ V_0^\pi \ge \max_{\pi \in \Pi} V_0^{\pi} - 4 \epsilon_\ell \}$. 
\end{lemma}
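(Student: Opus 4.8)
The plan is to follow the analysis of \pedel in \cite{wagenmaker2022instance} — in particular the proof of its Theorem~1 — inserting the offline covariates $\bLamoff^h$ into every covariance matrix where online covariates appear. I would first condition on the event $\cEest \cap \cEexp$, which by \Cref{lem:good_event} holds with probability at least $1-2\delta$, and prove the bound deterministically on this event. On $\cEest \cap \cEexp$, \Cref{lem:correctness} gives $\Pi_\ell \subseteq \Pi(4\epsilon_\ell)$ for all $\ell > \ell_0$ and $\pitilst \in \Pi_\ell$ for all $\ell$; moreover, since $|\Vhat_0^\pi - \Vpi_0| \le \epsilon_\ell$ on this event (summing the per-step error bounds of \Cref{lem:good_event} over $h$), the elimination step of \Cref{alg:offline_pedel} removes, by the end of epoch $\ell$, every policy whose suboptimality gap exceeds $4\epsilon_\ell$. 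Hence as soon as $4\epsilon_\ell < \Delmin(\Pi)$ only $\pitilst$ survives and the algorithm returns on Line~\ref{line:early_term}; together with the outer loop running to $\lceil \log \tfrac{4}{\epsilon} \rceil$, at most $\iotaalg = \min\{\lceil \log \tfrac{4}{\epsilon}\rceil, \log \tfrac{4}{\Delmin(\Pi)}\}$ epochs are executed.

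For the per-step cost, on $\cEexp^{\ell,h}$ \Cref{lem:exp_good_event} bounds the number of episodes spent by the exploration routine on Line~\ref{line:pedel_fw_explore} at step $(h,\ell)$ by $\max\{ C\,\widehat{N}_{h,\ell},\ \poly(d,H,\tfrac{1}{\lamminst},\log\tfrac{1}{\delta},\log|\Pi|)/\epsilon_\ell^{8/5}\}$, where $\widehat{N}_{h,\ell}$ is the least $N$ admitting some $\bLambda \in \bOmega$ with $\max_{\pi \in \Pi_\ell} \| \bphihat_{\pi,h}^\ell \|_{(N(\bLambda + \lambar I) + \bLamoff^h)^{-1}}^2 \le \epsilon_\ell^2 / (6\beta_\ell)$. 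Summing the maximum over $h \in [H]$ and $\ell \le \iotaalg$, the total online episode count is at most $C\sum_{h,\ell} \widehat{N}_{h,\ell}$ plus $\sum_{h=1}^{H}\sum_{\ell=1}^{\iotaalg} \poly(\cdots)/\epsilon_\ell^{8/5}$. Since $\epsilon_\ell = 2^{-\ell}$, the latter is a geometric sum equal to $\Theta(\epsilon_{\iotaalg}^{-8/5})$ up to the $\poly$ and $H$ factors, and $\epsilon_{\iotaalg} = \Theta(\epsilon \vee \Delmin(\Pi))$, so this second contribution is at most $\poly(d,H,\tfrac{1}{\lamminst},\log\tfrac{1}{\delta},\log|\Pi|,\log\tfrac{1}{\epsilon})/(\epsilon^{8/5} \vee \Delmin(\Pi)^{8/5})$; the extra factor $\beta_{\iotaalg}(\Tonbar) \ge 1$ in the statement can simply be absorbed here.

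It remains to show $\widehat{N}_{h,\ell} \le \Tfw_{h,\ell}(\Tonbar)$ for $\ell > \ell_0$ — this is the \emph{crux}. The $\cO(1)$ boundary rounds $\ell \le \ell_0$, where $\epsilon_\ell = \Theta(1)$, each cost at most $\poly(d,H,\log\tfrac{1}{\delta})$ — using the crude worst-case design bound $\inf_{\bLambda}\max_{\pi}\|\bphi_{\pi,h}\|_{\bLambda^{-1}}^2 = \cO(d)$ — and get absorbed into the lower-order term. Fix $\ell > \ell_0$ and $N = \Tfw_{h,\ell}(\Tonbar)$; by definition there is $\bLambda^\star \in \bOmega$ with $\max_{\pi \in \Pi(4\epsilon_\ell)} \| \bphi_{\pi,h} \|_{(N\bLambda^\star + \bLamoff^h)^{-1}}^2 \le \epsilon_\ell^2/(48\beta_\ell)$. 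Using $\Pi_\ell \subseteq \Pi(4\epsilon_\ell)$, the domination $N(\bLambda^\star + \lambar I) + \bLamoff^h \succeq N\bLambda^\star + \bLamoff^h$, the inequality $\|a\|_{M^{-1}}^2 \le 2\|b\|_{M^{-1}}^2 + 2\|a-b\|_{M^{-1}}^2$, the estimation bound $\|\bphihat_{\pi,h}^\ell - \bphi_{\pi,h}\|_2 \le d\epsilon_\ell/(2H)$ from \Cref{lem:good_event}, and $\lammin(N(\bLambda^\star + \lambar I) + \bLamoff^h) \ge N\lambar$, I would obtain, for every $\pi \in \Pi_\ell$, $\| \bphihat_{\pi,h}^\ell \|_{(N(\bLambda^\star + \lambar I) + \bLamoff^h)^{-1}}^2 \le \tfrac{\epsilon_\ell^2}{24\beta_\ell} + \tfrac{d^2\epsilon_\ell^2}{2H^2 N \lambar}$. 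Whenever $N \ge 12 d^2 \beta_\ell / (H^2 \lambar)$ the right-hand side is at most $\epsilon_\ell^2/(6\beta_\ell)$, certifying $\widehat{N}_{h,\ell} \le N = \Tfw_{h,\ell}(\Tonbar)$; and when instead $N < 12 d^2 \beta_\ell/(H^2\lambar) = \poly(d,H,\log\tfrac{1}{\delta},\log\tfrac{1}{\epsilon},\log\Toff,\log|\Pi|)$, the per-step count is already dominated by the learn-to-explore branch of the maximum in the previous paragraph and is charged to the lower-order term. Monotonicity $\beta_\ell(\Tonbar) \le \beta_{\iotaalg}(\Tonbar)$ for $\ell \le \iotaalg$ then yields the leading term $C\sum_{h,\ell}\Tfw_{h,\ell}(\Tonbar)$. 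Finally, the explicit guards inside \algnamese — the early-termination test within Line~\ref{line:pedel_fw_explore} and the ``\textbf{if} $\Ton \ge \Tonbar$ \textbf{return} $\emptyset$'' check — force $\Ton \le \Tonbar$ deterministically, supplying the outer $\min\{\,\cdot\,,\Tonbar\}$; combining everything on $\cEest \cap \cEexp$ (probability $\ge 1-2\delta$) gives the claim.

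I expect the main obstacle to be the crux step above: quantitatively absorbing the feature-visitation estimation error into the constant-factor slack between the threshold $\epsilon_\ell^2/(6\beta_\ell)$ actually targeted by the algorithm and the threshold $\epsilon_\ell^2/(48\beta_\ell)$ defining $\Tfw_{h,\ell}$, while tracking that all constants (the $4\epsilon_\ell$ in the active set, the factors $6$ and $48$, and the $\ell \le \ell_0$ boundary rounds) are compatible so that nothing escapes either the leading $\sum_{h,\ell}\Tfw_{h,\ell}(\Tonbar)$ term or the $1/\epsilon^{8/5}$ lower-order term. This is precisely where one must faithfully mirror the corresponding bookkeeping in the \pedel analysis of \cite{wagenmaker2022instance}.
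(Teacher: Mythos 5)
Your proposal is correct and follows essentially the same route as the paper's proof: condition on $\cEest \cap \cEexp$, invoke \Cref{lem:exp_good_event} for the per-step cost, use the $d\epsilon_\ell/2H$ feature-visitation error from \Cref{lem:good_event} together with $\Pi_\ell \subseteq \Pi(4\epsilon_\ell)$ from \Cref{lem:correctness} to relate the algorithm's $\epsilon_\ell^2/(6\beta_\ell)$ design condition on estimated features to the $\epsilon_\ell^2/(48\beta_\ell)$ condition on true features defining $\Tfw_{h,\ell}$, absorb the resulting additive poly correction into the lower-order term, and get the outer $\min\{\cdot,\Tonbar\}$ from the explicit early-termination guard. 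The only (immaterial) difference is that you lower-bound the minimum eigenvalue of the design matrix via the regularizer $N\lambar$, whereas the paper uses $\Tfw\lamminst$ via \Cref{asm:full_rank_cov}; both yield a $\poly$ additive term that is charged to the $1/\epsilon^{8/5}$ remainder.
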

\begin{proof}
Note that \algnamese will terminate and output $\emptyset$ if the total number of online episodes it has collected reaches $\Tonbar$, so it follows that $\Tonbar$ is always an upper bound on the number of online episodes that will be collected. Henceforth we assume that we are in a situation where $\Tonbar$ is greater than the number of online episodes collected.

By \Cref{lem:good_event} the event $\cEest \cap \cEexp$ occurs with probability at least $1-2\delta$. Henceforth we assume we are on this event. 

By \Cref{lem:exp_good_event}, the complexity of exploration at round $\ell$, step $h$, is bounded by 
\begin{align*}
\max \bigg \{   \min_{\Tfw}  & \ C\cdot \Tfw \quad \text{s.t.} \quad \inf_{\bLambda \in \bOmega} \max_{\bphi \in \Phi_{\ell,h}} \bphi^\top ( \Tfw( \bLambda + \lambar I) + \bLamoff )^{-1} \bphi  \le \frac{\epsilon_\ell^2}{6\beta_\ell(\Tonbar)}, \\
& \frac{\poly(d,H,\frac{1}{\lamminst},\log 1/\delta, \log |\Pi| ) \cdot \beta_\ell(\Tonbar)^{3/4}}{\epsilon_\ell^{8/5}} \bigg \}.
\end{align*}
On $\cEest \cap \cEexp$, by \Cref{lem:good_event}, for each $\pi \in \Pi_\ell$, we have $\| \bphihat_{\pi,h}^\ell - \bphi_{\pi,h} \|_2 \le d \epsilon_\ell / 2H$. As $\Phi_{\ell,h} = \{ \bphihat_{\pi,h}^\ell : \pi \in \Pi_\ell \}$, it follows that we can upper bound
\begin{align*}
&  \inf_{\bLambda \in \bOmega}  \max_{\bphi \in \Phi_{\ell,h}} \bphi^\top ( \Tfw( \bLambda + \lambar I) + \bLamoff )^{-1} \bphi \\
& \le  \inf_{\bLambda \in \bOmega}  \max_{\pi \in \Pi_\ell} 2 \bphi_{\pi,h}^\top ( \Tfw( \bLambda + \lambar I) + \bLamoff )^{-1} \bphi_{\pi,h} + 2 (\bphihat_{\pi,h}^\ell - \bphi_{\pi,h})^\top ( \Tfw( \bLambda + \lambar I) + \bLamoff )^{-1} (\bphihat_{\pi,h}^\ell - \bphi_{\pi,h}) \\
& \le \inf_{\bLambda \in \bOmega}  \max_{\pi \in \Pi_\ell} 2 \bphi_{\pi,h}^\top ( \Tfw( \bLambda + \lambar I) + \bLamoff )^{-1} \bphi_{\pi,h} + \frac{d^2 \epsilon_\ell^2}{2 H^2 \lammin(\bLambda)} \\
& \le \inf_{\bLambda \in \bOmega}  \max_{\pi \in \Pi_\ell} 4 \bphi_{\pi,h}^\top ( \Tfw( \bLambda + \lambar I) + \bLamoff )^{-1} \bphi_{\pi,h} + \inf_{\bLambda \in \bOmega} \frac{d^2 \epsilon_\ell^2}{\Tfw H^2 \lammin(\bLambda)} \\
& \le \inf_{\bLambda \in \bOmega}  \max_{\pi \in \Pi_\ell} 4 \bphi_{\pi,h}^\top ( \Tfw( \bLambda + \lambar I) + \bLamoff )^{-1} \bphi_{\pi,h} +  \frac{d^2 \epsilon_\ell^2}{\Tfw H^2 \lamminst}\\
& \le \max \left \{ \inf_{\bLambda \in \bOmega}  \max_{\pi \in \Pi_\ell} 8 \bphi_{\pi,h}^\top ( \Tfw( \bLambda + \lambar I) + \bLamoff )^{-1} \bphi_{\pi,h},  \frac{2 d^2 \epsilon_\ell^2}{\Tfw H^2 \lamminst} \right \}.
\end{align*}
It follows that
\begin{align*}
& \min_{\Tfw}  \ C\cdot \Tfw \quad \text{s.t.} \quad \inf_{\bLambda \in \bOmega} \max_{\bphi \in \Phi_{\ell,h}} \bphi^\top ( \Tfw( \bLambda + \lambar I) + \bLamoff )^{-1} \bphi  \le \frac{\epsilon_\ell^2}{6\beta_\ell(\Tonbar)} \\
& \le \min_{\Tfw}  \ C\cdot \Tfw \quad \text{s.t.} \quad  \max \left \{ \inf_{\bLambda \in \bOmega}  \max_{\pi \in \Pi_\ell} 8 \bphi_{\pi,h}^\top ( \Tfw( \bLambda + \lambar I) + \bLamoff )^{-1} \bphi_{\pi,h},  \frac{2 d^2 \epsilon_\ell^2}{\Tfw H^2 \lamminst} \right \}  \le \frac{\epsilon_\ell^2}{6\beta_\ell(\Tonbar)} \\
&  \le \left [ \min_{\Tfw}  \ C\cdot \Tfw \quad \text{s.t.} \quad   \inf_{\bLambda \in \bOmega}  \max_{\pi \in \Pi_\ell}  \bphi_{\pi,h}^\top ( \Tfw\bLambda + \bLamoff )^{-1} \bphi_{\pi,h}  \le \frac{\epsilon_\ell^2}{48\beta_\ell(\Tonbar)} \right ] + \frac{12 \beta_\ell(\Tonbar) d^2}{H^2 \lamminst} \\
&  \le \underbrace{\left [ \min_{\Tfw}  \ C\cdot \Tfw \quad \text{s.t.} \quad   \inf_{\bLambda \in \bOmega}  \max_{\pi \in \Pi(4\epsilon_\ell)}  \bphi_{\pi,h}^\top ( N\bLambda + \bLamoff )^{-1} \bphi_{\pi,h}  \le \frac{\epsilon_\ell^2}{48\beta_\ell(\Tonbar)} \right ]}_{=: \Tfw_{h,\ell}(\Tonbar)} + \frac{12 \beta_\ell(\Tonbar) d^2}{H^2 \lamminst}
\end{align*}
where the last inequality follows since, by \Cref{lem:correctness}, for $\ell > \ell_0$, every policy $\pi \in \Pi_\ell$ will be $4\epsilon_\ell$ optimal, so we therefore have
\begin{align*}
\Pi_\ell \subseteq \{ \pi \in \Pi \ : \ V_0^\pi \ge V_0^{\pitilst} - 4 \epsilon_\ell \} =: \Pi(4\epsilon_\ell).
\end{align*}

By \Cref{lem:correctness}, if $4 \epsilon_\ell < \Delmin(\Pi)$, we must have that $\Pi_\ell = \{ \pitilst \}$, and will therefore terminate on \Cref{line:early_term} since $|\Pi_\ell| = 1$. Thus, we can bound the number of number of epochs by 
\begin{align*}
\iotaalg := \min \{ \lceil \log \frac{4}{\epsilon} \rceil, \log \frac{4}{\Delmin(\Pi)} \}.
\end{align*} 
It follows that the total complexity is bounded as
\begin{align}\label{eq:Nse}
\Nse(\Tonbar) := \sum_{h=1}^{H} \sum_{\ell = 1}^{\iotaalg} \Tfw_{h,\ell}(\Tonbar) +   \poly \left ( d, H, \log \frac{1}{\delta}, \frac{1}{\lamminst}, \log | \Pi|, \log \frac{1}{\epsilon} \right ) \cdot \frac{\beta_{\iotaalg}(\Tonbar)}{\epsilon^{8/5} \vee \Delmin(\Pi)^{8/5}}.
\end{align}
\end{proof}

\subsection{Completing the Proof: \algname}
\begin{lemma}\label{lem:complexity}
\algname will terminate after running for at most
\begin{align*}
C \cdot \sum_{h=1}^{H} \sum_{\ell = 1}^{\iotaalg} \Tfw_{h,\ell}(\Tontil) +   \poly \left ( d, H, \log \frac{1}{\delta}, \frac{1}{\lamminst}, \log | \Pi|, \log \frac{1}{\epsilon}, \log \Toff \right ) \cdot \frac{1}{\epsilon^{8/5} \vee \Delmin(\Pi)^{8/5}}
\end{align*}
episodes, for
\begin{align*}
\Tontil := \poly \left ( d, H, \log \frac{1}{\delta}, \frac{1}{\lamminst}, \log |\Pi|, \frac{1}{\epsilon}, \log(\Toff) \right ) .
\end{align*}
Furthermore, it will output a policy $\pihat$ such that
\begin{align*}
V_0^{\pihat} \ge \max_{\pi \in \Pi} V_0^\pi - \epsilon.
\end{align*}
\end{lemma}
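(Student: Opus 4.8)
\noindent The plan is to read off \Cref{lem:complexity} from \Cref{lem:pedel_se_bound} by a doubling / fixed-point argument on the online budget $\Tonbar$, closely following the proof of Theorem~1 in \cite{wagenmaker2022instance}. I would condition throughout on the intersection of the good events of \Cref{lem:good_event} across \emph{all} calls to \algnamese; with confidence $\delta/2i^2$ in the $i$-th call, a union bound over $i\ge1$ keeps this intersection of probability at least $1-c\delta$, and I rescale $\delta$ by a constant. On this event correctness is immediate: \algname returns the first non-empty output of \algnamese, and by \Cref{lem:correctness} any non-empty $\pihat$ returned by \algnamese lies in the active set $\Pi_\ell$ at the terminal round $\ell=\lceil\log\frac4\epsilon\rceil$, or is the unique survivor $\pitilst=\argmax_{\pi\in\Pi}V_0^\pi$ if it exited early on \Cref{line:early_term}; in either case $V_0^{\pihat}\ge\max_{\pi\in\Pi}V_0^\pi-4\epsilon_\ell\ge\max_{\pi\in\Pi}V_0^\pi-\epsilon$.

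For the sample complexity, write $N(\Tonbar)$ for the first argument of the $\min\{\cdot,\Tonbar\}$ in \Cref{lem:pedel_se_bound}. All of its $\Tonbar$-dependence enters through the confidence widths $\beta_\ell(\Tonbar)$ (which appear inside $\Tfw_{h,\ell}(\Tonbar)$ and in the lower-order term), and since
\begin{align*}
\beta_\ell(\Tonbar)=H^4\Bigl(2\sqrt{d\log\tfrac{\lambda+(\Toff+\Tonbar)/d}{\lambda}+2\log\tfrac{2H^2|\Pi|\ell^2}{\delta}}+\sqrt{d\lambda}\Bigr)^2\le \cO(dH^4)\log(e+\Toff+\Tonbar)+\poly\bigl(d,H,\log\tfrac1\delta\bigr),
\end{align*}
the functions $\beta_\ell(\cdot)$, $\Tfw_{h,\ell}(\cdot)$ and $N(\cdot)$ are nondecreasing and grow only \emph{logarithmically} in $\Tonbar$. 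I would then bound $N(\Tonbar)\le A\log(e+\Tonbar)+B$ with $A,B=\poly\bigl(d,H,\log\tfrac1\delta,\tfrac1{\lamminst},\log|\Pi|,\tfrac1\epsilon,\log\Toff\bigr)$ (using that $\sum_{h,\ell}\Tfw_{h,\ell}$ at unit $\beta$ and the $\epsilon^{-8/5}$ term are each $\cOtil(\poly(d,H)/\epsilon^2)$ in the worst case) and invoke \Cref{lem:log_inequality} to obtain a threshold
\begin{align*}
\Tontil=\cO\bigl(A\log(3A)+B\bigr)=\poly\bigl(d,H,\log\tfrac1\delta,\tfrac1{\lamminst},\log|\Pi|,\tfrac1\epsilon,\log\Toff\bigr)
\end{align*}
with $N(\Tonbar)\le\Tonbar$ for all $\Tonbar\ge\Tontil$. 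Consequently, for every $i$ with $2^i\ge\Tontil$ the call \algnamese$(\epsilon,\delta/2i^2,\Pi,2^i)$ collects at most $\min\{N(2^i),2^i\}=N(2^i)<2^i$ online episodes, so it never hits its budget cap and must return a non-empty policy; hence \algname halts no later than iteration $i^\star:=\lceil\log_2\Tontil\rceil$.

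It remains to sum the online episodes. The $i$-th call collects at most $\min\{N(2^i),2^i\}\le2^i$, so the calls with $2^i<\Tontil$ contribute less than $2\Tontil$ in aggregate, while the last call (budget $2^{i^\star}\in[\Tontil,2\Tontil)$) contributes at most $N(2\Tontil)=C\sum_{h,\ell}\Tfw_{h,\ell}(2\Tontil)+\poly(\ldots)\,\beta_{\iotaalg}(2\Tontil)/(\epsilon^{8/5}\vee\Delmin(\Pi)^{8/5})$. Because $\beta_\ell$ changes by at most an additive $\cO(1)$ between $\Tontil$ and $2\Tontil$ and $\Tfw_{h,\ell}$ scales at most linearly in $\beta_\ell$, I would argue $\Tfw_{h,\ell}(2\Tontil)=\cO(\Tfw_{h,\ell}(\Tontil))$ and $\beta_{\iotaalg}(2\Tontil)=\logs(d,H,\Toff,\tfrac1{\lamminst},\tfrac1\epsilon,\log\tfrac1\delta)$; and the fixed-point identity gives $\Tontil=\cO\bigl(C\sum_{h,\ell}\Tfw_{h,\ell}(\Tontil)+\poly(\ldots)/(\epsilon^{8/5}\vee\Delmin(\Pi)^{8/5})\bigr)$, so the $\cO(\Tontil)$ from the pre-threshold calls is absorbed into the claimed bound. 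The main obstacle is precisely this bookkeeping — confirming that the purely logarithmic $\Tonbar$-dependence keeps the fixed point $\Tontil$ within a constant factor of the target complexity, and that replacing $\Tontil$ by $2\Tontil$ inside $\beta_\ell$ and $\Tfw_{h,\ell}$ costs only constants; everything else is a routine transcription of the \pedel analysis of \cite{wagenmaker2022instance}.
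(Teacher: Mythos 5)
Your overall strategy is the paper's: condition on the good events, get correctness from \Cref{lem:correctness}, observe that the per-call complexity $N(\Tonbar)$ from \Cref{lem:pedel_se_bound} grows only logarithmically in $\Tonbar$ through $\beta_\ell(\Tonbar)$, and use \Cref{lem:log_inequality} to extract a worst-case threshold $\Tontil$ with $N(\Tonbar)\le\Tonbar$ for all $\Tonbar\ge\Tontil$. Up to that point the argument matches the paper.

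The final episode count, however, has a genuine gap. You charge the pre-threshold calls $\sum_{i:2^i<\Tontil}2^i<2\Tontil$ and then try to absorb this via the claim that ``the fixed-point identity gives $\Tontil=\cO(C\sum_{h,\ell}\Tfw_{h,\ell}(\Tontil)+\poly(\ldots)/(\epsilon^{8/5}\vee\Delmin(\Pi)^{8/5}))$.'' That identity does not hold for the $\Tontil$ you (and the lemma statement) actually use: $\Tontil$ is the \emph{worst-case} polynomial obtained by bounding $\sum_{h,\ell}\Tfw_{h,\ell}$ via Lemma B.10 of \cite{wagenmaker2022instance}, so $\Tontil\gtrsim d\beta/\epsilon^2$ regardless of the offline data. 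When $\frakDoff$ is rich enough that $\Tfw_{h,\ell}(\Tontil)=0$ for all $h,\ell$, the claimed bound degenerates to $\poly(\ldots,\log\frac1\epsilon,\log\Toff)\cdot\epsilon^{-8/5}$, which is asymptotically smaller than $\Tontil\sim\epsilon^{-2}$; an additive $2\Tontil$ therefore cannot be absorbed, and your bound would destroy exactly the improvement the lemma is meant to certify (cf.\ \Cref{prop:ex_offline_helps}, which relies on the $\cO(\epsilon^{-8/5})$ scaling).

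The fix is the accounting the paper uses: the algorithm halts at the \emph{first} $i$ with $\Nse(2^i)\le 2^i$, not at $\lceil\log_2\Tontil\rceil$. By monotonicity, for every $i\le\log_2\Tontil$ one has $\Nse(2^i)\le\Nse(\Tontil)$, and since $\Nse(\Tontil)\le\Tontil$ there is a feasible index $i_0\le\lceil\log_2\Nse(\Tontil)\rceil$; hence the total is $\sum_{j\le i_0}2^j\le 4\,\Nse(\Tontil)$, i.e.\ $\cO(\Nse(\Tontil))$ directly, with no additive $\Tontil$ term. Your remaining observations (that $\beta_\ell(2\Tontil)$ and $\Tfw_{h,\ell}(2\Tontil)$ differ from their values at $\Tontil$ only by constants) are correct but become unnecessary once the stopping index is bounded by $\log_2\Nse(\Tontil)$ rather than $\log_2\Tontil$.
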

\begin{proof}
By Lemma B.10 of \cite{wagenmaker2022instance}, we can  bound
\begin{align*}
\Tfw_{h,\ell}(\Tonbar) \le \frac{48C d \beta_\ell(\Tonbar)}{\epsilon_\ell^2} .
\end{align*}
From \Cref{lem:pedel_se_bound}, it follows that the total complexity of running \algnamese with parameters $(\epsilon,\delta/2i^2,\Tonbar^i)$, for $\Tonbar^i = 2^i$, is bounded as
\begin{align*}
& \sum_{h=1}^{H} \sum_{\ell = 1}^{\iotaalg} \left ( \frac{48C d \beta_\ell(\Tonbar^i)}{\epsilon_\ell^2}  +  \poly \left ( d, H, \log \frac{i}{\delta}, \frac{1}{\lamminst}, \log | \Pi|, \log \frac{1}{\epsilon} \right ) \cdot \frac{\beta_{\iotaalg}(\Tonbar^i)}{\epsilon^{8/5} \vee \Delmin(\Pi)^{8/5}} \right ) \\
& \le \poly \left ( d, H, \log \frac{i}{\delta}, \frac{1}{\lamminst}, \log |\Pi|, \frac{1}{\epsilon} \right ) \cdot  \sqrt{\log(\Toff + \Tonbar^i)} \\
& \le \poly \left ( d, H, \log \frac{\Tonbar^i}{\delta}, \frac{1}{\lamminst}, \log |\Pi|, \frac{1}{\epsilon} \right ) \cdot  \left ( \log(\Toff) + \log(\Tonbar^i) \right ).
\end{align*}
To ensure that $\Tonbar^i$ is sufficiently large, we then need only that
\begin{align*}
\poly \left ( d, H, \log \frac{\Tonbar^i}{\delta}, \frac{1}{\lamminst}, \log |\Pi|, \frac{1}{\epsilon} \right ) \cdot \left ( \log(\Toff) + \log(\Tonbar^i) \right ) \le \Tonbar^i.
\end{align*}
To achieve this it suffices that
\begin{align*}
\Tonbar^i \ge \poly \left ( d, H, \log \frac{1}{\delta}, \frac{1}{\lamminst}, \log |\Pi|, \frac{1}{\epsilon}, \log(\Toff) \right ) =: \Tontil.
\end{align*}

Note that \algnamese will terminate and output $\emptyset$ if the total number of online episodes it has collected reaches $\Tonbar$, so it follows that $\Tonbar$ is an upper bound on the number of online episodes that will be collected. Thus, the total number of episodes is bounded as
\begin{align*}
\min_i \sum_{j=1}^i 2^j \quad \text{s.t.} \quad \Nse(2^i) \le 2^i
\end{align*}
for $\Nse$ as defined in \eqref{eq:Nse}.
Note that a feasible solution to this is $i = \log_2(\Tontil)$, so
\begin{align*}
\min_i \sum_{j=1}^i 2^j \quad \text{s.t.} \quad \Nse(2^i) \le 2^i & = \min_{i \le \log_2(\Tontil)}  \sum_{j=1}^i 2^j \quad \text{s.t.} \quad \Nse(2^i) \le 2^i  \\
& \le \min_{i \le \log_2(\Tontil)}  \sum_{j=1}^i 2^j \quad \text{s.t.} \quad \Nse(\Tontil) \le 2^i \\
& \le 2 \Nse(\Tontil)
\end{align*}
where the first inequality uses that $\Nse(2^i)$ is increasing in $i$. The result follows.

Correctness follows by \Cref{lem:correctness}, since upon termination, $\Pi_\ell$ will only contain policies $\pi$ satisfying $V_0^{\pi} \ge \max_{\pi \in \Pi} V_0^\pi - \epsilon$ (and will contain at least 1 policy since $\pitilst \in \Pi_\ell$ for all $\ell$). Furthermore, by \Cref{lem:correctness}, if $4 \epsilon_\ell < \Delmin(\Pi)$, we must have that $\Pi_\ell = \{ \pitilst \}$, and will therefore terminate on \Cref{line:early_term} since $|\Pi_\ell| = 1$.
\end{proof}

\begin{proof}[Proof of \Cref{cor:main_complexity}]
By the definition of $\Pi(4\epsilon_\ell)$, for each $\pi \in \Pi(4\epsilon_\ell)$ we have 
\begin{align*}
\epsilon_\ell^2 = \frac{1}{16} \left ( (\Vst_0(\Pi) - V_0^\pi)^2 \vee (4\epsilon_\ell)^2 \right ).
\end{align*}
We can therefore bound
\begin{align}
\Tfw_{h,\ell}(\Tontil) & \le \min_\Tfw \ \Tfw \quad \text{s.t.} \quad   \inf_{\bLambda \in \bOmega}  \max_{\pi \in \Pi(4\epsilon_\ell)} \frac{\|  \bphi_{\pi,h} \|_{( \Tfw\bLambda + \bLamoff^h )^{-1}}^2}{(\Vst_0(\Pi) - V_0^\pi)^2 \vee \epsilon_\ell^2}  \le \frac{c}{\beta_\ell(\Tontil)} \nonumber \\
& \le \min_\Tfw \ \Tfw \quad \text{s.t.} \quad   \inf_{\bLambda \in \bOmega}  \max_{\pi \in \Pi} \frac{\|  \bphi_{\pi,h} \|_{( \Tfw\bLambda + \bLamoff^h )^{-1}}^2}{(\Vst_0(\Pi) - V_0^\pi)^2 \vee \epsilon^2 \vee \Delmin(\Pi)^2}  \le \frac{c}{\beta_\ell(\Tontil)} . \label{eq:complexity_pf_eq1}
\end{align}
We next set $\Pi$ to be the set of linear softmax policies defined in Lemma A.14 of \cite{wagenmaker2022instance}, and note that this set is guaranteed to contain a policy which is $\epsilon$-optimal as compared to the best possible policy. Furthermore, we can bound $\log |\Pi| \le \cO(dH \cdot \log \frac{1}{\epsilon})$. Using this and that $\ell \le \iotaalg$, we can then bound
\begin{align*}
\eqref{eq:complexity_pf_eq1} \le \min_\Tfw \ \Tfw \quad \text{s.t.} \quad   \inf_{\bLambda \in \bOmega}  \max_{\pi \in \Pi} \frac{\|  \bphi_{\pi,h} \|_{( \Tfw\bLambda + \bLamoff^h )^{-1}}^2}{(\Vst_0 - V_0^\pi)^2 \vee \epsilon^2 }  \le \frac{c}{\beta}
\end{align*}
for
\begin{align*}
\beta := d H^5 \cdot \logs(d, H, \Toff, \frac{1}{\lamminst}, \frac{1}{\epsilon}, \log \frac{1}{\delta} ) + H^4 \cdot \log \frac{1}{\delta}
\end{align*}
We can therefore bound the complexity given in \Cref{lem:complexity} as
\begin{align*}
C  \iotaalg \cdot \sum_{h=1}^H \Noto^h(\frakDoff,\epsilon;\beta) + \poly \left ( d, H, \log \frac{1}{\delta}, \frac{1}{\lamminst}, \log | \Pi|, \log \frac{1}{\epsilon}, \log \Toff \right ) \cdot \frac{1}{\epsilon^{8/5} \vee \Delmin(\Pi)^{8/5}}.
\end{align*}
\end{proof}

\section{Online Experiment Design}

The results in this section largely follow those presented in Appendices C and D of \cite{wagenmaker2022instance}, with several minor modifications. As such, we omit calculations that would be identical to those in \cite{wagenmaker2022instance} and refer the reader to \cite{wagenmaker2022instance} for more in-depth proofs.

\begin{algorithm}[h]
\begin{algorithmic}[1]
\State \textbf{input}: function to optimize $f$, number of iterates $T$, episodes per iterate $K$
\State Play any policy for $K$ episodes, denote collected covariates as $\bGamma_0$, collected data as $\frakD_0$
\State $\bLambda_1 \leftarrow K^{-1} \bGamma_0$
\For{$t = 1,2,\ldots,T$}
	\State Set $\gamma_t \leftarrow \frac{1}{t+1}$
	\State Run \force \citep{wagenmaker2022first} on reward $r_h^t(s,a) = \tr(\Xi_{\bLambda_t} \cdot \bphi(s,a) \bphi(s,a)^\top)/M$ for $K$ episodes, denote collected covariates as $\bGamma_t$, collected data as $\frakD_t$
	\State $\bLambda_{t+1} \leftarrow (1- \gamma_t) \bLambda_t + \gamma_t  K^{-1} \bGamma_t$
\EndFor
\State \textbf{return} $\bLambda_{T+1}$, $\cup_{t=0}^T \frakD_t$
\end{algorithmic}
\caption{Online Frank-Wolfe via Regret Minimization (\fwregret)}
\label{alg:regret_fw}
\end{algorithm}

\newcommand{\bOmegahat}{\widehat{\bOmega}}
\begin{lemma}\label{lem:regret_fw}
Consider running \Cref{alg:regret_fw} with a function $f$ satisfying Definition 5.1 of \cite{wagenmaker2022instance}. Then, we have that, with probability at least $1-\delta$,
\begin{align*}
f(\bLambda_{T+1}) - \inf_{\bLambda \in \bOmega} f(\bLambda) & \le \frac{ \beta R^2 (\log T + 3)}{2(T+1)} +  \sqrt{\frac{8 M^2 \log (4T/\delta) }{K}}  + \sqrt{\frac{c_1 M^2 d^4 H^4 \log^{3}(2HKT/\delta)}{K}} \\ 
& \qquad + \frac{c_2 M  d^4 H^3 \log^{7/2}(2HKT/\delta)}{K}
\end{align*}
for $R = \sup_{\bLambda,\bLambda' \in \bOmegahat} \| \bLambda - \bLambda' \|$ and $\bOmegahat = \{ \Exp_{(s,a) \sim \omega}[\bphi(s,a) \bphi(s,a)^\top] \ : \ \omega \in \simplex_{\cS \times \cA} \}$. 
\end{lemma}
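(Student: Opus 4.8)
The plan is to run an \emph{inexact Frank--Wolfe} analysis, in which the call to \force on Line~6 plays the role of an approximate linear‑minimization oracle for the current gradient $\nabla f(\bLambda_t)$. First I would fix a constrained minimizer $\bLambda^\star \in \argmin_{\bLambda \in \bOmega} f(\bLambda)$ and set $h_t := f(\bLambda_t) - f(\bLambda^\star)$. Using the $\beta$‑smoothness of $f$ from Definition~5.1 of \cite{wagenmaker2022instance}, the update $\bLambda_{t+1} = (1-\gamma_t)\bLambda_t + \gamma_t K^{-1}\bGamma_t$, and the bound $\| K^{-1}\bGamma_t - \bLambda_t \| \le R$ (both iterates lie in the convex hull of $\bOmegahat$), together with convexity in the form $\inner{\nabla f(\bLambda_t)}{\bLambda^\star - \bLambda_t} \le -h_t$, I would derive the recursion $h_{t+1} \le (1-\gamma_t) h_t + \gamma_t \epsilon_t + \tfrac12\beta \gamma_t^2 R^2$, where $\epsilon_t$ measures how far $K^{-1}\bGamma_t$ is from solving the linearized subproblem $\min_{\bLambda \in \bOmega}\inner{\nabla f(\bLambda_t)}{\bLambda}$.

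The heart of the argument is bounding $\epsilon_t$. On Line~6 the algorithm runs \force on the linear reward $r_h^t(s,a) = \tr(\Xi_{\bLambda_t}\bphi(s,a)\bphi(s,a)^\top)/M$, where $\Xi_{\bLambda_t}$ is, up to sign, the gradient $\nabla f(\bLambda_t)$ and $M$ is the uniform operator‑norm bound on the gradient supplied by Definition~5.1, chosen so the reward lies in $[0,1]$. I would invoke the regret guarantee of \force from \cite{wagenmaker2022first}: over $K$ episodes, with probability $1-\delta/(2T)$, the policies it plays are competitive with the best fixed policy for this reward up to regret of order $\sqrt{d^4 H^4 K \log^3(2HKT/\delta)} + d^4 H^3 \log^{7/2}(2HKT/\delta)$. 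Separately, a matrix‑Azuma / self‑normalized bound (\Cref{prop:self_norm} applied in the $\Xi_{\bLambda_t}$‑direction) controls the deviation between the \emph{expected} covariates of the played policies and the realized $\bGamma_t$, costing $\sqrt{8 M^2 K \log(4T/\delta)}$ on an event of probability $1-\delta/(2T)$. Since $\bLambda \mapsto \inner{\nabla f(\bLambda_t)}{\bLambda}$ attains its optimum over $\bOmega$ at a point realizable by a single (mixture of) policies, composing these two facts, rescaling the regret back by $M$, and taking a union bound over $t \le T$ yields, with probability $1-\delta$, $\epsilon_t \le \sqrt{8M^2\log(4T/\delta)/K} + \sqrt{c_1 M^2 d^4 H^4 \log^3(2HKT/\delta)/K} + c_2 M d^4 H^3 \log^{7/2}(2HKT/\delta)/K$ for all $t$ simultaneously.

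Finally I would solve the recursion with $\gamma_t = 1/(t+1)$: multiplying through by $(t+1)$ gives $(t+1)h_{t+1} \le t\,h_t + \epsilon_t + \tfrac12\beta R^2/(t+1)$, and telescoping from $t=1$ to $T$ gives $(T+1)h_{T+1} \le h_1 + \sum_{t=1}^T \epsilon_t + \tfrac12\beta R^2\sum_{t=1}^T (t+1)^{-1}$. Using that $h_1$ is a bounded constant of order $\beta R^2$ (by convexity and the gradient/diameter bounds, since $\bLambda_1 \in \bOmegahat$), that $\sum_{t\le T}(t+1)^{-1} \le \log T + 1$, and the uniform bound on $\epsilon_t$, then dividing by $T+1$, produces exactly the claimed inequality, with the ``$+3$'' absorbing $h_1/(T+1)$ and the stray constant in the harmonic sum. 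I expect the main obstacle to be the second step: correctly composing \force's \emph{episodic} regret guarantee (a statement about the adaptively played policies) with the matrix‑martingale concentration relating those policies' expected covariates to the realized $\bGamma_t$, while ensuring the reward normalization by $M$ inflates the translated subproblem error only by a factor $M$, and while reconciling the feasible set $\bOmega$ of the Frank--Wolfe iteration with the realizable single‑step covariance set $\bOmegahat$ that enters the diameter $R$.
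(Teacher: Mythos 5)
Your proposal is correct and follows essentially the same route as the paper, whose proof simply defers to Lemma C.3 of \cite{wagenmaker2022instance}: an inexact Frank--Wolfe recursion with step size $\gamma_t = 1/(t+1)$, where \force{} acts as an approximate linear-minimization oracle whose error is split into the regret of the online algorithm plus a martingale concentration term relating realized to expected covariates, union-bounded over $t \le T$. The only modification the paper makes relative to that cited lemma is to keep the $\cO(1/\sqrt{K})$ and $\cO(1/K)$ oracle-error terms explicit rather than absorbing them into the $\beta R^2 (\log T + 3)/(2(T+1))$ term, which your write-up already does.
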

\begin{proof}
This follows analogously to the proof of Lemma C.3 of \cite{wagenmaker2022instance}, but without requiring that $K$ be large enough to upper bound the $\cO(1/\sqrt{K})$ and $\cO(1/K)$ terms by $\frac{ \beta R^2 (\log T + 3)}{2(T+1)} $, and instantiating \textsc{RegMin} with the \textsc{Force} algorithm of \cite{wagenmaker2022first}.
\end{proof}

\begin{algorithm}[h]
\begin{algorithmic}[1]
\State \textbf{input}: functions to optimize $( f_i )_i$, constraint tolerance $\epsilon$, confidence $\delta$
\For{$i = 1,2,3,\ldots$}
	\State $T_i \leftarrow 2^i$, $K_i \leftarrow 2^{i}$
	\State $\bLamhat_i, \frakD_i \leftarrow$ \textsc{FWRegret}($f_i,T_i-1,K_i$)
	\If{$f_i(\bLamhat_i) \le K_i T_i \epsilon$}\label{line:data_fw_if_eps}
		\State \textbf{return} $\bLamhat$, $K_i T_i$, $\frakD_i$
	\EndIf
\EndFor
\end{algorithmic}
\caption{Collect Optimal Covariates (\optcov)}
\label{alg:regret_data_fw}
\end{algorithm}

\begin{lemma}\label{thm:regret_data_fw}
Let 
\begin{align*}
f_i(\bLambda) = \frac{1}{\eta_i} \log \left ( \sum_{\bphi \in \Phi} e^{\eta_i \| \bphi \|_{\bA_i(\bLambda)^{-1}}^2} \right ), \quad \bA_i(\bLambda) = \bLambda + \frac{1}{T_i K_i} \bLambda_{0,i} + \frac{1}{T_i K_i} \bLamoff
\end{align*}
for some $\bLambda_{0,i}$ satisfying $\bLambda_{0,i} \succeq \bLambda_0$ for all $i$, and $\eta_i = 2^{2i/5}$. 
Let $(\beta_i,M_i)$ denote the smoothness and magnitude constants for $f_i$. Let $(\beta,M)$ be some values such that $\beta_i \le  \eta_i \beta, M_i \le M$ for all $i$.

Then, if we run \Cref{alg:regret_data_fw} on $(f_i)_i$ with constraint tolerance $\epsilon$ and confidence $\delta$, we have that with probability at least $1-\delta$, it will run for at most
\begin{align*}
\max \bigg \{   \min_{N}  & \ 16N \quad \text{s.t.} \quad \inf_{\bLambda \in \bOmega} \max_{\bphi \in \Phi} \bphi^\top ( N \bLambda + \bLambda_{0} + \bLamoff )^{-1} \bphi  \le \frac{\epsilon}{6}, \\
& \frac{\poly(\beta,R,d,H,M,\log 1/\delta, \log |\Phi| )}{\epsilon^{4/5}} \bigg \}.
\end{align*}
episodes, and will return data $\{ \bphi_\tau \}_{\tau =1}^N$ with covariance $\bSighat_N = \sum_{\tau=1}^N \bphi_\tau \bphi_\tau^\top$ such that 
\begin{align*}
f_{\ihat}(N^{-1} \bSighat_N) \le N \epsilon,
\end{align*}
where $\ihat$ is the iteration on which \optcov terminates.
\end{lemma}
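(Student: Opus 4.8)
The plan is to mirror the analysis behind the analogous statement in \cite{wagenmaker2022instance} (their Theorem~D.1), carrying the offline covariance $\bLamoff$ through the argument and, unlike there, not requiring the per-iterate episode count to dominate the lower-order error terms. Throughout I would abbreviate $N_i := T_iK_i = 4^i$, the number of episodes \fwregret collects on round $i$ of \optcov, and set $N^\star := \min\{N : \inf_{\bLambda\in\bOmega}\max_{\bphi\in\Phi}\bphi^\top(N\bLambda + \bLambda_0 + \bLamoff)^{-1}\bphi \le \epsilon/6\}$, which is finite under \Cref{asm:full_rank_cov}. First I would set up a uniform high-probability event: apply \Cref{lem:regret_fw} to the call \fwregret$(f_i,T_i-1,K_i)$ on each round $i$ with confidence parameter $\delta/2i^2$ and union bound over $i \ge 1$, giving an event $\cE$ of probability at least $1-\delta$ on which, using $\beta_i \le \eta_i\beta$ and $M_i \le M$, for every $i$
\begin{align*}
f_i(\bLamhat_i) - \inf_{\bLambda\in\bOmega} f_i(\bLambda) \le \mathrm{err}_i := \frac{\eta_i\beta R^2(\log T_i+3)}{2T_i} + \sqrt{\frac{8M^2\log(8i^2T_i/\delta)}{K_i}} + \sqrt{\frac{c_1M^2d^4H^4\log^3(2HK_iT_ii/\delta)}{K_i}} + \frac{c_2Md^4H^3\log^{7/2}(2HK_iT_ii/\delta)}{K_i}.
\end{align*}

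Next I would convert $f_i$ into the target program. Since $\bA_i(\bLambda) = \bLambda + \tfrac1{N_i}(\bLambda_{0,i}+\bLamoff)$ we have $\|\bphi\|_{\bA_i(\bLambda)^{-1}}^2 = N_i\,\bphi^\top(N_i\bLambda + \bLambda_{0,i}+\bLamoff)^{-1}\bphi$, so the elementary sandwich $\max_j x_j \le \tfrac1\eta\log\sum_j e^{\eta x_j} \le \max_j x_j + \tfrac{\log|\Phi|}{\eta}$ yields, with $g_i(\bLambda) := \max_{\bphi\in\Phi}\bphi^\top(N_i\bLambda + \bLambda_{0,i}+\bLamoff)^{-1}\bphi$,
\begin{align*}
N_i\, g_i(\bLambda) \le f_i(\bLambda) \le N_i\, g_i(\bLambda) + \frac{\log|\Phi|}{\eta_i}.
\end{align*}
Because $\bLambda_{0,i}\succeq\bLambda_0$, we have $\inf_\bLambda g_i(\bLambda) \le \inf_\bLambda\max_\bphi\bphi^\top(N_i\bLambda+\bLambda_0+\bLamoff)^{-1}\bphi$, which is nonincreasing in $N_i$; hence for every $i$ with $N_i\ge N^\star$ one gets $\inf_\bLambda f_i(\bLambda) \le N_i\epsilon/6 + \log|\Phi|/\eta_i$, and combining with the previous step, on $\cE$,
\begin{align*}
f_i(\bLamhat_i) \le \frac{N_i\epsilon}{6} + \frac{\log|\Phi|}{\eta_i} + \mathrm{err}_i .
\end{align*}
The test on \Cref{line:data_fw_if_eps} demands $f_i(\bLamhat_i)\le N_i\epsilon$, so it passes as soon as $\log|\Phi|/\eta_i + \mathrm{err}_i \le \tfrac56 N_i\epsilon$. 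With $\eta_i=2^{2i/5}$ and $T_i=K_i=2^i$ each term on the left decays geometrically in $i$ while $N_i\epsilon = 4^i\epsilon$ grows, so I would solve for the threshold round and bound it by $4^{i_0} \le \poly(\beta,R,d,H,M,\log\tfrac1\delta,\log|\Phi|)/\epsilon^{4/5}$ (the exponent $4/5$ coming from the interaction of $\eta_i$ with the $\cO(1/\sqrt{K_i})$ terms). Consequently \optcov stops at the first round with $N_i \ge \max\{N^\star, 4^{i_0}\}$, and since round $j$ collects $T_jK_j=4^j$ episodes the running total is $\sum_{j\le i_{\mathrm{stop}}}4^j \le \tfrac43 4^{i_{\mathrm{stop}}} \le \max\{16N^\star,\poly(\ldots)/\epsilon^{4/5}\}$, exactly the stated maximum (the $16N^\star$ piece being $\min_N 16N$ subject to the displayed constraint). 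For the second conclusion: with $\gamma_t=\tfrac1{t+1}$ the Frank--Wolfe recursion of \Cref{alg:regret_fw} telescopes exactly, $\bLamhat_i = \bLambda_{T_i} = \tfrac1{T_iK_i}\sum_{t=0}^{T_i-1}\bGamma_t = N_i^{-1}\bSighat_{N_i}$, the normalized empirical covariance of all $N_i$ collected episodes, so the stopping test $f_i(\bLamhat_i)\le K_iT_i\epsilon$ is literally $f_{\ihat}(N^{-1}\bSighat_N)\le N\epsilon$.

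The main obstacle is the bookkeeping when solving for the threshold round: one must track each error contribution of \Cref{lem:regret_fw} (weighted by $\eta_i$, or divided by $T_i$, $\sqrt{K_i}$, or $K_i$) together with the log-sum-exp slack $\log|\Phi|/\eta_i$, verify which term is binding and that it produces the claimed $\epsilon^{-4/5}$ scaling given the choice $\eta_i = 2^{2i/5}$, and carry the union bound over the a priori unbounded number of rounds of \optcov with only logarithmic overhead. A secondary point is checking that $\bLamoff$ and the data-dependent regularizer $\bLambda_{0,i}$ (with $\bLambda_{0,i}\succeq\bLambda_0$) enter in the favorable PSD direction throughout, which is why $\bLambda_0$—rather than $\bLambda_{0,i}$—appears in the final bound.
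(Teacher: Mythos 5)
Your proposal follows essentially the same route as the paper's proof: a union bound over rounds via \Cref{lem:regret_fw} with confidence $\delta/(2i^2)$, the log-sum-exp sandwich to reduce $f_i$ to the design objective, PSD monotonicity in $N$ together with $\bLambda_{0,i}\succeq\bLambda_0$ to identify the threshold round, and a geometric sum over rounds; your explicit telescoping of the Frank--Wolfe iterates to obtain $\bLamhat_i = N_i^{-1}\bSighat_{N_i}$ makes the second conclusion cleaner than the paper's implicit treatment. The one caveat is precisely the exponent bookkeeping you flag yourself: the paper's own proof is internally inconsistent about whether round $i$ collects $T_iK_i = 2^{2i}$ or $2^{3i}$ episodes, and with your (algorithm-consistent) choice $N_i = 4^i$ the $\log|\Phi|/\eta_i$ slack term actually forces $4^{i_0}\gtrsim(\log|\Phi|/\epsilon)^{5/6}$ rather than $\epsilon^{-4/5}$, so the stated $\epsilon^{-4/5}$ rate inherits this minor discrepancy from the source rather than introducing a new gap.
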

\begin{proof}
Our goal is to simply find a setting of $i$ that is sufficiently large to guarantee the condition $f_i(\bLamhat_i) \le K_i T_i \epsilon$ is met. By \Cref{lem:regret_fw}, we have with probability at least $1-\delta/(2i^2)$:
\begin{align*}
f_i(\bLamhat_i) & \le \inf_{\bLambda \in \bOmega} f_i(\bLambda)  + \frac{\beta_i R^2 (\log T_i + 3)}{2 T_i}  + \sqrt{\frac{4 M^2 \log (8 i^2T_i/\delta) }{K_i}}  \\
& \qquad + \sqrt{\frac{c_1 M^2 d^4 H^4 \log^{3}(8 i^2HK_i T_i/\delta)}{K_i}} + \frac{c_2 M  d^4 H^3 \log^{7/2}(4 i^2HK_iT_i/\delta)}{K_i} \\
& \le 3 \max \Bigg \{  \inf_{\bLambda \in \bOmega} f_i(\bLambda) , \frac{\beta_i R^2 (\log T_i + 3)}{2 T_i} ,  \\
& \qquad \sqrt{\frac{4 M^2 \log (8 i^2T_i/\delta) }{K_i}} + \sqrt{\frac{c_1 M^2 d^4 H^4 \log^{3}(8 i^2HK_i T_i/\delta)}{K_i}} + \frac{c_2 M  d^4 H^3 \log^{7/2}(4 i^2HK_iT_i/\delta)}{K_i} \Bigg \} .
\end{align*}  
So a sufficient condition for $f_i(\bLamhat_i) \le K_i T_i \epsilon$ is that
\begin{align}\label{eq:linear_mdp_suff_KT}
\begin{split}
K_i T_i \ge &  \frac{3}{\epsilon} \max \Bigg \{  \inf_{\bLambda \in \bOmega} f_i(\bLambda) , \frac{\beta_i R^2 (\log T_i + 3)}{2 T_i} ,  \\
& \qquad \sqrt{\frac{4 M^2 \log (8 i^2T_i/\delta) }{K_i}} + \sqrt{\frac{c_1 M^2 d^4 H^4 \log^{3}(8 i^2HK_i T_i/\delta)}{K_i}} + \frac{c_2 M  d^4 H^3 \log^{7/2}(4 i^2HK_iT_i/\delta)}{K_i} \Bigg \} .
\end{split}
\end{align}

Recall that 
\begin{align*}
f_i(\bLambda) = \frac{1}{\eta_i} \log \left ( \sum_{\bphi \in \Phi} e^{\eta_i \| \bphi \|_{\bA_i(\bLambda)^{-1}}^2} \right ), \quad \bA_i(\bLambda) = \bLambda + \frac{1}{T_i K_i} \bLambda_{0,i} + \frac{1}{T_i K_i} \bLamoff.
\end{align*}
By Lemma D.1 of \cite{wagenmaker2022instance}, we can bound
\begin{align*}
\max_{\bphi \in \Phi} \| \bphi \|_{\bA_i(\bLambda)^{-1}}^2 \le f_i(\bLambda) \le \max_{\bphi \in \Phi} \| \bphi \|_{\bA_i(\bLambda)^{-1}}^2 + \frac{\log |\Phi|}{\eta_i}.
\end{align*}
Thus,
\begin{align*}
\inf_{\bLambda \in \bOmega} f_i(\bLambda) & \le \inf_{\bLambda \in \bOmega} \max_{\bphi \in \Phi} \| \bphi \|_{\bA_i(\bLambda)^{-1}}^2 + \frac{\log |\Phi|}{\eta_i} \\
& = \inf_{\bLambda \in \bOmega} \max_{\bphi \in \Phi} T_i K_i \bphi^\top ( T_i K_i \bLambda + \bLambda_{0,i} + \bLamoff )^{-1} \bphi +  \frac{\log |\Phi|}{\eta_i} 
\end{align*}
By our choice of $\eta_i = 2^{2i/5}$, and $K_i = 2^{i}$, $T_i = 2^i$, we can ensure that
\begin{align*}
K_i T_i \ge \frac{6}{\epsilon} \frac{\log |\Phi|}{\eta_i}
\end{align*}
as long as $i \ge \frac{2}{5} \log_2 [ \frac{6\log |\Phi|}{\epsilon} ]$. To ensure that
\begin{align*}
T_i K_i \ge \frac{6}{\epsilon}  \inf_{\bLambda \in \bOmega} \max_{\bphi \in \Phi} T_i K_i \bphi^\top ( T_i K_i \bLambda + \bLambda_{0,i} + \bLamoff )^{-1} \bphi 
\end{align*}
it suffices to take 
\begin{align*}
i \ge \argmin_{i} i \quad \text{s.t.} \quad \inf_{\bLambda \in \bOmega} \max_{\bphi \in \Phi} \bphi^\top ( 2^{3i} \bLambda + \bLambda_{0,i} + \bLamoff )^{-1} \bphi  \le \frac{\epsilon}{6}.
\end{align*}
Since we assume that we can lower bound $\bLambda_{0,i} \succeq \bLambda_0$ for each $i$, so this can be further simplified to
\begin{align}\label{eq:linear_mdp_i_design_suff}
i \ge \argmin_{i} i \quad \text{s.t.} \quad \inf_{\bLambda \in \bOmega} \max_{\bphi \in \Phi} \bphi^\top ( 2^{3i} \bLambda + \bLambda_{0} + \bLamoff )^{-1} \bphi  \le \frac{\epsilon}{6}.
\end{align}

We next want to show that
\begin{align*}
T_i K_i \ge \frac{3}{\epsilon} \cdot \frac{\beta_i R^2 (\log T_i + 3)}{2 T_i}.
\end{align*}
Bounding $\beta_i \le \eta_i \beta$, a sufficient condition for this is that 
\begin{align*}
i \ge \frac{2}{5} \left ( \log_2 (12\beta R^2 i) + \log_2 \frac{1}{\epsilon} \right ).
\end{align*}
By \Cref{lem:log_inequality}, it suffices to take
\begin{align}\label{eq:fw_i_cond1}
i \ge \frac{6}{5} \log_2 ( 9 \beta R^2 \log_2 \frac{1}{\epsilon}) +  \frac{2}{5} \log_2 \frac{1}{\epsilon}
\end{align}
to meet this condition (this assumes that $12 \beta R^2 \ge 1$ and $\frac{1}{4} \log_2 \frac{1}{\epsilon} \ge 1$---if either of these is not the case we can just replace them with 1 without changing the validity of the final result).

Finally, we want to ensure that
\begin{align*}
T_i K_i \ge \frac{3}{\epsilon} \left ( \sqrt{\frac{4 M^2 \log (8 i^2T_i/\delta) }{K_i}} + \sqrt{\frac{c_1 M^2 d^4 H^4 \log^{3}(8 i^2HK_i T_i/\delta)}{K_i}} + \frac{c_2 M  d^4 H^3 \log^{7/2}(4 i^2HK_iT_i/\delta)}{K_i} \right ).
\end{align*}
To guarantee this, it suffices that
\begin{align*}
2^{5i/2} \ge \frac{c}{\epsilon} \sqrt{M^2 d^4 H^4 i^3 \log^3(i H/\delta)}, \quad 2^{3i} \ge \frac{c}{\epsilon} \cdot M d^4 H^3 i^{7/2} \log^{7/2}(i H/\delta)
\end{align*}
or, equivalently, 
\begin{align*}
i \ge \frac{4}{5} \log_2( c M d H i \log( H/\delta)) + \frac{2}{5} \log_2 \frac{1}{\epsilon}, \quad i \ge \frac{4}{3} \log_2 ( c M d H \log(H/\delta)) + \frac{1}{3} \log_2 \frac{1}{\epsilon}.
\end{align*}
By \Cref{lem:log_inequality}, it then suffices to take
\begin{align}\label{eq:fw_i_cond2}
\begin{split}
& i \ge  \frac{12}{5} \log(c M d H \log(H/\delta) \log_2 1/\epsilon) + \frac{2}{5} \log_2 \frac{1}{\epsilon}, \\
& i \ge 4 \log_2 ( c M d H \log(H/\delta) \log_2 1/\epsilon) + \frac{1}{3} \log_2 \frac{1}{\epsilon}
\end{split}
\end{align}

Thus, a sufficient condition to guarantee \eqref{eq:linear_mdp_suff_KT} is that $i$ is large enough to satisfy \eqref{eq:linear_mdp_i_design_suff}, \eqref{eq:fw_i_cond1}, and \eqref{eq:fw_i_cond2} and $i \ge \frac{2}{5} \log_2 [ \frac{6\log |\Phi|}{\epsilon} ]$.

If $\ihat$ is the final round, the total complexity scales as
\begin{align*}
\sum_{i=1}^{\ihat} T_i K_i = \sum_{i=1}^{\ihat} 2^{3i} \le 2 \cdot 2^{3 \ihat} .
\end{align*}
Using the sufficient condition on $i$ given above, we can bound the total complexity as 
\begin{align*}
\max \bigg \{   \min_{N}  & \ 16N \quad \text{s.t.} \quad \inf_{\bLambda \in \bOmega} \max_{\bphi \in \Phi} \bphi^\top ( N \bLambda + \bLambda_{0} + \bLamoff )^{-1} \bphi  \le \frac{\epsilon}{6}, \\
& \frac{\poly(\beta,R,d,H,M,\log 1/\delta, \log |\Phi| )}{\epsilon^{4/5}} \bigg \}.
\end{align*}
\end{proof}

\newcommand{\gamphi}{\gamma_{\Phi}}
\newcommand{\Gopts}{\widetilde{XY}}
\newcommand{\Gopt}{XY}

\begin{theorem}\label{thm:fw_full_guarantee}
Consider running \optcov with some $\epsexp > 0$ and functions $f_i$ as defined in \Cref{thm:regret_data_fw}, for $\bLambda_{i,0}$ the matrix returned by running \condcov \citep{wagenmaker2022instance} with $N \leftarrow T_i K_i$, $\delta \leftarrow \delta/(2i^2)$, and some $\lamun \ge 0$. 

Then with probability $1-2\delta$, this procedure will collect at most
\begin{align*}
\max \bigg \{   \min_{N}  & \ C \cdot N \quad \text{s.t.} \quad \inf_{\bLambda \in \bOmega} \max_{\bphi \in \Phi} \bphi^\top ( N( \bLambda + \lambar I) + \bLamoff )^{-1} \bphi  \le \frac{\epsexp}{6}, \\
& \frac{\poly(d,H,\frac{1}{\lamminst},\log 1/\delta, \lamun, \log |\Phi| )}{\epsexp^{4/5}} \bigg \}
\end{align*}
episodes, where 
$$\lambar =  \min \left \{ \frac{(\lamminst)^2}{d}, \frac{\lamminst}{d^3 H^3 \log^{7/2} 1/\delta} \right \} \cdot  \poly\log \left ( \frac{1}{\lamminst}, d, H , \lamun, \log \frac{1}{\delta} \right )^{-1} , $$ 
and will produce covariates $\bSighat$ such that
\begin{align*}
\max_{\bphi \in \Phi} \| \bphi \|_{(\bSighat + \bLambda_{i,0} + \bLamoff)^{-1}}^2 \le \epsexp
\end{align*}
and
\begin{align*}
\lammin(\bSighat + \bLambda_{i,0} + \bLamoff) \ge \max \{ d \log 1/\delta, \lamun \}.
\end{align*}
\end{theorem}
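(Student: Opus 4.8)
The plan is to bolt together the accuracy-and-complexity guarantee of \optcov (\Cref{thm:regret_data_fw}) with the conditioning guarantee of \condcov from \cite{wagenmaker2022instance}, carrying the fixed offline covariance $\bLamoff$ through every inverse that appears. I would work on the intersection of two events. First, the event $\cE_{\mathrm{cond}}$ that every call to \condcov succeeds: since the $i$th call uses confidence $\delta/(2i^2)$, a union bound gives $\Pr[\cE_{\mathrm{cond}}] \ge 1 - \sum_i \delta/(2i^2) \ge 1-\delta$, and on $\cE_{\mathrm{cond}}$ each output $\bLambda_{i,0}$ is full rank, dominates a fixed PSD matrix $\bLambda_0$ uniformly in $i$ (meeting the hypothesis ``$\bLambda_{0,i}\succeq\bLambda_0$'' of \Cref{thm:regret_data_fw}), and, once the budget $T_iK_i$ exceeds the stated $\poly(d,H,1/\lamminst,\lamun,\log 1/\delta)/\lambar$ threshold, also satisfies $\lammin(\bLambda_{i,0}) \ge \max\{d\log 1/\delta,\lamun\}$. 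Second, conditional on $\cE_{\mathrm{cond}}$, the good event of \Cref{thm:regret_data_fw} (itself built on \Cref{lem:regret_fw}), of probability at least $1-\delta$. On the intersection, of probability at least $1-2\delta$, all invoked guarantees hold; write $\ihat$ for \optcov's terminal iteration and $N := T_{\ihat}K_{\ihat}$. Then \Cref{thm:regret_data_fw}, applied with the stated $f_i$ and $\bLambda_{0,i}=\bLambda_{i,0}$, returns covariates $\bSighat$ with $f_{\ihat}(N^{-1}\bSighat) \le N\epsexp$; since $\bA_{\ihat}(N^{-1}\bSighat) = N^{-1}(\bSighat + \bLambda_{\ihat,0} + \bLamoff)$, Lemma D.1 of \cite{wagenmaker2022instance} gives $\max_{\bphi\in\Phi}\|\bphi\|_{(\bSighat+\bLambda_{\ihat,0}+\bLamoff)^{-1}}^2 = N^{-1}\max_{\bphi\in\Phi}\|\bphi\|_{\bA_{\ihat}(N^{-1}\bSighat)^{-1}}^2 \le N^{-1}f_{\ihat}(N^{-1}\bSighat) \le \epsexp$, the first output claim; and since $\bSighat,\bLamoff\succeq 0$ and the terminal budget clears the \condcov threshold, $\lammin(\bSighat+\bLambda_{\ihat,0}+\bLamoff) \ge \lammin(\bLambda_{\ihat,0}) \ge \max\{d\log 1/\delta,\lamun\}$, the conditioning claim.

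The bulk of the work is the episode count, which \Cref{thm:regret_data_fw} bounds (for \optcov's main loop) by $\max\{\min_N 16N \text{ s.t. } \inf_{\bLambda\in\bOmega}\max_{\bphi\in\Phi}\bphi^\top(N\bLambda + \bLambda_0 + \bLamoff)^{-1}\bphi \le \epsexp/6,\ \poly(d,H,1/\lamminst,\log 1/\delta,\lamun,\log|\Phi|)/\epsexp^{4/5}\}$; two conversions then finish it. First, upgrade the fixed tiny $\bLambda_0$ to the regularizer $N\lambar I$: by \Cref{asm:full_rank_cov} together with convexity and compactness of $\bOmega$ there is $\bLambda_{\mathrm{exp}}\in\bOmega$ with $\lammin(\bLambda_{\mathrm{exp}}) \ge \lamminst$, and if $\bLambda^\star\in\bOmega$ is near-optimal for $\inf_{\bLambda\in\bOmega}\max_\bphi\bphi^\top(N(\bLambda+\lambar I)+\bLamoff)^{-1}\bphi$, then for $\lambar\le\lamminst/2$ the mixture $\bLambda' := (1-\lambar/\lamminst)\bLambda^\star + (\lambar/\lamminst)\bLambda_{\mathrm{exp}}\in\bOmega$ satisfies $2N\bLambda' + \bLambda_0 + \bLamoff \succeq N(\bLambda^\star+\lambar I)+\bLamoff$, so the $\bLambda_0$-constraint is met at $2N$ whenever the $N\lambar I$-constraint is met at $N$, i.e. the $\bLambda_0$-threshold is at most twice the $N\lambar I$-threshold, a constant absorbed into $C$. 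Second, add the episodes \condcov itself consumes: these total at most $\sum_{i\le\ihat}\tfrac{1}{2} T_iK_i \le N$ (a geometric sum), and the budget requirement $T_iK_i\ge\poly(\cdot)/\lambar$ is, by the displayed choice of $\lambar$ — which makes $1/\lambar$ itself $\poly(d,H,1/\lamminst,\log 1/\delta,\lamun)$ up to polylog — already subsumed by the $\poly(\cdot)/\epsexp^{4/5}$ term. Combining with \Cref{thm:regret_data_fw} gives the stated count.

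The hard part will be nailing down the \condcov guarantee precisely enough for this to go through — in particular what \condcov returns at under-budget iterations versus at the terminal one, and choosing $\lambar$ (via the displayed $\min\{(\lamminst)^2/d,\ \lamminst/(d^3H^3\log^{7/2}1/\delta)\}$ up to polylog) so that it is simultaneously small enough for the mixing step above to cost only a constant factor and large enough that \condcov's own cost stays lower order. This mirrors the analogous argument in \cite{wagenmaker2022instance}; the only genuinely new ingredient is that $\bLamoff$ rides along inside every inverse throughout, which is harmless because it is a fixed PSD matrix that can only improve conditioning and never inflates the infima.
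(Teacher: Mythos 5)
Your proposal follows essentially the same route as the paper's proof: a union bound over the \condcov calls, invoking \Cref{thm:regret_data_fw} with $\bLambda_0 = \lambar I$ (justified by $\bLambda_{i,0} \succeq \lambar I$ on the \condcov success event), Lemma D.1 of \cite{wagenmaker2022instance} to convert $f_{\ihat}(N^{-1}\bSighat) \le N\epsexp$ into the stated coverage bound, and the \condcov guarantee for the minimum-eigenvalue claim. The one place you go beyond the paper is the explicit mixing argument converting the $N\bLambda + \lambar I + \bLamoff$ constraint delivered by \Cref{thm:regret_data_fw} into the $N(\bLambda + \lambar I) + \bLamoff$ constraint appearing in the theorem statement --- a constant-factor conversion the paper absorbs into $C$ without comment --- and this addition is correct and welcome.
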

\begin{proof}
This proof follows closely the proof of Theorem 9 of \cite{wagenmaker2022instance}.

Note that the total failure probability of our calls to \condcov is at most
\begin{align*}
\sum_{i=1}^\infty \frac{\delta}{2i^2} = \frac{\pi^2}{12} \delta \le \delta.
\end{align*}
For the remainder of the proof, we will then assume that we are on the success event of \condcov, as defined in Lemma D.8 of \cite{wagenmaker2022instance}.

By Lemma D.5 of \cite{wagenmaker2022instance}, it suffices to bound the smoothness constants of $f_i(\bLambda)$ by
\begin{align*}
& L_i = \| \bLambda_{i,0}^{-1} \|_\op^2, \quad \beta_i = 2 \| \bLambda_{i,0}^{-1}\|_\op^3 ( 1 + \eta_i \| \bLambda_{i,0}^{-1}\|_\op), \quad M_i = \| \bLambda_{i,0}^{-1} \|_\op^2 .
\end{align*}

By Lemma D.8 of \cite{wagenmaker2022instance}, on the success event of \condcov we have that 
\begin{align*}
\lammin(\bLambda_{i,0}) \ge \min \left \{ \frac{(\lamminst)^2}{d}, \frac{\lamminst}{d^3 H^3 \log^{7/2} 1/\delta} \right \} \cdot  \poly\log \left ( \frac{1}{\lamminst}, d, H , \lamun, i, \log \frac{1}{\delta} \right )^{-1} =: \lambar.
\end{align*}
Thus, we can bound, for all $i$:
\begin{align*}
& L_i = M_i \le  \max \left \{ \frac{d^2}{(\lamminst)^4}, \frac{d^6 H^6 \log^{7} 1/\delta} {(\lamminst)^2}\right \} \cdot  \poly\log \left ( \frac{1}{\lamminst}, d, H , \lamun, i, \log \frac{1}{\delta} \right ), \\
& \beta_i \le \eta_i \cdot  \poly \left ( d, H, \log 1/\delta, \frac{1}{\lamminst}, \lamun, i, \log | \Phi | \right ) .
\end{align*}
Assume that the termination condition of \optcov is met for $\ihat$ satisfying
\begin{align}\label{eq:xy_design_ihat_bound}
\ihat \le \log \left ( \poly \left ( \frac{1}{\epsexp}, d, H, \log 1/\delta, \frac{1}{\lamminst}, \lamun, \log |\Phi| \right ) \right ).
\end{align}
We assume this holds and justify it at the conclusion of the proof. For notational convenience, define
\begin{align*}
\iota := \poly \left ( \log \frac{1}{\epsexp}, d, H, \log 1/\delta, \frac{1}{\lamminst}, \lamun, \log |\Phi| \right ) .
\end{align*}
Given this upper bound on $\ihat$, set 
\begin{align*}
& L = M := \max \left \{ \frac{d^2}{(\lamminst)^4}, \frac{d^6 H^6 \log^{7} 1/\delta} {(\lamminst)^2}\right \} \cdot  \poly\log \iota, \qquad \beta := \iota.
\end{align*}
With this choice of $L,M,\beta$, we have $L_i \le L, M_i \le M,\beta_i \le \eta_i \beta$ for all $i \le \ihat$. 

Since on the success event of \condcov we have have $\bLambda_{i,0} \succeq \lambar \cdot I$, we can apply \Cref{thm:regret_data_fw} with $\bLambda_0 = \lambar \cdot I$
and get that, with probability at least $1-\delta$, \optcov terminates after at most
\begin{align*}
\max \bigg \{   \min_{N}  & \ 16N \quad \text{s.t.} \quad \inf_{\bLambda \in \bOmega} \max_{\bphi \in \Phi} \bphi^\top ( N \bLambda + \lambar \cdot I + \bLamoff )^{-1} \bphi  \le \frac{\epsexp}{6}, \\
& \frac{\poly(d,H,\lamun,1/\lamminst, \log 1/\epsexp, \log 1/\delta, \log |\Phi| )}{\epsexp^{4/5}} \bigg \}.
\end{align*}
episodes, and returns data $\{ \bphi_\tau \}_{\tau =1}^N$ with covariance $\bSighat = \sum_{\tau=1}^N \bphi_\tau \bphi_\tau^\top$ such that 
\begin{align*}
f_{\ihat}(N^{-1} \bSighat) \le N \epsexp,
\end{align*}
where $\ihat$ is the iteration on which \optcov terminates, and choosing $\bLambda_0 = \lambar \cdot I$

By Lemma D.1 of \cite{wagenmaker2022instance} we have
\begin{align*}
N \cdot \max_{\bphi \in \Phi} \| \bphi \|^2_{(\bSighat + \bLambda_{\ihat,0} + \bLamoff)^{-1}} \le f_{\ihat}(N^{-1} \bSighat).
\end{align*}

The final upper bound on the number of episodes collected and the lower bound on the minimum eigenvalue of the covariates follows from Lemma D.8 of \cite{wagenmaker2022instance}.

It remains to justify our bound on $\ihat$, \eqref{eq:xy_design_ihat_bound}. Note that by definition of $\optcov$, if we run for a total of $\bar{N}$ episodes, we can bound $\ihat \le \frac{1}{4} \log_2(\bar{N})$. However, we see that the bound on $\ihat$ given in \eqref{eq:xy_design_ihat_bound} upper bounds $\frac{1}{4} \log_2(\bar{N})$ for $\bar{N}$ the upper bound on the number of samples collected by \optcov stated above. Thus, our bound on $\ihat$ is valid. 
\end{proof}


\section{Leveraging Offline Data Yields a Provable Improvement}\label{sec:improvement}

\begin{proof}[Proof of \Cref{prop:ex_offline_helps}]
Let $\cM^i$, $i \in \{1,2\}$, denote the MDP with three states---an initial state $s_0$ and two other states $s_1,s_2$, and three actions---and:
\begin{align*}
& r_0(s_0,a_1) = 1, \quad r_0(s_0,a_2) = r_0(s_0,a_3) = 0 \\
& P_0(s_1|s_0,a_1) = 1-p, P_0(s_2|s_0,a_1)= p, \\
& P_0(s_1|s_0,a_2) = 1, P_0(s_2|s_0,a_2) = 0 \\
& P_0(s_1|s_0,a_3) = 0, P_0(s_2|s_0,a_3) = 1
\end{align*}
and
\begin{align*}
& r_1(s_1,a_1) = 1/2 + \Delta, \quad r_1(s_1,a_2) = r_1(s_1,a_3) = 1/2 \\
& r_1(s_2,a_i) = 1/2, \quad r_1(s_2,a_j) = 0, j \neq i.
\end{align*}
After taking an action at $h = 1$, the MDP terminates. We set $p = \sqrt{\epsilon}$ and $\Delta = 6 \epsilon$. As this is a tabular MDP, we can simply take the feature vectors to be the standard basis vectors and will have $d = 9$. In addition, as every state and action are easily reachable, we have $\lamminst = \Omega(1)$. 

Let $\frakDoff$ be a dataset formed by playing the logging policy $\pilog$ for $\poly(1/\epsilon)$ episodes, for $\pilog$ specified as:
\begin{align*}
& \pilog_0(a_1|s_0) = \pilog_0(a_2|s_0) = \pilog_0(a_3|s_0) = 1/3, \\
& \pilog_1(a_1|s_1) = \pilog_1(a_2|s_1) = \pilog_1(a_3|s_1) = 1/3, \\
& \pilog_1(a_1|s_2) = \pilog_1(a_2|s_2) = 0, \pilog_1(a_3|s_2) = 1.
\end{align*}
Note that with this logging policy $\frakDoff$ will contain $\poly(1/\epsilon)$ samples from every state-action pair except for $(s_2,a_1)$ and $(s_2,a_2)$, for which it contains 0 samples. Note also that $\Pr_{\cM^1,\pilog}[\cE] = \Pr_{\cM^2,\pilog}[\cE]$ for any event $\cE$---the measures induced by playing $\pilog$ on $\cM^1$ and $\cM^2$ are identical. 

The first two conclusions are then an immediate consequence of this construction, \Cref{lem:ex_online_insuff}, and \Cref{lem:ex_offline_insuff}.

Note that on both $\cM^1$ and $\cM^2$, the suboptimality of any policy $\pi$ is at least $1-\pi_0(a_1|s_0)$. Furthermore, we can bound
\begin{align*}
[\bphi_{\pi,1}]_{(s_2,a)} \le p \pi_0(a_1|s_0) + 1 - \pi_0(a_1|s_0) \le \sqrt{\epsilon} + 1 - \pi_0(a_1|s_0) \le 2 \max \{ \sqrt{\epsilon}, 1 - \pi_0(a_1|s_0)  \}.
\end{align*}
Thus, using our setting of $\frakDoff$, we have, for any $\pi$, 
\begin{align*}
\frac{\| \bphi_{\pi,1} \|_{(N \bLambda + \bLamoff^1)^{-1}}^2}{(\Vst_0 - \Vpi_0)^2 \vee \epsilon^2} & \le \frac{4\max \{ \epsilon, (1 - \pi_0(a_1|s_0))^2 \}}{\min_i N_{s_2,a_i}} \cdot \frac{1}{(1 - \pi_0(a_1|s_0))^2 \vee \epsilon^2} + \cO(1) \\
& \le \frac{4 }{\min_i N_{s_2,a_i}} \cdot \max \{ 1, \frac{1}{\epsilon} \} + \cO(1).
\end{align*}
To ensure that this is $\cO(1)$, we therefore only need to make $N_{s_2,a_i} = \Omega(1/\epsilon)$ for each $i$. Therefore, the additional online samples required to find an $\epsilon$-optimal policy is $\cO(1/\epsilon)$. The final conclusion then follows from \Cref{cor:main_complexity}. 
\end{proof}

\begin{lemma}\label{lem:ex_online_insuff}
Fix $\epsilon \le 1/4$ and assume that $\Delta > \frac{16 \epsilon}{3}$. For $\cM^1$ and $\cM^2$ constructed as in the proof of \Cref{prop:ex_offline_helps}, any purely online algorithm must take at least $\Omega(\frac{1}{\Delta^2} \cdot \log \frac{1}{2.4 \delta})$ episodes to identify an $\epsilon$-optimal policy with probability at least $1-\delta$.
\end{lemma}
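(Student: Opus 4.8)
The plan is a two-point, change-of-measure lower bound localized at the state $s_1$. Fix any purely online algorithm that returns an $\epsilon$-optimal policy with probability at least $1-\delta$ on every MDP sharing the state–action skeleton of $\cM^1$; it suffices to show that its (random) number of episodes $\tau$ satisfies $\Exp_{\cM^1}[\tau] = \BigOmega{\tfrac{1}{\Delta^2}\log\tfrac{1}{2.4\delta}}$, since $\cM^1$ and $\cM^2$ have identical reward and transition structure at $s_1$ and the argument on $\cM^2$ is verbatim. Introduce a confusing instance $\cN$ that agrees with $\cM^1$ in every transition kernel and every reward distribution except that the mean reward of $(s_1,a_1)$ is lowered from $\tfrac12+\Delta$ to $\tfrac12-\Delta$, so that on $\cN$ the action $a_1$ is suboptimal in $s_1$. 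As $\Delta\le\tfrac12$ (needed already for $\cM^1$ to be a valid MDP), $\cN$ is a valid tabular — hence linear — MDP with all states and actions reachable, so \Cref{asm:full_rank_cov} holds for it; crucially, $\cM^1$ and $\cN$ differ only in the reward distribution of the single pair $(s_1,a_1)$.

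The second step is a test event separating the two instances. Any $\epsilon$-optimal policy must place nearly all of its mass on $a_1$ at $s_0$ — playing $a_2$ or $a_3$ there costs on the order of $1-p\Delta$ in value, far above $\epsilon$ — and therefore reaches $s_1$ with probability bounded below by a constant; combined with $p=\sqrt\epsilon$ small and $\Delta>16\epsilon/3$, a short value computation shows that any policy $\epsilon$-optimal on $\cM^1$ has $\pi_1(a_1\mid s_1)\ge\tfrac12$, while any policy $\epsilon$-optimal on $\cN$ has $\pi_1(a_1\mid s_1)<\tfrac12$. Consequently the event $E:=\{\pihat_1(a_1\mid s_1)\ge\tfrac12\}$, measurable with respect to the algorithm's history up to $\tau$, satisfies $\Pr_{\cM^1}[E]\ge 1-\delta$ and $\Pr_{\cN}[E]\le\delta$.

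The third step is the change of measure. Writing $N_{s_1,a_1}$ for the number of episodes in which $a_1$ is played at $s_1$, the transportation inequality underlying sequential lower bounds gives
\begin{align*}
\kl\big(\Pr_{\cM^1}[E],\, \Pr_{\cN}[E]\big)\;\le\;\Exp_{\cM^1}\!\big[N_{s_1,a_1}\big]\cdot\kl\!\Big(\tfrac12+\Delta,\ \tfrac12-\Delta\Big),
\end{align*}
where $\kl(p,q)$ denotes the relative entropy between $\bern(p)$ and $\bern(q)$, and only $(s_1,a_1)$ contributes because it is the sole pair at which $\cM^1$ and $\cN$ disagree. The left side is at least $\kl(1-\delta,\delta)\ge\log\tfrac{1}{2.4\delta}$, and $\kl(\tfrac12+\Delta,\tfrac12-\Delta)\le C\Delta^2$ for a universal constant $C$ in the stated regime (where $\Delta$ stays bounded away from $\tfrac12$). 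Rearranging, $\Exp_{\cM^1}[N_{s_1,a_1}]\ge\tfrac{1}{C\Delta^2}\log\tfrac{1}{2.4\delta}$; since $s_1$ is reached at most once per episode, $N_{s_1,a_1}\le\tau$, so $\Exp_{\cM^1}[\tau]=\BigOmega{\tfrac{1}{\Delta^2}\log\tfrac{1}{2.4\delta}}$, as claimed.

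The only genuinely delicate part is the second step: checking that, for an \emph{arbitrary} returned policy and in the regime $\epsilon\le 1/4$, $\Delta>16\epsilon/3$, $p=\sqrt\epsilon$, being $\epsilon$-optimal actually pins $\pi_1(a_1\mid s_1)$ on the correct side of $\tfrac12$. This needs a careful accounting of the policy's mass at $s_0$, the stochastic split $s_0\to\{s_1,s_2\}$ under $a_1$, and the action at $s_2$, verifying that the suboptimality generated at $s_1$ alone exceeds $\epsilon$; the numerical hypotheses are exactly what give disjointness of the two $\epsilon$-optimal policy sets with a margin. The remaining pieces — validity of $\cN$, the transportation inequality, and $\kl(1-\delta,\delta)\ge\log\frac{1}{2.4\delta}$ — are routine.
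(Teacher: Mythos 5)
Your proposal is correct and follows essentially the same route as the paper: the key step in both is showing that $\epsilon$-optimality together with $\Delta > \tfrac{16\epsilon}{3}$ forces $\pihat_1(a_1\mid s_1) > \tfrac12$, thereby turning the PAC guarantee into a best-arm-identification guarantee at $s_1$. The only difference is that the paper then invokes the standard bandit lower bound of Kaufmann et al.\ as a black box, whereas you unpack that citation into an explicit two-point change-of-measure with the flipped instance $\cN$ — a self-contained but equivalent instantiation of the same argument.
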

\begin{proof}
Assume our algorithm has returned some policy $\pihat$. Let $\ihat = \argmax_a \pihat_1(s_1,a)$. We show that if $\pihat$ is $\epsilon$-optimal, then it must be the case that $\ihat = a_1$.

Note that for $\epsilon \le 1/4$, for $\pihat$ to be $\epsilon$-optimal we must have that $\pihat_0(a_1|s_0) \ge 3/4$ since, even if $\pihat$ plays optimally in $s_1$ and $s_2$, if $\pihat_0(a_1|s_0) < 3/4$ it will be at least $1/4$-suboptimal. 

Now assume that $\pihat$ is suboptimal in state $s_1$ by $\xi$. Then it follows that the total contribution to the suboptimality from state $s_1$ is at least $\frac{3}{4} (1-p) \xi$. Thus, if we assume the policy is $\epsilon$-optimal, it follows that $\xi \le \frac{4 \epsilon}{3 (1-p)} \le \frac{8 \epsilon}{3}$ where the last inequality holds assuming that $p \le 1/2$. Note that
\begin{align*}
\xi = \Delta(1 - \pihat_1(a_1|s_1))
\end{align*}
so it follows that
\begin{align*}
\Delta ( 1 - \pihat_1(a_1|s_1)) \le \frac{8 \epsilon}{3} \iff \pihat_1(a_1|s_1) \ge 1 - \frac{8 \epsilon}{3 \Delta}.
\end{align*}
It follows that for $\Delta > \frac{16 \epsilon}{3}$, we have $\pihat_1(a_1|s_1) > 1/2$, which implies that $\ihat = a_1$. In other words, if $\pihat$ is $\epsilon$-optimal with probability at least $1-\delta$, $\ihat = a_1$ with probability at least $1-\delta$.

By standard lower bounds on bandits (see e.g. \cite{kaufmann2016complexity}), the complexity of identifying the best action in state $s_1$ with probability at least $1-\delta$ scales as $\Omega(\frac{1}{\Delta^2} \cdot \log \frac{1}{2.4 \delta})$. As the above procedure is able to identify the best arm in $s_1$ with probability at least $1-\delta$, it follows that we must have collected at least $\Omega(\frac{1}{\Delta^2} \cdot \log \frac{1}{2.4 \delta})$ samples from $s_1$, which serves as a lower on the total complexity. 
\end{proof}

\begin{lemma}\label{lem:ex_offline_insuff}
For $\cM^1$, $\cM^2$, and $\frakDoff$ constructed as in the proof of \Cref{prop:ex_offline_helps}, any algorithm which returns some policy $\pihat$ without further exploration must have:
\begin{align*}
\max_{i \in \{ 1,2 \}} \Exp_{\frakDoff \sim \cM^i}[\Vst_0(\cM^i) - V_0^{\pihat}(\cM^i)] \ge \frac{3 \sqrt{\epsilon}}{16} .
\end{align*}
\end{lemma}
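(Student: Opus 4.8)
The plan is a two-instance indistinguishability argument. The key structural fact is that $\cM^1$ and $\cM^2$ differ \emph{only} in the reward distributions at the pairs $(s_2,a_1)$ and $(s_2,a_2)$: the transitions $P_0$, the reward at $s_0$, the rewards at $s_1$, and the reward at $(s_2,a_3)$ are all identical across the two instances. Since the logging policy $\pilog$ plays $a_3$ deterministically in $s_2$, a rollout of $\pilog$ never samples $(s_2,a_1)$ or $(s_2,a_2)$; as $\pilog$ is a fixed Markov policy and the portion of the MDP it interacts with is shared, the induced dataset has the same law on both instances, i.e.\ $\Law_{\cM^1}(\frakDoff)=\Law_{\cM^2}(\frakDoff)$. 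Hence for any (possibly randomized) map $\frakDoff\mapsto\pihat$ the law of $\pihat$ is the same under $\cM^1$ and $\cM^2$, so
\[
\textstyle\sum_{i=1}^2 \Exp_{\frakDoff\sim\cM^i}\big[\Vst_0(\cM^i)-V_0^{\pihat}(\cM^i)\big]=\Exp_{\pihat}\big[\textstyle\sum_{i=1}^2\big(\Vst_0(\cM^i)-V_0^{\pihat}(\cM^i)\big)\big],
\]
where the right-hand expectation is over the common law of $\pihat$; in particular the maximum over $i$ is at least half of this quantity.

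It then suffices to prove the deterministic bound $\sum_{i=1}^2\big(\Vst_0(\cM^i)-V_0^{\pi}(\cM^i)\big)\ge p/2=\sqrt{\epsilon}/2$ for \emph{every} policy $\pi$. First I would compute the optimal value: on either instance the optimal policy plays $a_1$ at $s_0$ (collecting reward $1$ and reaching $s_1$ with probability $1-p$), then the best action in $s_1$ and in $s_2$, giving $\Vst_0(\cM^i)=3/2+(1-p)\Delta$. Next I would write the value of an arbitrary policy as $V_0^{\pi}(\cM^i)=q_1+A(1/2+u_1\Delta)+B\,v_i/2$, where $q_1=\pi_0(a_1\mid s_0)$, $A=q_1(1-p)+q_2$ is the probability of reaching $s_1$, $B=q_1 p+q_3=1-A$ the probability of reaching $s_2$, $u_1$ the probability $\pi$ assigns to $a_1$ at $s_1$, and $v_i$ the probability it assigns to the ``good'' action at $s_2$ in $\cM^i$; a short remark notes that for history-dependent policies one replaces $u_1,v_i$ by the appropriate conditional probabilities and the same bound goes through. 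Using $u_1\le1$ and $v_1+v_2\le1$, summing over $i$ gives $\sum_i V_0^{\pi}(\cM^i)\le 2q_1+A(1+2\Delta)+B/2$, which after substituting $A=q_1(1-p)+q_2$, $B=q_1 p+q_3$ is a linear function of $(q_1,q_2,q_3)$ over the simplex and hence maximized at a vertex; one checks the coefficient of $q_1$ is the largest, so the maximum is at $q_1=1$ (where $A=1-p$), with value $3+2\Delta-p/2-2p\Delta$. Subtracting from $\sum_i\Vst_0(\cM^i)=3+2\Delta-2p\Delta$ yields $\sum_i\big(\Vst_0(\cM^i)-V_0^{\pi}(\cM^i)\big)\ge p/2$. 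Combining with the halving step, $\max_i\Exp_{\frakDoff\sim\cM^i}[\Vst_0(\cM^i)-V_0^{\pihat}(\cM^i)]\ge p/4=\sqrt{\epsilon}/4\ge 3\sqrt{\epsilon}/16$.

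The only delicate point is the simplex optimization in the last step: the joint constraint $A=q_1(1-p)+q_2$ must be kept, so that the vertex $q_1=1$ forces $A=1-p$ rather than $A=1$ --- bounding $q_1\le1$ and $A\le1$ separately discards precisely the $\Theta(p)$ slack being exploited and, in the regime $\Delta=6\epsilon$ with $\epsilon$ near $1/20$, even gives a vacuous (negative) bound. Everything else --- the indistinguishability of $\frakDoff$, the value decomposition, and the optimal-value computation --- is routine. The hypotheses $p=\sqrt{\epsilon}$, $\Delta=6\epsilon$, $\epsilon\le1/20$ are used only to keep $p,\Delta$ small enough that $a_1$ is the optimal action at $s_0$ and that the linear objective on the simplex attains its maximum at the $q_1=1$ vertex.
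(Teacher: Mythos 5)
Your proof is correct and follows the same strategy as the paper's: both rest on the observation that $\frakDoff$ has identical law under $\cM^1$ and $\cM^2$ (so the returned policy does too), which reduces the problem to lower-bounding $\sum_{i}\big(\Vst_0(\cM^i)-V_0^{\pi}(\cM^i)\big)$ uniformly over policies $\pi$. The only difference is in that last step---the paper cases on whether $\pihat_0(a_1|s_0)\ge 3/4$ and isolates the contribution of state $s_2$ (phrasing the indistinguishability via a total-variation bound with a test function), whereas you optimize the summed value exactly over the simplex at $s_0$---which is why you obtain the marginally better constant $\sqrt{\epsilon}/4$ in place of $3\sqrt{\epsilon}/16$.
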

\begin{proof}
Note that for $\epsilon \le 1/4$, for $\pihat$ to be $\epsilon$-optimal we must have that $\pihat_0(a_1|s_0) \ge 3/4$ since, even if $\pihat$ plays optimally in $s_1$ and $s_2$, if $\pihat_0(a_1|s_0) < 3/4$ it will be at least $1/4$-suboptimal. With $p = \sqrt{\epsilon}$, this implies that we will transition to $s_2$ with probability at least $\frac{3}{4} p = \frac{3}{4} \sqrt{\epsilon}$. Note that on $\cM^i$ the suboptimality of policy $\pihat$ in state $s_2$ is given by
\begin{align*}
\frac{1}{2} ( 1- \pihat_1(a_i|s_2))
\end{align*}
so the total suboptimality that $s_2$ contributes is at least
\begin{align*}
\frac{3}{8} \sqrt{\epsilon} ( 1- \pihat_1(a_i|s_2)).
\end{align*}

It follows that we can lower bound
\begin{align*}
\max_{i \in \{ 1,2 \}} \Exp_{\frakDoff \sim \cM^i}[\Vst_0(\cM^i) - V_0^{\pihat}(\cM^i)] \ge \frac{3 \sqrt{\epsilon}}{16} \Big ( \Exp_{\frakDoff \sim \cM^1}[ 1- \pihat_1(a_1|s_2)] + \Exp_{\frakDoff \sim \cM^2}[ 1- \pihat_1(a_2|s_2)] \Big ).
\end{align*}
Define the function $\psi(\frakDoff) = \I \{ a \neq a_1 \}$ for $a \sim \pihat_1(\cdot \mid s_2)$. Then
\begin{align*}
\Exp_{\frakDoff \sim \cM^1}[ 1- \pihat_1(a_1|s_2)] = \Exp_{\frakDoff \sim \cM^1}[\psi(\frakDoff) = 1]
\end{align*}
and
\begin{align*}
\Exp_{\frakDoff \sim \cM^2}[ 1- \pihat_1(a_2|s_2)] \ge \Exp_{\frakDoff \sim \cM^2}[\psi(\frakDoff) = 0],
\end{align*}
so the total suboptimality is lower bounded by
\begin{align*}
& \frac{3 \sqrt{\epsilon}}{16} \Big ( \Exp_{\frakDoff \sim \cM^1}[\psi(\frakDoff) = 1] + \Exp_{\frakDoff \sim \cM^2}[\psi(\frakDoff) = 0] \Big )  \ge \frac{3 \sqrt{\epsilon}}{16} \Big ( 1 - \TV(\Pr_{\frakDoff \sim \cM^1}, \Pr_{\frakDoff \sim \cM^2}) \Big ).
\end{align*}
However, by the construction of $\frakDoff$, the distribution of $\frakDoff$ is identical under $\cM^1$ and $\cM^2$, so $\TV(\Pr_{\frakDoff \sim \cM^1}, \Pr_{\frakDoff \sim \cM^2}) = 0$, which proves the result. 
\end{proof}

\section{The Cost of Verifiability}\label{sec:verifiability_pfs}

\begin{proof}[Proof of \Cref{prop:verification_lb}]

Assume that some procedure returns a possibly random policy $\pihat$. Let $\ihat = \argmax_i \pihat(i)$. Then if $\pihat$ is $\epsilon$-optimal, it follows that $\ihat = 1$, since the suboptimality of a policy is given by $(1-\pihat(1)) \cdot 3 \epsilon$, so if $\pihat$ is $\epsilon$-optimal, it follows that $\pihat(1) \ge 2\epsilon/3$. 

We can then apply standard lower bounds for multi-armed bandits (see e.g. \cite{kaufmann2016complexity}) to get that any $(\epsilon,\delta)$-PAC algorithm must collect at least $\Omega(\frac{1}{9\epsilon^2} \log \frac{1}{2.4 \delta})$ samples from each arm $2-A$. 
\end{proof}

\subsection{Proof of \Cref{prop:offline_verifiability}}

\begin{algorithm}
\begin{algorithmic}[1]
\State \textbf{input:} tolerance $\epsilon$, confidence $\delta$, policy set $\Pi$
\State  $\Pi_{1} \leftarrow \Pi$, $\bphihat^{1}_{\pi,1} \leftarrow \Exp_{a \sim \pi_1(\cdot | s_1)} [ \bphi(s_1,a)], \forall \pi \in \Pi$, $\lambda \leftarrow 1/d$
\For{$h=1,2,\ldots,H$}
\For{$\pi \in \Pi_{1}$}
			\State $\bphihat_{\pi,h+1} \leftarrow \Big ( \sum_{(s_h,a_h,r_h,s_{h+1}) \in \frakDoff^h } \bphi_{\pi,h+1}(s_{h+1}) \bphi(s_h,a_h)^\top ( \bLamoff^h)^{-1} \Big ) \bphihat_{\pi,h}$
		\EndFor
		\State $\bthetahat_h \leftarrow ( \bLamoff^h)^{-1} \sum_{(s_h,a_h,r_h,s_{h+1}) \in \frakDoff^h } \bphi(s_h,a_h) r_h$
\EndFor
\For{$\ell = 1,2, \ldots, \lceil \log \frac{4}{\epsilon} \rceil$}
	\State $\epsilon_\ell \leftarrow 2^{-\ell}$, $\beta_\ell \leftarrow  H^4 \left ( 2 \sqrt{d \log \frac{\lambda + \Toff /d}{\lambda}+ 2\log \frac{2 H^2 | \Pi| \ell^2}{\delta}}  + \sqrt{d \lambda}  \right )^2$
	\For{$h = 1,2,\ldots,H$}
		\If{$\max_{\pi \in \Pi_\ell} \| \bphihat_{\pi,h} \|_{(\bLamoff^h)^{-1}}^2 > \epsilon_\ell^2/\beta_\ell$ or $\lammin(\bLamoff^h) < \log \frac{4 H^2 |\Pi| \ell^2}{\delta}$}\label{line:full_offline_check_data}
		\State \textbf{return} $\emptyset$
		\EndIf
	\EndFor
	\State Set
	\begin{align*}
	\Pi_{\ell+1} \leftarrow  \Pi_\ell \backslash \Big \{ \pi \in \Pi_\ell \ : \ \Vhat_0^\pi < \sup_{\pi' \in \Pi_\ell} \Vhat_0^{\pi'} - 2\epsilon_\ell \Big \} \quad \text{for} \quad \Vhat_0^\pi := \tsum_{h=1}^H \inner{\bphihat_{\pi,h}}{\bthetahat_h}
	\end{align*}
	\If{$|\Pi_{\ell+1}| = 1$} \textbf{return} $\pi \in \Pi_{\ell+1}$\EndIf
\EndFor
\State \textbf{return} any $\pi \in \Pi_{\ell+1}$
\end{algorithmic}
\caption{Verifiable Offline RL}
\label{alg:full_offline_pedel}
\end{algorithm}

\begin{proof}[Proof of \Cref{prop:offline_verifiability}]
We prove \Cref{prop:offline_verifiability} for the algorithm outlined in \Cref{alg:full_offline_pedel}.

Note that the condition
\begin{align*}
\max_{\pi \in \Pi_\ell} \| \bphihat_{\pi,h}^\ell \|_{(\bLamoff^h)^{-1}}^2 \le \epsilon_\ell^2/\beta_\ell \quad \text{and} \quad \lammin(\bLamoff^h) \ge \log \frac{4 H^2 |\Pi| \ell^2}{\delta}
\end{align*}
is precisely the condition required on \Cref{line:pedel_fw_explore} of epoch $\ell$ and step $h$ of \algname. Thus, if \Cref{alg:full_offline_pedel} returns some policy $\pihat \neq \emptyset$, it is $\epsilon$-optimal with probability at least $1-\delta$ by an argument identical to the proof of \Cref{cor:main_complexity}. We omit details for the sake of brevity. As we do not need to run any additional exploration, the $\cO(1/\epsilon^{3/2})$ term present in \Cref{cor:main_complexity} is no longer incurred here. 

It remains to show that the coverage condition given in \Cref{prop:offline_verifiability}, \eqref{eq:offline_verifiability}, suffices to ensure that the if statement on \Cref{line:full_offline_check_data} is never true. That $\lammin(\bLamoff^h) \ge \log \frac{4 H^2 |\Pi| \ell^2}{\delta}$ for all $h$ is immediate by assumption. We next prove that $\max_{\pi \in \Pi_\ell} \| \bphihat_{\pi,h}^\ell \|_{(\bLamoff^h)^{-1}}^2 \le \epsilon_\ell^2/\beta_\ell$ inductively. The base case is immediate by assumption. Now assume that, for all $h' < h$,
\begin{align*}
\max_{\pi \in \Pi_\ell} \| \bphihat_{\pi,h'}^\ell \|_{(\bLamoff^{h'})^{-1}}^2 \le \epsilon_\ell^2/\beta_\ell.
\end{align*}
Then by \Cref{lem:est_good_event}, we have that on the event $\cEest^{\ell,h}$,
\begin{align*}
\| \bphihat_{\pi,h}^\ell - \bphi_{\pi,h} \|_2 & \le d \sum_{i=1}^{h} \left ( 2 \sqrt{\log \frac{\det \bLambda_{i,\ell}}{\lambda^d} + 2\log \frac{2 H^2 d | \Pi| \ell^2}{\delta}}  + \sqrt{d \lambda}  \right ) \cdot \frac{\epsilon_\ell^2}{\beta_\ell}  \le \frac{d \epsilon_\ell}{2H}
\end{align*}
where the last inequality follows by our choice of $\beta_\ell$. 

Now note that
\begin{align*}
\max_{\pi \in \Pi_\ell} \| \bphihat_{\pi,h}^\ell \|_{(\bLamoff^h)^{-1}}^2 & \le \max_{\pi \in \Pi_\ell} 2\| \bphi_{\pi,h} \|_{(\bLamoff^h)^{-1}}^2 + 2\| \bphihat_{\pi,h}^\ell  - \bphi_{\pi,h} \|_{(\bLamoff^h)^{-1}}^2 \\
& \le \max_{\pi \in \Pi_\ell} 2\| \bphi_{\pi,h} \|_{(\bLamoff^h)^{-1}}^2 + \frac{2\| \bphihat_{\pi,h}^\ell  - \bphi_{\pi,h} \|_2^2}{\lammin(\bLamoff^h)} \\
& \le \max_{\pi \in \Pi_\ell} 2\| \bphi_{\pi,h} \|_{(\bLamoff^h)^{-1}}^2 + \frac{d^2 \epsilon_\ell^2}{2 H^2\lammin(\bLamoff^h)} \\
& \le \max_{\pi \in \Pi(4\epsilon_\ell)} 2\| \bphi_{\pi,h} \|_{(\bLamoff^h)^{-1}}^2 + \frac{d^2 \epsilon_\ell^2}{2 H^2\lammin(\bLamoff^h)}
\end{align*}
where the last inequality follows by \Cref{lem:correctness}, which gives that $\Pi_\ell \subseteq \Pi(4\epsilon_\ell)$. It follows that as long as
\begin{align*}
\max_{\pi \in \Pi(4\epsilon_\ell)} 2\| \bphi_{\pi,h} \|_{(\bLamoff^h)^{-1}}^2 \le \frac{\epsilon_\ell^2}{2\beta_\ell} \quad \text{and} \quad \frac{d^2 \epsilon_\ell^2}{2 H^2\lammin(\bLamoff^h)} \le \frac{\epsilon_\ell^2}{2\beta_\ell}
\end{align*}
the inductive hypothesis holds. The latter condition follows by assumption. The former also holds since
\begin{align*}
\max_{\pi \in \Pi(4\epsilon_\ell)} 2\| \bphi_{\pi,h} \|_{(\bLamoff^h)^{-1}}^2 \le \frac{\epsilon_\ell^2}{2\beta_\ell} &  \iff \max_{\pi \in \Pi(4\epsilon_\ell)} \frac{\| \bphi_{\pi,h} \|_{(\bLamoff^h)^{-1}}^2}{\epsilon_\ell^2} \le \frac{1}{4 \beta_\ell} \\
&  \iff \max_{\pi \in \Pi(4\epsilon_\ell)} \frac{\| \bphi_{\pi,h} \|_{(\bLamoff^h)^{-1}}^2}{\epsilon_\ell^2 \vee (\Vst_0 - \Vpi_0)^2/16} \le \frac{1}{4 \beta_\ell}
\end{align*}
but this is implied by our coverage assumption, \eqref{eq:offline_verifiability}. By a union bound, it follows that \eqref{eq:offline_verifiability} is sufficient to guarantee that for each $h$ and $\ell$, with high probability,
\begin{align*}
\max_{\pi \in \Pi_\ell} \| \bphihat_{\pi,h}^\ell \|_{(\bLamoff^h)^{-1}}^2 \le \epsilon_\ell^2/\beta_\ell \quad \text{and} \quad \lammin(\bLamoff^h) \ge \log \frac{4 H^2 |\Pi| \ell^2}{\delta}.
\end{align*}
\end{proof}

\subsection{Policy Verification}\label{sec:policy_verification_pfs}

\begin{lemma}\label{lem:linear_mdp_est_transition_verification}

Assume that we have some dataset $\frakD = \{ (s_{h-1,\tau}, a_{h-1,\tau}, r_{h-1,\tau}, s_{h,\tau}) \}_{\tau = 1}^K$ satisfying \Cref{asm:offline_data}.
Denote $\bphi_{h-1,\tau} = \bphi(s_{h-1,\tau},a_{h-1,\tau})$ and $\bLambda_{h-1} = \sum_{\tau=1}^K \bphi_{h-1,\tau} \bphi_{h-1,\tau}^\top + \lambda I$.

Assume that $\Pi$ satisfies \Cref{asm:policy_set_cover}, and let $\Picov^\gamma$ denote the corresponding cover of $\Pi$. For each $\pi$, let 
\begin{align*}
\cThat_{\pi,h} = \left (  \sum_{\tau=1}^K \bphi_{\pi,h}(s_{h,\tau}) \bphi_{h-1,\tau}^\top \right ) \bLambda_{h-1}^{-1}.
\end{align*}
Consider some $\pihat \in \Pi$, where $\pihat$ might be correlated with $\frakD$. 
Fix $\bu \in \R^d$ and $\bv \in \R^d$ satisfying $|\bv^\top \bphi_{\pi,h}(s)| \le 1$ for all $s$.
Then with probability at least $1-\delta$, we can bound, for all $\pi \in \Picov^\gamma \cup \{ \pihat \}$:
\begin{align*}
| \bv^\top (\cT_{\pi,h} - \cThat_{\pi,h}) \bu | \le \left ( 2 \sqrt{ \log \frac{\det \bLambda_{h-1}}{\lambda^d} + 2\log \frac{\Ncov(\Pi,\gamma)}{\delta}} + \sqrt{\lambda} \| \cT_{\pi,h}^\top \bv \|_2 \right ) \cdot \| \bu \|_{\bLambda_{h-1}^{-1}} + \frac{2K \cdot \gamma}{\sqrt{\lambda}} .
\end{align*}
\end{lemma}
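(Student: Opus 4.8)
The plan is to reduce the statement to the fixed-policy bound of \Cref{lem:linear_mdp_est_transition} via a union bound over the finite cover $\Picov^\gamma$, and then handle the possibly-data-dependent policy $\pihat$ by approximating it by its nearest element in the cover. First I would apply \Cref{lem:linear_mdp_est_transition} to each \emph{fixed} policy $\pi \in \Picov^\gamma$ with failure probability $\delta / \Ncov(\Pi,\gamma)$; since $|\Picov^\gamma| \le \Ncov(\Pi,\gamma)$, a union bound gives that with probability at least $1-\delta$, simultaneously for all $\pi \in \Picov^\gamma$,
\begin{align*}
| \bv^\top (\cT_{\pi,h} - \cThat_{\pi,h}) \bu | \le \left ( 2 \sqrt{ \log \tfrac{\det \bLambda_{h-1}}{\lambda^d} + 2\log \tfrac{\Ncov(\Pi,\gamma)}{\delta}} + \sqrt{\lambda} \| \cT_{\pi,h}^\top \bv \|_2 \right ) \cdot \| \bu \|_{\bLambda_{h-1}^{-1}}.
\end{align*}
This already yields the claim for every $\pi$ in the cover, without the extra $\tfrac{2K\gamma}{\sqrt\lambda}$ term.

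Next I would handle $\pihat$. By \Cref{asm:policy_set_cover}, there exists $\pitil \in \Picov^\gamma$ with $\| \bphi_{\pihat,h}(s) - \bphi_{\pitil,h}(s) \|_2 \le \gamma$ for all $s,h$. The key point is that $\pitil$ is a fixed (non-random) element of the cover, so the high-probability bound above applies to it even though $\pitil$ is chosen as a function of $\pihat$ (the event holds for \emph{all} cover elements simultaneously). I would then write a triangle-inequality decomposition
\begin{align*}
| \bv^\top (\cT_{\pihat,h} - \cThat_{\pihat,h}) \bu | \le | \bv^\top (\cT_{\pitil,h} - \cThat_{\pitil,h}) \bu | + | \bv^\top (\cT_{\pihat,h} - \cT_{\pitil,h}) \bu | + | \bv^\top (\cThat_{\pihat,h} - \cThat_{\pitil,h}) \bu |,
\end{align*}
bound the first term by the displayed cover bound, and control the remaining two ``approximation'' terms using $\gamma$-closeness.

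For the third (empirical) term: $\cThat_{\pihat,h} - \cThat_{\pitil,h} = \big( \sum_{\tau=1}^K (\bphi_{\pihat,h}(s_{h,\tau}) - \bphi_{\pitil,h}(s_{h,\tau})) \bphi_{h-1,\tau}^\top \big) \bLambda_{h-1}^{-1}$, so $\bv^\top(\cThat_{\pihat,h} - \cThat_{\pitil,h})\bu = \sum_\tau \big(\bv^\top(\bphi_{\pihat,h}(s_{h,\tau}) - \bphi_{\pitil,h}(s_{h,\tau}))\big)\, \bphi_{h-1,\tau}^\top \bLambda_{h-1}^{-1}\bu$. Each scalar coefficient is at most $\|\bv\|_2 \gamma \le \gamma$ (taking $\|\bv\|_2 \le 1$, which holds under the normalization in the statement, or absorbing $\|\bv\|_2$), and $|\bphi_{h-1,\tau}^\top \bLambda_{h-1}^{-1}\bu| \le \|\bphi_{h-1,\tau}\|_{\bLambda_{h-1}^{-1}} \|\bu\|_{\bLambda_{h-1}^{-1}} \le \lambda^{-1/2}\|\bu\|_{\bLambda_{h-1}^{-1}}$ since $\bLambda_{h-1} \succeq \lambda I$ and $\|\bphi_{h-1,\tau}\|_2 \le 1$; summing over $\tau$ gives $\le K\gamma\|\bu\|_{\bLambda_{h-1}^{-1}}/\sqrt\lambda$. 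Here I may just bound $\|\bu\|_{\bLambda_{h-1}^{-1}} \le \|\bu\|_2/\sqrt\lambda$, or — to match the stated $\tfrac{2K\gamma}{\sqrt\lambda}$ exactly — assume $\|\bu\|_2 \le 1$ / absorb it, which is the convention used elsewhere in the paper when these bounds are invoked. The second (population) term $\bv^\top(\cT_{\pihat,h} - \cT_{\pitil,h})\bu = \int \bv^\top(\bphi_{\pihat,h}(s) - \bphi_{\pitil,h}(s))\, \rmd\bmu_{h-1}(s)^\top \bu$ is bounded analogously: the integrand's scalar factor is $\le \gamma$ in magnitude, and $\| |\bmu_{h-1}|(\cS)\|_2 \le \sqrt d$ together with $\bLambda_{h-1} \succeq \lambda I$ controls the rest, giving a term of the same order $\lesssim \sqrt d\,\gamma\,\|\bu\|_{\bLambda_{h-1}^{-1}}/\sqrt\lambda$. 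Combining the three pieces and collecting the two $O(K\gamma/\sqrt\lambda)$-type contributions into the single slack term $\tfrac{2K\gamma}{\sqrt\lambda}$ (absorbing the smaller population term, since typically $K \gg \sqrt d$) yields the claim.

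The main obstacle is purely bookkeeping: making sure the cover element $\pitil$ used for $\pihat$ is genuinely fixed so that the union bound legitimately covers it despite $\pihat$'s dependence on the data, and tracking the normalization conventions on $\bu,\bv$ so that the approximation terms land at exactly $\tfrac{2K\gamma}{\sqrt\lambda}$ rather than some constant multiple. There is no probabilistic subtlety beyond the one already handled in \Cref{lem:linear_mdp_est_transition}; the added ingredient is the deterministic $\gamma$-perturbation argument, which is routine once the decomposition is set up.
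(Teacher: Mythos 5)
Your proposal is correct and follows essentially the same route as the paper: a union bound over the $\gamma$-cover (inflating the log term to $\log\frac{\Ncov(\Pi,\gamma)}{\delta}$) followed by a deterministic $\gamma$-perturbation argument that approximates $\pihat$ by its nearest cover element $\pitil$ and pays $O(K\gamma/\sqrt{\lambda})$. The only difference is bookkeeping---the paper perturbs inside the self-normalized martingale sum (so the conditional-expectation and realized feature differences each contribute $\gamma$ per sample, giving $2K\gamma/\sqrt{\lambda}$ directly), whereas you split $\cT_{\pihat,h}-\cThat_{\pihat,h}$ into population and empirical differences separately; these decompositions are algebraically equivalent.
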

\begin{proof}
Construct a filtration as in \Cref{lem:linear_mdp_est_transition}.
Let $\cE$ denote the event that, for each $\pi \in \Picov^\gamma$, 
\begin{align*}
\left \| \sum_{\tau=1}^K \bv^\top \left (  \Exp[\bphi_{\pi,h}(s_{h,\tau}) | \cF_{h-1,\tau-1}]  - \bphi_{\pi,h}(s_{h,\tau}) \right ) \bphi_{h-1,\tau}^\top  \bLambda_{h-1}^{-1/2} \right \|_2 \le 2 \sqrt{ \log \frac{\det \bLambda_{h-1}}{\lambda^d} + 2\log \frac{\Ncov(\Pi,\gamma)}{\delta}}.
\end{align*}
Then using the same argument as in the proof of \Cref{lem:linear_mdp_est_transition}, we have $\Pr[\cE] \ge 1-\delta$, and on $\cE$, we can immediately bound
\begin{align*}
| \bv^\top (\cT_{\pi,h} - \cThat_{\pi,h}) \bu | \le \left ( 2 \sqrt{ \log \frac{\det \bLambda_{h-1}}{\lambda^d} + 2\log \frac{\Ncov(\Pi,\gamma)}{\delta}} + \sqrt{\lambda} \| \cT_{\pi,h}^\top \bv \|_2 \right ) \cdot \| \bu \|_{\bLambda_{h-1}^{-1}}, \quad \forall \pi \in \Picov^\gamma.
\end{align*}
 It remains to bound the error in the estimate $\cThat_{\pihat,h}$. Following the proof of \Cref{lem:linear_mdp_est_transition}, we have that
\begin{align*}
| \bv^\top (\cT_{\pihat,h} -  \cThat_{\pihat,h}) \bu | & \le \| \bu \|_{\bLambda_{h-1}^{-1}} \cdot \underbrace{\left \| \sum_{\tau=1}^K \bv^\top \left (  \Exp[\bphi_{\pihat,h}(s_{h,\tau}) | \cF_{h-1,\tau-1}]  - \bphi_{\pihat,h}(s_{h,\tau}) \right ) \bphi_{h-1,\tau}^\top  \bLambda_{h-1}^{-1/2} \right \|_2}_{(a)} + \Big | \lambda \bv^\top \cT_{\pihat,h} \bLambda_{h-1}^{-1} \bu \Big | .
\end{align*}
Let $\pitil \in \Picov^\gamma$ denote the policy satisfying
\begin{align*}
\| \bphi_{\pihat,h}(s) - \bphi_{\pitil,h}(s) \|_2 \le \gamma, \quad \forall s, h.
\end{align*}
Note that such a $\pitil$ is guaranteed to exist under \Cref{asm:policy_set_cover}. 
We can bound
\begin{align*}
(a) & = \bigg \| \sum_{\tau=1}^K \bv^\top \left (  \Exp[\bphi_{\pitil,h}(s_{h,\tau}) | \cF_{h-1,\tau-1}]  - \bphi_{\pitil,h}(s_{h,\tau}) \right ) \bphi_{h-1,\tau}^\top  \bLambda_{h-1}^{-1/2} \\
& \qquad + \sum_{\tau=1}^K \bv^\top \left (  \Exp[\bphi_{\pihat,h}(s_{h,\tau}) - \bphi_{\pitil,h}(s_{h,\tau}) | \cF_{h-1,\tau-1}]  - \bphi_{\pihat,h}(s_{h,\tau}) + \bphi_{\pitil,h}(s_{h,\tau}) \right ) \bphi_{h-1,\tau}^\top  \bLambda_{h-1}^{-1/2} \bigg \|_2 \\
& \le  \bigg \| \sum_{\tau=1}^K \bv^\top \left (  \Exp[\bphi_{\pitil,h}(s_{h,\tau}) | \cF_{h-1,\tau-1}]  - \bphi_{\pitil,h}(s_{h,\tau}) \right ) \bphi_{h-1,\tau}^\top  \bLambda_{h-1}^{-1/2} \bigg \|_2 \\
& \qquad  + \| \bLambda_{h-1}^{-1/2} \|_\op \sum_{\tau = 1}^K \Big (\Exp[\| \bphi_{\pihat,h}(s_{h,\tau}) - \bphi_{\pitil,h}(s_{h,\tau}) \|_2 | \cF_{h-1,\tau-1}]  + \|  \bphi_{\pihat,h}(s_{h,\tau}) - \bphi_{\pitil,h}(s_{h,\tau})  \|_2 \Big ) \\
& \le  \bigg \| \sum_{\tau=1}^K \bv^\top \left (  \Exp[\bphi_{\pitil,h}(s_{h,\tau}) | \cF_{h-1,\tau-1}]  - \bphi_{\pitil,h}(s_{h,\tau}) \right ) \bphi_{h-1,\tau}^\top  \bLambda_{h-1}^{-1/2} \bigg \|_2   + \frac{2\gamma K}{\sqrt{\lambda}} .
\end{align*}
The result then follows since, on $\cE$, this is bounded, since $\pitil \in \Picov^\gamma$. 
\end{proof}

\begin{proof}[Proof of \Cref{cor:verification_complexity}]
\Cref{cor:verification_complexity} can be proved almost identically to \Cref{cor:main_complexity}, but using \Cref{lem:linear_mdp_est_transition_verification} to bound the error in the estimate of $\bphihat_{\pihat,h}$ in place of \Cref{lem:linear_mdp_est_transition}. In addition, for the $i$th call to \algnamese, we set $\gamma = \frac{\sqrt{\lambda}}{2 (\Toff + \Tonbar^i)} \cdot \epsilon$, as $\Toff + \Tonbar^i$ will upper bound the total number of samples in the $i$th call, resulting in the approximation error in \Cref{lem:linear_mdp_est_transition_verification} to be $\epsilon$. 

As the goal is simply to determine whether or not $\pihat$ is $\epsilon$-optimal, we terminate early if it is determined to be suboptimal, which yields the modified scaling in the policy gap.

\subsubsection{Common Policy Classes Satisfy \Cref{asm:policy_set_cover}}
We remark briefly on \Cref{asm:policy_set_cover} and which policy classes satisfy it. A common policy class we might consider is one parameterized by some vectors $\bw := (\bw_h)_{h=1}^H$, and given by
\begin{align*}
\pi_h^{\bw}(s) = \argmax_{a \in \cA} \ \inner{\bphi(s,a)}{\bw_h}.
\end{align*}
Note that the optimal policy takes this form \citep{jin2020provably}. Policies of this form are in fact non-smooth, and therefore do not obviously satisfy \Cref{asm:policy_set_cover}. However, as shown in Section A.3 of \cite{wagenmaker2022instance}, they can be approximated arbitrarily well with linear softmax policies. As linear softmax policies are smooth, it is straightforward to show that they satisfy \Cref{asm:policy_set_cover} with $\Ncov(\Pi,\gamma) = \cOtil(dH^2 \cdot \log \frac{1}{\gamma})$.

Another common policy class found in the literature (see e.g. \cite{jin2020provably} or \cite{jin2021pessimism}) takes the form
\begin{align*}
\pi_h^{\bw, \bLambda}(s) = \argmax_{a \in \cA} \ \inner{\bphi(s,a)}{\bw_h} + \beta \| \bphi(s,a) \|_{\bLambda_h^{-1}},
\end{align*}
for some $\bLambda_h \succeq 0$. This policy is again non-smooth, but can similarly be approximated arbitrarily well using a softmax policy class (where now the softmax is taken over $\{ \inner{\bphi(s,a)}{\bw_h} + \beta \| \bphi(s,a) \|_{\bLambda_h^{-1}} \}_{a \in \cA}$). As this policy class has $\cO(d^2)$ parameters, we now have that $\Ncov(\Pi,\gamma)$ scales with $d^2$. 

More generally, any policy class that is ``smooth'' in $\bphi(s,a)$, or can be approximated by a policy class that is smooth in $\bphi(s,a)$, and that has a finite number of parameters will satisfy \Cref{asm:policy_set_cover}.

\end{proof}


\section{Lower Bounds}\label{sec:lower_bounds}

\subsection{Instance-Dependent Lower Bound}\label{sec:instance_lb}

\begin{theorem}\label{prop:lb_instance_no_diff}
Let $\frakM$ denote the set of linear MDPs with reward vectors in $\R^d$. 
Fix some $h  \in [H]$ and consider some algorithm that is $(0,\delta)$-PAC on $\frakM$ and has knowledge of the dynamics of each MDP, and let $\tau$ denote the number of online samples collected by this algorithm. 
Then there exists some MDP $\cM \in \frakM$ with $\cO(1)$ states and actions such that, for any set of offline data $\frakDoff$ satisfying \Cref{asm:offline_data} on $\cM$, we must have $\Exp^{\cM}[\tau] \ge \Omega(N_h^{\mathrm{det}}(\frakDoff,0; \log \frac{1}{\delta}))$, for
\begin{align*}
N_h^{\mathrm{det}}(\frakDoff) :=  \min_N N \quad \text{s.t.} \quad \inf_{\bLambda \in \bOmega_h} \max_{\pi \in \Pidet, \pi \neq \pist} \frac{\| \bphi_{\pi,h} \|_{(N \bLambda + \bLamoff^h)^{-1}}^2}{(\Vst_0 - \Vpi_0)^2} \le \frac{1}{\log 1/\delta} ,
\end{align*}
where $\Pidet$ is the set of all deterministic policies. 
\end{theorem}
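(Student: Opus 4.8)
The plan is a change-of-measure (transportation) lower bound that reduces the $\epsilon=0$ problem to a transductive linear-bandit best-arm-identification argument, with the offline data entering as an additive Fisher-information term $\bLamoff^h$. Since the algorithm is assumed to know the dynamics and $\epsilon = 0$, the only uncertainty in the MDP is the collection of reward vectors $\{\btheta_h\}$: the value of a policy is $V_0^\pi = \sum_h \inner{\bphi_{\pi,h}}{\btheta_h}$, an online episode contributing a step-$h$ sample is a noisy linear observation $\inner{\bphi(s_h,a_h)}{\btheta_h} + \text{noise}$, and by \Cref{asm:offline_data} the offline data is a batch of such observations whose step-$h$ covariates are exactly $\bLamoff^h$. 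The goal is to show that any $(0,\delta)$-PAC algorithm's \emph{expected} online step-$h$ design must already be feasible for the program defining $N_h^{\mathrm{det}}$.

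\textbf{Construction.} Fixing the prescribed step $h$, I would build a constant-size instance $\cM \in \frakM$ -- a small tabular MDP embedded linearly, so all features lie in an $\cO(1)$-dimensional coordinate subspace and $\cO(1)$ states/actions suffice -- with the following shape: the optimal policy $\pist$ reaches a zero-feature absorbing branch at step $h$, so $\bphi_{\pist,h} = \zeros$, while every other deterministic policy $\pi$ is forced through step $h$ with feature visitation $\bphi_{\pi,h}$ and has a strictly positive gap $\Vst_0 - V_0^\pi > 0$; all nominal reward means are taken near $1/2$ so that $\mathrm{kl}$-divergences between Bernoulli rewards are within an absolute constant factor of squared mean differences. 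For each deterministic $\pi \neq \pist$, I define alternatives $\cM^\pi \in \frakM$ differing from $\cM$ only through $\btheta_h \mapsto \btheta_h + \matdel$, with $\matdel$ ranging over the affine set $\{\matdel : \inner{\bphi_{\pi,h}}{\matdel} \ge (\Vst_0 - V_0^\pi)(1+o(1))\}$ -- each such $\matdel$ making $\pi$ the unique optimum on $\cM^\pi$, and each a legitimate member of $\frakM$ since the required perturbations are $\cO(1)$ in norm, keeping reward means in $[0,1]$ and $\|\btheta_h + \matdel\|_2 \le \sqrt d$.

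\textbf{Change of measure and conclusion.} Run a $(0,\delta)$-PAC, dynamics-aware algorithm on $\cM$; let $N_h$ be the (random) number of online episodes with a step-$h$ sample, $\bSigma_h := \sum_\tau \bphi_{h,\tau}\bphi_{h,\tau}^\top$ the resulting online step-$h$ covariance, and set $\bLambda_h^{\mathrm{alg}} := \Exp^\cM[\bSigma_h]/\Exp^\cM[N_h]$, which lies in $\bOmega_h$ as an expected per-step covariance of an online policy sequence. Since dynamics are identical across $\cM$ and $\cM^\pi$, the chain rule for KL shows the divergence between the two laws of (offline data, online interaction, internal randomness) equals $\Theta\!\big(\matdel^\top(\bLamoff^h + \Exp^\cM[\bSigma_h])\matdel\big)$ (only step-$h$ rewards differ). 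The algorithm returns $\pist$ w.p.\ $\ge 1-\delta$ on $\cM$ and $\pi$ w.p.\ $\ge 1-\delta$ on $\cM^\pi$, so Bretagnolle--Huber gives $\matdel^\top(\bLamoff^h + \Exp^\cM[\bSigma_h])\matdel \gtrsim \log\frac{1}{\delta}$ for every admissible $\matdel$; minimizing the left-hand side over the affine set yields $\|\bphi_{\pi,h}\|^2_{(\Exp^\cM[N_h]\,\bLambda_h^{\mathrm{alg}} + \bLamoff^h)^{-1}} / (\Vst_0 - V_0^\pi)^2 \lesssim 1/\log\frac{1}{\delta}$. Since this holds simultaneously for all deterministic $\pi \neq \pist$ with the \emph{same} pair $(\bLambda_h^{\mathrm{alg}}, \Exp^\cM[N_h])$, that pair is feasible (up to an absolute constant in the threshold) for the optimization defining $N_h^{\mathrm{det}}(\frakDoff, 0; \log\frac{1}{\delta})$, hence $\Exp^\cM[\tau] \ge \Exp^\cM[N_h] \ge \Omega\big(N_h^{\mathrm{det}}(\frakDoff, 0; \log\tfrac{1}{\delta})\big)$.

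\textbf{Main obstacle.} The hard part is the construction: realizing, inside a constant-size linear MDP, a family of deterministic policies whose step-$h$ feature visitations $\bphi_{\pi,h}$ (with $\bphi_{\pist,h} = \zeros$) and gaps $\Vst_0 - V_0^\pi$ are rich enough that the resulting system of constraints is \emph{exactly} the $\max_{\pi \in \Pidet}$ appearing in $N_h^{\mathrm{det}}$, rather than a relaxation, while every perturbation $\matdel$ used keeps the instance a valid element of $\frakM$ (bounded reward vectors, means in $[0,1]$, legitimate measures $\bmu_h$). A secondary point is justifying that the normalized online design $\bLambda_h^{\mathrm{alg}}$ genuinely lies in $\bOmega_h$ and that exact ($\epsilon = 0$) identification is compatible with the strict perturbation $(\Vst_0 - V_0^\pi)(1+o(1))$; both follow from standard convexity and limiting arguments.
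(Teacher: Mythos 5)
Your overall skeleton---a transportation/change-of-measure argument showing that the algorithm's expected step-$h$ design $(\Exp^\cM[N_h],\bLambda_h^{\mathrm{alg}})$ must be feasible for the optimization defining $N_h^{\mathrm{det}}$---is the same technique the paper uses (it routes the argument through an explicit reduction to ``linear bandits with random arms,'' \Cref{thm:lb_lin_band_rand_arms} and \Cref{lem:lin_mdp_lb}, but the KL/Bretagnolle--Huber core and the feasibility conclusion are identical in substance). The place where your proposal genuinely diverges is also where it breaks. The transportation argument naturally produces the constraint $\inner{\bphi_{\pi,h}-\bphi_{\pist,h}}{\matdel} \ge \Vst_0 - V_0^\pi$, hence the quantity $\| \bphi_{\pi,h}-\bphi_{\pist,h}\|_{\bA^{-1}}^2$; to get the statement's non-difference form $\| \bphi_{\pi,h}\|_{\bA^{-1}}^2$ you propose a construction with $\bphi_{\pist,h}=\zeros$. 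But in a linear MDP the transition kernel at step $h$ satisfies $\sum_{s'}P_h(s'|s,a) = \inner{\bphi(s,a)}{\bmu_h(\cS)} = 1$ for every $(s,a)$, so $\inner{\bphi_{\pi,h}}{\bmu_h(\cS)} = 1$ for \emph{every} policy $\pi$, and no policy can have zero expected features at any step $h<H$. A ``zero-feature absorbing branch'' is therefore not realizable inside $\frakM$ (and padding the features with a constant coordinate to fix the transitions, as in the paper's minimax construction, reintroduces a nonzero $\bphi_{\pist,h}$ and hence the difference problem in that coordinate). Since the theorem quantifies over all $h\in[H]$, this is a genuine gap, not a technicality.

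The paper closes exactly this gap with a separate argument (\Cref{lem:lb_instance_no_diff}): it constructs features with $\bphi(s,\pist_h(s))=\be_1$ and all other actions orthogonal to $\be_1$, so $\bphi_{\pist,h}=\be_1$, and then shows that augmenting any feasible design with $O(a/p^2)$ additional samples in the $\be_1$ direction (where $a$ is the mass already present there and $p=\Omega(1)$ is a reachability lower bound) makes $\|\bphi_{\pi,h}\|_{\bA^{-1}}^2\le\|\bphi_{\pi,h}-\bphi_{\pist,h}\|_{\bA^{-1}}^2$, so the two optimization values differ by only a constant factor. Your proof needs either this lemma or some substitute for it; without one, your argument only establishes the lower bound with $\|\bphi_{\pi,h}-\bphi_{\pist,h}\|_{(N\bLambda+\bLamoff^h)^{-1}}^2$ in place of $\|\bphi_{\pi,h}\|_{(N\bLambda+\bLamoff^h)^{-1}}^2$, which is not the stated quantity. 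A secondary point worth tightening: the minimizing perturbation $\matdel = \Delta_\pi\,\bA^{-1}\bphi_{\pi,h}/\|\bphi_{\pi,h}\|_{\bA^{-1}}^2$ need not be $\cO(1)$ in norm when $\bA$ is poorly conditioned in the relevant direction, so membership of $\cM^\pi$ in $\frakM$ (bounded $\btheta_h$, means in $[0,1]$) requires more care than you give it, though the paper is similarly informal on this point.
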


\begin{remark}[$\Pidet$ vs $\Pilsm$ Dependence]
Note that \Cref{prop:lb_instance_no_diff} depends on $\Pidet$ and not $\Pilsm$, the policy class our upper bound \Cref{cor:main_complexity} scales with. Our upper bounds were in fact proved for a generic policy class $\Pi$, and provide a guarantee on finding a policy $\pihat$ that is $\epsilon$-optimal with respect to the best policy in $\Pi$. In the case of linear MDPs, to construct a policy class that is guaranteed to have a policy $\epsilon$-optimal with respect to the best possible policy on the MDP, we rely on the linear softmax policy construction. However, it is well known that in tabular MDPs, there always exists a deterministic optimal policy. Thus, in tabular settings, it suffices to choose $\Pi \leftarrow \Pidet$ to guarantee we have a globally $\epsilon$-optimal policy in our class. Given this, in the setting of \Cref{prop:lb_instance_no_diff} with a finite number of states and actions, we can simply run \algname with $\Pi \leftarrow \Pidet$ and will obtain the same guarantee as in \Cref{cor:main_complexity} but with $\Pilsm$ replaced by $\Pidet$, matching the guarantee of \Cref{prop:lb_instance_no_diff}.
\end{remark}

\subsubsection{Linear Bandits with Random Arms}
Towards proving a lower bound for linear MDPs, we first consider the setting of linear bandits with random arms. In this setting, assume we have some set of arms $\cA$. 
When arm $a \in \cA$ is played, a vector $\bz \in \cZ \subseteq \R^d$ is sampled from some distribution $\xi_a$, which we assume is known to the learner. The learner then observes $\bz$ as well as $y = \bz^\top \thetast + \eta$ for some noise $\eta$, and unknown $\bthetast \in \R^d$. 

Note that this can simply be thought of as a multi-armed bandit where arm $a$ has expected reward $\Exp_{\eta}\Exp_{\bz \sim \xi_a}[\bz^\top \bthetast + \eta]$. Define $\bLambda_a := \Exp_{\bz \sim \xi_a}[\bz \bz^\top]$. For some set $\cX \subseteq \R^d$, our goal is to identify $\xst(\bthetast) \in \cX$ such that
\begin{align*}
\xst(\thetast)^\top \thetast \ge \max_{\bx \in \cX} \bx^\top \thetast.
\end{align*}
In the offline-to-online setting, we assume the learner has access to some set of data $\{ (a_s,\bz_s,y_s) \}_{s=1}^{\Toff}$.

Finally, we say that an algorithm is $\delta$-PAC if it returns $\xst(\bthetast)$ with probability at least $1-\delta$. 

\begin{prop}\label{thm:lb_lin_band_rand_arms}
Assume we are in the linear bandit with random arms settings defined above, for noise distribution $\eta \sim \cN(0,1)$. Then any $\delta$-PAC strategy with online stopping time $\tau$ must have
\begin{align*}
\Exp_{\thetast}[\tau] \ge \min_t \sum_{a \in \cA} t_{a} \quad \text{s.t.} \quad \max_{\bx \neq \xst(\thetast) } \frac{\| \bx - \xst(\thetast) \|_{\bA(t)^{-1}}^2}{((\xst(\bthetast)-\bx)^\top \thetast)^2} \le \frac{1}{\log \frac{1}{2.4\delta}}
\end{align*}
for $\bA(t) = \sum_{a \in \cA} t_{a} \bLambda_a + \sum_{s=1}^{\Toff} \bz_s \bz_s^\top$. 
\end{prop}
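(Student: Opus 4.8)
The plan is to prove \Cref{thm:lb_lin_band_rand_arms} by the standard change-of-measure (transportation) argument for pure-exploration bandits \citep{kaufmann2016complexity}, with the offline data folded into the information budget. First I would fix notation: condition on the offline \emph{design} $\{(a_s,\bz_s)\}_{s=1}^{\Toff}$ (so that $\sum_{s=1}^{\Toff}\bz_s\bz_s^\top$ is deterministic) while keeping the offline rewards $y_s\mid\bz_s\sim\cN(\bz_s^\top\thetast,1)$ random, interpret the $\delta$-PAC guarantee conditionally on this design, and assume the optimal answer $\xst:=\xst(\thetast)$ is the unique maximizer of $\bx\mapsto\bx^\top\thetast$ over $\cX$ (instances with a unique optimal answer suffice for a lower bound). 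Let $N_a(\tau)$ be the number of online pulls of arm $a$ at the stopping time $\tau$ and $\bar t_a:=\Exp_{\thetast}[N_a(\tau)]$, so that $\Exp_{\thetast}[\tau]=\sum_{a\in\cA}\bar t_a$; write $\bA(\bar t):=\sum_{a\in\cA}\bar t_a\bLambda_a+\sum_{s=1}^{\Toff}\bz_s\bz_s^\top$.

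The core step is a transportation inequality: for any alternative parameter $\btheta'\in\R^d$ whose optimal answer differs from $\xst$, apply the data-processing inequality to the indicator of the event ``the algorithm outputs $\xst$'' — which has probability $\ge1-\delta$ under $\thetast$ and $\le\delta$ under $\btheta'$ — to get $\kl(\delta,1-\delta)\le\mathrm{KL}$, where $\mathrm{KL}$ is the divergence between the laws of the full transcript (offline data together with the online interaction up to $\tau$) under $\thetast$ and under $\btheta'$. In this divergence the known arm-vector distributions $\xi_a$ cancel; the Gaussian identity $\mathrm{KL}(\cN(m,1)\,\|\,\cN(m',1))=\tfrac12(m-m')^2$ turns the offline-reward contribution into $\tfrac12\sum_{s}(\bz_s^\top(\thetast-\btheta'))^2=\tfrac12\|\thetast-\btheta'\|_{\sum_s\bz_s\bz_s^\top}^2$, and a stopping-time version of Wald's identity turns the online-reward contribution into $\tfrac12\sum_a\bar t_a(\thetast-\btheta')^\top\bLambda_a(\thetast-\btheta')$ (each online pull of arm $a$ contributes $\tfrac12(\thetast-\btheta')^\top\bLambda_a(\thetast-\btheta')$ in expectation, the observed $\bz$ being non-informative). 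Combined with the standard bound $\kl(\delta,1-\delta)\ge\log\tfrac1{2.4\delta}$ this gives $\tfrac12\|\thetast-\btheta'\|_{\bA(\bar t)}^2\ge\log\tfrac1{2.4\delta}$.

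Next I would optimize over alternatives. For each $\bx\in\cX$ with $\bx\neq\xst$ and gap $\Delta_\bx:=(\xst-\bx)^\top\thetast>0$, take $\btheta'=\thetast-(\lambda+\varepsilon)\bA(\bar t)^{-1}(\xst-\bx)$ with $\lambda=\Delta_\bx/\|\xst-\bx\|_{\bA(\bar t)^{-1}}^2$ and $\varepsilon>0$; then $(\xst-\bx)^\top\btheta'=-\varepsilon\|\xst-\bx\|_{\bA(\bar t)^{-1}}^2<0$, so $\bx$ is strictly preferred to $\xst$ under $\btheta'$ and its optimal answer differs from $\xst$. Plugging $\btheta'$ into the transportation inequality and letting $\varepsilon\downarrow0$ gives $\tfrac12\Delta_\bx^2/\|\xst-\bx\|_{\bA(\bar t)^{-1}}^2\ge\log\tfrac1{2.4\delta}$ (this $\btheta'$ realizes, in the limit, the projection of $\thetast$ onto the halfspace $\{(\xst-\bx)^\top\theta\le0\}$, whose squared $\bA(\bar t)$-distance is exactly $\Delta_\bx^2/\|\xst-\bx\|_{\bA(\bar t)^{-1}}^2$), i.e. $\|\xst-\bx\|_{\bA(\bar t)^{-1}}^2/\Delta_\bx^2\le\tfrac1{2\log(1/2.4\delta)}\le\tfrac1{\log(1/2.4\delta)}$. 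Maximizing over $\bx\neq\xst$ shows that $\bar t=(\bar t_a)_{a\in\cA}$ is a feasible point of the optimization defining the right-hand side of the proposition, and since that optimization minimizes $\sum_a t_a$ over feasible $t$, $\Exp_{\thetast}[\tau]=\sum_a\bar t_a$ is at least its optimal value, which is the claim.

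I expect the main obstacle to be the bookkeeping around the offline data in the change of measure: being precise about what is conditioned on (the offline \emph{design} $\{(a_s,\bz_s)\}$, not the rewards $y_s$) so that the offline KL contribution is exactly $\tfrac12\|\thetast-\btheta'\|_{\sum_s\bz_s\bz_s^\top}^2$ and matches the $\sum_s\bz_s\bz_s^\top$ term in $\bA(t)$; verifying that \Cref{asm:offline_data} makes the conditional law of each $y_s$ equal to $\cN(\bz_s^\top\thetast,1)$ even under adaptive offline collection; and checking that the $\delta$-PAC guarantee may be taken conditionally on the design. The online side, and in particular the stopping-time Wald identity, is routine once this is set up, and matching the displayed constant $\tfrac1{2.4\delta}$ follows from the standard estimate on $\kl(\delta,1-\delta)$ — indeed the argument yields a factor-$2$ stronger constraint, from which the stated one follows.
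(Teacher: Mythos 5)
Your proposal is correct and follows essentially the same route as the paper's proof: the Kaufmann-style transportation inequality with the offline Gaussian KLs and the expected online pull counts forming the information budget, followed by the same halfspace-projection alternative $\btheta'=\thetast-(\lambda+\varepsilon)\bA^{-1}(\xst-\bx)$ and the limit $\varepsilon\downarrow 0$. The only cosmetic difference is that you retain the factor $\tfrac12$ in the Gaussian KL (yielding a factor-2 stronger constraint that implies the stated one), whereas the paper upper-bounds the per-sample KL by $(\bz^\top(\thetast-\btheta))^2$ and works with the weaker constant directly.
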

\begin{proof}
We follow the proof of Theorem 1 of \cite{fiez2019sequential}, with some small modifications. Let $\Thetaaltst = \{ \btheta \in \R^d \ : \ \xst(\btheta) \neq \xst(\thetast) \}$ denote the set of alternate instances. Let $\nu_{\btheta,\bz} = \cN(\btheta^\top \bz, 1)$, $\nu_{\btheta,a}$ the reward distribution of action $a$, and let $\ton_a$ the total number of (online) pulls of arm $a$.

Given some alternate $\btheta$, the KL divergence between $\btheta$ and $\bthetast$ is given by
\begin{align*}
\sum_{s=1}^{\Toff} \KL(\nu_{\bthetast,\bz_s},\nu_{\btheta,\bz_s}) + \sum_{a \in \cA} \Exp_{\bthetast}[\ton_{a}] \KL(\nu_{\bthetast,a},\nu_{\btheta,a}).
\end{align*}
 Then Lemma 1 of \cite{kaufmann2016complexity} and the fact that the algorithm is $\delta$-PAC give that, for any $\btheta \in \Thetaaltst$,
\begin{align*}
\sum_{s=1}^{\Toff} \KL(\nu_{\bthetast,\bz_s},\nu_{\btheta,\bz_s}) + \sum_{a \in \cA} \Exp_{\bthetast}[\ton_{a}] \KL(\nu_{\bthetast,a},\nu_{\btheta,a}) \ge \log \frac{1}{2.4 \delta}.
\end{align*}
Note that, by the convexity of the KL divergence, we have
\begin{align*}
\KL(\nu_{\bthetast,a},\nu_{\btheta,a}) & \le \sum_{\bz \in \cZ} \xi_a(\bz) \KL(\nu_{\bthetast,\bz},\nu_{\btheta,\bz}) \\
& = \sum_{\bz \in \cZ} \xi_a(\bz) (\bz^\top (\bthetast - \btheta))^2 \\
& = (\bthetast - \btheta)^\top \left ( \sum_{\bz \in \cZ} \xi_a(\bz) \bz \bz^\top \right ) (\bthetast - \btheta) \\
& = (\bthetast - \btheta)^\top \bLambda_a (\bthetast - \btheta).
\end{align*}
Since $\tau = \sum_{a \cA} \ton_a$, we have $\Exp_{\bthetast}[\tau] \ge \sum_{a \in \cA} t_a$ for $t$ the solution to
\begin{align}\label{eq:lin_rand_arms_lb_eq1}
\min_t \sum_{\bz \in \cZ} t_{\bz} \quad \text{s.t.} \quad \min_{\btheta \in \Thetaaltst} \sum_{s=1}^{\Toff} \KL(\nu_{\bthetast,\bz_s},\nu_{\btheta,\bz_s}) + \sum_{a \in \cA} t_a (\bthetast - \btheta)^\top \bLambda_a (\bthetast - \btheta) \ge \log \frac{1}{2.4\delta}.
\end{align}

Fix $\epsilon > 0$ and $t$ and, for $\bx \in \cX$, $\bx \neq \xst(\thetast)$, let
\begin{align*}
\btheta_{\bx}(\epsilon,t) = \thetast - \frac{(\bu_{\bx}^\top \thetast + \epsilon) \bA(t)^{-1} \bu_{\bx}}{\bu_{\bx}^\top \bA(t)^{-1} \bu_{\bx}}
\end{align*}
for $\bu_{\bx} = \xst(\bthetast) - \bx$. Note that $\bu_{\bx}^\top \btheta_{\bx}(\epsilon,t) = -\epsilon < 0$ so $\btheta_{\bx}(\epsilon,t) \in \Thetaaltst$. Furthermore, we have
\begin{align*}
\KL(\nu_{\thetast,\bz}, \nu_{\btheta_{\bx}(\epsilon,t),\bz}) = \bu_{\bx}^\top \bA(t)^{-1} \frac{(\bu_{\bx}^\top \thetast + \epsilon)^2 \bz \bz^\top}{(\bu_{\bx}^\top \bA(t)^{-1} \bu_{\bx})^2} \bA(t)^{-1} \bu_{\bx}
\end{align*}
and
\begin{align*}
(\bthetast - \btheta)^\top \bLambda_a (\bthetast - \btheta) =  \bu_{\bx}^\top \bA(t)^{-1} \frac{(\bu_{\bx}^\top \thetast + \epsilon)^2 \bLambda_a}{(\bu_{\bx}^\top \bA(t)^{-1} \bu_{\bx})^2} \bA(t)^{-1} \bu_{\bx}.
\end{align*}
It follows that
\begin{align*}
\eqref{eq:lin_rand_arms_lb_eq1} & \ge \min_t \sum_{\bz \in \cZ} t_{\bz} \quad \text{s.t.} \quad \min_{\btheta \in \Thetaaltst} \frac{(\bu_{\bx}^\top \thetast + \epsilon)^2}{(\bu_{\bx}^\top \bA(t)^{-1} \bu_{\bx})^2} \cdot \bu_{\bx}^\top \bA(t)^{-1} \Big ( \sum_{s=1}^{\Toff} \bz_s \bz_s^\top + \sum_{a \in \cA} t_a \bLambda_a \Big ) \bA(t)^{-1} \bu_{\bx}  \ge \log \frac{1}{2.4\delta} \\
& = \min_t \sum_{\bz \in \cZ} t_{\bz} \quad \text{s.t.} \quad \min_{\btheta \in \Thetaaltst} \frac{(\bu_{\bx}^\top \thetast + \epsilon)^2}{\bu_{\bx}^\top \bA(t)^{-1} \bu_{\bx}}   \ge \log \frac{1}{2.4\delta}.
\end{align*}
Taking $\epsilon \rightarrow 0$ and rearranging proves the result. 
\end{proof}

\subsubsection{Linear MDPs}
To obtain a lower bound for linear MDPs, we show a reduction from linear MDPs to linear bandits with random arms. In the following we will denote $\Piexp$ the set of all possible exploration policies (this could include any possible policy, though in practice it suffices to just take a policy cover). We will also let $\Pidet$ denote the set of all deterministic policies.

Consider the following linear bandit with random arms construction:
\begin{itemize}
\item Choose some set of transition kernels $\{ P_h \}_{h=1}^H$ on an MDP with $|\cS| =: S< \infty$ states and $|\cA| =: A < \infty$ actions. 
\item Choose some set of feature vectors $\bphi(s,a)$ defined on $\cS \times \cA$.
\item Let the set of actions in the linear bandit be $\Piexp$, the set of all possible policies on the MDP constructed above.
\item For some $h$, let $\xi_\pi$ denote the distribution of $\bphi(s_h,a_h)$ under policy $\pi \in \Piexp$. 
\item Set $\bthetast$ as desired, and $\cX = \{ \bphi_{\pi,h} \ : \ \pi \in \Pidet  \}$. 
\end{itemize}
We furthermore assume that there is a unique optimal policy in $\Pidet$. This is a well-specified linear bandit with random arms, so the lower bound from \Cref{thm:lb_lin_band_rand_arms} applies to finding the best decision on $\cX$. To make this reduction precise, we need the following result.

\begin{lemma}\label{lem:visit_det_convex}
For any (possibly random) policy $\pitil$ and fixed $h$, there exists some $p \in \simplex_{\Pidet}$ such that $\sum_{\pi \in \Pidet} p_{\pi} \bphi_{\pi,h} = \bphi_{\pitil,h}$.
\end{lemma}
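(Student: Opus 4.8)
The plan is to recognize $\bphi_{\pitil,h}$ as a linear image of the step-$h$ state–action occupancy measure of $\pitil$, and then to use the classical fact that the set of realizable occupancy measures is a polytope whose vertices are the occupancies of deterministic policies.

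First I would set $d_h^\pi(s,a) := \Pr_\pi[s_h = s, a_h = a]$, so that $\bphi_{\pi,h} = \sum_{s,a} d_h^\pi(s,a)\,\bphi(s,a)$ is a linear function of the vector $(d_h^\pi(s,a))_{s,a}$. Since $\cS$ and $\cA$ are finite in the reduction setting, $\Pidet$ is finite, and it suffices to show $d_h^{\pitil} \in \mathrm{conv}\{ d_h^\pi : \pi \in \Pidet \}$; applying the linear feature map to such a decomposition immediately yields the desired $p \in \simplex_{\Pidet}$ with $\sum_{\pi \in \Pidet} p_\pi \bphi_{\pi,h} = \bphi_{\pitil,h}$. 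If $\pitil$ is history-dependent rather than Markov, I would first invoke the standard fact (e.g.\ Puterman) that there is a Markov randomized policy with the same step-$h$ state–action marginal, reducing to the Markov case.

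For Markov $\pitil$ I would argue by induction on $h$. The base case $h=1$ is immediate: $d_1^{\pitil}(s,a) = \I\{s = s_1\}\,\pitil_1(a|s_1)$ is a convex combination over $a_0 \in \cA$ of the vectors $(s,a)\mapsto \I\{s = s_1\}\I\{a = a_0\}$, each of which is $d_1^\pi$ for a deterministic $\pi$ that plays $a_0$ at $s_1$. For the inductive step, write the step-$(h-1)$ occupancy as $d_{h-1}^{\pitil} = \sum_i \lambda_i d_{h-1}^{\pi^i}$ with $\pi^i \in \Pidet$, $\lambda \in \simplex$, let $q_h^{\pi^i}(s') := \sum_{s,a} d_{h-1}^{\pi^i}(s,a) P_{h-1}(s'|s,a)$, and note $d_h^{\pitil}(s',a') = \big(\sum_i \lambda_i q_h^{\pi^i}(s')\big)\,\pitil_h(a'|s')$. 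Then decompose the step-$h$ action rule into deterministic maps $g:\cS\to\cA$ via the product measure $\mu_g := \prod_{s'} \pitil_h(g(s')|s')$, which satisfies $\sum_g \mu_g\,\I\{g(s') = a'\} = \pitil_h(a'|s')$ for all $s',a'$; for $\pi^i \in \Pidet$ and $g$, letting $\pi^{i,g}$ be the deterministic policy following $\pi^i$ on steps $1,\dots,h-1$ and playing $g$ at step $h$, one has $d_h^{\pi^{i,g}}(s',a') = q_h^{\pi^i}(s')\,\I\{g(s') = a'\}$, whence $d_h^{\pitil} = \sum_{i,g} \lambda_i \mu_g\, d_h^{\pi^{i,g}}$ is a convex combination of deterministic occupancies, closing the induction.

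The main obstacle here is bookkeeping rather than anything conceptual: correctly handling the \emph{shared} step-$h$ action rule (the product-measure trick over maps $g:\cS\to\cA$ is exactly what makes this work), and, if one insists on allowing genuinely history-dependent $\pitil$, citing the reduction to Markov policies. An equally acceptable alternative, which I would mention as a remark, is to bypass the induction and invoke the standard characterization of the occupancy-measure polytope via the Bellman flow constraints, with deterministic Markov policies as its vertices (see Puterman or Altman's treatment of constrained MDPs), and then simply project onto the step-$h$ coordinate before applying the feature map.
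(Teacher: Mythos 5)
Your proof is correct and follows essentially the same route as the paper's: an induction on $h$ in which the step-$h$ occupancy (the paper's $w_h^\pi$) is assumed to be a convex combination of deterministic occupancies, the new step's stochastic action rule is decomposed into a convex combination of deterministic maps (your product-measure $\mu_g$ makes explicit what the paper asserts as $\pi_{h+1} = \sum_{\pi'} \ptil_{\pi'} \pi'_{h+1}$), and the resulting terms are recognized as occupancies of concatenated deterministic policies. Your added care about history-dependent policies and the alternative via the occupancy-measure polytope are fine but not needed for the paper's use of the lemma, which is in a finite tabular construction with Markov policies.
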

\begin{proof}
First, note that for any policy $\pi$, we can write 
\begin{align*}
w_{h+1}^\pi = \pi_{h+1} P_{h} w_{h}^\pi
\end{align*}
for $w_h^\pi \in \R^{SA}$ the visitation probabilities, $P_h \in \R^{S \times SA}$ the transition matrix at step $h$, and $\pi_{h+1} \in \R^{SA \times S}$ the policy's probabilities, where $[\pi_{h+1}]_{(sa),s} = \pi_{h+1}(a|s)$ and 0 otherwise.

Fix some policy $\pi$. Let us assume that $w_h^\pi = \sum_{\pitil \in \Pidet} p_{\pitil}^h w_h^{\pitil}$ for some $p^h \in \simplex_{\Pidet}$. Note that there exists some $\ptil \in \simplex_{\Pidet}$ such that $\pi_{h+1} = \sum_{\pi' \in \Pidet} \ptil_{\pi'} \pi_{h+1}'$. Then we have
\begin{align*}
w_{h+1}^\pi & = \pi_{h+1} P_h w_h  = \sum_{\pi' \in \Pidet} \ptil_{\pi'} \pi'_h \cdot P_h \sum_{\pitil \in \Pidet} p_{\pitil}^h w_h^{\pitil}  = \sum_{\pi',\pitil \in \Pidet} \ptil_{\pi'} p_{\pitil}^h \pi_h' P_h w_h^{\pitil}.
\end{align*}
Note that $\pi_{h+1}'  P_h w_h^{\pitil} = w_{h+1}^{\pi''}$ for some $\pi'' \in \Pidet$ (namely the concatenation of $\pitil$ up to step $h$ with $\pi'$ at step $h-1$). It follows that the above can be written as
\begin{align*}
\sum_{\pi',\pitil \in \Pidet} \ptil_{\pi'} p_{\pitil}^h  w_{h+1}^{\pi''(\pitil,\pi')}
\end{align*}
where $\pi''(\pitil,\pi')$ denotes the concatenation of $\pitil$ and $\pi'$ given above. However, this itself can be written in terms of some $p'$ as $\sum_{\pi'' \in \Pidet} p'_{\pi''} w_{h+1}^{\pi''}$, which proves the inductive hypothesis. 

The result then follows since $\bphi_{\pi,h}$ is completely specified by the visitation probabilities of policy $\pi$. 
\end{proof}

Assume we have defined a linear bandit with random actions as above and that we have access to some linear MDP algorithm that will return an $\epsilon$-optimal policy with probability at least $1-\delta$. (possibly one that uses knowledge of the dynamics). Consider running our linear bandit on the linear MDP with transitions $\{ P_h \}_{h=1}^H$ and rewards $\btheta_1 = \ldots = \btheta_{h-1} = \btheta_{h+1} = \ldots = \btheta_H = \bm{0}$ and $\btheta_h = \bthetast$, for reward distribution $r_h(s,a) \sim \cN(\bphi(s,a)^\top \bthetast,1)$\footnote{Note that allowing rewards to be normally distributed violates the linear MDP definition as rewards could now fall outside $[0,1]$. This is done only for simplicity---at the expense of a more complicated calculation, all results in this section can be shown to hold for slightly different constants if the noise is instead Bernoulli (see e.g. Lemma E.1 of \cite{wagenmaker2022instance} for an example).}. Note that this linear MDP can be completely simulated by running our linear bandit with random actions. Assume that our linear MDP algorithm returns some policy $\pihat$ that is $\epsilon$-optimal. Consider the following procedure:
\begin{enumerate}
\item Find weights $p \in \Pidet$ such that $\bphi_{\pihat,h} = \sum_{\pi \in \Pidet} p_\pi \bphi_{\pi,h}$ (note that this is possible by \Cref{lem:visit_det_convex} and since we assume the dynamics are known). 
\item Set $\pitil = \argmax_{\pi \in \Pidet} p_\pi$.
\end{enumerate}
We have the following result.

\begin{lemma}\label{lem:reduction_correctness}
Assume that $\epsilon < \Delmin / 2$ for $\Delmin =  \bphi_{\pist,h}^\top \bthetast - \max_{\pi \in \Pidet, \pi \neq \pist} \bphi_{\pi,h}^\top \bthetast$ and $\pist = \argmax_{\pi \in \Pi} \bphi_{\pi,h}^\top \bthetast$. Then $\pitil = \pist$ as long as $\pihat$ is $\epsilon$-optimal.
\end{lemma}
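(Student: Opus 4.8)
The plan is to show that the value of the policy $\pihat$ returned, restricted to the contribution from step $h$ (the only step with nonzero reward), equals $\bphi_{\pihat,h}^\top \bthetast$, and that this in turn forces the dominant weight in the convex decomposition $\bphi_{\pihat,h} = \sum_{\pi \in \Pidet} p_\pi \bphi_{\pi,h}$ to sit on $\pist$. First I would observe that in the constructed linear MDP, $\btheta_{h'} = \zeros$ for all $h' \neq h$, so for any policy $\pi$ we have $\Vpi_0 = \sum_{h'=1}^H \inner{\bphi_{\pi,h'}}{\btheta_{h'}} = \inner{\bphi_{\pi,h}}{\bthetast}$. In particular $\Vst_0 = \bphi_{\pist,h}^\top \bthetast$, and by $\epsilon$-optimality of $\pihat$, $\bphi_{\pihat,h}^\top \bthetast \ge \Vst_0 - \epsilon = \bphi_{\pist,h}^\top \bthetast - \epsilon$.

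Next I would expand the left-hand side using the decomposition: $\bphi_{\pihat,h}^\top \bthetast = \sum_{\pi \in \Pidet} p_\pi \, \bphi_{\pi,h}^\top \bthetast$. Splitting off the $\pist$ term and using $\bphi_{\pi,h}^\top \bthetast \le \Vst_0 - \Delmin$ for every $\pi \in \Pidet \setminus \{\pist\}$ (by the definition of $\Delmin$), together with $\sum_\pi p_\pi = 1$, gives
\begin{align*}
\bphi_{\pihat,h}^\top \bthetast &\le p_{\pist} \Vst_0 + (1 - p_{\pist})(\Vst_0 - \Delmin) = \Vst_0 - (1 - p_{\pist}) \Delmin.
\end{align*}
Combining with the lower bound $\bphi_{\pihat,h}^\top \bthetast \ge \Vst_0 - \epsilon$ yields $(1 - p_{\pist}) \Delmin \le \epsilon$, i.e. $p_{\pist} \ge 1 - \epsilon / \Delmin$. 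Under the hypothesis $\epsilon < \Delmin / 2$ this gives $p_{\pist} > 1/2$, and since the $p_\pi$ are a probability distribution, no other $\pi$ can have weight exceeding $1/2$, so $\pist$ is the unique maximizer of $\pi \mapsto p_\pi$. Hence $\pitil = \argmax_{\pi \in \Pidet} p_\pi = \pist$, which is the claim.

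I do not anticipate a serious obstacle here — the argument is a short convexity/averaging computation. The one subtlety worth stating carefully is that $\Delmin > 0$ (guaranteed by the assumption of a unique optimal policy in $\Pidet$, which is part of the reduction setup), so that $\epsilon < \Delmin/2$ is a meaningful condition and the division is legitimate; and that the decomposition weights $p$ exist, which is exactly \Cref{lem:visit_det_convex}. Everything else is bookkeeping: the reward vectors at steps other than $h$ vanish, so the value function collapses to the single inner product $\inner{\bphi_{\pi,h}}{\bthetast}$, and the rest is the one-line averaging bound above.
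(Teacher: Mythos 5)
Your argument is correct and is essentially identical to the paper's proof: both use the convex decomposition $\bphi_{\pihat,h} = \sum_{\pi} p_\pi \bphi_{\pi,h}$, bound the value from above by $\Vst_0 - (1-p_{\pist})\Delmin$ and from below by $\Vst_0 - \epsilon$ via $\epsilon$-optimality, and conclude $p_{\pist} \ge 1 - \epsilon/\Delmin > 1/2$. Your additional remarks on $\Delmin > 0$ and the existence of the weights $p$ are correct but not points the paper dwells on.
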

\begin{proof}
Let $\Vst = \bphi_{\pist,h}^\top \bthetast$ denote the value of the optimal policy. Note that
\begin{align*}
\sum_{\pi \in \Pidet} p_\pi \bphi_{\pi,h}^\top \bthetast = \bphi_{\pihat,h}^\top \bthetast \ge \Vst - \epsilon. 
\end{align*}
However, we also have that for any $\pi \neq \pist$, $\bphi_{\pi,h}^\top \bthetast \le \Vst - \Delmin$, so it follows that
\begin{align*}
\sum_{\pi \in \Pidet} p_\pi \bphi_{\pi,h}^\top \bthetast \le p_{\pist} \Vst + (1-p_{\pist}) (\Vst - \Delmin) = \Vst - (1-p_{\pist})\Delmin.
\end{align*}
Putting these together we have
\begin{align*}
& \Vst - (1-p_{\pist})\Delmin \ge \Vst - \epsilon \implies p_{\pist} \ge 1 - \frac{\epsilon}{\Delmin}. 
\end{align*}
Thus, it follows that if $\epsilon < \Delmin/2$, then $p_{\pist} > 1/2$, which implies that $\pitil = \pist$. 
\end{proof}

\begin{lemma}\label{lem:lin_mdp_lb}
Consider the linear MDP setting outlined above, and assume there is a unique optimal policy in $\Pidet$. For $\epsilon < \Delmin/2$, any $(\epsilon,\delta)$-PAC algorithm given access to some dataset $\frakDoff$ must have
\begin{align*}
\Exp[\tau] \ge \min_t \sum_{\pi \in \Piexp} t_\pi \quad \text{s.t.} \quad \max_{\pi \neq \pist, \pi \in \Pidet} \frac{\| \bphi_{\pi,h} - \bphi_{\pist,h} \|_{\bA(t)^{-1}}^2}{((\bphi_{\pist,h} - \bphi_{\pi,h})^\top \btheta_h)^2} \le \frac{1}{\log 1/2.4\delta}
\end{align*}
where $\pist = \argmax_{\pi \in \Pi} V_0^\pi$ and $\bA(t) = \sum_{\pi \in \Piexp} t_\pi \bLambda_{\pi,h} + \sum_{n=1}^{\Toff} \bphi(s_h^n,a_h^n) \bphi(s_h^n,a_h^n)^\top$. 
\end{lemma}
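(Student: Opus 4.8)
The plan is to reduce \Cref{lem:lin_mdp_lb} to the linear bandit with random arms lower bound, \Cref{thm:lb_lin_band_rand_arms}, using the machinery already assembled in this subsection. First I would fix the linear bandit with random arms instance constructed above: the action set is $\Piexp$, pulling ``arm'' $\pi$ samples $\bphi(s_h,a_h) \sim \xi_\pi$, the unknown parameter is $\btheta_h$, the decision set is $\cX = \{ \bphi_{\pi,h} : \pi \in \Pidet \}$, and the offline dataset is $\frakDoff$ viewed as i.i.d.\ samples $\bz_n = \bphi(s_h^n,a_h^n)$ with observations $y_n$. The covariance of arm $\pi$ is exactly $\bLambda_{\pi,h} = \Exp_{\pi}[\bphi(s_h,a_h)\bphi(s_h,a_h)^\top]$, matching the notation in the statement. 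The reward along decision $\bphi_{\pi,h}$ is $\bphi_{\pi,h}^\top \btheta_h = V_0^\pi$ on the MDP with all reward mass at step $h$, so the optimal decision $\xst(\btheta_h)$ corresponds to $\pist = \argmax_{\pi \in \Pidet} V_0^\pi$, which is unique by hypothesis.

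Next I would show that any $(\epsilon,\delta)$-PAC linear MDP algorithm, run on this MDP with $\epsilon < \Delmin/2$, yields a $\delta$-PAC algorithm for the linear bandit with random arms problem on $\cX$, with the same online sample count. This is precisely the content of the reduction already spelled out: run the MDP algorithm, obtain an $\epsilon$-optimal $\pihat$, decompose $\bphi_{\pihat,h} = \sum_{\pi \in \Pidet} p_\pi \bphi_{\pi,h}$ via \Cref{lem:visit_det_convex} (using knowledge of the dynamics), set $\pitil = \argmax_\pi p_\pi$, and invoke \Cref{lem:reduction_correctness} to conclude $\pitil = \pist$ with probability at least $1-\delta$. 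Hence the returned decision equals $\xst(\btheta_h)$ with probability at least $1-\delta$, so the combined procedure is $\delta$-PAC for the bandit instance. Each episode of the MDP algorithm that plays a policy $\pi$ at step $h$ corresponds to exactly one pull of arm $\pi$, so the online stopping times coincide.

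Then I would apply \Cref{thm:lb_lin_band_rand_arms} directly to this bandit instance. Its conclusion is that any $\delta$-PAC strategy with online stopping time $\tau$ satisfies
\begin{align*}
\Exp[\tau] \ge \min_t \sum_{\pi \in \Piexp} t_\pi \quad \text{s.t.} \quad \max_{\bx \neq \xst(\btheta_h)} \frac{\| \bx - \xst(\btheta_h) \|_{\bA(t)^{-1}}^2}{((\xst(\btheta_h) - \bx)^\top \btheta_h)^2} \le \frac{1}{\log \tfrac{1}{2.4\delta}},
\end{align*}
for $\bA(t) = \sum_{\pi \in \Piexp} t_\pi \bLambda_{\pi,h} + \sum_{n=1}^{\Toff} \bphi(s_h^n,a_h^n)\bphi(s_h^n,a_h^n)^\top$. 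Substituting $\bx = \bphi_{\pi,h}$, $\xst(\btheta_h) = \bphi_{\pist,h}$, and using $\bphi_{\pi,h}^\top \btheta_h = V_0^\pi$ translates the maximization over $\bx \in \cX$, $\bx \neq \xst(\btheta_h)$, into a maximization over $\pi \in \Pidet$, $\pi \neq \pist$, with the gap term $((\bphi_{\pist,h} - \bphi_{\pi,h})^\top \btheta_h)^2$, which is exactly the stated bound. The only subtlety, which I would handle as a parenthetical, is that the Gaussian reward assumption $r_h(s,a) \sim \cN(\bphi(s,a)^\top \btheta_h, 1)$ technically violates the linear MDP boundedness requirement; as the footnote already indicates, one can rerun the argument with Bernoulli (or appropriately clipped) noise at the cost of worse absolute constants, and the change-of-measure computation in \Cref{thm:lb_lin_band_rand_arms} goes through with the KL divergence of the corresponding one-parameter family.

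The main obstacle is making the reduction airtight on the point that the MDP algorithm is allowed to adaptively choose which policy to roll out at each episode: one must verify that simulating the MDP from the bandit side is faithful, i.e.\ that feeding the algorithm $(\bphi(s_h,a_h), r_h)$ samples generated by the bandit exactly reproduces the MDP's interaction protocol including the zero-reward steps $h' \neq h$ and the known transitions, so that no information leaks and the stopping time is genuinely the number of online arm pulls. Once this bookkeeping is in place, the chain ``MDP-PAC $\Rightarrow$ bandit-PAC $\Rightarrow$ lower bound via \Cref{thm:lb_lin_band_rand_arms}'' closes immediately, and everything else is substitution of notation.
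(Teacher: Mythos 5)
Your proposal is correct and follows essentially the same route as the paper: the paper's proof of \Cref{lem:lin_mdp_lb} is exactly the reduction via \Cref{lem:visit_det_convex} and \Cref{lem:reduction_correctness} to a $\delta$-PAC linear bandit with random arms (action set $\Piexp$, decision set $\{\bphi_{\pi,h}\}_{\pi \in \Pidet}$), followed by an application of \Cref{thm:lb_lin_band_rand_arms}. Your additional care about the simulation bookkeeping and the Gaussian-versus-Bernoulli reward caveat matches what the paper handles implicitly and in its footnote.
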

\begin{proof}
By \Cref{lem:reduction_correctness}, we can simulate a linear MDP with a linear bandit with random arms, and use any $(\epsilon,\delta)$-PAC algorithm to identify the optimal arm in the linear bandit with probability at least $1-\delta$, as long as $\epsilon < \Delmin/2$. It follows that the lower bound from \Cref{thm:lb_lin_band_rand_arms} applies with action set $\cA= \Piexp$, and $\cX$ the set $\{ \bphi_{\pi,h} \}_{\pi \in \Pidet}$.
\end{proof}

\subsubsection{Removing Differences}
The upper bound we obtain scales as $\| \bphi_{\pi,h} \|_{\bA(t)^{-1}}^2$ instead of $\| \bphi_{\pi,h} - \bphi_{\pist,h} \|_{\bA(t)^{-1}}^2$. The following result shows that we can construct linear MDPs where this difference is not too significant.

\begin{lemma}\label{lem:lb_instance_no_diff}
Let
\begin{align*}
\tst = \argmin_t \sum_{\pi \in \Piexp} t_\pi \quad \text{s.t.} \quad \max_{\pi \neq \pist, \pi \in \Pidet} \frac{\| \bphi_{\pi,h} - \bphi_{\pist,h} \|_{\bA(t)^{-1}}^2}{((\bphi_{\pist,h} - \bphi_{\pi,h})^\top \btheta_h)^2} \le \frac{1}{\log 1/2.4\delta}.
\end{align*}
Assume that every state in level $h$ is reached with probability at least $p$ by every policy. Then there exists settings of the feature vectors such that
\begin{align*}
(1 + \frac{1}{p^2}) \| \tst \|_1 \ge \min_t \sum_{\pi \in \Piexp} t_\pi \quad \text{s.t.} \quad \max_{\pi \neq \pist, \pi \in \Pidet} \frac{\| \bphi_{\pi,h} \|_{\bA(t)^{-1}}^2}{((\bphi_{\pist,h} - \bphi_{\pi,h})^\top \btheta_h)^2} \le \frac{1}{\log 1/2.4\delta} . 
\end{align*}
\end{lemma}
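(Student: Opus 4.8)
The plan is to instantiate the linear-bandit-with-random-arms construction of \Cref{lem:lin_mdp_lb} with a carefully chosen level-$h$ feature map, and to show that the difference-optimal allocation $\tst$ can be turned into one feasible for the no-difference program by spending at most a $1/p^2$ fraction more online budget. The starting point is the elementary inequality $\|\bphi_{\pi,h}\|_{\bA^{-1}}^2 \le 2\|\bphi_{\pi,h}-\bphi_{\pist,h}\|_{\bA^{-1}}^2 + 2\|\bphi_{\pist,h}\|_{\bA^{-1}}^2$, valid for any PSD $\bA$ (to be sharpened, see below). For any nonnegative supplement $t'$ one has $\bA(\tst+t')\succeq\bA(\tst)$, so the first term is already controlled by feasibility of $\tst$ for the difference program; the whole problem is to cover the fixed vector $\bphi_{\pist,h}$ cheaply.

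For the construction I would take the dynamics at levels $<h$ to be policy-independent, with one ``principal'' level-$h$ state reached with probability $\Theta(1)$ and at most $1/p$ further ``padding'' states each reached with probability exactly $p$ (possible since $\sum_s w(s)=1$ and every state must be $p$-reachable, which also forces at most $1/p$ states). I would then pick the action-dependent block of the features so that, for the minimal-gap policy, the feature difference $\bphi_{\pi,h}-\bphi_{\pist,h}$ has Euclidean norm $\Theta(1)$ and is supported at the principal state. Automatically $\bphi_{\pist,h}=\sum_s w(s)\bphi(s,\pist(s))$ is a convex combination of at most $1/p$ feature vectors $\bphi(s,\pist(s))$; its principal-state term is covered by any design that plays policies (the principal state is reached with probability $\Theta(1)$), and each padding term is individually coverable, since the deterministic policy that targets padding state $s$ and plays $\pist(s)$ there has $\bLambda_{\pi,h}\succeq p\,\bphi(s,\pist(s))\bphi(s,\pist(s))^\top$.

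Given this, the argument is: feasibility of $\tst$ for the difference program together with $\|\bLambda_{\pi,h}\|_{\op}\le 1$ forces $\|\tst\|_1+\Toff \gtrsim \log(1/(2.4\delta))/\Delmin^2$ (from the $\Theta(1)$ gap-feature-difference); and an explicit supplement $t'$ with $\|t'\|_1 \le \tfrac{1}{p^2}\|\tst\|_1$, spreading episodes over the $\le 1/p$ padding-state-targeting policies with weights tuned by Cauchy--Schwarz and using the per-episode covering rate $\ge p$ together with $S\le 1/p$, drives $\|\bphi_{\pist,h}\|_{\bA(\tst+t')^{-1}}^2 \le \Delmin^2/(C\log(1/(2.4\delta)))$ via the Jensen/Cauchy--Schwarz bound $\|\bphi_{\pist,h}\|_{\bA^{-1}}^2 \le \sum_s w(s)\|\bphi(s,\pist(s))\|_{\bA^{-1}}^2$. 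Plugging $t=\tst+t'$ into the decomposition and using $(\Vst_0-\Vpi_0)^2 \ge \Delmin^2$ in the denominator shows $t$ is feasible for the no-difference program, hence $N_h^{\mathrm{det}}(\frakDoff)\le(1+\tfrac1{p^2})\|\tst\|_1$.

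The main obstacle is constant bookkeeping. The crude split loses a factor $2$ on the difference term, and this cannot be recovered by rescaling $\tst$ because the fixed offline block $\bLamoff$ obstructs $\bA(c\tst)\succeq c\,\bA(\tst)$; the remedy is to arrange that $\bphi_{\pi,h}-\bphi_{\pist,h}$ and the residual part of $\bphi_{\pist,h}$ sit in (nearly) orthogonal subspaces on which $\bA(t)$ is (nearly) block diagonal, so that the split is essentially lossless, and to run a short case analysis on $\Toff$: the normalization $\inner{\bphi(s,a)}{\bmu_h(\cS)}=1$ makes every feature vector -- hence every offline sample -- carry a $\Theta(1)$-size component along $\bmu_h(\cS)$, so a large offline dataset already covers that part of $\bphi_{\pist,h}$, and the $1/p^2$ online supplement is only needed in the small-$\Toff$ regime. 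I expect this accounting, rather than any new idea, to be the technical heart of the proof.
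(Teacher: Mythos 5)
Your proposal circles the right idea---choose the features so that the two programs differ only in a direction you can cheaply top up with $\tfrac{1}{p^2}\|\tst\|_1$ extra episodes---but it leaves exactly the step that does all the work unresolved, and flags it as such. The paper's construction is: $\bphi(s,\pist_h(s)) = \be_1$ for \emph{every} state $s$, and $\bphi(s,a)^\top \be_1 = 0$ for every $a \neq \pist_h(s)$. This forces $\bphi_{\pist,h} = \be_1$ exactly, makes $\bA(t) = a\,\be_1\be_1^\top + \bB(t)$ exactly block-diagonal, and hence makes the decomposition of $\|\bphi_{\pi,h}\|_{\bA(t)^{-1}}^2$ versus $\|\bphi_{\pi,h}-\bphi_{\pist,h}\|_{\bA(t)^{-1}}^2$ \emph{lossless}: the off-$\be_1$ parts of the two quadratic forms are literally identical, and the entire gap between the two programs collapses to the scalar comparison $([\bphi_{\pi,h}]_1)^2 \le 1$ versus $([\bphi_{\pi,h}]_1-1)^2 \ge p^2$ (the latter because any deterministic $\pi\neq\pist$ disagrees with $\pist$ at some state reached with probability $\ge p$). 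The supplement is then just playing $\pist$ itself for $(\tfrac{1}{p^2}-1)a$ episodes, which deposits mass only on $\be_1\be_1^\top$. Your writeup explicitly defers the ``arrange near-orthogonality / near-block-diagonality'' step as the technical heart; in the paper that step is the one-line construction above, and without it your factor-of-$2$ split plus rescaling obstruction is not a bookkeeping nuisance but the whole problem.

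Separately, your mechanism for producing the $1/p^2$ is different from the paper's and I do not think it closes. You cover $\bphi_{\pist,h}$ as a convex combination over $\le 1/p$ padding states, each reachable at rate $p$, and budget the supplement by $\tfrac{1}{p^2}\|\tst\|_1$ using the claim that feasibility of $\tst$ forces $\|\tst\|_1 + \Toff \gtrsim \log(1/\delta)/\Delmin^2$. The required coverage level for $\bphi_{\pist,h}$ scales with $\log(1/\delta)/\Delmin^2$, i.e.\ with $\|\tst\|_1 + \Toff$, not with $\|\tst\|_1$ alone; if the offline data happens to make the difference program cheap (so $\|\tst\|_1$ is small or zero) while leaving the padding components of $\bphi_{\pist,h}$ uncovered, your supplement budget $\tfrac{1}{p^2}\|\tst\|_1$ is insufficient. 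Your proposed rescue---that large $\Toff$ automatically covers via the normalization $\inner{\bphi(s,a)}{\bmu_h(\cS)}=1$---only controls the component of $\bphi_{\pist,h}$ along $\bmu_h(\cS)$, not the remaining components, so the case analysis on $\Toff$ does not go through as stated. In the paper's construction this issue largely disappears because $\bphi_{\pist,h}$ is one-dimensional and the two programs differ only by a bounded ratio in that single coordinate.
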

\begin{proof}
Consider, for example, the case where for each $s$, $\bphi(s,\pist_h(s)) = \be_1$, and for every $a \neq \pist_h(s)$, $\bphi(s,a)^\top \be_1 = 0$. In this case, $\bA(t)$ will have the form $a \be_1 \be_1^\top + \bB(t)$, for some $a$ and $\bB(t)$ whose first column and row are entirely 0. It follows that $\bphi_{\pist,h} = \be_1$. Note that $|[\bphi_{\pi,h}]_1| \le 1$ for any other $\pi$ as well.

Assume that every reachable state at level $h$ is reached with probability at least $p$ for every roll-in policy. Then we have that $|[\bphi_{\pist,h}]_1 - [\bphi_{\pi,h}]_1| \ge p$ for every $\pi \in \Pidet$. We have
\begin{align*}
\| \bphi_{\pi,h} - \bphi_{\pist,h} \|_{\bA(\tst)^{-1}}^2 = \frac{([\bphi_{\pi,h}]_1 - [\bphi_{\pist,h}])^2}{a} + \| \bphitil_{\pi,h}  \|_{\bB(\tst)^{-1}}^2, \quad \| \bphi_{\pi,h}  \|_{\bA(\tst)^{-1}}^2 = \frac{([\bphi_{\pi,h}]_1 )^2}{a} + \| \bphitil_{\pi,h}  \|_{\bB(\tst)^{-1}}^2.
\end{align*}
Consider playing $\tst + t$ for some $t$ such that every sample goes to $\be_1$. In other words, $\bA(\tst + t) = \bA(\tst) + t \be_1 \be_1^\top$. Then if we take $t \ge \frac{a}{p^2} - a$, we have
\begin{align*}
\| \bphi_{\pi,h}  \|_{\bA(\tst+t)^{-1}}^2 & = \frac{([\bphi_{\pi,h}]_1 )^2}{a + t} + \| \bphitil_{\pi,h}  \|_{\bB(\tst)^{-1}}^2  \\
& \le \frac{1}{a + t} + \| \bphitil_{\pi,h}  \|_{\bB(\tst)^{-1}}^2 \\
& \le \frac{p}{a}  + \| \bphitil_{\pi,h}  \|_{\bB(\tst)^{-1}}^2 \\
& \le \frac{([\bphi_{\pi,h}]_1 - [\bphi_{\pist,h}])^2}{a} + \| \bphitil_{\pi,h}  \|_{\bB(\tst)^{-1}}^2.
\end{align*}
It follows that $\tst + t$ is a feasible solution. Since $\| \tst \|_1 \ge a$, the result follows.

\end{proof}

\begin{proof}[Proof of \Cref{prop:lb_instance_no_diff}]
This follows directly from \Cref{lem:lin_mdp_lb} and \Cref{lem:lb_instance_no_diff}. To make the construction explicit, we take $\bthetast = \be_1$, and construct the feature vectors as in \Cref{lem:lb_instance_no_diff}. The transition kernel can then be any transition kernel such that, regardless of the policy we play, we end up in each state $s \in \cS$ at step $h$ with probability $\Omega(1)$. 
\end{proof}

\subsection{Minimax Lower Bound (\Cref{prop:minimax_lb})}\label{sec:minimax_lb_pf}
Consider the following \emph{multi-dimensional linear bandit} setting. Let $\btheta_h \in \Theta := \{ -\mu, \mu \}^d$ for some $\mu$ to be chosen, and $\btheta = [\btheta_1,\ldots,\btheta_H]$. At step $t$, we can query points $\bphi_{ht} \in \cS^{d-1}$ for each $h \in [H]$, and observe
\begin{align*}
y_{ht} \sim \bern(\inner{\bphi_{ht}}{\btheta_h} + 1/2).
\end{align*}
Our goal will then be to find some $( \bphihat_h )_{h=1}^H$, $\bphihat_h \in \cS^{d-1}$, which is $\epsilon$-optimal in the sense that
\begin{align*}
\sum_{h=1}^H \inner{\bphihat_h}{\btheta_h} \ge \sup_{\bphi_1,\ldots,\bphi_H \in \cS^{d-1}} \sum_{h=1}^H \inner{\bphi_h}{\btheta_h} - \epsilon = \sum_{h=1}^H \| \btheta_h \|_2 - \epsilon.
\end{align*}
We start with the follow regression lower bound.

\begin{lemma}\label{lem:minimax_reg_lb}
Assume that $\mu \in (0, \frac{1}{20 \sqrt{d}}]$ and consider the multi-dimensional linear bandit setting outlined above. Let $\bthetahat$ be some estimator of $\btheta$, and $\pi$ some query strategy. Then, if we query for $T$ steps, we have
\begin{align*}
\inf_{\bthetahat,\pi} \sup_{\btheta \in \Theta^H} \Exp_{\btheta}[\| \btheta - \bthetahat \|_2^2] \ge  \frac{d H \mu^2}{2} \left ( 1 - \sqrt{\frac{20 T \mu^2}{d} } \right ).
\end{align*}
\end{lemma}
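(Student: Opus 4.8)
The statement is a minimax lower bound on parameter estimation in a $d$-dimensional (per stage), $H$-stage linear bandit with Bernoulli observations, where the true parameter lies on a hypercube $\Theta = \{-\mu,\mu\}^d$ at each stage. The natural tool is a combination of Assouad's lemma (to reduce the $dH$-dimensional estimation problem to $dH$ independent binary testing problems, one per coordinate of each $\btheta_h$) together with a per-coordinate information bound controlling how much a query strategy querying $T$ times total can learn about any single sign. First I would fix notation: write $\btheta = (\btheta_1,\dots,\btheta_H)$ with each $\btheta_h \in \{-\mu,\mu\}^d$, index coordinates by pairs $(h,j)$ with $h\in[H]$, $j\in[d]$, and for a sign pattern $\sigma \in \{-1,+1\}^{dH}$ let $\btheta^\sigma$ be the corresponding parameter. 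Since $\|\btheta-\bthetahat\|_2^2 = \sum_{h,j}(\theta_{h,j}-\hat\theta_{h,j})^2$ and $\theta_{h,j}\in\{-\mu,\mu\}$, the loss decomposes coordinatewise and is (up to constants) a Hamming-type loss: projecting $\hat\theta_{h,j}$ to the nearer of $\pm\mu$ changes the per-coordinate error by at most a constant factor, so it suffices to lower bound $\sum_{h,j}\Pr[\mathrm{sign}(\hat\theta_{h,j})\neq \sigma_{h,j}]$ and multiply by $\mu^2$.

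**Main steps.** (1) Apply Assouad: for each coordinate $(h,j)$, pair up instances $\btheta^\sigma$ and $\btheta^{\sigma^{(h,j)}}$ differing only in that sign, and obtain
\[
\inf_{\bthetahat,\pi}\sup_{\btheta}\Exp_\btheta[\|\btheta-\bthetahat\|_2^2] \ge \frac{\mu^2}{2}\sum_{h,j}\bigl(1 - \TV(\Pr_{\sigma},\Pr_{\sigma^{(h,j)}})\bigr)
\]
where the two measures are the laws of the full observation transcript (all $T$ queries across all stages) under a worst-case coupling, averaged appropriately over the remaining signs. (2) Bound $\TV \le \sqrt{\tfrac12 \KL(\Pr_\sigma \| \Pr_{\sigma^{(h,j)}})}$ by Pinsker. (3) Bound the KL: by the chain rule for KL along the query sequence, $\KL(\Pr_\sigma\|\Pr_{\sigma^{(h,j)}}) = \sum_{t=1}^{T}\Exp[\KL(\bern(\langle\bphi_{ht},\btheta_h^\sigma\rangle+\tfrac12)\,\|\,\bern(\langle\bphi_{ht},\btheta_h^{\sigma^{(h,j)}}\rangle+\tfrac12))]$ — only the stage-$h$ queries contribute since the two instances agree on all other stages, and since $\btheta_h^\sigma$ and $\btheta_h^{\sigma^{(h,j)}}$ differ by $2\mu\,\be_j$, the gap in means is $2\mu\,[\bphi_{ht}]_j$. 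Using that $|\langle\bphi_{ht},\btheta_h\rangle|\le \|\btheta_h\|_2 = \mu\sqrt d \le 1/20$ so both Bernoulli parameters lie in $[1/2 - 1/20,\,1/2+1/20]$, the KL between these Bernoullis is at most $C\cdot(2\mu[\bphi_{ht}]_j)^2$ for an explicit constant $C$ (e.g. $\KL(\bern(p)\|\bern(q)) \le (p-q)^2/(q(1-q))$ with $q(1-q)\ge 1/5$, giving $C = 5$ so KL $\le 20\mu^2[\bphi_{ht}]_j^2$). (4) Sum over the Assouad pairs: $\sum_j \sum_{t}[\bphi_{ht}]_j^2 = \sum_t \|\bphi_{ht}\|_2^2 = T_h$, the number of stage-$h$ queries; aggregating over $h$ and using concavity of $\sqrt{\cdot}$ together with $\sum_h T_h \le T$, one gets $\sum_{h,j}\TV \le \sum_{h,j}\sqrt{10\mu^2[\bphi_{ht}]_j^2\text{-sums}} \le dH\sqrt{\tfrac{10 T \mu^2 \cdot 2}{dH}\cdot\tfrac{1}{?}}$ — carefully pushing the $\sqrt{}$ through (Cauchy–Schwarz over the $dH$ terms against the constraint $\sum_{h,j}(\text{per-term KL budget}) \le 10\mu^2 T$... wait, $\sum_h T_h = T$ not $\sum_{h,j}$) yields the claimed $dH/2 \cdot (1 - \sqrt{20T\mu^2/d})$ after plugging in $C=5$ and the factor $4\mu^2$ from $(2\mu)^2$. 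I would be careful to track these constants so the final bound matches $\frac{dH\mu^2}{2}(1-\sqrt{20T\mu^2/d})$ exactly — in particular the absence of an $H$ inside the square root means the Cauchy–Schwarz step must be done per-stage (bounding $\sum_j \TV$ for fixed $h$ by $\sqrt{d}\cdot\sqrt{\text{avg}} \le \sqrt{d \cdot 20\mu^2 T_h/?}$) and then averaged, not jointly over all $dH$ coordinates.

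**Main obstacle.** The delicate part is step (4): getting the constants and the dependence structure right so that the square-root term is $\sqrt{20T\mu^2/d}$ rather than, say, $\sqrt{20T\mu^2 H/d}$ or with a different constant. This requires doing the Cauchy–Schwarz/concavity argument at the level of individual stages $h$ — for each fixed $h$, $\frac{1}{d}\sum_j \TV_{h,j} \le \sqrt{\frac{1}{d}\sum_j \TV_{h,j}^2} \le \sqrt{\frac{1}{d}\sum_j \tfrac12\KL_{h,j}}$ and $\sum_j \KL_{h,j} \le 20\mu^2 \sum_t \|\bphi_{ht}\|_2^2 = 20\mu^2 T_h$ — and only then summing $\sum_h d\sqrt{\tfrac{10\mu^2 T_h}{d}}$, applying concavity of $\sqrt{}$ once more with $\sum_h T_h \le T$ to get $\le d \cdot H \sqrt{\tfrac{10\mu^2 T}{dH}} \cdot$(constant)$= dH\sqrt{10\mu^2 T/(dH)}\cdot$const, and reconciling with the target. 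A secondary point to handle cleanly is that queries at stage $h$ and time $t$ may be chosen adaptively based on past observations at \emph{all} stages, so the chain-rule/KL decomposition must be stated for the full filtration; this is standard (the adaptivity is absorbed by the expectation over the transcript in the KL chain rule) but should be noted. Everything else — Assouad, Pinsker, the Bernoulli KL bound using $\mu\sqrt d \le 1/20$ — is routine.
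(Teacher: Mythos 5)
Your proposal follows essentially the same route as the paper: reduce the squared loss coordinatewise to a sign-mismatch (Hamming) loss worth $\mu^2$ per flipped coordinate, control each sign test by a per-query Bernoulli KL bounded by $20\mu^2[\bphi_{ht}]_j^2$ (using $\mu\sqrt{d}\le 1/20$ to keep the Bernoulli parameters in $[9/20,11/20]$), and aggregate with a Cauchy--Schwarz/concavity step over the $d$ coordinates. The paper packages the Assouad--Pinsker--chain-rule machinery you propose to rederive into a single citation of Lemma~4 of \cite{shamir2013complexity}, whose conclusion $\frac{d}{2}\bigl(1-\sqrt{\frac{1}{d}\sum_{i,t}U_{t,i}^h}\bigr)$ is exactly the per-stage form you are aiming for.

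The one place your plan goes astray is the ``main obstacle'' you identify in step (4), and it stems from a misreading of the query model rather than a real difficulty. In the multi-dimensional linear bandit setting as defined, at each step $t$ the learner queries one point $\bphi_{ht}\in\cS^{d-1}$ for \emph{every} $h\in[H]$ simultaneously; there is no budget $\sum_h T_h\le T$ to split across stages. Consequently each stage $h$ receives exactly $T$ queries, the per-stage bound is $\frac{d\mu^2}{2}\bigl(1-\sqrt{\frac{1}{d}\sum_{j}\sum_{t=1}^{T}20\mu^2[\bphi_{ht}]_j^2}\bigr)=\frac{d\mu^2}{2}\bigl(1-\sqrt{20T\mu^2/d}\bigr)$ using $\|\bphi_{ht}\|_2=1$, and the final bound is obtained by simply summing over $h$ --- no cross-stage concavity step is needed, and no $H$ ever enters the square root. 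With that correction, the rest of your argument (Assouad pairing, Pinsker, the KL chain rule over the full adaptive filtration, and the observation that only stage-$h$ queries contribute to the KL between instances differing in coordinate $(h,j)$) goes through and reproduces the paper's constants.
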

\begin{proof}
The proof of this result follows closely the proof of Theorem 5 of \cite{wagenmaker2022reward}, which is itself based on the proof of Theorem 3 of \cite{shamir2013complexity}.

We have
\begin{align*}
\sup_{\btheta \in \Theta^H} \Exp_{\btheta}[\| \btheta - \bthetahat \|_2^2] & = \sup_{\btheta_1,\ldots,\btheta_H \in \Theta} \sum_{h=1}^H \Exp_{\btheta} \left [ \sum_{i=1}^d ( \btheta_{hi} - \bthetahat_{hi})^2 \right ] \\
& \ge \Exp_{\btheta_1,\ldots,\btheta_H \sim \unif(\Theta)} \sum_{h=1}^H \Exp_{\btheta} \left [ \sum_{i=1}^d ( \btheta_{hi} - \bthetahat_{hi})^2 \right ] \\
& \ge  \sum_{h=1}^H \Exp_{\btheta_1,\ldots,\btheta_H \sim \unif(\Theta)} \Exp_{\btheta} \left [ \mu^2 \sum_{i=1}^d \I \{ \btheta_{hi} \bthetahat_{hi}  < 0 \} \right ].
\end{align*}

\begin{lem}[Lemma 4 of \cite{shamir2013complexity}]\label{lem:reg_lb_shamir}
Let $\btheta_h$ be a random vector, none of whose coordinates is supported on 0, and let $(r_1^t,\ldots,r_H^t)_{t=1}^T$ be a sequence of observations obtained by a query strategy where $\bphi_{h}^t$ is a deterministic function of $\cF_{h,t} := \{ (r_1^s,\ldots,r_H^s)_{s=1}^{t-1}, (\bphi_1^s,\ldots,\bphi_H^s)_{s=1}^{t-1}, (r_1^t,\ldots,r_{h-1}^t),(\bphi_1^t,\ldots,\bphi_{h-1}^t) \}$. Let $\bthetahat_h$ be some estimator that is a deterministic function of $(r_1^t,\ldots,r_H^t)_{t=1}^T$ and $(\bphi_1^t,\ldots,\bphi_H^t)_{t=1}^T$. Then we have
\begin{align*}
\Exp_{\btheta_1,\ldots,\btheta_H \sim \unif(\Theta)} \Exp_{\btheta} \left [ \sum_{i=1}^d \I \{ \btheta_{hi} \bthetahat_{hi}  < 0 \} \right ] \ge \frac{d}{2} \left ( 1 - \sqrt{\frac{1}{d} \sum_{i=1}^d \sum_{t=1}^T U_{t,i}^h} \right )
\end{align*}
where
\begin{align*}
U_{t,i}^h = \sup_{\btheta_{hj}, j \neq i; \btheta_{h'}, h' \neq h} \KL \Big (\Pr(r_h^t | \btheta_i > 0, \{ \btheta_{hj} \}_{j \neq i}, \{ \btheta_{h'} \}_{h' \neq h}, \cF_{h,t} ) || \Pr(r_h^t | \btheta_i < 0, \{ \btheta_{hj} \}_{j \neq i}, \{ \btheta_{h'} \}_{h' \neq h}, \cF_{h,t}) \Big ) .
\end{align*}
\end{lem}

Since our rewards are Bernoulli, we have
\begin{align*}
U_{t,i}^h = \sup_{\btheta_{hj}, j \neq i} \KL \Big (\bern(1/2 + \sum_{j \neq i} \btheta_{hj} \bphi_{hj}^t + \mu \bphi_{hi}^t) || \bern(1/2 + \sum_{j \neq i} \btheta_{hj} \bphi_{hj}^t - \mu \bphi_{hi}^t) \Big ) .
\end{align*}
Note that:
\begin{lem}[Lemma 2.7 of \cite{tsybakov2009introduction}]\label{lem:bern_kl}
\begin{align*}
\KL(\bern(p) || \bern(q)) \le \frac{(p-q)^2}{q (1 - q)} .
\end{align*}
\end{lem}
Applying this, we have
\begin{align*}
U_{t,i}^h & \le \frac{(2 \mu \bphi_{hi}^t)^2}{(1/2 + \sum_{j \neq i} \btheta_{hj} \bphi_{hj}^t - \mu \bphi_{hi}^t)(1 - 1/2 - \sum_{j \neq i} \btheta_{hj} \bphi_{hj}^t + \mu \bphi_{hi}^t)}  \le 20 \mu^2 (\bphi_{hi}^t)^2
\end{align*}
where we make use of the fact that $\mu \le \frac{1}{20\sqrt{d}}$ and $\bphi_{h}^t \in \cS^{d-1}$, which implies that $1/2 + \sum_{j \neq i} \btheta_{hj} \bphi_{hj}^t - \mu \bphi_{hi}^t \ge 9/20$ and $1 - 1/2 - \sum_{j \neq i} \btheta_{hj} \bphi_{hj}^t + \mu \bphi_{hi}^t \ge 9/20$. 

We can then lower bound
\begin{align*}
\sum_{h=1}^H \Exp_{\btheta_1,\ldots,\btheta_H \sim \unif(\Theta)} \Exp_{\btheta} \left [ \mu^2 \sum_{i=1}^d \I \{ \btheta_{hi} \bthetahat_{hi}  < 0 \} \right ] & \ge  \sum_{h = 1}^H \frac{d \mu^2}{2} \left ( 1 - \sqrt{\frac{1}{d} \sum_{i=1}^d \sum_{t=1}^T U_{t,i}^h} \right ) \\
& \ge  \sum_{h = 1}^H \frac{d \mu^2}{2} \left ( 1 - \sqrt{\frac{1}{d} \sum_{i=1}^d \sum_{t=1}^T 20 \mu^2 (\bphi_{hi}^t)^2} \right ) \\
& =  \frac{d H \mu^2}{2} \left ( 1 - \sqrt{\frac{20 T \mu^2}{d} } \right )
\end{align*}
where we have used that $\bphi_{ht} \in \cS^{d-1}$. This proves the result. 
\end{proof}

\begin{lemma}\label{lem:minimax_lb_bandit_est_to_policy}
Assume that $( \bphihat_h )_{h=1}^H$ is $\epsilon$-optimal. Then,
\begin{align*}
\sum_{h=1}^H \| \sqrt{d \mu^2} \cdot \bphihat_h - \btheta_h \|_2^2 \le 2 \sqrt{d \mu^2} \cdot \epsilon
\end{align*}
\end{lemma}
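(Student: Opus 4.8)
The plan is to exploit the fact that in this multi-dimensional linear bandit the optimal query in direction $h$ is exactly $\btheta_h / \|\btheta_h\|_2$, so the value gap of $(\bphihat_h)_h$ decomposes coordinatewise across $h$, and each term controls how far $\bphihat_h$ is from pointing in the direction of $\btheta_h$. First I would recall that $\|\btheta_h\|_2 = \sqrt{d\mu^2}$ for every $h$, since $\btheta_h \in \{-\mu,\mu\}^d$. Hence the optimal value is $\sum_{h=1}^H \|\btheta_h\|_2 = H\sqrt{d\mu^2}$, and $\epsilon$-optimality of $(\bphihat_h)_h$ reads $\sum_{h=1}^H \inner{\bphihat_h}{\btheta_h} \ge H\sqrt{d\mu^2} - \epsilon$, i.e. $\sum_{h=1}^H (\|\btheta_h\|_2 - \inner{\bphihat_h}{\btheta_h}) \le \epsilon$. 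Note each summand is nonnegative by Cauchy–Schwarz together with $\|\bphihat_h\|_2 = 1$.

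The key algebraic step is the identity, valid for any unit vector $\bphi$ and any vector $\btheta$,
\begin{align*}
\| \sqrt{d\mu^2}\cdot \bphi - \btheta \|_2^2 = d\mu^2 + \|\btheta\|_2^2 - 2\sqrt{d\mu^2}\,\inner{\bphi}{\btheta} = 2\|\btheta\|_2^2 - 2\|\btheta\|_2 \inner{\bphi}{\btheta} = 2\|\btheta\|_2 \big( \|\btheta\|_2 - \inner{\bphi}{\btheta}\big),
\end{align*}
where I used $\|\bphi\|_2 = 1$ and $\|\btheta_h\|_2 = \sqrt{d\mu^2}$. Summing this over $h$ and applying the $\epsilon$-optimality inequality from the previous paragraph gives
\begin{align*}
\sum_{h=1}^H \| \sqrt{d\mu^2}\cdot \bphihat_h - \btheta_h \|_2^2 = 2\sqrt{d\mu^2}\sum_{h=1}^H \big(\|\btheta_h\|_2 - \inner{\bphihat_h}{\btheta_h}\big) \le 2\sqrt{d\mu^2}\cdot\epsilon,
\end{align*}
which is exactly the claimed bound.

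I do not expect any real obstacle here — the lemma is essentially the observation that the squared distance to the target equals (up to the scalar $2\|\btheta_h\|_2$) the per-step value gap, because both $\bphihat_h$ and $\btheta_h/\|\btheta_h\|_2$ lie on spheres of known radius. The only point requiring a line of care is confirming $\|\btheta_h\|_2 = \sqrt{d\mu^2}$ for all $\btheta_h \in \Theta$ (immediate, since every coordinate has magnitude $\mu$) and that each gap term is nonnegative so that no cancellation occurs when bounding the sum; both are one-liners. This lemma will then be combined with the regression lower bound of the preceding lemma to transfer a mean-squared estimation lower bound into a lower bound on the number of online samples needed for $\epsilon$-optimal policy identification, in the proof of \Cref{prop:minimax_lb}.
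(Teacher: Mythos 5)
Your proof is correct and is essentially the same argument as the paper's: both expand the squared distance using $\|\bphihat_h\|_2=1$ and $\|\btheta_h\|_2=\sqrt{d\mu^2}$ to rewrite it as twice $\|\btheta_h\|_2$ times the per-step value gap, then apply the $\epsilon$-optimality inequality. The only cosmetic difference is that you work coordinatewise in $h$ while the paper performs the identical computation on the concatenated vectors $\bphihat=[\bphihat_1,\ldots,\bphihat_H]$ and $\btheta$.
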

\begin{proof}
Denote $\bphihat = [\bphihat_1,\ldots,\bphihat_H]$, so that $\sum_{h=1}^H \inner{\bphihat_h}{\btheta_h}  = \inner{\bphihat}{\btheta}$. If $\bphihat$ is $\epsilon$-optimal, then this implies that
\begin{align}\label{eq:minimax_lb_policy_good}
\inner{\bphihat}{\btheta} \ge \sum_{h=1}^H \| \btheta_h \|_2 - \epsilon = \sqrt{H} \| \btheta \|_2 - \epsilon
\end{align}
where the last inequality follows since, by construction of $\btheta_h$, we have $\| \btheta_h \|_2 = \sqrt{d \mu^2}$ and $\| \btheta \|_2 = \sqrt{d H \mu^2}$. Now note that
\begin{align*}
\| \frac{1}{\sqrt{H}} \| \btheta \|_2 \bphihat - \btheta \|_2^2  & = \frac{1}{H} \| \btheta \|_2^2 \| \bphihat \|_2^2 + \| \btheta \|_2^2 - \frac{2}{\sqrt{H}} \| \btheta \|_2 \cdot \inner{\bphihat}{\btheta} \\
& \overset{(a)}{=}  2\| \btheta \|_2^2 - \frac{2}{\sqrt{H}} \| \btheta \|_2 \cdot \inner{\bphihat}{\btheta} \\
& \overset{(b)}{\le}  \frac{2}{\sqrt{H}} \| \btheta \|_2 \cdot \epsilon \\
& \overset{(c)}{=} 2 \sqrt{d \mu^2} \cdot \epsilon
\end{align*}
where $(a)$ uses that $\| \bphihat \|_2^2 = \sum_{h=1}^H \| \bphihat_h \|_2^2 = H$, $(b)$ follows from \eqref{eq:minimax_lb_policy_good}, and $(c)$ uses $\| \btheta \|_2 = \sqrt{d H \mu^2}$.
\end{proof}

\begin{lemma}\label{lem:minimax_lb_bandit}
Fix $\epsilon >0$, $d > 1$, and $T \ge d^2$. Consider running some (possibly adaptive) algorithm which stops at some (possibly random) stopping time $\tau$ and outputs $(\bphihat_h)_{h=1}^H$. Let $\cE$ denote the event
\begin{align*}
\cE := \{ \tau \le T \text{ and $(\bphihat_h)_{h=1}^H$ is $\epsilon$-optimal} \}.
\end{align*}
Then unless $T \ge c \cdot \frac{d^2 H^2}{\epsilon^2}$, there exists some instance $\btheta \in \Theta^H$ such that $\Pr_{\btheta}[\cE] \ge 1/10$. 
\end{lemma}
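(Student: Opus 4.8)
The plan is to reduce the multi-dimensional linear bandit to the pure regression lower bound of \Cref{lem:minimax_reg_lb}, using the estimation-to-optimality conversion of \Cref{lem:minimax_lb_bandit_est_to_policy} as the bridge, and then to localize the resulting worst-case expectation bound onto a single instance by a reverse-Markov argument. Concretely, I would keep the scale $\mu$ of the instance class $\Theta = \{-\mu,\mu\}^d$ as a free parameter to be chosen at the end. Given any algorithm that outputs $(\bphihat_h)_{h=1}^H$, define the induced estimator $\bthetahat_h := \sqrt{d\mu^2}\,\bphihat_h$. By \Cref{lem:minimax_lb_bandit_est_to_policy}, on the event that the output is $\epsilon$-optimal we have $\sum_{h=1}^H \|\bthetahat_h - \btheta_h\|_2^2 \le 2\sqrt{d\mu^2}\,\epsilon =: r^2$. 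Thus $\epsilon$-optimality forces the estimator to be accurate to within the radius $r^2$, so the event $\cE$ in the statement is controlled entirely through the estimation error $\|\bthetahat-\btheta\|_2^2$ of this induced estimator.

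Next I would invoke \Cref{lem:minimax_reg_lb}, which applies provided $\mu \le \tfrac{1}{20\sqrt d}$, to obtain $\sup_{\btheta \in \Theta^H}\Exp_\btheta[\|\btheta - \bthetahat\|_2^2] \ge \tfrac{dH\mu^2}{2}\big(1 - \sqrt{20T\mu^2/d}\big)$ after $T$ queries. I would choose $\mu$ small enough that $\sqrt{20T\mu^2/d}\le \tfrac12$, i.e. $\mu^2 \le d/(80T)$, so the worst-case expected squared error is at least $dH\mu^2/4$. For this forced error to be genuinely incompatible with the $\epsilon$-optimality radius $r^2 = 2\mu\sqrt d\,\epsilon$, I would additionally require $dH\mu^2/4$ to dominate $r^2$ by a large absolute factor, which amounts to $\mu \gtrsim \epsilon/(\sqrt d H)$. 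These two windows on $\mu$ — namely $\epsilon/(\sqrt d H) \lesssim \mu \le \sqrt{d/(80T)}$ (and $\mu \le 1/(20\sqrt d)$, feasible since $T\ge d^2$) — are simultaneously satisfiable exactly when $T < c\,d^2H^2/\epsilon^2$ for an absolute constant $c$; squaring the compatibility condition $\epsilon^2/(dH^2)\lesssim d/T$ is what produces the $d^2H^2/\epsilon^2$ scaling and pins down $c$.

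Finally, since $\Theta^H$ is finite, the supremum is attained at some fixed $\btheta^\star$, on which $\Exp_{\btheta^\star}[\|\bthetahat-\btheta^\star\|_2^2]\ge dH\mu^2/4$. Using the deterministic upper bound $\|\bthetahat-\btheta^\star\|_2^2 \le 4dH\mu^2$ (as $\|\bphihat_h\|_2=1$ and $\|\btheta_h^\star\|_2^2 = d\mu^2$), a reverse-Markov / Paley--Zygmund-type inequality converts this expectation bound into a constant-probability event on which $\|\bthetahat-\btheta^\star\|_2^2$ exceeds the radius $r^2$ — hence on which the output is not $\epsilon$-optimal within $T$ samples — and tuning the gap constants in the previous step drives this universal constant to be at least $1/10$. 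This yields an instance $\btheta^\star$ on which the relevant event $\cE$ of the statement occurs with probability at least $1/10$, as claimed. I expect the main obstacle to be precisely this last single-instance extraction: \Cref{lem:minimax_reg_lb} only bounds a \emph{worst-case (averaged) expectation}, and turning it into a high-probability statement on one fixed $\btheta^\star$ requires both the uniform boundedness of the error and a careful accounting of constants so that the threshold radius $r^2$ sits well below the forced error and the resulting probability is genuinely at least $1/10$; keeping the $\mu$-window nonempty is what forces the stated sample threshold $T < c\,d^2H^2/\epsilon^2$, and one must also check $\epsilon$ is not so large that $\epsilon$-optimality is trivially attainable.
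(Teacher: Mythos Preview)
Your proposal is correct and follows essentially the same route as the paper's proof: define the induced estimator $\bthetahat_h = \sqrt{d\mu^2}\,\bphihat_h$, combine the regression lower bound \Cref{lem:minimax_reg_lb} with the optimality-to-estimation conversion \Cref{lem:minimax_lb_bandit_est_to_policy}, and use the deterministic upper bound $\|\bthetahat - \btheta\|_2^2 \le 4dH\mu^2$ to extract a constant failure probability. The only cosmetic differences are that the paper fixes $\mu = \sqrt{d/(700T)}$ at the outset and works through explicit constants, whereas you leave $\mu$ free and tune at the end; and the paper writes the final step as the indicator decomposition $\Exp[X] \le r^2 + 4dH\mu^2\,\Pr[\cE^c]$ rather than as a reverse-Markov inequality --- but these are algebraically the same manipulation, and both yield the same threshold $T < c\,d^2H^2/\epsilon^2$. (Note also that the statement as written has $\Pr_\btheta[\cE]$ where the argument --- yours and the paper's --- actually controls $\Pr_\btheta[\cE^c]$; your description of the event as ``not $\epsilon$-optimal within $T$ samples'' matches what is actually proved.)
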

\begin{proof}
Set $\mu = \sqrt{d/700T}$ and let $\bthetahat_h = \sqrt{d \mu^2} \cdot \bphihat_h$. Note that $\| \bthetahat \|_2 \le d \sqrt{H/700T}$. Then
\begin{align*}
\Exp_{\btheta}[\| \bthetahat - \btheta \|_2^2] & = \Exp_{\btheta}[\| \bthetahat - \btheta \|_2^2 \cdot \I \{ \cE \} + \| \bthetahat - \btheta \|_2^2 \cdot \I \{ \cE^c \}] \\
& \le 2 \sqrt{\frac{d^2}{700T}} \cdot \epsilon + \frac{4 d^2 H}{700 T} \cdot \Pr_{\btheta}[\cE^c]
\end{align*}
where the inequality follows from \Cref{lem:minimax_lb_bandit_est_to_policy} since on $\cE$, $\bphihat$ is $\epsilon$-optimal. However, by \Cref{lem:minimax_reg_lb}, we know that there exists some $\btheta$ such that
\begin{align*}
\Exp_{\btheta}[\| \btheta - \bthetahat \|_2^2] \ge  \frac{d H \mu^2}{2} \left ( 1 - \sqrt{\frac{20 T \mu^2}{d} } \right ) \ge 0.00059 \cdot \frac{d^2 H}{T}. 
\end{align*}
However, this is a contradiction unless
\begin{align*}
2 \sqrt{\frac{d^2}{700T}} \cdot \epsilon + \frac{4 d^2 H}{700 T}\cdot \Pr_{\btheta}[\cE^c] \ge 0.00059 \cdot \frac{d^2 H}{T} \iff \Pr_{\btheta}[\cE^c]  \ge 0.10325 - \frac{\sqrt{700 T}}{2dH} \cdot \epsilon.
\end{align*}
It follows that it
\begin{align*}
0.10325 - \frac{\sqrt{700 T}}{2dH} \cdot \epsilon \ge 0.1 \iff ( \frac{2 \cdot 0.00325}{\sqrt{700}} )^2 \cdot \frac{d^2 H^2}{\epsilon^2} \ge T
\end{align*}
we have that $\Pr_{\btheta}[\cE^c] \ge 0.1$.
\end{proof}

\subsubsection{Mapping to Linear MDPs}
Consider a single-state, $H$-step, $d+1$-dimensional linear MDP specified by:
\begin{align*}
& \bphi(s,\ba) = [\ba/2, 1/2], \quad \ba \in \cS^{d-1} \cup \{ \zeros \} \\
& \bthetatil_h = [\btheta_h, 1], \quad \btheta_h \in \Theta := \{ -\mu, \mu \}^d \\
& \bmu_h(s) = [\zeros, 2].
\end{align*}
We take the reward distribution to be $r_h(s,a) \sim \bern(\inner{\bphi(s,a)}{\btheta_h})$.
It is straightforward to see that this satisfies the definition of a linear MDP, as given in \Cref{defn:linear_mdp}.

Assume we have access to the regression setting defined above. Then we can simulate this linear MDP with the regression setting by simply, at episode $k$ step $h$, choosing some action $\ba$, and using the regression setting to observe a reward $r \sim \bern(\inner{\ba}{\btheta_h} + 1/2)$, and then transitioning to state $s$ and step $h+1$. The following lemma shows that a policy which is near-optimal on the linear MDP induces a near-optimal $\bphihat$ in the regression setting.

\begin{lemma}\label{lem:minimax_bandit_to_mdp}
Assume that $\pihat$ is $\epsilon$-optimal in the linear MDP defined above, with rewards $(\bthetatil_h)_{h=1}^H$. Then $(\bphihat_{\pihat,h})_{h=1}^H$ is $\epsilon$-optimal in the multi-dimensional linear bandit setting with rewards $(\btheta_h)_{h=1}^H$.
\end{lemma}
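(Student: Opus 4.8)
The plan is to unwind both definitions and observe that, because the constructed MDP has a single state, the value of any policy is an \emph{affine} function of the multi-dimensional linear-bandit objective evaluated at that policy's average action, so near-optimality transfers essentially verbatim (up to constant factors already present in \Cref{prop:minimax_lb}).

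First I would compute $V_0^\pi$ for an arbitrary, possibly randomized, policy $\pi$ on the MDP. Since the episode never leaves $s$, the contribution of step $h$ depends only on the action $\ba_h \in \sphered \cup \{\zeros\}$ played there, and $\Exp[r_h(s,\ba_h) \mid \ba_h] = \inner{\bphi(s,\ba_h)}{\bthetatil_h} = \tfrac12 \inner{\ba_h}{\btheta_h} + \tfrac12$. Setting $\bphihat_{\pi,h} := \Exp_\pi[\ba_h]$ (i.e.\ twice the first $d$ coordinates of $\bphi_{\pi,h} = \Exp_\pi[\bphi(s_h,a_h)]$), linearity of expectation gives
\begin{align*}
V_0^\pi = \frac{H}{2} + \frac12 \sum_{h=1}^H \inner{\bphihat_{\pi,h}}{\btheta_h}.
\end{align*}
Moreover, as the per-step contributions are decoupled, $\Vst_0 = \sum_{h=1}^H \max_{\ba \in \sphered \cup \{\zeros\}} \inner{\bphi(s,\ba)}{\bthetatil_h} = \tfrac{H}{2} + \tfrac12 \sum_{h=1}^H \|\btheta_h\|_2$, the maximum being attained by the deterministic policy playing $\ba_h = \btheta_h / \|\btheta_h\|_2$ at step $h$ (a legal action since $\|\btheta_h\|_2 = \sqrt{d}\,\mu > 0$). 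Subtracting, $\Vst_0 - V_0^{\pihat} \le \epsilon$ is exactly
\begin{align*}
\sum_{h=1}^H \inner{\bphihat_{\pihat,h}}{\btheta_h} \ge \sum_{h=1}^H \|\btheta_h\|_2 - 2\epsilon = \sup_{\bphi_1,\dots,\bphi_H \in \sphered} \sum_{h=1}^H \inner{\bphi_h}{\btheta_h} - 2\epsilon ,
\end{align*}
so $(\bphihat_{\pihat,h})_{h=1}^H$ is $\epsilon$-optimal in the multi-dimensional linear bandit, up to the harmless factor $2$ coming from the $\tfrac12$-scaling of the feature map — this can be removed by rescaling the features, and in any case is absorbed into the universal constant of \Cref{prop:minimax_lb}.

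The only bookkeeping point is the sphere constraint: $\bphihat_{\pihat,h} = \Exp_{\pihat}[\ba_h]$ satisfies $\|\bphihat_{\pihat,h}\|_2 \le 1$, possibly with strict inequality when $\pihat$ randomizes. This is not a genuine obstacle — the downstream estimator bound \Cref{lem:minimax_lb_bandit_est_to_policy} uses $\sum_h \|\bphihat_h\|_2^2$ only through the inequality $\le H$, which still holds; alternatively one normalizes each nonzero $\bphihat_{\pihat,h}$, which only increases $\inner{\cdot}{\btheta_h}$ whenever that inner product is nonnegative. Since the whole argument is a one-line value computation plus the single-state decoupling of $\Vst_0$, the ``hard part'' is merely tracking the affine constants and the sphere-versus-ball distinction, both of which wash into the constants already carried by the surrounding minimax lower bound.
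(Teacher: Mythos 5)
Your proof is correct and follows essentially the same route as the paper's: express $V_0^\pi = H/2 + \tfrac12\sum_h \inner{\Exp_\pi[\ba_h]}{\btheta_h}$, note the per-step optima decouple so $\Vst_0 = H/2 + \tfrac12\sum_h \|\btheta_h\|_2$, and subtract to transfer $\epsilon$-optimality (up to the same factor of $2$ the paper also absorbs into the constant of \Cref{prop:minimax_lb}). You are in fact somewhat more careful than the paper on the two bookkeeping points — the factor-of-$2$ scaling of the first $d$ coordinates and the fact that $\Exp_{\pihat}[\ba_h]$ lies in the ball rather than on the sphere, which \Cref{lem:minimax_lb_bandit_est_to_policy} tolerates since it only needs $\|\bphihat\|_2^2 \le H$.
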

\begin{proof}
Note that the value of any policy $\pi$ is given by
\begin{align*}
\sum_{h=1}^H \inner{\bphi_{\pi,h}}{\bthetatil_h} = H/2 + \frac{1}{2} \sum_{h=1}^H \inner{\ba_{\pi,h}}{\btheta_h}
\end{align*}
where $\ba_{\pi,h}$ denotes the first $d$ coordinates of $\bphi_{\pi,h}$. It follows that it $\pihat$ is $\epsilon$-optimal, it must be the case that
\begin{align*}
\frac{1}{2} \sum_{h=1}^H \inner{\ba_{\pihat,h}}{\btheta_h} \ge \sup_{\ba_1,\ldots,\ba_H \in \cS^{d-1}} \frac{1}{2} \sum_{h=1}^H \inner{\ba_{h}}{\btheta_h} - \epsilon.
\end{align*}
However, this is precisely the definition of a $2\epsilon$-optimal policy in the multi-dimensional linear bandit setting with rewards $(\btheta_h)_{h=1}^H$.
\end{proof}

\begin{lemma}\label{lem:minimax_lb_mdp}
Fix $\epsilon >0$, $d > 1$, and $K \ge d^2$. Consider running some (possibly adaptive) algorithm which stops at some (possibly random) stopping time $\tau$ in a $(d+1)$-dimensional linear MDP, and outputs some policy $\pihat$. Let $\cE$ denote the event
\begin{align*}
\cE := \{ \tau \le K \text{ and $\pihat$ is $\epsilon$-optimal} \}.
\end{align*}
Then unless $K \ge c \cdot \frac{d^2 H^2}{\epsilon^2}$, there exists some MDP $\cM$ such that $\Pr_{\cM}[\cE] \ge 1/10$. 
\end{lemma}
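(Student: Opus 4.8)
The plan is to reduce the $(d+1)$-dimensional linear MDP claim to the multi-dimensional linear bandit lower bound \Cref{lem:minimax_lb_bandit}, using the single-state MDP construction and the translation \Cref{lem:minimax_bandit_to_mdp} set up just above. Given any (adaptive) algorithm $\mathfrak{A}$ that interacts with a linear MDP of this form for at most $K$ episodes and outputs a policy $\pihat$, I would run $\mathfrak{A}$ inside a simulator driven by the multi-dimensional linear bandit with parameter $\btheta = (\btheta_h)_{h=1}^H$, $\btheta_h \in \Theta$: whenever $\mathfrak{A}$ is at step $h$ of an episode and plays an action $\ba \in \cS^{d-1}\cup\{\zeros\}$, the simulator issues the bandit query $\bphi_h = \ba/2$ at the current round, returns to $\mathfrak{A}$ the reward $r_h := y_h \sim \bern(\inner{\ba/2}{\btheta_h}+\tfrac12) = \bern(\inner{\bphi(s,\ba)}{\bthetatil_h})$ with $\bthetatil_h = [\btheta_h,1]$, and then advances the (single-state, deterministic) dynamics by placing $\mathfrak{A}$ at the same state $s$ and step $h+1$. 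Since the transitions carry no information and each MDP episode uses exactly one action per level $h$, the simulator makes exactly one bandit round per MDP episode; it therefore defines a bandit strategy $\mathfrak{B}$ that halts after $T = \tau \le K$ rounds and outputs the induced decision $(\bphi_{\pihat,h})_{h=1}^H$.

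The second step is to transfer the success event. By construction, a run of $\mathfrak{B}$ on the bandit instance $\btheta$ is distributionally identical to a run of $\mathfrak{A}$ on the MDP $\cM_{\btheta}$ with reward vectors $\bthetatil_h$. Then \Cref{lem:minimax_bandit_to_mdp} gives that whenever $\pihat$ is $\epsilon$-optimal on $\cM_{\btheta}$, the induced decision $(\bphi_{\pihat,h})_h$ is $2\epsilon$-optimal for the bandit instance $\btheta$, so the MDP success event $\cE = \{\tau \le K,\ \pihat\ \epsilon\text{-optimal}\}$ pulls back, under this coupling, into the bandit success event $\cE_{\mathrm{band}} = \{T \le K,\ (\bphi_{\pihat,h})_h\ 2\epsilon\text{-optimal}\}$, whence $\Pr_{\cM_{\btheta}}[\cE] \le \Pr_{\btheta}[\cE_{\mathrm{band}}]$ for every $\btheta$. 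Applying \Cref{lem:minimax_lb_bandit} at tolerance $2\epsilon$ (valid since $d>1$, $K \ge d^2$): unless $K \ge c'\,d^2 H^2/(2\epsilon)^2$, there is an instance $\btheta \in \Theta^H$ with $\Pr_{\btheta}[\cE_{\mathrm{band}}^c] \ge 1/10$, and therefore the MDP $\cM := \cM_{\btheta}$ satisfies $\Pr_{\cM}[\cE^c] \ge \Pr_{\btheta}[\cE_{\mathrm{band}}^c] \ge 1/10$. Taking $c := c'/4$ absorbs the factor-of-two, giving the claim that $\Pr_{\cM}[\cE^c]\ge 1/10$ unless $K \ge c\,d^2H^2/\epsilon^2$.

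The only technical point deserving care --- and the main (mild) obstacle --- is the geometry of the simulator's queries: \Cref{lem:minimax_reg_lb} and hence \Cref{lem:minimax_lb_bandit} are stated for unit-norm queries $\bphi_{ht}\in\cS^{d-1}$, whereas the simulation above issues $\bphi_h = \ba/2$ of norm at most $1/2$. I would address this by noting that the proof of \Cref{lem:minimax_reg_lb} only uses $\sum_i(\bphi_{hi}^t)^2 \le 1$, both in the per-coordinate KL bound $U_{t,i}^h \le 20\mu^2(\bphi_{hi}^t)^2$ and in the final sum over $t$, so that regression lower bound --- and therefore \Cref{lem:minimax_lb_bandit} --- holds verbatim for queries drawn from the unit ball; the fixed factor-$\tfrac12$ rescaling between $\ba$ and $\ba/2$ merely rescales the effective signal $\mu$ by a constant, which is absorbed into $c$. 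Everything else --- the deterministic single-state dynamics being exactly simulatable, and \Cref{lem:minimax_bandit_to_mdp} converting the value gap --- is routine.
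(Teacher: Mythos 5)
Your proposal is correct and follows essentially the same route as the paper's (very terse) proof: simulate the single-state linear MDP via the multi-dimensional linear bandit, pull the success event back through \Cref{lem:minimax_bandit_to_mdp} at tolerance $2\epsilon$, and invoke \Cref{lem:minimax_lb_bandit}. The extra care you take with the norm-$\tfrac12$ queries (noting that the proof of \Cref{lem:minimax_reg_lb} only needs $\|\bphi_h^t\|_2\le 1$) is a detail the paper glosses over, and your resolution of it is sound.
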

\begin{proof}
As noted, we can simulate the linear MDP defined above using only access to a multi-dimensional linear bandit setting. Since \Cref{lem:minimax_bandit_to_mdp} implies that finding an $\epsilon$-optimal policy in our linear MDP is equivalent to find a $2\epsilon$-optimal set of vectors in our multi-dimensional linear bandit, the lower bound for multi-dimensional linear bandits, \Cref{lem:minimax_lb_bandit}, must hold here, which immediately gives the result.
\end{proof}

\begin{proof}[Proof of \Cref{prop:minimax_lb}]
Let $\frakDoff$ be the dataset obtained from running $\Toff$ times on the linear MDP defined above, and at each step $h$ playing each $[\be_i/2,1/2]$ $\Toff/2d$ times, and $\be_{d+1}/2$ $\Toff/2d$ times. Note that in this case we have 
\begin{align*}
\bLamoff^h = \frac{\Toff}{8d} \begin{bmatrix} I & \bm{1} \\ \bm{1}^\top & 2d \end{bmatrix}.
\end{align*}

Note that
\begin{align}
N_h(\frakDoff) & \le \min_{N \ge 0} N \quad \text{s.t.} \quad \inf_{\bLambda \in \bOmega_h} \sup_\pi \| \bphi_{\pi,h} \|_{(N \bLambda + \bLamoff^h)^{-1}}^2 \le \frac{c' \cdot \epsilon^2}{d H} \nonumber \\
& \le \min_{N \ge 0} N \quad \text{s.t.} \quad \inf_{\bLambda \in \bOmega_h} \sup_\pi \| \bphi_{\pi,h} \|_2^2 \| (N \bLambda + \bLamoff^h)^{-1} \|_\op \le \frac{c' \cdot \epsilon^2}{d H} \nonumber \\
& \le \min_{N \ge 0} N \quad \text{s.t.} \quad \inf_{\bLambda \in \bOmega_h}  \| (N \bLambda + \bLamoff^h)^{-1} \|_\op \le \frac{c' \cdot \epsilon^2}{d H}. \label{eq:minimax_lb_eq1}
\end{align}
Now take 
\begin{align*}
\bLambda = \frac{1}{8d} \begin{bmatrix} I & \bm{1} \\ \bm{1}^\top & 2d \end{bmatrix},
\end{align*}
(note that this is a valid setting of $\bLambda$, and can be constructed analogously to $\frakDoff$), and with this choice we can bound
\begin{align}\label{eq:minimax_lb_eq2}
\eqref{eq:minimax_lb_eq1} \le \min_{N \ge 0} N \quad \text{s.t.} \quad \frac{8d}{N + \Toff} \left \|  \begin{bmatrix} I & \bm{1} \\ \bm{1}^\top & 2d \end{bmatrix}^{-1} \right \|_\op \le \frac{c' \cdot \epsilon^2}{d H}.
\end{align}
Using the formula for the inverse of a block matrix, we have
\begin{align*}
\begin{bmatrix} I & \bm{1} \\ \bm{1}^\top & 2d \end{bmatrix}^{-1} = \begin{bmatrix} I + \frac{1}{d} \ones \ones^\top & - \frac{1}{d} \ones \\
-\frac{1}{d} \ones^\top & \frac{1}{d} \end{bmatrix}.
\end{align*}
Take $\bv \in \cS^{d-1}$, and write $\bv = [\bvtil, b]$. Then,
\begin{align*}
\bv^\top \begin{bmatrix} I + \frac{1}{d} \ones \ones^\top & - \frac{1}{d} \ones \\
-\frac{1}{d} \ones^\top & \frac{1}{d} \end{bmatrix} \bv = \bvtil^\top \bvtil + \frac{1}{d} (\ones^\top \bvtil)^2 - \frac{2b}{d} \ones^\top \bvtil + \frac{b^2}{d} \le 5.
\end{align*}
Using this, we can bound
\begin{align*}
\eqref{eq:minimax_lb_eq2} \le \min_{N \ge 0} N \quad \text{s.t.} \quad \frac{40d}{N + \Toff}  \le \frac{c' \cdot \epsilon^2}{d H} = \max \{ \frac{40 d^2 H}{c' \epsilon^2} - \Toff, 0 \}.
\end{align*}
It follows that 
\begin{align*}
 \sum_{h=1}^H N_h(\frakDoff)  \le  H \max \{ \frac{40 d^2 H}{c' \epsilon^2} - \Toff, 0 \}  = \max \{ \frac{40 d^2 H^2}{c' \epsilon^2} - H \Toff, 0 \}  \le \max \{ \frac{40 d^2 H^2}{c' \epsilon^2} -  \Toff, 0 \}  .
\end{align*}
By \Cref{lem:minimax_lb_mdp}, we know that we need to collect at least $\frac{c d^2 H^2}{\epsilon^2}$ episodes on some MDP in our class or we will fail to return an $\epsilon$-optimal policy with constant probability. Note that this lower bound is agnostic to how these episodes were collected---they could be either offline (which we can think of as equivalent to just running a non-adaptive query policy) or online. Thus, if we have already collected $\Toff$ offline episodes, we must collect at least $\max \{ \frac{c d^2 H^2}{\epsilon^2} - \Toff, 0 \}$ online episodes. From what we have just shown, though, 
\begin{align*}
 \sum_{h=1}^H N_h(\frakDoff)   \le \max \{ \frac{40 d^2 H^2}{c' \epsilon^2} -  \Toff, 0 \}  ,
\end{align*}
so the result follows by \Cref{lem:minimax_lb_mdp} and proper setting of $c'$.
\end{proof}

\end{document}